\theoremstyle{plain}
\newtheorem{theorem}{Theorem}
\newtheorem{assumption}{Assumption}
\newtheorem{lemma}[theorem]{Lemma}
\theoremstyle{definition}
\newtheorem{remark}{Remark}
\newtheoremstyle{specialcasestyle}{1mm}{1mm}{\upshape}{}{\bfseries\upshape}{.}{0mm}{}
\theoremstyle{specialcasestyle}
\newcommand{\figref}[1]{Fig.~\protect\ref{#1}}
\newcommand{\bu}{{\bf u}}
\newcommand{\bg}{{\bf g}}
\newcommand{\bh}{{\bf h}}
\newcommand{\bx}{{\bf x}}
\newcommand{\bv}{{\bf v}}
\newcommand{\ba}{{\bf a}}
\newcommand{\bw}{{\bf w}}
\newcommand{\bz}{{\bf z}}
\newcommand{\bX}{{\bf X}}
\newcommand{\bP}{{\bf P}}
\newcommand{\bn}{{\bf n}}
\newcommand{\bbeta}{{\boldsymbol\beta}}
\newcommand{\blamb}{{\boldsymbol\lambda}}
 \newcommand{\balpha}{{\boldsymbol\alpha}}
\newcommand{\bxi}{{\boldsymbol\xi}}
\newcommand{\bone}{{\boldsymbol 1}}
 \newcommand{\asto}{\overset{a. s.}\longrightarrow }
\def\endthebibliography{%
  \def\@noitemerr{\@latex@warning{Empty `thebibliography' environment}}%
  \endlist
}
\newcommand{\sign}{{\rm sign}}
\title{A Precise Performance Analysis of Support Vector Regression}
\author{Houssem Sifaou, Abla Kammoun, Mohamed-Slim Alouini}
\begin{document}
\maketitle
\begin{abstract}
In this paper, we study the hard and soft support vector regression techniques applied to a set of $n$ linear measurements of the form $y_i=\boldsymbol{\beta}_\star^{T}{\bf x}_i +n_i$ where $\boldsymbol{\beta}_\star$ is an unknown vector, $\left\{{\bf x}_i\right\}_{i=1}^n$ are the feature vectors and $\left\{{n}_i\right\}_{i=1}^n$ model the noise. Particularly, under some plausible assumptions on the statistical distribution of the data, we characterize the feasibility condition for the hard support vector regression in the regime of high dimensions and, when feasible, derive an asymptotic approximation for its risk. Similarly, we study the test risk for the soft support vector regression as a function of its parameters. Our results are then used to optimally tune the parameters intervening in the design of hard and soft support vector regression algorithms. Based on our analysis, we illustrate that adding more samples may be harmful to the test performance of support vector regression, while it is always beneficial when the parameters are optimally selected. Such a result reminds a similar phenomenon observed in modern learning architectures according to which optimally tuned architectures present a decreasing test performance curve with respect to the number of samples. 
\end{abstract}

\section{Introduction}

\noindent{\bf Motivation.} Recent works have demonstrated that the test performance of modern learning architectures exhibits both model-wise and sample-wise double descent phenomena that defy conventional statistical intuition. Model-wise descent, reported in recent works \cite{Belkin15849,pmlr-v89-belkin19a, Geiger2019ScalingDO}, suggests that, for very large architectures, performance improves with the number of parameters, thus contradicting the bias-variance trade-off. On the other hand, sample-wise descent, discussed recently in the works of Nakkiran {\it et al.} \cite{Nakkiran2020Deep,abs-2003-01897} indicates that more data may harm the performance. One potential solution to avoid such a behavior consists in optimally tuning the involved parameters. In doing so, the test performance in most scenarios decreases with the number of samples.

In this paper, we investigate the sample-wise double descent phenomenon for basic linear models. More precisely, we assume independent data samples $({\bf x}_i,y_i)$, $i=1,\cdots,n$
distributed as:
$$
y_i=\boldsymbol{\beta}_\star^{T}{\bf x}_i+{\sigma}n_i
$$
 where ${\bf x}_i\in\mathbb{R}^p$ is the feature vector assumed to have zero mean and covariance ${\bf I}_p$, $y_i$ represent the scalar response variables while  $n_i$ stands for zero-mean noise and $\sigma^2$ is the noise variance. To estimate $\boldsymbol{\beta}_\star$, we consider support vector regression techniques: hard-support vector regression (H-SVR), which estimates the regression vector $\boldsymbol{\beta}$ with minimum $\ell_2$ norm that satisfies the constraints ${y}_i=\boldsymbol{\beta}^{T}{\bf x}_i$ up to a maximum error $\epsilon$, and soft-support regression (S-SVR) which uses a regularization constant $C$ that aims to create a  trade-off between the minimization of the training error and the minimization of the model complexity.       

We study the performance of H-SVR and S-SVR when the number of features $p$ and the sample size $n$ grow simultaneously large such that $\frac{n}{p}\to\delta$ with $\delta>0$ and the norm of $\|\boldsymbol{\beta}_\star\|$ converges to $\beta$. One major outcome of the present work is to recover interesting behaviors observed in large-scale machine learning architectures. Particularly, we show that the double descent behavior appears only when the H-SVR or S-SVR parameters are not properly tuned. Such a behavior reminds the recent findings in \cite{abs-2003-01897} that suggest that unregularized models often suffer from the sample-wise double descent phenomenon, while optimally tuned models usually present a monotonic risk with respect to the number of samples.     

\noindent{\bf Contributions.} This paper investigates the test risk behavior as a function of the sample size for H-SVR and S-SVR techniques. Contrary to linear regression, which involves explicit form expressions for the solution, H-SVR and S-SVR require solving convex-optimization problems, which do not have closed-form solutions. To analyze the test risk, we rely on the Gaussian min-max theorem (CGMT) framework and more specifically on the extension of this framework recently developed in \cite{kam-chris}, which has been proven to be suitable to analyze functionals of solutions of convex optimizations problems.

Concretely, our results for the H-SVR and S-SVR can be summarized as follows:
\begin{enumerate}
\item  We derive for a fixed error tolerance $\epsilon$, a sharp phase transition-threshold $\delta_\star$ beyond which the H-SVR becomes infeasible. Interestingly,  we illustrate that the transition threshold depends only on $\epsilon$ and the noise variance and not on the SNR defined as ${\rm SNR}:=\frac{\beta^2}{\sigma^2}$.  Moreover, we show that $\delta_\star$ is always greater than $1$, which should be compared with the condition $\delta<1$ required for the least square estimator to exist. As a side note, we prove that contrary to hard-margin support vector classifiers, H-SVR can always be feasible through a proper tuning of the tolerance error $\epsilon$. This allows us to study the test risk of the H-SVR as a function of $\delta$ when $\delta\in(0,\infty)$ and $\epsilon$ carefully tuned to satisfy the feasibility condition.  
\item For fixed error tolerance $\epsilon$, we numerically illustrate that for moderate to large ${\rm SNR}$ the test risk of the H-SVR is a non-monotonic curve presenting a unique minimum that becomes the closest to $\delta_\star$ as the SNR increases.  For low SNR values, the test risk is an increasing function of $\delta_\star$ and is always worse than the null risk associated with the null estimator. It is worth mentioning that behavior of the same kind was reported for the min-norm least square estimator in \cite{hastie2019surprises}. Additionally, we illustrate that when the parameter $\epsilon$ is optimally tuned, the test risk becomes a decreasing function of $\delta$ and equivalently of the number of data samples.
\item Similarly, we derive the expression for the asymptotic test risk as a function of $\epsilon$, $\delta$, and the regularization constant $C$. Without optimal tuning of the regularization constant $C$ and factor $\epsilon$, the test curve as a function of the sample test size presents a double descent, which disappears when optimal settings of these constants is considered.   
\item We study the robustness of the S-SVR and H-SVR to impulsive noise. We illustrate that contrary to H-SVR, S-SVR, when optimally tuned, is resilient to impulsive noises. Particularly, we show that for mild impulsive noise conditions, S-SVR presents a slightly lower risk than optimally tuned ridge regression estimators but largely outperforms it under moderate to severe impulsive noise conditions. 
\end{enumerate}

\noindent{\bf Related works.}
The present work is part of the continued efforts to understand the double descent phenomena in large-scale machine learning architectures. An important body of research works focused on establishing the behavior of double descent of the test risk as a function of the model size in a variety of machine learning algorithms \cite{Belkin15849,Opper,spigler}. Very recently, the work in \cite{Nakkiran2020Deep} discovered that double descent occurs not just as a function of the model size but also as a function of the sample size \cite{Nakkiran2020Deep}. A major consequence of such a behavior is that performance may be degraded as we increase the number of samples.   

To further understand the generalization error, several works considered to analyze it as a function of the model size for mathematically tractable settings in regression \cite{hastie2019surprises,Belkin2019TwoMO,9051968,mitra19}  and more recently in classification \cite{kam-chris,9174344}, with the goal of investigating as to under which conditions, the double descent occurs. In this paper, similarly to \cite{abs-2003-01897}, we instead focus on the effect of the sample size on the test performance, but with the H-SVR and S-SVR as case examples. Moreover, on the technical level, our analysis provides sharp characterizations of the performance using the recently developed extension of the CGMT framework \cite{kam-chris}.

\section{Problem formulation}

Consider the problem of estimating the scalar response $y$ of a vector ${\bf x}$ in $\mathbb{R}^{p}$ from a set of $n$ data samples $\left\{({\bf x}_i,y_i)\right\}_{i=1}^n$ following the linear model:
\begin{equation}
{y_i}=\boldsymbol{\beta}_\star^{T}{\bf x}_i+\sigma {n}_i
\label{eq:linear}
\end{equation}
where $\boldsymbol{\beta}_\star$ is an unknown vector,   $\{n_i\}_{i=1}^n$ represent noise samples with  zero mean and variance $1$ and $\sigma^2$ represents the noise variance. We further assume that ${\bf x}$ have zero mean and covariance ${\bf I}_p$.   

To estimate $\boldsymbol{\beta}_\star$, we consider  support vector regression methods, namely the hard support vector regression denoted by H-SVR and the soft support vector regression referred to as S-SVR.  The H-SVR looks for a function $y=\boldsymbol{\beta}^{T}{\bf x}$ such that all data points $({\bf x}_i,\boldsymbol{\beta}^{T}{\bf x}_i)$ deviates at most $\epsilon$ from their targets $y_i$. Formally, this regression problem can be written as:

\begin{equation}
\label{eq:hard_margin}
\begin{aligned}
\hat{{\bf w}}_H:=\arg\min_{{\bf w}} \quad & \frac{1}{2}\|{\bf w}\|^2\\
\textrm{s.t.} \quad & y_i-{\bf w}^{T}{\bf x}_i \leq \epsilon \\
  &   {\bf w}^{T}{\bf x}_i-y_i \leq \epsilon  \\
\end{aligned}
\end{equation}
It is worth mentioning that when $\epsilon=0$ and  $n\leq p$, the H-SVR boilds down to the least square  estimator. In this case, it perfectly interpolates the training data, satisfying $y_i={\bf x}_i^{T}\hat{{\bf w}}_H$, $i=1,\cdots,n$.     

In general, depending on the value of $\epsilon$, there may not be a solution that satisfies the constraints.  
As in  support vector machines for classification, one solution to deal with such cases is to add slack variables that while relaxing the constraints, penalize in the objective function large deviations from them. Applying this approach gives the S-SVR method which involves solving the following optimization problem:
\begin{equation}
\begin{aligned}
\hat{{\bf w}}_S:=\arg\min_{{\bf w}} \quad & \frac{1}{2}\|{\bf w}\|^2 + \frac{C}{p}\sum_{i=1}^n(\xi_i+\tilde{\xi}_i)\\
\textrm{s.t.} \quad & y_i-{\bf w}^{T}{\bf x}_i \leq \epsilon+\xi_i, \ i=1,\cdots,n \\
  &   {\bf w}^{T}{\bf x}_i-y_i \leq \epsilon+\tilde{\xi}_i ,  \\
&\xi_i,\tilde{\xi}_i\geq 0,  
\end{aligned}
\label{eq:soft_SVR}
\end{equation}
The aim of the present work is to characterize analytically the performance of the H-SVR and the S-SVR. The assumption underlying the analysis is to consider that the number of samples and that of features grow with the same pace, and will be made more specific in the sequel. 

\noindent{\bf Prediction risk.} The metric of interest in this paper is the prediction risk. For a given estimator $\hat{\boldsymbol{\beta}}$, 
the prediction risk is defined as:
$$
\mathcal{R}(\hat{\boldsymbol{\beta}}):=\mathbb{E}_{{\bf x},y}|{\bf x}^{T}\hat{\boldsymbol{\beta}}-{\bf x}^{T}\boldsymbol{\beta}_\star|^2=\|\hat{\boldsymbol{\beta}}-\boldsymbol{\beta}_\star\|_2^2
$$
where ${\bf x}$ and ${\bf y}$ are test points following the model \eqref{eq:linear} but are independent of the training set. Expressing $\mathcal{R}(\hat{\boldsymbol{\beta}})$ as:
$$
\mathcal{R}(\hat{\boldsymbol{\beta}})= \|\boldsymbol{\beta}_\star\|_2^2+\|\hat{\boldsymbol{\beta}}\|_2^2-2 \|\boldsymbol{\beta}_\star\|_2\|\hat{\boldsymbol{\beta}}\|_2\frac{\boldsymbol{\beta}_\star^{T} \hat{\boldsymbol{\beta}}}{ \|\boldsymbol{\beta}_\star\|_2\|\hat{\boldsymbol{\beta}}\|_2}
$$
we can easily see that the risk depends on $\hat{\boldsymbol{\beta}}$ through its norm $\|\hat{\boldsymbol{\beta}}\|$ and the cosine similarity between $\boldsymbol{\beta}_\star$ and  $\hat{\boldsymbol{\beta}}$:
$$
{\rm cos}\left(\boldsymbol{\beta}_\star,\hat{\boldsymbol{\beta}}\right):=\frac{\boldsymbol{\beta}_\star^{T}\hat{\boldsymbol{\beta}}}{\|\boldsymbol{\beta}_\star\|.\|\hat{\boldsymbol{\beta}}\|}
$$

\section{Main results}
\label{main_results}
In this paper, we consider studying the performance of the hard and soft support vector regression problems under the asymptotic regime in which $n$ and $p$ grow large at the same pace. More specifically, the following assumption is considered. 
\begin{assumption}
Our study is based on the following set of assumptions:
\begin{itemize}
\item $n$ and $p$ grow to infinity with $\frac{n}{p}\to \delta$.
\item The noise variance $\sigma^2$ is a fixed positive constant.
\item We assume that $\|\boldsymbol{\beta}_\star\|\to \beta$ where $\beta$ is a certain positive scalar.
 \item {\bf Data model}: The training $\left\{{\bf x}_i\right\}_{i=1}^n$ are independent and identically distributed following standard normal distribution. Moreover, the noise samples $\{n_i\}$ are independent and are drawn from a symmetric distribution $p_N$ satisfying $\mathbb{E}|N|^2<\infty$ where $N\sim p_N$.  
\end{itemize}
\label{ass:regime}
\end{assumption}
\subsection{Hard SVR}
As mentioned earlier, the H-SVR problem is not always feasible. We provide in Theorem \ref{feasibility_region} a sharp characterization of the feasibility region of the H-SVR in the asymptotic regime defined in Assumption \ref{ass:regime}.
\begin{theorem}[Feasibility of the H-SVR]
Let $\delta_\star$ be defined as:
\begin{equation}
\delta_\star=\frac{1}{\displaystyle\inf_{t\in\mathbb{R}} \mathbb{E}\left(|{G}+t\sigma N|-t\epsilon\right)_{+}^2}=\frac{1}{\displaystyle\inf_{t\in\mathbb{R}} \mathbb{E}\left(|{G}+tN|-t\frac{\epsilon}{\sigma}\right)_{+}^2} \label{eq:delta_star}
\end{equation}
where the expectation is taken over the distribution of ${G}$ and $N$ where ${G}\sim\mathcal{N}(0,1)$ and $N\sim p_N$ \footnote{The second equality is obtained by operating the change of variable $\tilde{t}\leftrightarrow {t\sigma}$}. 
Consider the asymptotic regime and data model in Assumption \ref{ass:regime}. 
Then the following statements hold true:
\begin{align}
\delta > \delta_\star &\Rightarrow \mathbb{P}\left[\text{The H-SVR is feasible for sufficiently large} \ \ n\right] = 0 \label{eq:firstp}\\ 
\delta<\delta_\star &\Rightarrow  \mathbb{P}\left[\text{The H-SVR is feasible for sufficiently large} \ \ n\right] = 1. \label{eq:secondp}
\end{align}
\label{feasibility_region}
\end{theorem}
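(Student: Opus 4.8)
The plan is to turn the feasibility question into the sign of a scalar random optimization and then apply the CGMT. First I would substitute the model $y_i=\bbeta_\star^T\bx_i+\sigma n_i$ into the constraints of \eqref{eq:hard_margin}. Writing $\bu=\bw-\bbeta_\star$, the H-SVR is feasible if and only if there exists $\bu$ with $|\bx_i^T\bu-\sigma n_i|\le\epsilon$ for all $i$, i.e. if and only if $\mathrm{col}(\bX)=\{\bX\bu:\bu\in\mathbb{R}^p\}$ meets the box $\sigma\bn+[-\epsilon,\epsilon]^n$. In particular feasibility does not involve $\bbeta_\star$, which already explains why $\delta_\star$ cannot depend on the SNR. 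Since $\mathrm{col}(\bX)$ is a subspace through the origin, a separating-hyperplane (LP-duality) argument shows the system is \emph{infeasible} exactly when there is $\bv\perp\mathrm{col}(\bX)$ with $\sigma\bv^T\bn>\epsilon\|\bv\|_1$. Enforcing $\bX^T\bv={\bf 0}$ with a multiplier $\bu$ and normalizing $\|\bv\|\le1$, I would encode feasibility through
\[
\Psi_n:=\max_{\|\bv\|\le1}\ \min_{\bu\in\mathbb{R}^p}\Big[\sigma\bv^T\bn-\epsilon\|\bv\|_1-\bv^T\bX\bu\Big],
\]
so that the system is infeasible iff $\Psi_n>0$ and feasible iff $\Psi_n=0$ (attained at $\bv={\bf 0}$).

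The quantity $\Psi_n$ is of the min-max type handled by the CGMT: the Gaussian matrix $\bX$ enters only through the bilinear form $\bv^T\bX\bu$, the objective is concave in $\bv$ and linear in $\bu$, and the constraint sets are convex. I would therefore pass to the Auxiliary Optimization obtained by replacing $\bv^T\bX\bu$ with $\|\bu\|\,\bg^T\bv+\|\bv\|\,\bh^T\bu$, where $\bg\sim\mathcal N({\bf 0},\bI_n)$ and $\bh\sim\mathcal N({\bf 0},\bI_p)$ are independent:
\[
\psi_n=\max_{\|\bv\|\le1}\ \min_{\bu}\Big[\sigma\bv^T\bn-\epsilon\|\bv\|_1-\|\bu\|\,\bg^T\bv-\|\bv\|\,\bh^T\bu\Big].
\]
Because the inner variable $\bu$ is unbounded, I would restrict it to a ball $\|\bu\|\le R$, apply the CGMT of \cite{kam-chris} for each fixed $R$, and then let $R\to\infty$; as $\bu$ merely plays the role of a multiplier enforcing $\bX^T\bv={\bf 0}$, only the dichotomy between a finite and an unbounded inner value matters, and the threshold is insensitive to the truncation.

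Carrying out the inner minimization over $\bu$ is explicit: aligning $\bu$ with $\bh$ shows the minimum equals $0$ when $\bg^T\bv+\|\bv\|\,\|\bh\|\le0$ and $-\infty$ otherwise, so that $\psi_n=\max_{\|\bv\|\le1}(\sigma\bv^T\bn-\epsilon\|\bv\|_1)$ subject to $\bg^T\bv+\|\bv\|\,\|\bh\|\le0$. Writing $\nu=\|\bv\|$, dualizing the single remaining linear constraint with a multiplier $\tilde t\ge0$, and combining the separability of $\sigma\bv^T\bn-\epsilon\|\bv\|_1$ with a Cauchy-Schwarz step over the sphere $\|\bv\|=\nu$ reduces everything to a scalar problem. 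Replacing the empirical averages by their limits ($\|\bh\|/\sqrt p\to1$ and $\frac1n\sum_i(|\sigma n_i-\tilde t g_i|-\epsilon)_+^2\to\mathbb{E}(|\sigma N-\tilde t G|-\epsilon)_+^2$ by the law of large numbers) and substituting $t=1/\tilde t$ yields, using $\tfrac np\to\delta$ and the symmetry of $N$,
\[
\tfrac{1}{\sqrt p}\,\psi_n\ \longrightarrow\ \max_{0\le\nu\le1}\ \nu\cdot\min_{t>0}\ \tfrac1t\Big[-1+\sqrt\delta\,\sqrt{\mathbb{E}\big(|G+t\sigma N|-t\epsilon\big)_+^2}\Big].
\]
The sign of this limit is governed by $g(t):=-1+\sqrt\delta\,\sqrt{\mathbb{E}(|G+t\sigma N|-t\epsilon)_+^2}$: the maximizing $\nu$ is $1$ when $\min_t g(t)>0$ and $0$ when $\min_t g(t)<0$. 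Since $g(t)>0$ for all $t$ exactly when $\inf_t\mathbb{E}(|G+t\sigma N|-t\epsilon)_+^2>1/\delta$, i.e. when $\delta>\delta_\star$, the AO value is asymptotically strictly positive (infeasible) for $\delta>\delta_\star$ and equals $0$ (feasible) for $\delta<\delta_\star$, matching \eqref{eq:delta_star}.

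The main obstacle is the rigorous use of the CGMT for a \emph{sign} (feasibility) statement rather than a value comparison: I must control the unbounded inner minimization over $\bu$ through the truncation-and-limit argument, justify the min-max exchange, and convert the concentration of $\psi_n$ around its deterministic limit into the two-sided probabilistic conclusions \eqref{eq:firstp}--\eqref{eq:secondp}, including the borderline case $\Psi_n=0$. The remaining steps---the explicit inner minimizations, the Cauchy-Schwarz reduction, and the law-of-large-numbers replacement uniformly in the scalar parameters $(\nu,t)$---are routine.
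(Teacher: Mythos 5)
Your reduction is a genuinely different route from the paper's. You characterize infeasibility by a Farkas/separation certificate (existence of $\bv$ with $\bX\bv=\boldsymbol{0}$ and $\sigma\bv^T\bn>\epsilon\|\bv\|_1$) and then ask the CGMT to decide the \emph{sign} of the certificate value $\Psi_n$, whereas the paper works with the Lagrangian dual of \eqref{eq:hard_margin} itself (the max--min problem \eqref{eq:primary_bis}), for which primal infeasibility is equivalent to the dual optimum being $+\infty$, and then invokes the two-part feasibility theorem (Theorem \ref{th:feas}) of the extension \cite{kam-chris}. Your limiting scalar function is the correct one: after the substitution $\gamma_1=1/t$, your $\frac1t\big[\sqrt{\delta}\sqrt{\mathbb{E}(|G+t\sigma N|-t\epsilon)_+^2}-1\big]$ is exactly the paper's $\overline{D}(\gamma_1,0)$, so you recover the same threshold, and your argument for the direction $\delta>\delta_\star$ (the normalized value converges to a strictly positive constant, hence $\Psi_n>0$ eventually) can be made rigorous with the truncation machinery of Theorem \ref{th:conv}, since the $\bv$-set is a convex compact ball and the objective is convex--concave.

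The genuine gap is the feasible direction \eqref{eq:secondp}, and it is not a deferrable technicality. Your criterion is ``feasible iff $\Psi_n=0$,'' but the CGMT only controls the normalized value $\Psi_n/\sqrt{p}$. When $\delta<\delta_\star$ your limit is $\max_{0\le\nu\le1}\nu\cdot[\text{negative}]=0$, and $\Psi_n/\sqrt{p}\to 0$ is equally consistent with exact feasibility ($\Psi_n=0$) and with infeasibility carrying a vanishing certificate value ($0<\Psi_n=o(\sqrt{p})$); no value-convergence statement can separate these two events, so \eqref{eq:secondp} does not follow from the plan as written. This is precisely why the paper arranges the dichotomy as ``finite vs.\ $+\infty$'' rather than ``zero vs.\ positive'': boundedness of the AO sequence transfers to finiteness of the PO (Theorem \ref{th:feas}-(ii)) with no borderline case. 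If you want to keep the certificate formulation, the fix is to exploit the $1$-homogeneity in $\bv$ of both the constraint $\bX\bv=\boldsymbol{0}$ and the objective $\sigma\bv^T\bn-\epsilon\|\bv\|_1$: restrict $\bv$ to the unit sphere, so that feasibility becomes ``$\max_{\|\bv\|=1,\,\bX\bv=\boldsymbol{0}}(\sigma\bv^T\bn-\epsilon\|\bv\|_1)\le 0$''; when $\delta<\delta_\star$ the normalized sphere value converges to the strictly negative constant $\min_{t>0}\frac1t\big[\sqrt{\delta}\sqrt{\mathbb{E}(|G+t\sigma N|-t\epsilon)_+^2}-1\big]<0$, and strict negativity of a limit does imply the sign you need. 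The sphere is not convex, but the only CGMT direction required there is a high-probability lower bound on the associated min--max, which Gordon's inequality \eqref{eq:gordon} supplies without convexity; the unbounded multiplier $\bu$ is handled by truncation at level $R$, and since the truncated values increase in $R$, a single sufficiently large fixed $R$ suffices for the lower bound.
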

\begin{remark}{(\bf Feasibility of the  H-SVR depends on the noise variance but not on the SNR.)}
The above result establishes that the existence of the H-SVR undergoes a sharp transition phenomenon. Particularly, in the limit of large sample size $n$ and number of features $p$ such that 
$\frac{n}{p}\to \delta$, the H-SVR is almost surely unfeasible when $\delta>\delta_\star$ and always feasible when $\delta<\delta_\star$. The obtained expression is reminiscent of other previously established result for the existence of the hard-margin SVM for classification established in a series of recent works \cite{svm_kammoun} and \cite{kammoun_chris}. However, contrary to the expressions obtained in these works, the separability boundary curve captured by $\delta_\star$ does not depend on the  Euclidean norm of $\boldsymbol{\beta}_\star$, or equivalently on the SNR defined as $\frac{\|\boldsymbol{\beta}_\star\|^2}{\sigma^2}$ but only on the noise variance. The reason behind this is that feasibility is essentially related to how much the data samples deviate from the hyperplane defined as $\boldsymbol{\beta}_\star^T{\bf x}=y$.       
We note that as $\sigma$ approaches $0$ and $\epsilon\neq 0$, $\delta_\star\to\infty$, which implies that the H-SVR is always feasible in this case. This is because in the noiseless case, all data samples $({\bf x}_i,y_i)$ belong to the hyperplane $y_i=\boldsymbol{\beta}_\star^{T}{\bf x}_i$ and thus $\boldsymbol{\beta}_\star$ is in the  feasibility set  of the H-SVR regardless of the value of $\epsilon$ and also on $\beta$. On the other hand, as $\sigma$ increases, $\delta_\star$ decreases, which suggests that the H-SVR becomes less feasible since it is less easy to find a hyperplane that contains all data samples with a reasonable error tolerance $\epsilon$. 
\end{remark}
\begin{remark}{(\bf The H-SVR can be feasible when the least square estimator is not) }
 One can easily check that if $\epsilon=0$, then $\delta_\star=1$. This result makes sense since as long as $\delta<1$, the linear system $\boldsymbol{\beta}^{T}{\bf x}_i=y_i$ is under determined and as such  a solution $\boldsymbol{\beta}$ exists. However, when $\delta_\star>1$, the linear system becomes over-determined, and as such it is impossible to find a solution to this linear system. Moreover, since $\delta_\star$ is an increasing function of $\epsilon$, we conclude that $\delta_\star>1$ for all $\epsilon>0$. This particularly shows that the H-SVR provides a larger feasibility region than the  least square estimator, for which $\delta$ should be less than $1$ to exist.  
We can even push this result further and claim that {\it every $\delta$ is feasible once $\epsilon$ is appropriately tuned. }
To see this, it suffices to note that $\epsilon\mapsto \delta_\star$ is an increasing function establishing a one-to-one map from $(0,\infty)$ to $(1,\infty)$. Hence, for any $\delta\in(1,\infty)$ there exists $\epsilon^\star(\delta)$ such that for all $\epsilon> \epsilon^\star(\delta)$,  the H-SVR  is almost surely feasible. 
\end{remark}

For the sake of illustration, Figure \ref{fig:deltastar} displays $\delta_\star$ as a function of $\epsilon$ for several values of $\sigma$. As can be seen, $\delta_\star$ is an increasing function growing to infinity with $\epsilon$. The value of $\epsilon$ plays a fundamental role to remediate the effect of the noise and ensure the feasibility of the H-SVR.  
One can note as expected that $\delta_\star$ for $\epsilon=0.1$ and $\sigma=0.1$ is the same as the one obtained when  $\epsilon=0.2$ and $\sigma=0.2$. This finding can be easily concluded from the second equality in \eqref{eq:delta_star}. Moreover, in agreement with our previous discussion, we can easily see that $\delta_\star$ decreases significantly as the noise variance increases. Such a scenario can be fixed by adapting the value of $\epsilon$.  
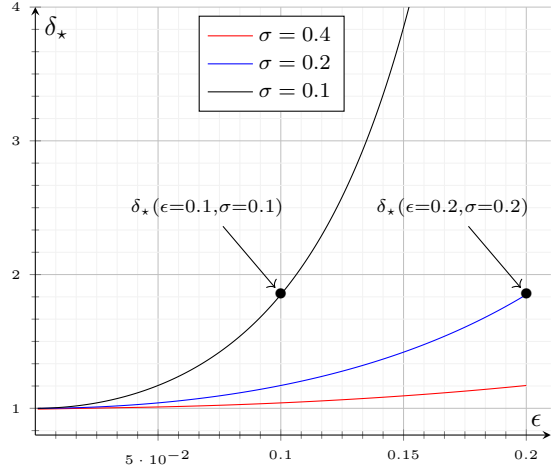
\begin{figure}[h]
\begin{center}
\begin{tikzpicture}[scale=1,font=\fontsize{10}{10}\selectfont]
   \tikzstyle{every axis y label}+=[yshift=5pt]
   \tikzstyle{every axis x label}+=[yshift=5pt]
   \tikzstyle{every axis legend}+=[cells={anchor=west},fill=white,
        at={(0.60,0.98)}, anchor=north east, font=\fontsize{8}{8}\selectfont]
   \begin{axis}[
grid=both,
grid style={line width=.1pt, draw=gray!10},major grid style={line width=.2pt,draw=gray!50},
xtick={},ytick={},
    minor tick num=5,
    enlargelimits={abs=0},
    ticklabel style={font=\tiny,fill=white},
    axis lines=middle,
      xmin=0.0,
      ymin=0.8,
      xmax=0.21,
      ymax=4,
xlabel={$\epsilon$ },
ylabel={$\delta_\star$}	]		
\addplot[name path=f,color=red] coordinates{
(1.000000e-03,9.939569e-01)(2.000000e-03,9.942732e-01)(3.000000e-03,9.945896e-01)(4.000000e-03,9.949061e-01)(5.000000e-03,9.952228e-01)(6.000000e-03,9.955396e-01)(7.000000e-03,9.958565e-01)(8.000000e-03,9.961735e-01)(9.000000e-03,9.964907e-01)(1.000000e-02,9.968080e-01)(1.100000e-02,9.971254e-01)(1.200000e-02,9.974429e-01)(1.300000e-02,9.977605e-01)(1.400000e-02,9.980783e-01)(1.500000e-02,9.983962e-01)(1.600000e-02,9.987142e-01)(1.700000e-02,9.990323e-01)(1.800000e-02,9.993506e-01)(1.900000e-02,9.996690e-01)(2.000000e-02,9.999875e-01)(2.100000e-02,1.000306e+00)(2.200000e-02,1.000625e+00)(2.300000e-02,1.000944e+00)(2.400000e-02,1.001263e+00)(2.500000e-02,1.001582e+00)(2.600000e-02,1.001901e+00)(2.700000e-02,1.002220e+00)(2.800000e-02,1.002540e+00)(2.900000e-02,1.002860e+00)(3.000000e-02,1.003179e+00)(3.100000e-02,1.003499e+00)(3.200000e-02,1.003819e+00)(3.300000e-02,1.004139e+00)(3.400000e-02,1.004460e+00)(3.500000e-02,1.004780e+00)(3.600000e-02,1.005100e+00)(3.700000e-02,1.005421e+00)(3.800000e-02,1.005742e+00)(3.900000e-02,1.006063e+00)(4.000000e-02,1.006384e+00)(4.100000e-02,1.006708e+00)(4.200000e-02,1.007040e+00)(4.300000e-02,1.007380e+00)(4.400000e-02,1.007729e+00)(4.500000e-02,1.008085e+00)(4.600000e-02,1.008450e+00)(4.700000e-02,1.008823e+00)(4.800000e-02,1.009204e+00)(4.900000e-02,1.009593e+00)(5.000000e-02,1.009990e+00)(5.100000e-02,1.010395e+00)(5.200000e-02,1.010809e+00)(5.300000e-02,1.011230e+00)(5.400000e-02,1.011660e+00)(5.500000e-02,1.012099e+00)(5.600000e-02,1.012545e+00)(5.700000e-02,1.012999e+00)(5.800000e-02,1.013462e+00)(5.900000e-02,1.013933e+00)(6.000000e-02,1.014412e+00)(6.100000e-02,1.014900e+00)(6.200000e-02,1.015396e+00)(6.300000e-02,1.015900e+00)(6.400000e-02,1.016412e+00)(6.500000e-02,1.016933e+00)(6.600000e-02,1.017462e+00)(6.700000e-02,1.017999e+00)(6.800000e-02,1.018544e+00)(6.900000e-02,1.019098e+00)(7.000000e-02,1.019661e+00)(7.100000e-02,1.020231e+00)(7.200000e-02,1.020810e+00)(7.300000e-02,1.021398e+00)(7.400000e-02,1.021993e+00)(7.500000e-02,1.022598e+00)(7.600000e-02,1.023210e+00)(7.700000e-02,1.023831e+00)(7.800000e-02,1.024461e+00)(7.900000e-02,1.025099e+00)(8.000000e-02,1.025745e+00)(8.100000e-02,1.026400e+00)(8.200000e-02,1.027064e+00)(8.300000e-02,1.027736e+00)(8.400000e-02,1.028416e+00)(8.500000e-02,1.029105e+00)(8.600000e-02,1.029803e+00)(8.700000e-02,1.030509e+00)(8.800000e-02,1.031224e+00)(8.900000e-02,1.031947e+00)(9.000000e-02,1.032679e+00)(9.100000e-02,1.033420e+00)(9.200000e-02,1.034169e+00)(9.300000e-02,1.034927e+00)(9.400000e-02,1.035694e+00)(9.500000e-02,1.036469e+00)(9.600000e-02,1.037253e+00)(9.700000e-02,1.038046e+00)(9.800000e-02,1.038848e+00)(9.900000e-02,1.039658e+00)(1.000000e-01,1.040477e+00)(1.010000e-01,1.041305e+00)(1.020000e-01,1.042141e+00)(1.030000e-01,1.042987e+00)(1.040000e-01,1.043841e+00)(1.050000e-01,1.044705e+00)(1.060000e-01,1.045577e+00)(1.070000e-01,1.046458e+00)(1.080000e-01,1.047348e+00)(1.090000e-01,1.048247e+00)(1.100000e-01,1.049155e+00)(1.110000e-01,1.050071e+00)(1.120000e-01,1.050997e+00)(1.130000e-01,1.051932e+00)(1.140000e-01,1.052876e+00)(1.150000e-01,1.053829e+00)(1.160000e-01,1.054791e+00)(1.170000e-01,1.055762e+00)(1.180000e-01,1.056743e+00)(1.190000e-01,1.057732e+00)(1.200000e-01,1.058731e+00)(1.210000e-01,1.059739e+00)(1.220000e-01,1.060756e+00)(1.230000e-01,1.061782e+00)(1.240000e-01,1.062817e+00)(1.250000e-01,1.063862e+00)(1.260000e-01,1.064916e+00)(1.270000e-01,1.065979e+00)(1.280000e-01,1.067052e+00)(1.290000e-01,1.068134e+00)(1.300000e-01,1.069226e+00)(1.310000e-01,1.070327e+00)(1.320000e-01,1.071437e+00)(1.330000e-01,1.072557e+00)(1.340000e-01,1.073686e+00)(1.350000e-01,1.074825e+00)(1.360000e-01,1.075973e+00)(1.370000e-01,1.077131e+00)(1.380000e-01,1.078298e+00)(1.390000e-01,1.079475e+00)(1.400000e-01,1.080662e+00)(1.410000e-01,1.081858e+00)(1.420000e-01,1.083064e+00)(1.430000e-01,1.084280e+00)(1.440000e-01,1.085505e+00)(1.450000e-01,1.086741e+00)(1.460000e-01,1.087986e+00)(1.470000e-01,1.089241e+00)(1.480000e-01,1.090505e+00)(1.490000e-01,1.091780e+00)(1.500000e-01,1.093064e+00)(1.510000e-01,1.094359e+00)(1.520000e-01,1.095663e+00)(1.530000e-01,1.096977e+00)(1.540000e-01,1.098301e+00)(1.550000e-01,1.099636e+00)(1.560000e-01,1.100980e+00)(1.570000e-01,1.102335e+00)(1.580000e-01,1.103699e+00)(1.590000e-01,1.105074e+00)(1.600000e-01,1.106459e+00)(1.610000e-01,1.107854e+00)(1.620000e-01,1.109260e+00)(1.630000e-01,1.110676e+00)(1.640000e-01,1.112102e+00)(1.650000e-01,1.113538e+00)(1.660000e-01,1.114985e+00)(1.670000e-01,1.116442e+00)(1.680000e-01,1.117910e+00)(1.690000e-01,1.119388e+00)(1.700000e-01,1.120876e+00)(1.710000e-01,1.122376e+00)(1.720000e-01,1.123885e+00)(1.730000e-01,1.125406e+00)(1.740000e-01,1.126937e+00)(1.750000e-01,1.128478e+00)(1.760000e-01,1.130030e+00)(1.770000e-01,1.131594e+00)(1.780000e-01,1.133167e+00)(1.790000e-01,1.134752e+00)(1.800000e-01,1.136348e+00)(1.810000e-01,1.137954e+00)(1.820000e-01,1.139571e+00)(1.830000e-01,1.141200e+00)(1.840000e-01,1.142839e+00)(1.850000e-01,1.144489e+00)(1.860000e-01,1.146150e+00)(1.870000e-01,1.147823e+00)(1.880000e-01,1.149506e+00)(1.890000e-01,1.151201e+00)(1.900000e-01,1.152907e+00)(1.910000e-01,1.154624e+00)(1.920000e-01,1.156353e+00)(1.930000e-01,1.158093e+00)(1.940000e-01,1.159844e+00)(1.950000e-01,1.161607e+00)(1.960000e-01,1.163381e+00)(1.970000e-01,1.165166e+00)(1.980000e-01,1.166963e+00)(1.990000e-01,1.168772e+00)(2.000000e-01,1.170592e+00)

};\addlegendentry{$\sigma=0.4$}           
\addplot[name path=f,color=blue] coordinates{
(1.000000e-03,9.987210e-01)(2.000000e-03,9.990396e-01)(3.000000e-03,9.993583e-01)(4.000000e-03,9.996771e-01)(5.000000e-03,9.999960e-01)(6.000000e-03,1.000315e+00)(7.000000e-03,1.000634e+00)(8.000000e-03,1.000954e+00)(9.000000e-03,1.001273e+00)(1.000000e-02,1.001593e+00)(1.100000e-02,1.001927e+00)(1.200000e-02,1.002294e+00)(1.300000e-02,1.002693e+00)(1.400000e-02,1.003124e+00)(1.500000e-02,1.003586e+00)(1.600000e-02,1.004082e+00)(1.700000e-02,1.004609e+00)(1.800000e-02,1.005168e+00)(1.900000e-02,1.005760e+00)(2.000000e-02,1.006384e+00)(2.100000e-02,1.007040e+00)(2.200000e-02,1.007729e+00)(2.300000e-02,1.008450e+00)(2.400000e-02,1.009204e+00)(2.500000e-02,1.009990e+00)(2.600000e-02,1.010809e+00)(2.700000e-02,1.011660e+00)(2.800000e-02,1.012545e+00)(2.900000e-02,1.013462e+00)(3.000000e-02,1.014412e+00)(3.100000e-02,1.015396e+00)(3.200000e-02,1.016412e+00)(3.300000e-02,1.017462e+00)(3.400000e-02,1.018544e+00)(3.500000e-02,1.019661e+00)(3.600000e-02,1.020810e+00)(3.700000e-02,1.021993e+00)(3.800000e-02,1.023210e+00)(3.900000e-02,1.024461e+00)(4.000000e-02,1.025745e+00)(4.100000e-02,1.027064e+00)(4.200000e-02,1.028416e+00)(4.300000e-02,1.029803e+00)(4.400000e-02,1.031224e+00)(4.500000e-02,1.032679e+00)(4.600000e-02,1.034169e+00)(4.700000e-02,1.035694e+00)(4.800000e-02,1.037253e+00)(4.900000e-02,1.038848e+00)(5.000000e-02,1.040477e+00)(5.100000e-02,1.042141e+00)(5.200000e-02,1.043841e+00)(5.300000e-02,1.045577e+00)(5.400000e-02,1.047348e+00)(5.500000e-02,1.049155e+00)(5.600000e-02,1.050997e+00)(5.700000e-02,1.052876e+00)(5.800000e-02,1.054791e+00)(5.900000e-02,1.056743e+00)(6.000000e-02,1.058731e+00)(6.100000e-02,1.060756e+00)(6.200000e-02,1.062817e+00)(6.300000e-02,1.064916e+00)(6.400000e-02,1.067052e+00)(6.500000e-02,1.069226e+00)(6.600000e-02,1.071437e+00)(6.700000e-02,1.073686e+00)(6.800000e-02,1.075973e+00)(6.900000e-02,1.078298e+00)(7.000000e-02,1.080662e+00)(7.100000e-02,1.083064e+00)(7.200000e-02,1.085505e+00)(7.300000e-02,1.087986e+00)(7.400000e-02,1.090505e+00)(7.500000e-02,1.093064e+00)(7.600000e-02,1.095663e+00)(7.700000e-02,1.098301e+00)(7.800000e-02,1.100980e+00)(7.900000e-02,1.103699e+00)(8.000000e-02,1.106459e+00)(8.100000e-02,1.109260e+00)(8.200000e-02,1.112102e+00)(8.300000e-02,1.114985e+00)(8.400000e-02,1.117910e+00)(8.500000e-02,1.120876e+00)(8.600000e-02,1.123885e+00)(8.700000e-02,1.126937e+00)(8.800000e-02,1.130030e+00)(8.900000e-02,1.133167e+00)(9.000000e-02,1.136348e+00)(9.100000e-02,1.139571e+00)(9.200000e-02,1.142839e+00)(9.300000e-02,1.146150e+00)(9.400000e-02,1.149506e+00)(9.500000e-02,1.152907e+00)(9.600000e-02,1.156353e+00)(9.700000e-02,1.159844e+00)(9.800000e-02,1.163381e+00)(9.900000e-02,1.166963e+00)(1.000000e-01,1.170592e+00)(1.010000e-01,1.174268e+00)(1.020000e-01,1.177990e+00)(1.030000e-01,1.181760e+00)(1.040000e-01,1.185577e+00)(1.050000e-01,1.189443e+00)(1.060000e-01,1.193356e+00)(1.070000e-01,1.197319e+00)(1.080000e-01,1.201330e+00)(1.090000e-01,1.205391e+00)(1.100000e-01,1.209502e+00)(1.110000e-01,1.213663e+00)(1.120000e-01,1.217874e+00)(1.130000e-01,1.222137e+00)(1.140000e-01,1.226450e+00)(1.150000e-01,1.230816e+00)(1.160000e-01,1.235234e+00)(1.170000e-01,1.239704e+00)(1.180000e-01,1.244227e+00)(1.190000e-01,1.248804e+00)(1.200000e-01,1.253434e+00)(1.210000e-01,1.258119e+00)(1.220000e-01,1.262859e+00)(1.230000e-01,1.267654e+00)(1.240000e-01,1.272504e+00)(1.250000e-01,1.277411e+00)(1.260000e-01,1.282374e+00)(1.270000e-01,1.287394e+00)(1.280000e-01,1.292472e+00)(1.290000e-01,1.297607e+00)(1.300000e-01,1.302802e+00)(1.310000e-01,1.308055e+00)(1.320000e-01,1.313368e+00)(1.330000e-01,1.318740e+00)(1.340000e-01,1.324174e+00)(1.350000e-01,1.329668e+00)(1.360000e-01,1.335224e+00)(1.370000e-01,1.340842e+00)(1.380000e-01,1.346523e+00)(1.390000e-01,1.352268e+00)(1.400000e-01,1.358076e+00)(1.410000e-01,1.363948e+00)(1.420000e-01,1.369886e+00)(1.430000e-01,1.375889e+00)(1.440000e-01,1.381958e+00)(1.450000e-01,1.388094e+00)(1.460000e-01,1.394297e+00)(1.470000e-01,1.400569e+00)(1.480000e-01,1.406909e+00)(1.490000e-01,1.413318e+00)(1.500000e-01,1.419797e+00)(1.510000e-01,1.426347e+00)(1.520000e-01,1.432968e+00)(1.530000e-01,1.439661e+00)(1.540000e-01,1.446426e+00)(1.550000e-01,1.453265e+00)(1.560000e-01,1.460178e+00)(1.570000e-01,1.467166e+00)(1.580000e-01,1.474228e+00)(1.590000e-01,1.481367e+00)(1.600000e-01,1.488583e+00)(1.610000e-01,1.495877e+00)(1.620000e-01,1.503249e+00)(1.630000e-01,1.510700e+00)(1.640000e-01,1.518231e+00)(1.650000e-01,1.525842e+00)(1.660000e-01,1.533535e+00)(1.670000e-01,1.541310e+00)(1.680000e-01,1.549169e+00)(1.690000e-01,1.557111e+00)(1.700000e-01,1.565138e+00)(1.710000e-01,1.573251e+00)(1.720000e-01,1.581450e+00)(1.730000e-01,1.589737e+00)(1.740000e-01,1.598112e+00)(1.750000e-01,1.606576e+00)(1.760000e-01,1.615130e+00)(1.770000e-01,1.623775e+00)(1.780000e-01,1.632511e+00)(1.790000e-01,1.641341e+00)(1.800000e-01,1.650265e+00)(1.810000e-01,1.659283e+00)(1.820000e-01,1.668397e+00)(1.830000e-01,1.677607e+00)(1.840000e-01,1.686916e+00)(1.850000e-01,1.696323e+00)(1.860000e-01,1.705830e+00)(1.870000e-01,1.715438e+00)(1.880000e-01,1.725147e+00)(1.890000e-01,1.734960e+00)(1.900000e-01,1.744877e+00)(1.910000e-01,1.754898e+00)(1.920000e-01,1.765027e+00)(1.930000e-01,1.775262e+00)(1.940000e-01,1.785606e+00)(1.950000e-01,1.796060e+00)(1.960000e-01,1.806625e+00)(1.970000e-01,1.817302e+00)(1.980000e-01,1.828092e+00)(1.990000e-01,1.838996e+00)(2.000000e-01,1.850017e+00)

};\addlegendentry{$\sigma=0.2$}           
\addplot[name path=f,color=black] coordinates{
(1.000000e-03,9.999192e-01)(2.000000e-03,1.000238e+00)(3.000000e-03,1.000573e+00)(4.000000e-03,1.001019e+00)(5.000000e-03,1.001593e+00)(6.000000e-03,1.002294e+00)(7.000000e-03,1.003124e+00)(8.000000e-03,1.004082e+00)(9.000000e-03,1.005168e+00)(1.000000e-02,1.006384e+00)(1.100000e-02,1.007729e+00)(1.200000e-02,1.009204e+00)(1.300000e-02,1.010809e+00)(1.400000e-02,1.012545e+00)(1.500000e-02,1.014412e+00)(1.600000e-02,1.016412e+00)(1.700000e-02,1.018544e+00)(1.800000e-02,1.020810e+00)(1.900000e-02,1.023210e+00)(2.000000e-02,1.025745e+00)(2.100000e-02,1.028416e+00)(2.200000e-02,1.031224e+00)(2.300000e-02,1.034169e+00)(2.400000e-02,1.037253e+00)(2.500000e-02,1.040477e+00)(2.600000e-02,1.043841e+00)(2.700000e-02,1.047348e+00)(2.800000e-02,1.050997e+00)(2.900000e-02,1.054791e+00)(3.000000e-02,1.058731e+00)(3.100000e-02,1.062817e+00)(3.200000e-02,1.067052e+00)(3.300000e-02,1.071437e+00)(3.400000e-02,1.075973e+00)(3.500000e-02,1.080662e+00)(3.600000e-02,1.085505e+00)(3.700000e-02,1.090505e+00)(3.800000e-02,1.095663e+00)(3.900000e-02,1.100980e+00)(4.000000e-02,1.106459e+00)(4.100000e-02,1.112102e+00)(4.200000e-02,1.117910e+00)(4.300000e-02,1.123885e+00)(4.400000e-02,1.130030e+00)(4.500000e-02,1.136348e+00)(4.600000e-02,1.142839e+00)(4.700000e-02,1.149506e+00)(4.800000e-02,1.156353e+00)(4.900000e-02,1.163381e+00)(5.000000e-02,1.170592e+00)(5.100000e-02,1.177990e+00)(5.200000e-02,1.185577e+00)(5.300000e-02,1.193356e+00)(5.400000e-02,1.201330e+00)(5.500000e-02,1.209502e+00)(5.600000e-02,1.217874e+00)(5.700000e-02,1.226450e+00)(5.800000e-02,1.235234e+00)(5.900000e-02,1.244227e+00)(6.000000e-02,1.253434e+00)(6.100000e-02,1.262859e+00)(6.200000e-02,1.272504e+00)(6.300000e-02,1.282374e+00)(6.400000e-02,1.292472e+00)(6.500000e-02,1.302802e+00)(6.600000e-02,1.313368e+00)(6.700000e-02,1.324174e+00)(6.800000e-02,1.335224e+00)(6.900000e-02,1.346523e+00)(7.000000e-02,1.358076e+00)(7.100000e-02,1.369886e+00)(7.200000e-02,1.381958e+00)(7.300000e-02,1.394297e+00)(7.400000e-02,1.406909e+00)(7.500000e-02,1.419797e+00)(7.600000e-02,1.432968e+00)(7.700000e-02,1.446426e+00)(7.800000e-02,1.460178e+00)(7.900000e-02,1.474228e+00)(8.000000e-02,1.488583e+00)(8.100000e-02,1.503249e+00)(8.200000e-02,1.518231e+00)(8.300000e-02,1.533535e+00)(8.400000e-02,1.549169e+00)(8.500000e-02,1.565138e+00)(8.600000e-02,1.581450e+00)(8.700000e-02,1.598112e+00)(8.800000e-02,1.615130e+00)(8.900000e-02,1.632511e+00)(9.000000e-02,1.650265e+00)(9.100000e-02,1.668397e+00)(9.200000e-02,1.686916e+00)(9.300000e-02,1.705830e+00)(9.400000e-02,1.725147e+00)(9.500000e-02,1.744877e+00)(9.600000e-02,1.765027e+00)(9.700000e-02,1.785606e+00)(9.800000e-02,1.806625e+00)(9.900000e-02,1.828092e+00)(1.000000e-01,1.850017e+00)(1.010000e-01,1.872410e+00)(1.020000e-01,1.895281e+00)(1.030000e-01,1.918642e+00)(1.040000e-01,1.942502e+00)(1.050000e-01,1.966873e+00)(1.060000e-01,1.991766e+00)(1.070000e-01,2.017194e+00)(1.080000e-01,2.043167e+00)(1.090000e-01,2.069699e+00)(1.100000e-01,2.096802e+00)(1.110000e-01,2.124488e+00)(1.120000e-01,2.152773e+00)(1.130000e-01,2.181668e+00)(1.140000e-01,2.211189e+00)(1.150000e-01,2.241349e+00)(1.160000e-01,2.272165e+00)(1.170000e-01,2.303650e+00)(1.180000e-01,2.335821e+00)(1.190000e-01,2.368694e+00)(1.200000e-01,2.402285e+00)(1.210000e-01,2.436612e+00)(1.220000e-01,2.471692e+00)(1.230000e-01,2.507543e+00)(1.240000e-01,2.544184e+00)(1.250000e-01,2.581632e+00)(1.260000e-01,2.619910e+00)(1.270000e-01,2.659035e+00)(1.280000e-01,2.699029e+00)(1.290000e-01,2.739912e+00)(1.300000e-01,2.781708e+00)(1.310000e-01,2.824438e+00)(1.320000e-01,2.868125e+00)(1.330000e-01,2.912793e+00)(1.340000e-01,2.958466e+00)(1.350000e-01,3.005170e+00)(1.360000e-01,3.052929e+00)(1.370000e-01,3.101771e+00)(1.380000e-01,3.151723e+00)(1.390000e-01,3.202812e+00)(1.400000e-01,3.255068e+00)(1.410000e-01,3.308521e+00)(1.420000e-01,3.363199e+00)(1.430000e-01,3.419136e+00)(1.440000e-01,3.476363e+00)(1.450000e-01,3.534914e+00)(1.460000e-01,3.594822e+00)(1.470000e-01,3.656122e+00)(1.480000e-01,3.718852e+00)(1.490000e-01,3.783047e+00)(1.500000e-01,3.848747e+00)(1.510000e-01,3.915991e+00)(1.520000e-01,3.984818e+00)(1.530000e-01,4.055272e+00)(1.540000e-01,4.127394e+00)(1.550000e-01,4.201230e+00)(1.560000e-01,4.276824e+00)(1.570000e-01,4.354224e+00)(1.580000e-01,4.433478e+00)(1.590000e-01,4.514635e+00)(1.600000e-01,4.597747e+00)(1.610000e-01,4.682867e+00)(1.620000e-01,4.770048e+00)(1.630000e-01,4.859347e+00)(1.640000e-01,4.950821e+00)(1.650000e-01,5.044529e+00)(1.660000e-01,5.140534e+00)(1.670000e-01,5.238897e+00)(1.680000e-01,5.339684e+00)(1.690000e-01,5.442961e+00)(1.700000e-01,5.548798e+00)(1.710000e-01,5.657265e+00)(1.720000e-01,5.768435e+00)(1.730000e-01,5.882385e+00)(1.740000e-01,5.999192e+00)(1.750000e-01,6.118935e+00)(1.760000e-01,6.241698e+00)(1.770000e-01,6.367565e+00)(1.780000e-01,6.496625e+00)(1.790000e-01,6.628969e+00)(1.800000e-01,6.764688e+00)(1.810000e-01,6.903881e+00)(1.820000e-01,7.046645e+00)(1.830000e-01,7.193084e+00)(1.840000e-01,7.343303e+00)(1.850000e-01,7.497412e+00)(1.860000e-01,7.655522e+00)(1.870000e-01,7.817749e+00)(1.880000e-01,7.984214e+00)(1.890000e-01,8.155039e+00)(1.900000e-01,8.330353e+00)(1.910000e-01,8.510286e+00)(1.920000e-01,8.694975e+00)(1.930000e-01,8.884559e+00)(1.940000e-01,9.079183e+00)(1.950000e-01,9.278997e+00)(1.960000e-01,9.484153e+00)(1.970000e-01,9.694812e+00)(1.980000e-01,9.911138e+00)(1.990000e-01,1.013330e+01)(2.000000e-01,1.036147e+01)
};\addlegendentry{$\sigma=0.1$}    
\node at (axis cs:0.07,2.5)(source1){$\scriptstyle{\delta_\star(\epsilon=0.1,\sigma=0.1)}$}; 
\node at (axis cs:0.17,2.5)(source2){$\scriptstyle{\delta_\star(\epsilon=0.2,\sigma=0.2)}$}; 
\node[anchor=north,inner sep=0] (destination1) at (axis cs:0.1,1.9) {$\bullet$};
\node[anchor=north,inner sep=0] (destination2) at (axis cs:0.2,1.9) {$\bullet$};
\draw[->](source1)--(destination1);
\draw[->](source2)--(destination2);
\end{axis}
\end{tikzpicture}
\end{center}
\caption{Theoretical predictions of $\delta_\star$ as a function of $\epsilon$ for different noise variance values. The figure shows that every $\delta$ can be forced to be in the feasibility region by appropriately choosing $\epsilon$. }
\label{fig:deltastar}
\end{figure}

Having characterized the feasibility region of the H-SVR, we are now ready to provide sharp asymptotics of its performance in terms of the prediction risk and the cosine similarity. 

\begin{theorem}[Convergence of the H-SVR]
Define function $\mathcal{D}:\mathbb{R}^2\to \mathbb{R}$ as:
$$
\mathcal{D}(\tilde{\gamma_1},\tilde{\gamma_2})= \sqrt{\delta}\sqrt{\mathbb{E}\left(|\sqrt{\tilde{\gamma}_1^2+\tilde{\gamma}_2^2}{G}+ N| -\frac{\epsilon}{\sigma}\right)_{+}^2} -\tilde{\gamma}_1
$$
where the expectation is taken over the distribution of the independent random variables ${G}$ and $N$ drawn respectively from the standard normal distribution $\mathcal{N}(0,1)$ and $p_N$.  
Let $\tilde{\gamma}_1^\star$ and $\tilde{\gamma}_2^\star$ be the unique solutions to the following optimization problem:
\begin{align}
(\tilde{\gamma}_1^\star,\tilde{\gamma}_2^\star)= \arg\min_{\substack{\tilde{\gamma}_1,\tilde{\gamma}_2\\ \mathcal{D}(\tilde{\gamma}_1,\tilde{\gamma}_{2})\leq 0}} \frac{1}{2}(\tilde{\gamma}_2-\frac{\beta}{\sigma})^2+\frac{1}{2}\tilde{\gamma}_1^2
\label{eq:unique}
\end{align}
Let $\hat{\bf w}_H$ be the solution to the H-SVR. Then, under   Assumption  \ref{ass:regime} and assuming $\delta<\delta_\star$, the following convergences hold true:
$$
\|\hat{\bf w}_H-\boldsymbol{\beta}_\star\|_2\asto \sigma\sqrt{(\tilde{\gamma}_1^\star)^2+(\tilde{\gamma}_2^\star)^2}
$$
and
$$
\frac{\hat{\bf w}_H^{T}\boldsymbol{\beta}_\star}{\|\hat{\bf w}_H\|_2\|\boldsymbol{\beta}_\star\|_2}\asto\frac{\frac{\beta}{\sigma}-\tilde{\gamma}_2^\star}{\sqrt{(\tilde{\gamma}_1^\star)^2+(\tilde{\gamma}_2^\star-\frac{\beta}{\sigma})^2}} 
$$
from which we deduce that the risk of the H-SVR converges to:
$$
\mathcal{R}(\hat{\bf w}_H)-\overline{R}_H\asto 0
$$
where $\overline{R}_H=\sigma^2((\tilde{\gamma}_1^\star)^2+(\tilde{\gamma}_2^\star)^2)$. 
\label{pred_risk_conv_HM}
\end{theorem}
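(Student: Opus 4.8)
The plan is to bring the H-SVR into the min--max form required by the convex Gaussian min--max theorem (CGMT) and then scalarize the associated auxiliary problem. First I would perform the change of variable $\be=\bw-\boldsymbol{\beta}_\star$, under which the two box constraints become $|\sigma n_i-\be^{T}\bx_i|\le\epsilon$ and the objective reads $\tfrac12\|\boldsymbol{\beta}_\star+\be\|^2$. Dualizing each box constraint through the identity $\mathbb{I}_{[-\epsilon,\epsilon]}(v)=\sup_{u}\{uv-\epsilon|u|\}$ turns the H-SVR into
\[
\min_{\be}\max_{\bu}\ \tfrac12\|\boldsymbol{\beta}_\star+\be\|^2+\bu^{T}(\sigma\bn-\bX\be)-\epsilon\|\bu\|_1,
\]
where $\bX$ is the Gaussian feature matrix. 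Because the coupling $\bu^{T}\bX\be$ is bilinear in a matrix with i.i.d.\ standard normal entries, this is exactly the primary optimization (PO) to which the extended CGMT of \cite{kam-chris} applies.

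Next I would write down the associated auxiliary optimization (AO),
\[
\min_{\be}\max_{\bu}\ \tfrac12\|\boldsymbol{\beta}_\star+\be\|^2+\bu^{T}(\sigma\bn+\|\be\|_2\bg)-\epsilon\|\bu\|_1+\|\bu\|_2\,\bh^{T}\be,
\]
and scalarize it. The maximization over $\bu$ is the crux. Fixing $\|\bu\|_2=\tau$ and optimizing the direction against the $\ell_1$ penalty gives, by a Cauchy--Schwarz argument, $\tau\sqrt{\sum_i(|\sigma n_i+\|\be\|_2 g_i|-\epsilon)_+^2}$, so the inner supremum is finite (and equal to $0$) precisely when $\sqrt{\sum_i(|\sigma n_i+\|\be\|_2 g_i|-\epsilon)_+^2}\le-\bh^{T}\be$. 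This reduces the AO to the constrained minimization of $\tfrac12\|\boldsymbol{\beta}_\star+\be\|^2$ subject to that inequality.

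I would then minimize over the direction of $\be$. Decomposing $\be$ along $\boldsymbol{\beta}_\star$ and along the projection of $\bh$ orthogonal to $\boldsymbol{\beta}_\star$ (the remaining components only inflate both sides of the constraint and the objective, hence are set to zero), and writing the $\boldsymbol{\beta}_\star$-component as $-\sigma\tilde{\gamma}_2$ and the $\bh$-component as $-\sigma\tilde{\gamma}_1$, a law-of-large-numbers step applied to $\tfrac1n\sum_i(|\sigma n_i+\|\be\|_2 g_i|-\epsilon)_+^2$ together with the scaling $n/p\to\delta$ and $\|\boldsymbol{\beta}_\star\|\to\beta$ turns the constraint into exactly $\mathcal{D}(\tilde{\gamma}_1,\tilde{\gamma}_2)\le 0$ and the objective into $\tfrac{\sigma^2}{2}\big[(\tilde{\gamma}_2-\tfrac{\beta}{\sigma})^2+\tilde{\gamma}_1^2\big]$, whose minimizer is the pair $(\tilde{\gamma}_1^\star,\tilde{\gamma}_2^\star)$ of the statement (the overall factor $\sigma^2$ being irrelevant to the $\arg\min$). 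Here $\delta<\delta_\star$ is what guarantees, via Theorem \ref{feasibility_region}, that the feasible set is nonempty and the scalar problem bounded.

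Finally I would invoke the CGMT transfer: the optimal cost of the PO concentrates on that of the AO, and because the scalar objective is strongly convex with a unique feasible minimizer, any $\be$ whose norm or whose correlation with $\boldsymbol{\beta}_\star$ stays bounded away from the predicted values yields a strictly larger AO cost. This forces $\|\be\|_2=\|\hat{\bw}_H-\boldsymbol{\beta}_\star\|_2\asto\sigma\sqrt{(\tilde{\gamma}_1^\star)^2+(\tilde{\gamma}_2^\star)^2}$ and $\boldsymbol{\beta}_\star^{T}\be/(\|\boldsymbol{\beta}_\star\|\sigma)\asto-\tilde{\gamma}_2^\star$; the announced cosine similarity then follows by elementary algebra from the decomposition $\hat{\bw}_H=\boldsymbol{\beta}_\star+\be$, and the risk, being $\|\hat{\bw}_H-\boldsymbol{\beta}_\star\|_2^2$ by definition, converges to $\overline{R}_H=\sigma^2((\tilde{\gamma}_1^\star)^2+(\tilde{\gamma}_2^\star)^2)$. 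The main obstacle is precisely this transfer step: one must verify the regularity hypotheses of the framework (compactness and convexity of the relevant sets, and the convex--concave structure) after the dualization, compactify the feasible region, and convert the uniqueness and strong convexity of the scalar problem into concentration of the genuinely high-dimensional functionals $\|\hat{\bw}_H\|_2$ and $\boldsymbol{\beta}_\star^{T}\hat{\bw}_H$, while controlling the lower-order cross term $\boldsymbol{\beta}_\star^{T}\bh$ that I discarded in the scalarization.
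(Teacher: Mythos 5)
Your proposal is correct and follows, in all essentials, the paper's own route: your error vector ${\bf e}={\bf w}-\boldsymbol{\beta}_\star$ is (up to sign) the paper's $\tilde{\bf w}=\boldsymbol{\beta}_\star-{\bf w}$, your min--max reformulation is the PO of \eqref{eq:primary_bis}, your Cauchy--Schwarz step is exactly Lemma~\ref{lem:max}, and your limiting scalar problem is \eqref{eq:unique}. Two differences are worth noting. First, you obtain the min--max form by Fenchel conjugation of the indicator of $[-\epsilon,\epsilon]$ rather than by the paper's Lagrangian duality plus a quadratic identity; the two are equivalent, yours being more direct. Second, you Gaussianize the full coupling ${\bf u}^{T}{\bf X}{\bf e}$, whereas the paper first isolates ${\bf z}={\bf X}^{T}\boldsymbol{\beta}_\star/\|\boldsymbol{\beta}_\star\|$ and replaces only ${\bf P}_{\perp}{\bf X}$. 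Your version is legitimate (after the change of variables, ${\bf X}$ enters only bilinearly and is independent of ${\bf n}$), but it is what creates the cross term ${\bf h}^{T}\boldsymbol{\beta}_\star$ you must control, which the paper's decomposition eliminates by construction; both yield the same limit $\mathcal{D}$. A small slip: the components of ${\bf e}$ orthogonal to both $\boldsymbol{\beta}_\star$ and ${\bf h}$ should be \emph{re-aligned} with $-{\bf P}_{\perp}{\bf h}$ at fixed norm, not ``set to zero,'' since at finite $n$ the constraint's left-hand side is not monotone in $\|{\bf e}\|$; the resulting scalarization is unchanged. Finally, the step you defer as ``the transfer'' is where the paper spends nearly all its effort, and your phrasing understates it: the sets are genuinely non-compact and the problem possibly infeasible, so there is no compactness hypothesis to verify; one must instead work with the doubly truncated AO sequence $\phi_{r,\theta}^{(n)}$, show its cost stabilizes in the truncation radius, check conditions \eqref{eq:condition1}--\eqref{eq:condition2} of Theorem~\ref{th:conv}, and run the perturbed-AO argument (condition \eqref{eq:condition3} on the sets $\mathcal{S}_\xi$) to convert uniqueness of $(\tilde{\gamma}_1^\star,\tilde{\gamma}_2^\star)$ into concentration of $\|\hat{\bf w}_H-\boldsymbol{\beta}_\star\|$ and $\boldsymbol{\beta}_\star^{T}\hat{\bf w}_H$. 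Your plan names these ingredients, so the route is sound; only this block remains to be executed.
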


\begin{remark}{({\bf Behavior of the H-SVR as $\delta\to 0$}).} As $\delta$ tending to zero, one can check after careful investigation of the asymptotic expressions that $\tilde{\gamma}_2^\star\to\frac{\beta}{\sigma}$ and $\tilde{\gamma}_1^\star\to 0$. In this case, the asymptotic risk is thus given by $\beta^2$. 
 To see this, it suffices to note that $\mathcal{D}(\delta^{\frac{1}{4}},\frac{\beta}{\sigma})$ converges from below to zero as $\delta\downarrow 0$. By continuity of $\mathcal{D}$, we may find $\eta$ sufficiently small such that for all $(\tilde{\gamma}_1,\tilde{\gamma}_2)$ satisfying: 
$$
(\tilde{\gamma}_1,\tilde{\gamma}_2)\in \mathcal{C}(\eta):=\left\{(\tilde{\gamma}_1,\tilde{\gamma}_2), \  | \ \tilde{\gamma}_1\in(0,\eta) \ \textnormal{and} \ \tilde{\gamma}_2\in(\frac{\beta}{\sigma}-\eta,\frac{\beta}{\sigma}+\eta)\right\}
$$
we have $\mathcal{D}(\tilde{\gamma}_1,\tilde{\gamma}_2)\leq 0$. Moreover, it is easy to see that the objective in \eqref{eq:unique} can be bounded by $\eta^2$ when $(\tilde{\gamma}_1,\tilde{\gamma}_2)\in\mathcal{C}(\eta)$. Evaluation of this objective when $|{\tilde{\gamma}_2}-\frac{\beta}{\sigma}|\geq \sqrt{2}\eta$  or when $|\tilde{\gamma}_1| \geq \sqrt{2}\eta$ yields values greater than $\eta^2$. Hence, necessarily, $(\tilde{\gamma}_1,\tilde{\gamma}_2)\in \mathcal{C}(\sqrt{2}\eta)$, which proves the desired. Finally, observing that the risk of the null estimator $\hat{\boldsymbol{\beta}}= 0$ is also $\beta^2$, we conclude that the H-SVR is no better than the null estimator when a small number of samples is employed.  
\end{remark}    
\begin{remark}{({\bf Behavior of the H-SVR as $\delta\to\delta_\star$}).} As $\delta\to\delta_\star$, the set $\left\{(\tilde{\gamma}_1,\tilde{\gamma_2}) \ | \ \mathcal{D}(\tilde{\gamma}_1,\tilde{\gamma_2})\leq 0\right\}$ becomes the unit set $\left\{\left(\tilde{\gamma}_1^\circ,0\right)\right\}$ where $\tilde{\gamma}_1^\circ$ is the smallest solution to the equation $\mathcal{D}(\tilde{\gamma}_1,0)=0$. The asymptotic risk thus becomes equal to $\sigma^2(\tilde{\gamma}_1^\circ)^2$ and is as such independent of the SNR $\frac{\beta^2}{\sigma^2}$. As a result, when $\delta$ approaches $\delta_\star$, it is the noise variance and the value of $\epsilon$ that determines the performance of the H-SVR and not the SNR. This is in opposition to the behavior in the operation region $\delta\to 0$, for which the risk tends to $\beta^2$.    
\end{remark}

\subsection{ Soft SVR}
In this section, the soft SVR problem is considered and the convergence of the corresponding prediction risk is established.
\begin{theorem}Under Assumption 1, we have
$$
\mathcal{R}_{\rm Pred}\asto \sigma^2(\tilde \gamma_1^{*2}+\tilde \gamma_2^{*2})
$$
where $\tilde \gamma_1^*$ and $\tilde \gamma_2^*$ are the solutions of the following scalar optimization problem
\begin{align*}
\overline\phi&=\min_{\tilde\gamma_1,\tilde\gamma_2}\sup_{\chi>0}\overline{D}(\tilde \gamma_1,\tilde\gamma_2,\chi)
\end{align*}
with
\begin{align*}
\overline{D}(\tilde\gamma_1,\tilde\gamma_2,\chi)=&\frac{\delta}{\sigma}  \mathbb{E}\left\{C\left[\left(\left|\sqrt{\tilde\gamma_1^2+\tilde\gamma_2^2}G+ N\right|-{\epsilon}/{\sigma}\right)_+ - \frac{C \tilde\gamma_1}{2\chi}\right]\mathbbm{1}_{\big\{\left(\left|\sqrt{\tilde\gamma_1^2+\tilde\gamma_2^2}G+ N\right|-{\epsilon}/{\sigma}\right)_+ \chi> \tilde \gamma_1C\big\}}\right. \\ & \left. +\frac{\chi}{2\tilde\gamma_1}\left(\left|\sqrt{\tilde\gamma_1^2+\tilde\gamma_2^2}G+\ N\right|-\epsilon/\sigma\right)_+^2\mathbbm{1}_{\big\{\left(\left|\sqrt{\tilde\gamma_1^2+\tilde\gamma_2^2}G+ N\right|-\epsilon/\sigma\right)_+ \chi \leq \tilde\gamma_1  C\big\}}\right\}-\frac{\tilde\gamma_1\chi}{2\sigma}+ \frac{1}{2}\tilde\gamma_1^2+ \frac{1}{2}\left(\tilde\gamma_2-\frac{\beta}{\sigma}\right)^2,
\end{align*}
where the expectation is with respect to the distributions of $G$ and $N$ with $G\sim \mathcal{N}(0,1)$ and $N\sim p_N$.
\label{soft_thm}
\end{theorem}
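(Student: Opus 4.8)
The plan is to treat the S-SVR with the same CGMT machinery used for the H-SVR, the only new ingredient being the finite penalty $C$, which replaces the hard feasibility constraint by a Moreau-envelope term. First I would eliminate the slack variables: for fixed ${\bf w}$ the optimal slacks are $\xi_i=(y_i-{\bf w}^T{\bf x}_i-\epsilon)_+$ and $\tilde\xi_i=({\bf w}^T{\bf x}_i-y_i-\epsilon)_+$, so that $\xi_i+\tilde\xi_i=(|y_i-{\bf w}^T{\bf x}_i|-\epsilon)_+$ and \eqref{eq:soft_SVR} collapses to the unconstrained program
\begin{equation*}
\hat{\bf w}_S=\arg\min_{\bf w}\frac{1}{2}\|{\bf w}\|^2+\frac{C}{p}\sum_{i=1}^n\left(|y_i-{\bf w}^T{\bf x}_i|-\epsilon\right)_+ .
\end{equation*}
Substituting $y_i=\boldsymbol{\beta}_\star^T{\bf x}_i+\sigma n_i$ and setting $\bu={\bf w}-\boldsymbol{\beta}_\star$ (so that the prediction risk is exactly $\|\bu\|^2$) turns the residuals into $\sigma n_i-\bu^T{\bf x}_i$. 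To expose the Gaussian design linearly I would introduce the auxiliary vector $\ba=\bX\bu$ (with $\bX$ the matrix of rows ${\bf x}_i^T$) together with a dual multiplier $\bs$, obtaining the primary problem $\min_{\bu,\ba}\max_{\bs}\frac12\|\bu+\boldsymbol{\beta}_\star\|^2+\frac{C}{p}\sum_i(|a_i-\sigma n_i|-\epsilon)_+ +\bs^T(\bX\bu-\ba)$, in which $\bX$ appears only through the bilinear form $\bs^T\bX\bu$.

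Next I would invoke the functional CGMT of \cite{kam-chris}, replacing $\bs^T\bX\bu$ by $\|\bu\|\,\bg^T\bs+\|\bs\|\,\bh^T\bu$ with independent standard Gaussian $\bg,\bh$. Writing $\bs=\chi\boldsymbol\nu$ with $\chi=\|\bs\|\ge0$ and maximizing over the direction $\boldsymbol\nu$ turns the coupling into $\chi\bigl(\bigl\|\,\|\bu\|\bg-\ba\bigr\|+\bh^T\bu\bigr)$; the scalar $\chi$ is the saddle variable appearing in the theorem. I would then decompose $\bu$ into its component along $\boldsymbol{\beta}_\star$ and an orthogonal part, parametrizing them (after scaling by $\sigma$) by $\tilde\gamma_2$ and $\tilde\gamma_1$: the regularizer $\frac12\|{\bf w}\|^2$ then produces the terms $\frac12(\tilde\gamma_2-\beta/\sigma)^2+\frac12\tilde\gamma_1^2$, the orthogonal part of $\bh^T\bu$ contributes the remaining linear-in-$\chi$ terms, and $\|\bu\|=\sigma\sqrt{\tilde\gamma_1^2+\tilde\gamma_2^2}$.

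The core computation is the minimization over the residual $\ba$. After linearizing the norm it separates across coordinates into proximal problems for the $\epsilon$-insensitive loss evaluated at $\|\bu\|g_i-\sigma n_i$. The proximal map of this loss is piecewise: a saturated branch where the multiplier hits $C$ and an interior quadratic branch, which is precisely the origin of the two indicator terms $\mathbbm{1}_{\{(\cdots)_+\chi>\tilde\gamma_1 C\}}$ and its complement in $\overline D$. Normalizing the sum by $p$ and applying the law of large numbers replaces the empirical average by the expectation over $G\sim\mathcal N(0,1)$ and $N\sim p_N$, yielding $\overline D(\tilde\gamma_1,\tilde\gamma_2,\chi)$ and the scalar saddle problem $\overline\phi=\min_{\tilde\gamma_1,\tilde\gamma_2}\sup_{\chi>0}\overline D$. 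Since the risk functional $\|\bu\|^2=\sigma^2(\tilde\gamma_1^2+\tilde\gamma_2^2)$ is continuous in the scalar variables, the CGMT conclusion transfers the concentration of the auxiliary problem to the risk, giving $\mathcal R_{\rm Pred}\asto\sigma^2(\tilde\gamma_1^{*2}+\tilde\gamma_2^{*2})$.

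The delicate part will be the scalarization together with the Moreau-envelope evaluation of the non-smooth $\epsilon$-insensitive loss: one must track the two branches of the proximal map uniformly and justify interchanging the minimization with the large-$n$ limit. On the technical side I would have to verify the CGMT hypotheses---joint convexity in $(\bu,\ba)$ and concavity in $\bs$, a localization/compactness argument so the saddle is attained, and uniqueness of the scalar minimizer $(\tilde\gamma_1^*,\tilde\gamma_2^*)$---in order to legitimately pass from the auxiliary problem back to the prediction risk. As a consistency check, letting $C\to\infty$ should recover the H-SVR expression of Theorem~\ref{pred_risk_conv_HM}.
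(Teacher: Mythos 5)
Your overall strategy is sound, and it is a genuinely different organization from the paper's. The paper never passes to the $\epsilon$-insensitive-loss form: it keeps the constrained program, shows the two slacks can be merged, takes the Lagrangian dual so that the scaled multipliers become a box-constrained max variable $|\bu|\leq C/\sqrt{p}$, proves the primal optimum is bounded (first-order condition plus boundedness of $\|\bX/\sqrt{p}\|$) so that the compact CGMT of \cite{thrampoulidis-IT} applies, decomposes the design along $\boldsymbol{\beta}_\star$ and ${\bf P}_\perp$, and then evaluates the AO maximization in closed form via Lemma \ref{lemma_opt_u}: the square-root trick is applied to the dual norm $\|\bu\|$ (whose coefficient is $\gamma_1$), and the per-coordinate box-constrained quadratic maximization produces exactly the two indicator branches with a single scalar $\chi$. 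Your route --- eliminating the slacks, splitting the residual $\ba=\bX\bu$, applying CGMT to the full design (legitimate, since after substituting $y_i$ the matrix enters only through $\bX\bu$ and $\bn$ is independent, so the paper's ${\bf P}_\perp$ decomposition is not needed), and scalarizing the $\ba$-minimization through the Moreau envelope of the $\epsilon$-insensitive loss --- is the conjugate-dual mirror of this computation; the envelope's quadratic/saturated branches are exactly the paper's two indicators, and the effective noise $\sqrt{\tilde\gamma_1^2+\tilde\gamma_2^2}\,G+N$ comes out the same way ($\|\bu\|\bg$ in your parametrization versus $\gamma_1\bg+\gamma_2\bz$ in the paper's). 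What your route buys is conceptual transparency (ridge-regularized M-estimation with a generic loss, and the natural $C\to\infty$ consistency check); what the paper's buys is that compactness of the max variable is automatic from the box constraint and that the single-$\chi$ formula drops out in one step.

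The one concrete thing you must fix is the identification of $\chi$. Carried out faithfully, your scalarization does not produce the theorem's $\overline{D}$: minimizing over $\ba$ under the coupling $\chi\|\,\|\bu\|\bg-\ba\|$ requires a second square-root variable $\tau$, so you obtain a saddle problem in $(\tilde\gamma_1,\tilde\gamma_2,\chi,\tau)$ in which the Moreau parameter is $\tau/\chi$ rather than $\tilde\gamma_1/\chi$, and the linear-in-$\chi$ terms are $-\chi\tilde\gamma_1+\chi\tau/2$ rather than $-\chi\tilde\gamma_1/2$. The stated formula is yours with $\tau$ pinned to $\tilde\gamma_1$, so claiming the theorem's $\overline{D}$ requires an extra step: either eliminate $\tau$ by proving that the optimal $\tau^\star$ equals $\tilde\gamma_1^\star$ at the saddle (a stationarity identity, not automatic), or bypass the envelope by dualizing the $\ba$-minimization via conjugacy --- the convex conjugate of the $\epsilon$-insensitive loss is $\epsilon|s|$ on the box $|s|\leq C/p$ and $+\infty$ outside, which collapses your AO back to the paper's box-constrained form and lets Lemma \ref{lemma_opt_u} finish with a single $\chi$. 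This conjugacy is also what makes your max variable $\bs$ effectively compact, which you will need for the Sion swap and the CGMT hypotheses you list at the end.
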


\begin{remark}
Theorem \ref{soft_thm} can be used to optimally tune the parameters $(\epsilon,C)$ so that they minimize the asymptotic test risk. For that, one is required to estimate the noise variance $\sigma^2$ and the signal power $\beta^2$. 
These can be easily estimated through the following approach. Let ${\bf y}=[y_1,\cdots,y_n]^{T}$ be the vector of the responses and ${\bf X}=[{\bf x}_1,\cdots, {\bf x}_n]$ the matrix stacking all training samples. Then,
$$
{\bf y}={\bf X}^{T}\boldsymbol{\beta}_\star+\sigma {\bf n}
$$  
where ${\bf n}=[{n}_1,\cdots,n_n]^{T}$. From the strong law of large numbers,
\begin{equation}
\frac{1}{n}{\bf y}^{T}{\bf y}\asto \sigma^2+\beta^2
\label{eq:sig}
\end{equation}
On the other hand, assuming  $\delta>1$, 
\begin{equation}
\frac{1}{n}{\bf y}^{T}({\bf I}_n-{\bf X}^{T}({\bf X}{\bf X}^{T})^{-1}{\bf X}){\bf y} \asto\sigma^2(1-\frac{1}{\delta}) \label{eq:noise}
\end{equation}
Combining \eqref{eq:sig} and \eqref{eq:noise},  consistent estimators for the noise variance $\sigma^2$ and for $\beta^2$ are given by:
\begin{align}
&\hat{\sigma}^2=\frac{\frac{1}{n}{\bf y}^{T}({\bf I}_n-{\bf X}^{T}({\bf X}{\bf X}^{T})^{-1}{\bf X}){\bf y}}{1-\frac{1}{\delta}}\\
&\hat{\beta}^2=\frac{1}{n}{\bf y}^T{\bf y}-\frac{\frac{1}{n}{\bf y}^{T}({\bf I}_n-{\bf X}^{T}({\bf X}{\bf X}^{T})^{-1}{\bf X}){\bf y}}{1-\frac{1}{\delta}}
\end{align}
In case $\delta<1$, the problem of estimating the noise variance becomes more challenging, and there are, to the best of our knowledge, no general unbiased estimators with the same statistical guarantees as in the case $\delta >1$. To address this issue, some other techniques may be used \cite{cherkassky2004practical} but they are not guaranteed to lead to consistent estimators. 
\end{remark}
\begin{remark}{\bf Behavior of the S-SVR when $\delta\to 0$.}
The test risk of the S-SVR is much more involved than that of the H-SVR. Nevertheless, it can be easily seen that when $\delta$ goes to zero,
\begin{align}
\lim_{\delta\to 0}\sup_{\chi\geq 0}\overline{D}(\tilde{\gamma}_1,\tilde{\gamma}_2,\chi)&\stackrel{(a)}{=}\inf_{\delta\geq 0}\sup_{\chi\geq 0}\overline{D}(\tilde{\gamma}_1,\tilde{\gamma}_2,\chi)\\
&\stackrel{(b)}{=}\sup_{\chi\geq 0}\inf_{\delta\geq 0}\overline{D}(\tilde{\gamma}_1,\tilde{\gamma}_2,\chi)\\
&=\sup_{\chi\geq 0}\lim_{\delta\to 0}\overline{D}(\tilde{\gamma}_1,\tilde{\gamma}_2,\chi)\\
&=\frac{1}{2}\tilde{\gamma}_1^2+\frac{1}{2}(\tilde{\gamma}_2-\frac{\beta}{\sigma})^2 \label{eq:g1g2}
\end{align}
where $(a)$ follows from the fact that the objective function is an increasing function in $\delta$ and $(b)$ from the fact that the objective function is convex in $\delta$ and concave in $\chi$. 
The asymptotic limit of $\overline{D}$ in \eqref{eq:g1g2} has a unique minimum given by $\tilde{\gamma}_2=\frac{\beta}{\sigma}$ and $\tilde{\gamma}_1=0$. Plugging these values into that of the test risk, we conclude that when $\delta$ goes to zero, the test risk of the S-SVR converges to that of the null estimator. 
\end{remark}

\section{Numerical illustration}
\label{Numerical_illustration}
\subsection{H-SVR}
\begin{figure}[]
\begin{center}
\subfigure[Risk]{
\begin{tikzpicture}[scale=0.9,font=\fontsize{10}{10}\selectfont]
   \tikzstyle{every axis y label}+=[yshift=0pt]
   \tikzstyle{every axis x label}+=[yshift=5pt]
   \tikzstyle{every axis legend}+=[cells={anchor=west},fill=white,
        at={(0.98,0.98)}, anchor=north east, font=\fontsize{10}{10}\selectfont]
   \begin{axis}[
      xmin=0,
      ymin=0,
      xmax=2,
      ymax=5,
xlabel={$\delta$},
ylabel={Risk }	]		
\addplot[color=black] coordinates{
(1.000000e-02,2.505003e-01)(2.000000e-02,2.510010e-01)(3.000000e-02,2.516026e-01)(4.000000e-02,2.521044e-01)(5.000000e-02,2.527073e-01)(6.000000e-02,2.533109e-01)(7.000000e-02,2.538144e-01)(8.000000e-02,2.544194e-01)(9.000000e-02,2.550250e-01)(1.000000e-01,2.556314e-01)(1.100000e-01,2.562384e-01)(1.200000e-01,2.569476e-01)(1.300000e-01,2.575563e-01)(1.400000e-01,2.581656e-01)(1.500000e-01,2.588774e-01)(1.600000e-01,2.595903e-01)(1.700000e-01,2.602020e-01)(1.800000e-01,2.609166e-01)(1.900000e-01,2.616323e-01)(2.000000e-01,2.623488e-01)(2.100000e-01,2.630664e-01)(2.200000e-01,2.638877e-01)(2.300000e-01,2.646074e-01)(2.400000e-01,2.654310e-01)(2.500000e-01,2.661528e-01)(2.600000e-01,2.669789e-01)(2.700000e-01,2.678062e-01)(2.800000e-01,2.686349e-01)(2.900000e-01,2.694648e-01)(3.000000e-01,2.702960e-01)(3.100000e-01,2.711285e-01)(3.200000e-01,2.720666e-01)(3.300000e-01,2.729018e-01)(3.400000e-01,2.738429e-01)(3.500000e-01,2.747856e-01)(3.600000e-01,2.757300e-01)(3.700000e-01,2.766760e-01)(3.800000e-01,2.776236e-01)(3.900000e-01,2.785728e-01)(4.000000e-01,2.795237e-01)(4.100000e-01,2.805821e-01)(4.200000e-01,2.815364e-01)(4.300000e-01,2.825986e-01)(4.400000e-01,2.836628e-01)(4.500000e-01,2.847290e-01)(4.600000e-01,2.857972e-01)(4.700000e-01,2.869745e-01)(4.800000e-01,2.880469e-01)(4.900000e-01,2.892288e-01)(5.000000e-01,2.903054e-01)(5.100000e-01,2.914920e-01)(5.200000e-01,2.926810e-01)(5.300000e-01,2.938724e-01)(5.400000e-01,2.951749e-01)(5.500000e-01,2.963714e-01)(5.600000e-01,2.976794e-01)(5.700000e-01,2.989902e-01)(5.800000e-01,3.001944e-01)(5.900000e-01,3.015108e-01)(6.000000e-01,3.029402e-01)(6.100000e-01,3.042626e-01)(6.200000e-01,3.056984e-01)(6.300000e-01,3.070268e-01)(6.400000e-01,3.084692e-01)(6.500000e-01,3.099149e-01)(6.600000e-01,3.113640e-01)(6.700000e-01,3.129284e-01)(6.800000e-01,3.143845e-01)(6.900000e-01,3.159564e-01)(7.000000e-01,3.175323e-01)(7.100000e-01,3.191120e-01)(7.200000e-01,3.206957e-01)(7.300000e-01,3.222833e-01)(7.400000e-01,3.239886e-01)(7.500000e-01,3.255844e-01)(7.600000e-01,3.272984e-01)(7.700000e-01,3.291317e-01)(7.800000e-01,3.308550e-01)(7.900000e-01,3.325829e-01)(8.000000e-01,3.344309e-01)(8.100000e-01,3.362840e-01)(8.200000e-01,3.381423e-01)(8.300000e-01,3.401222e-01)(8.400000e-01,3.419910e-01)(8.500000e-01,3.439823e-01)(8.600000e-01,3.459792e-01)(8.700000e-01,3.479820e-01)(8.800000e-01,3.501089e-01)(8.900000e-01,3.521236e-01)(9.000000e-01,3.542630e-01)(9.100000e-01,3.564090e-01)(9.200000e-01,3.586812e-01)(9.300000e-01,3.608405e-01)(9.400000e-01,3.631268e-01)(9.500000e-01,3.654203e-01)(9.600000e-01,3.678423e-01)(9.700000e-01,3.701506e-01)(9.800000e-01,3.725882e-01)(9.900000e-01,3.750338e-01)(1,3.776103e-01)(1.010000e+00,3.801956e-01)(1.020000e+00,3.827897e-01)(1.030000e+00,3.853926e-01)(1.040000e+00,3.880044e-01)(1.050000e+00,3.907500e-01)(1.060000e+00,3.936308e-01)(1.070000e+00,3.963962e-01)(1.080000e+00,3.992976e-01)(1.090000e+00,4.022096e-01)(1.100000e+00,4.052596e-01)(1.110000e+00,4.083210e-01)(1.120000e+00,4.113940e-01)(1.130000e+00,4.144784e-01)(1.140000e+00,4.177037e-01)(1.150000e+00,4.210712e-01)(1.160000e+00,4.243220e-01)(1.170000e+00,4.277160e-01)(1.180000e+00,4.312549e-01)(1.190000e+00,4.348084e-01)(1.200000e+00,4.383764e-01)(1.210000e+00,4.420920e-01)(1.220000e+00,4.458233e-01)(1.230000e+00,4.497044e-01)(1.240000e+00,4.536022e-01)(1.250000e+00,4.575170e-01)(1.260000e+00,4.615844e-01)(1.270000e+00,4.658062e-01)(1.280000e+00,4.700474e-01)(1.290000e+00,4.744454e-01)(1.300000e+00,4.788640e-01)(1.310000e+00,4.834421e-01)(1.320000e+00,4.880420e-01)(1.330000e+00,4.928040e-01)(1.340000e+00,4.977303e-01)(1.350000e+00,5.026810e-01)(1.360000e+00,5.077988e-01)(1.370000e+00,5.129424e-01)(1.380000e+00,5.184000e-01)(1.390000e+00,5.238864e-01)(1.400000e+00,5.294018e-01)(1.410000e+00,5.352386e-01)(1.420000e+00,5.411074e-01)(1.430000e+00,5.473040e-01)(1.440000e+00,5.535360e-01)(1.450000e+00,5.599529e-01)(1.460000e+00,5.665573e-01)(1.470000e+00,5.733518e-01)(1.480000e+00,5.803392e-01)(1.490000e+00,5.875223e-01)(1.500000e+00,5.949037e-01)(1.510000e+00,6.026417e-01)(1.520000e+00,6.105860e-01)(1.530000e+00,6.187396e-01)(1.540000e+00,6.271056e-01)(1.550000e+00,6.358468e-01)(1.560000e+00,6.449696e-01)(1.570000e+00,6.543192e-01)(1.580000e+00,6.640620e-01)(1.590000e+00,6.740410e-01)(1.600000e+00,6.845908e-01)(1.610000e+00,6.955560e-01)(1.620000e+00,7.069446e-01)(1.630000e+00,7.187648e-01)(1.640000e+00,7.311960e-01)(1.650000e+00,7.442513e-01)(1.660000e+00,7.579444e-01)(1.670000e+00,7.722894e-01)(1.680000e+00,7.874788e-01)(1.690000e+00,8.033537e-01)(1.700000e+00,8.202925e-01)(1.710000e+00,8.383234e-01)(1.720000e+00,8.574760e-01)(1.730000e+00,8.781564e-01)(1.740000e+00,9.002214e-01)(1.750000e+00,9.240977e-01)(1.760000e+00,9.502350e-01)(1.770000e+00,9.787145e-01)(1.780000e+00,1.010427e+00)(1.790000e+00,1.045915e+00)(1.800000e+00,1.086389e+00)(1.810000e+00,1.133586e+00)(1.820000e+00,1.190499e+00)(1.830000e+00,1.263151e+00)(1.840000e+00,1.366795e+00)
};\addlegendentry{$\beta=0.5$}   

\addplot[color=blue] coordinates{

(1.000000e-02,9.960040e-01)(2.000000e-02,9.920160e-01)(3.000000e-02,9.880360e-01)(4.000000e-02,9.842624e-01)(5.000000e-02,9.802980e-01)(6.000000e-02,9.765392e-01)(7.000000e-02,9.725904e-01)(8.000000e-02,9.688465e-01)(9.000000e-02,9.651098e-01)(1.000000e-01,9.613803e-01)(1.100000e-01,9.576580e-01)(1.200000e-01,9.539429e-01)(1.300000e-01,9.502350e-01)(1.400000e-01,9.465344e-01)(1.500000e-01,9.430352e-01)(1.600000e-01,9.393486e-01)(1.700000e-01,9.358628e-01)(1.800000e-01,9.321902e-01)(1.900000e-01,9.287177e-01)(2.000000e-01,9.252516e-01)(2.100000e-01,9.217920e-01)(2.200000e-01,9.183389e-01)(2.300000e-01,9.148923e-01)(2.400000e-01,9.116430e-01)(2.500000e-01,9.082090e-01)(2.600000e-01,9.049717e-01)(2.700000e-01,9.015502e-01)(2.800000e-01,8.983248e-01)(2.900000e-01,8.951052e-01)(3.000000e-01,8.918914e-01)(3.100000e-01,8.886833e-01)(3.200000e-01,8.854810e-01)(3.300000e-01,8.824724e-01)(3.400000e-01,8.792813e-01)(3.500000e-01,8.762832e-01)(3.600000e-01,8.731034e-01)(3.700000e-01,8.701158e-01)(3.800000e-01,8.671334e-01)(3.900000e-01,8.641562e-01)(4.000000e-01,8.611840e-01)(4.100000e-01,8.584023e-01)(4.200000e-01,8.554400e-01)(4.300000e-01,8.526676e-01)(4.400000e-01,8.497152e-01)(4.500000e-01,8.469521e-01)(4.600000e-01,8.441934e-01)(4.700000e-01,8.414393e-01)(4.800000e-01,8.388728e-01)(4.900000e-01,8.361274e-01)(5.000000e-01,8.333864e-01)(5.100000e-01,8.308323e-01)(5.200000e-01,8.282820e-01)(5.300000e-01,8.257357e-01)(5.400000e-01,8.231933e-01)(5.500000e-01,8.206548e-01)(5.600000e-01,8.183012e-01)(5.700000e-01,8.157702e-01)(5.800000e-01,8.134236e-01)(5.900000e-01,8.110804e-01)(6.000000e-01,8.087405e-01)(6.100000e-01,8.064040e-01)(6.200000e-01,8.040709e-01)(6.300000e-01,8.019203e-01)(6.400000e-01,7.997725e-01)(6.500000e-01,7.974490e-01)(6.600000e-01,7.953072e-01)(6.700000e-01,7.933465e-01)(6.800000e-01,7.912103e-01)(6.900000e-01,7.890769e-01)(7.000000e-01,7.871238e-01)(7.100000e-01,7.851732e-01)(7.200000e-01,7.832250e-01)(7.300000e-01,7.812792e-01)(7.400000e-01,7.795124e-01)(7.500000e-01,7.775712e-01)(7.600000e-01,7.758086e-01)(7.700000e-01,7.740480e-01)(7.800000e-01,7.724652e-01)(7.900000e-01,7.707084e-01)(8.000000e-01,7.691290e-01)(8.100000e-01,7.673760e-01)(8.200000e-01,7.658000e-01)(8.300000e-01,7.644005e-01)(8.400000e-01,7.628276e-01)(8.500000e-01,7.614308e-01)(8.600000e-01,7.600352e-01)(8.700000e-01,7.586410e-01)(8.800000e-01,7.572480e-01)(8.900000e-01,7.560303e-01)(9.000000e-01,7.548134e-01)(9.100000e-01,7.535976e-01)(9.200000e-01,7.523828e-01)(9.300000e-01,7.511689e-01)(9.400000e-01,7.501292e-01)(9.500000e-01,7.490903e-01)(9.600000e-01,7.482250e-01)(9.700000e-01,7.471874e-01)(9.800000e-01,7.463232e-01)(9.900000e-01,7.454596e-01)(1,7.445964e-01)(1.010000e+00,7.439063e-01)(1.020000e+00,7.432164e-01)(1.030000e+00,7.425269e-01)(1.040000e+00,7.420100e-01)(1.050000e+00,7.414932e-01)(1.060000e+00,7.409766e-01)(1.070000e+00,7.404603e-01)(1.080000e+00,7.401161e-01)(1.090000e+00,7.397720e-01)(1.100000e+00,7.396000e-01)(1.110000e+00,7.392560e-01)(1.120000e+00,7.390841e-01)(1.130000e+00,7.390841e-01)(1.140000e+00,7.390841e-01)(1.150000e+00,7.390841e-01)(1.160000e+00,7.392560e-01)(1.170000e+00,7.392560e-01)(1.180000e+00,7.396000e-01)(1.190000e+00,7.399440e-01)(1.200000e+00,7.402882e-01)(1.210000e+00,7.406324e-01)(1.220000e+00,7.411488e-01)(1.230000e+00,7.418377e-01)(1.240000e+00,7.425269e-01)(1.250000e+00,7.432164e-01)(1.260000e+00,7.440788e-01)(1.270000e+00,7.449416e-01)(1.280000e+00,7.459777e-01)(1.290000e+00,7.471874e-01)(1.300000e+00,7.483980e-01)(1.310000e+00,7.496096e-01)(1.320000e+00,7.509956e-01)(1.330000e+00,7.525563e-01)(1.340000e+00,7.541186e-01)(1.350000e+00,7.558564e-01)(1.360000e+00,7.575962e-01)(1.370000e+00,7.595123e-01)(1.380000e+00,7.616053e-01)(1.390000e+00,7.637012e-01)(1.400000e+00,7.661501e-01)(1.410000e+00,7.686029e-01)(1.420000e+00,7.710596e-01)(1.430000e+00,7.738721e-01)(1.440000e+00,7.766897e-01)(1.450000e+00,7.796890e-01)(1.460000e+00,7.830480e-01)(1.470000e+00,7.864142e-01)(1.480000e+00,7.899654e-01)(1.490000e+00,7.937028e-01)(1.500000e+00,7.976276e-01)(1.510000e+00,8.017412e-01)(1.520000e+00,8.062244e-01)(1.530000e+00,8.107202e-01)(1.540000e+00,8.155896e-01)(1.550000e+00,8.208360e-01)(1.560000e+00,8.262810e-01)(1.570000e+00,8.319264e-01)(1.580000e+00,8.379572e-01)(1.590000e+00,8.441934e-01)(1.600000e+00,8.510062e-01)(1.610000e+00,8.580317e-01)(1.620000e+00,8.656442e-01)(1.630000e+00,8.734772e-01)(1.640000e+00,8.819088e-01)(1.650000e+00,8.909472e-01)(1.660000e+00,9.004112e-01)(1.670000e+00,9.104976e-01)(1.680000e+00,9.214080e-01)(1.690000e+00,9.329628e-01)(1.700000e+00,9.453673e-01)(1.710000e+00,9.588326e-01)(1.720000e+00,9.731823e-01)(1.730000e+00,9.888314e-01)(1.740000e+00,1.006009e+00)(1.750000e+00,1.024549e+00)(1.760000e+00,1.045097e+00)(1.770000e+00,1.067916e+00)(1.780000e+00,1.093488e+00)(1.790000e+00,1.122540e+00)(1.800000e+00,1.156055e+00)(1.810000e+00,1.195742e+00)(1.820000e+00,1.244117e+00)(1.830000e+00,1.306449e+00)(1.840000e+00,1.397360e+00)

};\addlegendentry{$\beta=1$}           
\addplot[color=red] coordinates{

(1.000000e-02,3.970853e+00)(2.000000e-02,3.941813e+00)(3.000000e-02,3.912880e+00)(4.000000e-02,3.884053e+00)(5.000000e-02,3.855332e+00)(6.000000e-02,3.826718e+00)(7.000000e-02,3.798211e+00)(8.000000e-02,3.769811e+00)(9.000000e-02,3.741516e+00)(1.000000e-01,3.713329e+00)(1.100000e-01,3.685248e+00)(1.200000e-01,3.657274e+00)(1.300000e-01,3.629787e+00)(1.400000e-01,3.602024e+00)(1.500000e-01,3.574368e+00)(1.600000e-01,3.547196e+00)(1.700000e-01,3.519751e+00)(1.800000e-01,3.492787e+00)(1.900000e-01,3.465555e+00)(2.000000e-01,3.438799e+00)(2.100000e-01,3.412148e+00)(2.200000e-01,3.385600e+00)(2.300000e-01,3.359156e+00)(2.400000e-01,3.332815e+00)(2.500000e-01,3.306579e+00)(2.600000e-01,3.280445e+00)(2.700000e-01,3.254416e+00)(2.800000e-01,3.228490e+00)(2.900000e-01,3.203026e+00)(3.000000e-01,3.177663e+00)(3.100000e-01,3.152045e+00)(3.200000e-01,3.126885e+00)(3.300000e-01,3.101825e+00)(3.400000e-01,3.076867e+00)(3.500000e-01,3.052009e+00)(3.600000e-01,3.027252e+00)(3.700000e-01,3.002942e+00)(3.800000e-01,2.978386e+00)(3.900000e-01,2.954273e+00)(4.000000e-01,2.930259e+00)(4.100000e-01,2.906343e+00)(4.200000e-01,2.882525e+00)(4.300000e-01,2.858805e+00)(4.400000e-01,2.835519e+00)(4.500000e-01,2.811994e+00)(4.600000e-01,2.788900e+00)(4.700000e-01,2.765902e+00)(4.800000e-01,2.742998e+00)(4.900000e-01,2.720520e+00)(5.000000e-01,2.697806e+00)(5.100000e-01,2.675514e+00)(5.200000e-01,2.653315e+00)(5.300000e-01,2.631208e+00)(5.400000e-01,2.609194e+00)(5.500000e-01,2.587594e+00)(5.600000e-01,2.565763e+00)(5.700000e-01,2.544344e+00)(5.800000e-01,2.523015e+00)(5.900000e-01,2.502091e+00)(6.000000e-01,2.480940e+00)(6.100000e-01,2.460192e+00)(6.200000e-01,2.439532e+00)(6.300000e-01,2.419269e+00)(6.400000e-01,2.398781e+00)(6.500000e-01,2.378689e+00)(6.600000e-01,2.358682e+00)(6.700000e-01,2.338758e+00)(6.800000e-01,2.319224e+00)(6.900000e-01,2.299772e+00)(7.000000e-01,2.280402e+00)(7.100000e-01,2.261114e+00)(7.200000e-01,2.242207e+00)(7.300000e-01,2.223379e+00)(7.400000e-01,2.204631e+00)(7.500000e-01,2.185962e+00)(7.600000e-01,2.167667e+00)(7.700000e-01,2.149449e+00)(7.800000e-01,2.131600e+00)(7.900000e-01,2.113534e+00)(8.000000e-01,2.095835e+00)(8.100000e-01,2.078499e+00)(8.200000e-01,2.060947e+00)(8.300000e-01,2.043756e+00)(8.400000e-01,2.026637e+00)(8.500000e-01,2.009873e+00)(8.600000e-01,1.993179e+00)(8.700000e-01,1.976555e+00)(8.800000e-01,1.960280e+00)(8.900000e-01,1.944072e+00)(9.000000e-01,1.927932e+00)(9.100000e-01,1.912136e+00)(9.200000e-01,1.896404e+00)(9.300000e-01,1.880738e+00)(9.400000e-01,1.865410e+00)(9.500000e-01,1.850144e+00)(9.600000e-01,1.834941e+00)(9.700000e-01,1.820071e+00)(9.800000e-01,1.805261e+00)(9.900000e-01,1.790779e+00)(1,1.776356e+00)(1.010000e+00,1.761991e+00)(1.020000e+00,1.747948e+00)(1.030000e+00,1.733962e+00)(1.040000e+00,1.720295e+00)(1.050000e+00,1.706681e+00)(1.060000e+00,1.693382e+00)(1.070000e+00,1.679875e+00)(1.080000e+00,1.666939e+00)(1.090000e+00,1.653796e+00)(1.100000e+00,1.641217e+00)(1.110000e+00,1.628431e+00)(1.120000e+00,1.615949e+00)(1.130000e+00,1.603769e+00)(1.140000e+00,1.591635e+00)(1.150000e+00,1.579546e+00)(1.160000e+00,1.567754e+00)(1.170000e+00,1.556007e+00)(1.180000e+00,1.544552e+00)(1.190000e+00,1.533387e+00)(1.200000e+00,1.522262e+00)(1.210000e+00,1.511178e+00)(1.220000e+00,1.500380e+00)(1.230000e+00,1.489620e+00)(1.240000e+00,1.479142e+00)(1.250000e+00,1.468944e+00)(1.260000e+00,1.458539e+00)(1.270000e+00,1.448653e+00)(1.280000e+00,1.438800e+00)(1.290000e+00,1.429220e+00)(1.300000e+00,1.419672e+00)(1.310000e+00,1.410394e+00)(1.320000e+00,1.401146e+00)(1.330000e+00,1.392164e+00)(1.340000e+00,1.383446e+00)(1.350000e+00,1.374756e+00)(1.360000e+00,1.366327e+00)(1.370000e+00,1.358157e+00)(1.380000e+00,1.350012e+00)(1.390000e+00,1.342122e+00)(1.400000e+00,1.334487e+00)(1.410000e+00,1.326874e+00)(1.420000e+00,1.319741e+00)(1.430000e+00,1.312628e+00)(1.440000e+00,1.305535e+00)(1.450000e+00,1.298916e+00)(1.460000e+00,1.292314e+00)(1.470000e+00,1.286183e+00)(1.480000e+00,1.280066e+00)(1.490000e+00,1.274189e+00)(1.500000e+00,1.268552e+00)(1.510000e+00,1.263151e+00)(1.520000e+00,1.257987e+00)(1.530000e+00,1.253280e+00)(1.540000e+00,1.248583e+00)(1.550000e+00,1.244117e+00)(1.560000e+00,1.240105e+00)(1.570000e+00,1.236322e+00)(1.580000e+00,1.232988e+00)(1.590000e+00,1.229659e+00)(1.600000e+00,1.226999e+00)(1.610000e+00,1.224342e+00)(1.620000e+00,1.222351e+00)(1.630000e+00,1.220583e+00)(1.640000e+00,1.219478e+00)(1.650000e+00,1.218595e+00)(1.660000e+00,1.218154e+00)(1.670000e+00,1.218374e+00)(1.680000e+00,1.219258e+00)(1.690000e+00,1.220583e+00)(1.700000e+00,1.222572e+00)(1.710000e+00,1.225449e+00)(1.720000e+00,1.229216e+00)(1.730000e+00,1.233877e+00)(1.740000e+00,1.239660e+00)(1.750000e+00,1.246572e+00)(1.760000e+00,1.255072e+00)(1.770000e+00,1.265400e+00)(1.780000e+00,1.277578e+00)(1.790000e+00,1.292769e+00)(1.800000e+00,1.311025e+00)(1.810000e+00,1.334256e+00)(1.820000e+00,1.363990e+00)(1.830000e+00,1.404699e+00)(1.840000e+00,1.466763e+00)

};\addlegendentry{$\beta=2$}     

\addplot[color=red,mark=diamond,mark options={solid},mark size=2pt,only marks]  plot coordinates{
(1.000000e-01,3.731345e+00)(2.000000e-01,3.454522e+00)(3.000000e-01,3.193624e+00)(4.000000e-01,2.947952e+00)(5.000000e-01,2.715224e+00)(6.000000e-01,2.496989e+00)(7.000000e-01,2.297213e+00)(8.000000e-01,2.104236e+00)(9.000000e-01,1.948113e+00)(1,1.794368e+00)(1.100000e+00,1.657849e+00)(1.200000e+00,1.533925e+00)(1.300000e+00,1.441280e+00)(1.400000e+00,1.348708e+00)(1.500000e+00,1.296007e+00)(1.600000e+00,1.258943e+00)(1.700000e+00,1.246101e+00)
};
\addplot[color=blue,mark=diamond,mark options={solid},mark size=2pt,only marks]  plot coordinates{
(1.000000e-01,9.623300e-01)(2.000000e-01,9.267071e-01)(3.000000e-01,8.948144e-01)(4.000000e-01,8.628190e-01)(5.000000e-01,8.339286e-01)(6.000000e-01,8.102829e-01)(7.000000e-01,7.929244e-01)(8.000000e-01,7.726942e-01)(9.000000e-01,7.609174e-01)(1,7.473155e-01)(1.100000e+00,7.401565e-01)(1.200000e+00,7.470459e-01)(1.300000e+00,7.569682e-01)(1.400000e+00,7.759593e-01)(1.500000e+00,8.170698e-01)(1.600000e+00,8.718171e-01)(1.700000e+00,9.747565e-01)
};
\addplot[color=black,mark=diamond,mark options={solid},mark size=2pt,only marks]  plot coordinates{
(1.000000e-01,2.537008e-01)(2.000000e-01,2.606083e-01)(3.000000e-01,2.688749e-01)(4.000000e-01,2.776354e-01)(5.000000e-01,2.879298e-01)(6.000000e-01,3.027557e-01)(7.000000e-01,3.177375e-01)(8.000000e-01,3.347237e-01)(9.000000e-01,3.532709e-01)(1,3.774833e-01)(1.100000e+00,4.086149e-01)(1.200000e+00,4.415180e-01)(1.300000e+00,4.810070e-01)(1.400000e+00,5.407476e-01)(1.500000e+00,6.107302e-01)(1.600000e+00,7.018336e-01)(1.700000e+00,8.282322e-01)};

\addplot[color=red,mark =none,dotted] coordinates{(0,4)(1,4)(2,4)};
\addplot[color=brown,mark =none,dotted] coordinates{(1.8500,0)(1.8500,5)(1.8500,10)};
\addplot[color=blue,mark =none,dotted] coordinates{(0,1)(1,1)(2,1)};
\addplot[color=black,mark =none,dotted] coordinates{(0,0.25)(1,0.25)(2,0.25)};
\node at (axis cs:1.85,0.15)(source1){${\delta_\star}$}; 
\end{axis}
\end{tikzpicture}}
\subfigure[Cos similarity]{
\begin{tikzpicture}[scale=0.9,font=\fontsize{10}{10}\selectfont]
   \tikzstyle{every axis y label}+=[yshift=0pt]
   \tikzstyle{every axis x label}+=[yshift=5pt]
   \tikzstyle{every axis legend}+=[cells={anchor=west},fill=white,
        at={(0.02,0.98)}, anchor=north west, font=\fontsize{10}{10}\selectfont]
   \begin{axis}[
      xmin=0,
      ymin=0,
      xmax=2,
      ymax=1,
xlabel={$\delta$},
ylabel={Cos similarity }	]		
\addplot[color=black] coordinates{(1.000000e-02,3.847282e-02)(2.000000e-02,5.449654e-02)(3.000000e-02,6.566695e-02)(4.000000e-02,7.629386e-02)(5.000000e-02,8.475217e-02)(6.000000e-02,9.249480e-02)(7.000000e-02,1.004640e-01)(8.000000e-02,1.071689e-01)(9.000000e-02,1.135233e-01)(1.000000e-01,1.195832e-01)(1.100000e-01,1.253917e-01)(1.200000e-01,1.303900e-01)(1.300000e-01,1.358139e-01)(1.400000e-01,1.410671e-01)(1.500000e-01,1.456383e-01)(1.600000e-01,1.501071e-01)(1.700000e-01,1.549808e-01)(1.800000e-01,1.592568e-01)(1.900000e-01,1.634554e-01)(2.000000e-01,1.675826e-01)(2.100000e-01,1.716439e-01)(2.200000e-01,1.752061e-01)(2.300000e-01,1.791594e-01)(2.400000e-01,1.826399e-01)(2.500000e-01,1.864991e-01)(2.600000e-01,1.899085e-01)(2.700000e-01,1.932894e-01)(2.800000e-01,1.966436e-01)(2.900000e-01,1.999727e-01)(3.000000e-01,2.032783e-01)(3.100000e-01,2.065617e-01)(3.200000e-01,2.094602e-01)(3.300000e-01,2.127090e-01)(3.400000e-01,2.155859e-01)(3.500000e-01,2.184555e-01)(3.600000e-01,2.213185e-01)(3.700000e-01,2.241752e-01)(3.800000e-01,2.270264e-01)(3.900000e-01,2.298724e-01)(4.000000e-01,2.327139e-01)(4.100000e-01,2.352287e-01)(4.200000e-01,2.380663e-01)(4.300000e-01,2.405856e-01)(4.400000e-01,2.431092e-01)(4.500000e-01,2.456374e-01)(4.600000e-01,2.481703e-01)(4.700000e-01,2.504061e-01)(4.800000e-01,2.529522e-01)(4.900000e-01,2.552074e-01)(5.000000e-01,2.577672e-01)(5.100000e-01,2.600421e-01)(5.200000e-01,2.623285e-01)(5.300000e-01,2.646263e-01)(5.400000e-01,2.666528e-01)(5.500000e-01,2.689761e-01)(5.600000e-01,2.710332e-01)(5.700000e-01,2.731066e-01)(5.800000e-01,2.754699e-01)(5.900000e-01,2.775739e-01)(6.000000e-01,2.794253e-01)(6.100000e-01,2.815644e-01)(6.200000e-01,2.834550e-01)(6.300000e-01,2.856293e-01)(6.400000e-01,2.875590e-01)(6.500000e-01,2.895093e-01)(6.600000e-01,2.914803e-01)(6.700000e-01,2.932159e-01)(6.800000e-01,2.952300e-01)(6.900000e-01,2.970123e-01)(7.000000e-01,2.988187e-01)(7.100000e-01,3.006494e-01)(7.200000e-01,3.025043e-01)(7.300000e-01,3.043836e-01)(7.400000e-01,3.060426e-01)(7.500000e-01,3.079724e-01)(7.600000e-01,3.096850e-01)(7.700000e-01,3.111848e-01)(7.800000e-01,3.129542e-01)(7.900000e-01,3.147515e-01)(8.000000e-01,3.163403e-01)(8.100000e-01,3.179599e-01)(8.200000e-01,3.196104e-01)(8.300000e-01,3.210592e-01)(8.400000e-01,3.227732e-01)(8.500000e-01,3.242882e-01)(8.600000e-01,3.258371e-01)(8.700000e-01,3.274203e-01)(8.800000e-01,3.288107e-01)(8.900000e-01,3.304640e-01)(9.000000e-01,3.319273e-01)(9.100000e-01,3.334278e-01)(9.200000e-01,3.347430e-01)(9.300000e-01,3.363201e-01)(9.400000e-01,3.377144e-01)(9.500000e-01,3.391493e-01)(9.600000e-01,3.404059e-01)(9.700000e-01,3.419239e-01)(9.800000e-01,3.432662e-01)(9.900000e-01,3.446525e-01)(1,3.458676e-01)(1.010000e+00,3.471292e-01)(1.020000e+00,3.484379e-01)(1.030000e+00,3.497942e-01)(1.040000e+00,3.511987e-01)(1.050000e+00,3.524404e-01)(1.060000e+00,3.535221e-01)(1.070000e+00,3.548672e-01)(1.080000e+00,3.560551e-01)(1.090000e+00,3.572976e-01)(1.100000e+00,3.583874e-01)(1.110000e+00,3.595347e-01)(1.120000e+00,3.607404e-01)(1.130000e+00,3.620053e-01)(1.140000e+00,3.631251e-01)(1.150000e+00,3.641028e-01)(1.160000e+00,3.653493e-01)(1.170000e+00,3.664570e-01)(1.180000e+00,3.674290e-01)(1.190000e+00,3.684704e-01)(1.200000e+00,3.695826e-01)(1.210000e+00,3.705658e-01)(1.220000e+00,3.716237e-01)(1.230000e+00,3.725577e-01)(1.240000e+00,3.735705e-01)(1.250000e+00,3.746639e-01)(1.260000e+00,3.756410e-01)(1.270000e+00,3.765054e-01)(1.280000e+00,3.774580e-01)(1.290000e+00,3.783037e-01)(1.300000e+00,3.792429e-01)(1.310000e+00,3.800816e-01)(1.320000e+00,3.810194e-01)(1.330000e+00,3.818638e-01)(1.340000e+00,3.826187e-01)(1.350000e+00,3.834827e-01)(1.360000e+00,3.842650e-01)(1.370000e+00,3.851637e-01)(1.380000e+00,3.857960e-01)(1.390000e+00,3.865534e-01)(1.400000e+00,3.874399e-01)(1.410000e+00,3.880759e-01)(1.420000e+00,3.888512e-01)(1.430000e+00,3.893882e-01)(1.440000e+00,3.900759e-01)(1.450000e+00,3.907293e-01)(1.460000e+00,3.913554e-01)(1.470000e+00,3.919614e-01)(1.480000e+00,3.925552e-01)(1.490000e+00,3.931451e-01)(1.500000e+00,3.937399e-01)(1.510000e+00,3.941604e-01)(1.520000e+00,3.946062e-01)(1.530000e+00,3.950880e-01)(1.540000e+00,3.956178e-01)(1.550000e+00,3.960207e-01)(1.560000e+00,3.963113e-01)(1.570000e+00,3.966926e-01)(1.580000e+00,3.969942e-01)(1.590000e+00,3.974217e-01)(1.600000e+00,3.976218e-01)(1.610000e+00,3.978043e-01)(1.620000e+00,3.979941e-01)(1.630000e+00,3.982191e-01)(1.640000e+00,3.983246e-01)(1.650000e+00,3.983460e-01)(1.660000e+00,3.983234e-01)(1.670000e+00,3.983021e-01)(1.680000e+00,3.981487e-01)(1.690000e+00,3.981085e-01)(1.700000e+00,3.978813e-01)(1.710000e+00,3.975495e-01)(1.720000e+00,3.972110e-01)(1.730000e+00,3.966138e-01)(1.740000e+00,3.960862e-01)(1.750000e+00,3.954369e-01)(1.760000e+00,3.945223e-01)(1.770000e+00,3.936339e-01)(1.780000e+00,3.924230e-01)(1.790000e+00,3.910586e-01)(1.800000e+00,3.893985e-01)(1.810000e+00,3.873574e-01)(1.820000e+00,3.847678e-01)(1.830000e+00,3.812229e-01)(1.840000e+00,3.761236e-01)

};\addlegendentry{$\beta=0.5$}   

\addplot[color=blue] coordinates{

(1.000000e-02,6.411698e-02)(2.000000e-02,9.064264e-02)(3.000000e-02,1.109745e-01)(4.000000e-02,1.274346e-01)(5.000000e-02,1.425749e-01)(6.000000e-02,1.556987e-01)(7.000000e-02,1.682825e-01)(8.000000e-02,1.795093e-01)(9.000000e-02,1.900621e-01)(1.000000e-01,2.000476e-01)(1.100000e-01,2.095468e-01)(1.200000e-01,2.186233e-01)(1.300000e-01,2.273277e-01)(1.400000e-01,2.357013e-01)(1.500000e-01,2.434451e-01)(1.600000e-01,2.512654e-01)(1.700000e-01,2.585295e-01)(1.800000e-01,2.658910e-01)(1.900000e-01,2.727528e-01)(2.000000e-01,2.794403e-01)(2.100000e-01,2.859659e-01)(2.200000e-01,2.923404e-01)(2.300000e-01,2.985735e-01)(2.400000e-01,3.044159e-01)(2.500000e-01,3.103974e-01)(2.600000e-01,3.160142e-01)(2.700000e-01,3.217714e-01)(2.800000e-01,3.271861e-01)(2.900000e-01,3.325098e-01)(3.000000e-01,3.377469e-01)(3.100000e-01,3.429012e-01)(3.200000e-01,3.479766e-01)(3.300000e-01,3.527609e-01)(3.400000e-01,3.576922e-01)(3.500000e-01,3.623457e-01)(3.600000e-01,3.671443e-01)(3.700000e-01,3.716772e-01)(3.800000e-01,3.761547e-01)(3.900000e-01,3.805788e-01)(4.000000e-01,3.849514e-01)(4.100000e-01,3.890846e-01)(4.200000e-01,3.933622e-01)(4.300000e-01,3.974089e-01)(4.400000e-01,4.015977e-01)(4.500000e-01,4.055634e-01)(4.600000e-01,4.094916e-01)(4.700000e-01,4.133835e-01)(4.800000e-01,4.170675e-01)(4.900000e-01,4.208921e-01)(5.000000e-01,4.246834e-01)(5.100000e-01,4.282760e-01)(5.200000e-01,4.318411e-01)(5.300000e-01,4.353795e-01)(5.400000e-01,4.388918e-01)(5.500000e-01,4.423788e-01)(5.600000e-01,4.456840e-01)(5.700000e-01,4.491240e-01)(5.800000e-01,4.523869e-01)(5.900000e-01,4.556302e-01)(6.000000e-01,4.588544e-01)(6.100000e-01,4.620600e-01)(6.200000e-01,4.652474e-01)(6.300000e-01,4.682713e-01)(6.400000e-01,4.712808e-01)(6.500000e-01,4.744193e-01)(6.600000e-01,4.773997e-01)(6.700000e-01,4.802266e-01)(6.800000e-01,4.831820e-01)(6.900000e-01,4.861248e-01)(7.000000e-01,4.889189e-01)(7.100000e-01,4.917035e-01)(7.200000e-01,4.944788e-01)(7.300000e-01,4.972450e-01)(7.400000e-01,4.998710e-01)(7.500000e-01,5.026211e-01)(7.600000e-01,5.052336e-01)(7.700000e-01,5.078403e-01)(7.800000e-01,5.103144e-01)(7.900000e-01,5.129111e-01)(8.000000e-01,5.153776e-01)(8.100000e-01,5.179651e-01)(8.200000e-01,5.204248e-01)(8.300000e-01,5.227598e-01)(8.400000e-01,5.252154e-01)(8.500000e-01,5.275485e-01)(8.600000e-01,5.298814e-01)(8.700000e-01,5.322141e-01)(8.800000e-01,5.345468e-01)(8.900000e-01,5.367633e-01)(9.000000e-01,5.389819e-01)(9.100000e-01,5.412026e-01)(9.200000e-01,5.434257e-01)(9.300000e-01,5.456513e-01)(9.400000e-01,5.477672e-01)(9.500000e-01,5.498876e-01)(9.600000e-01,5.519018e-01)(9.700000e-01,5.540321e-01)(9.800000e-01,5.560581e-01)(9.900000e-01,5.580906e-01)(1,5.601297e-01)(1.010000e+00,5.620687e-01)(1.020000e+00,5.640160e-01)(1.030000e+00,5.659718e-01)(1.040000e+00,5.678315e-01)(1.050000e+00,5.697014e-01)(1.060000e+00,5.715817e-01)(1.070000e+00,5.734724e-01)(1.080000e+00,5.752718e-01)(1.090000e+00,5.770833e-01)(1.100000e+00,5.788063e-01)(1.110000e+00,5.806434e-01)(1.120000e+00,5.823937e-01)(1.130000e+00,5.840591e-01)(1.140000e+00,5.857402e-01)(1.150000e+00,5.874372e-01)(1.160000e+00,5.890530e-01)(1.170000e+00,5.907833e-01)(1.180000e+00,5.923377e-01)(1.190000e+00,5.939113e-01)(1.200000e+00,5.955045e-01)(1.210000e+00,5.971176e-01)(1.220000e+00,5.986567e-01)(1.230000e+00,6.001239e-01)(1.240000e+00,6.016141e-01)(1.250000e+00,6.031277e-01)(1.260000e+00,6.045731e-01)(1.270000e+00,6.060438e-01)(1.280000e+00,6.074491e-01)(1.290000e+00,6.087909e-01)(1.300000e+00,6.101615e-01)(1.310000e+00,6.115614e-01)(1.320000e+00,6.129019e-01)(1.330000e+00,6.141849e-01)(1.340000e+00,6.155010e-01)(1.350000e+00,6.167628e-01)(1.360000e+00,6.180601e-01)(1.370000e+00,6.193065e-01)(1.380000e+00,6.205040e-01)(1.390000e+00,6.217415e-01)(1.400000e+00,6.228475e-01)(1.410000e+00,6.239972e-01)(1.420000e+00,6.251918e-01)(1.430000e+00,6.262623e-01)(1.440000e+00,6.273820e-01)(1.450000e+00,6.284678e-01)(1.460000e+00,6.294385e-01)(1.470000e+00,6.304653e-01)(1.480000e+00,6.314667e-01)(1.490000e+00,6.324457e-01)(1.500000e+00,6.334056e-01)(1.510000e+00,6.343499e-01)(1.520000e+00,6.351999e-01)(1.530000e+00,6.361240e-01)(1.540000e+00,6.369623e-01)(1.550000e+00,6.377202e-01)(1.560000e+00,6.384839e-01)(1.570000e+00,6.392587e-01)(1.580000e+00,6.399694e-01)(1.590000e+00,6.407031e-01)(1.600000e+00,6.413063e-01)(1.610000e+00,6.419471e-01)(1.620000e+00,6.424743e-01)(1.630000e+00,6.430566e-01)(1.640000e+00,6.435458e-01)(1.650000e+00,6.439539e-01)(1.660000e+00,6.443734e-01)(1.670000e+00,6.447405e-01)(1.680000e+00,6.449942e-01)(1.690000e+00,6.452329e-01)(1.700000e+00,6.454016e-01)(1.710000e+00,6.454496e-01)(1.720000e+00,6.454871e-01)(1.730000e+00,6.453972e-01)(1.740000e+00,6.451499e-01)(1.750000e+00,6.448810e-01)(1.760000e+00,6.444320e-01)(1.770000e+00,6.438212e-01)(1.780000e+00,6.430214e-01)(1.790000e+00,6.419769e-01)(1.800000e+00,6.406337e-01)(1.810000e+00,6.388455e-01)(1.820000e+00,6.365092e-01)(1.830000e+00,6.333185e-01)(1.840000e+00,6.282190e-01)

};\addlegendentry{$\beta=1$}           
\addplot[color=red] coordinates{
(1.000000e-02,8.593526e-02)(2.000000e-02,1.214129e-01)(3.000000e-02,1.485553e-01)(4.000000e-02,1.713699e-01)(5.000000e-02,1.914106e-01)(6.000000e-02,2.094750e-01)(7.000000e-02,2.260374e-01)(8.000000e-02,2.414073e-01)(9.000000e-02,2.557997e-01)(1.000000e-01,2.693714e-01)(1.100000e-01,2.822408e-01)(1.200000e-01,2.944999e-01)(1.300000e-01,3.060475e-01)(1.400000e-01,3.172993e-01)(1.500000e-01,3.281221e-01)(1.600000e-01,3.384012e-01)(1.700000e-01,3.484883e-01)(1.800000e-01,3.581058e-01)(1.900000e-01,3.675758e-01)(2.000000e-01,3.766327e-01)(2.100000e-01,3.854411e-01)(2.200000e-01,3.940178e-01)(2.300000e-01,4.023775e-01)(2.400000e-01,4.105336e-01)(2.500000e-01,4.184979e-01)(2.600000e-01,4.262813e-01)(2.700000e-01,4.338934e-01)(2.800000e-01,4.413430e-01)(2.900000e-01,4.485279e-01)(3.000000e-01,4.555703e-01)(3.100000e-01,4.625831e-01)(3.200000e-01,4.693577e-01)(3.300000e-01,4.760081e-01)(3.400000e-01,4.825395e-01)(3.500000e-01,4.889565e-01)(3.600000e-01,4.952636e-01)(3.700000e-01,5.013699e-01)(3.800000e-01,5.074710e-01)(3.900000e-01,5.133820e-01)(4.000000e-01,5.192010e-01)(4.100000e-01,5.249313e-01)(4.200000e-01,5.305755e-01)(4.300000e-01,5.361366e-01)(4.400000e-01,5.415320e-01)(4.500000e-01,5.469355e-01)(4.600000e-01,5.521804e-01)(4.700000e-01,5.573540e-01)(4.800000e-01,5.624581e-01)(4.900000e-01,5.674156e-01)(5.000000e-01,5.723874e-01)(5.100000e-01,5.772183e-01)(5.200000e-01,5.819887e-01)(5.300000e-01,5.867001e-01)(5.400000e-01,5.913540e-01)(5.500000e-01,5.958788e-01)(5.600000e-01,6.004225e-01)(5.700000e-01,6.048415e-01)(5.800000e-01,6.092097e-01)(5.900000e-01,6.134589e-01)(6.000000e-01,6.177297e-01)(6.100000e-01,6.218852e-01)(6.200000e-01,6.259957e-01)(6.300000e-01,6.299959e-01)(6.400000e-01,6.340197e-01)(6.500000e-01,6.379364e-01)(6.600000e-01,6.418128e-01)(6.700000e-01,6.456498e-01)(6.800000e-01,6.493855e-01)(6.900000e-01,6.530845e-01)(7.000000e-01,6.567474e-01)(7.100000e-01,6.603747e-01)(7.200000e-01,6.639074e-01)(7.300000e-01,6.674070e-01)(7.400000e-01,6.708741e-01)(7.500000e-01,6.743090e-01)(7.600000e-01,6.776551e-01)(7.700000e-01,6.809714e-01)(7.800000e-01,6.842021e-01)(7.900000e-01,6.874603e-01)(8.000000e-01,6.906350e-01)(8.100000e-01,6.937283e-01)(8.200000e-01,6.968496e-01)(8.300000e-01,6.998913e-01)(8.400000e-01,7.029079e-01)(8.500000e-01,7.058477e-01)(8.600000e-01,7.087642e-01)(8.700000e-01,7.116575e-01)(8.800000e-01,7.144774e-01)(8.900000e-01,7.172758e-01)(9.000000e-01,7.200528e-01)(9.100000e-01,7.227596e-01)(9.200000e-01,7.254465e-01)(9.300000e-01,7.281138e-01)(9.400000e-01,7.307138e-01)(9.500000e-01,7.332955e-01)(9.600000e-01,7.358590e-01)(9.700000e-01,7.383581e-01)(9.800000e-01,7.408404e-01)(9.900000e-01,7.432602e-01)(1,7.456643e-01)(1.010000e+00,7.480528e-01)(1.020000e+00,7.503815e-01)(1.030000e+00,7.526958e-01)(1.040000e+00,7.549520e-01)(1.050000e+00,7.571949e-01)(1.060000e+00,7.593816e-01)(1.070000e+00,7.615986e-01)(1.080000e+00,7.637181e-01)(1.090000e+00,7.658681e-01)(1.100000e+00,7.679231e-01)(1.110000e+00,7.700089e-01)(1.120000e+00,7.720428e-01)(1.130000e+00,7.740260e-01)(1.140000e+00,7.759998e-01)(1.150000e+00,7.779643e-01)(1.160000e+00,7.798800e-01)(1.170000e+00,7.817874e-01)(1.180000e+00,7.836475e-01)(1.190000e+00,7.854615e-01)(1.200000e+00,7.872686e-01)(1.210000e+00,7.890690e-01)(1.220000e+00,7.908251e-01)(1.230000e+00,7.925752e-01)(1.240000e+00,7.942824e-01)(1.250000e+00,7.959477e-01)(1.260000e+00,7.976450e-01)(1.270000e+00,7.992650e-01)(1.280000e+00,8.008813e-01)(1.290000e+00,8.024583e-01)(1.300000e+00,8.040325e-01)(1.310000e+00,8.055688e-01)(1.320000e+00,8.071029e-01)(1.330000e+00,8.086005e-01)(1.340000e+00,8.100624e-01)(1.350000e+00,8.115237e-01)(1.360000e+00,8.129506e-01)(1.370000e+00,8.143440e-01)(1.380000e+00,8.157383e-01)(1.390000e+00,8.171005e-01)(1.400000e+00,8.184314e-01)(1.410000e+00,8.197647e-01)(1.420000e+00,8.210356e-01)(1.430000e+00,8.223103e-01)(1.440000e+00,8.235890e-01)(1.450000e+00,8.248084e-01)(1.460000e+00,8.260332e-01)(1.470000e+00,8.272010e-01)(1.480000e+00,8.283758e-01)(1.490000e+00,8.295269e-01)(1.500000e+00,8.306554e-01)(1.510000e+00,8.317623e-01)(1.520000e+00,8.328487e-01)(1.530000e+00,8.338856e-01)(1.540000e+00,8.349346e-01)(1.550000e+00,8.359665e-01)(1.560000e+00,8.369531e-01)(1.570000e+00,8.379256e-01)(1.580000e+00,8.388561e-01)(1.590000e+00,8.398048e-01)(1.600000e+00,8.406861e-01)(1.610000e+00,8.415888e-01)(1.620000e+00,8.424287e-01)(1.630000e+00,8.432654e-01)(1.640000e+00,8.440447e-01)(1.650000e+00,8.448260e-01)(1.660000e+00,8.455843e-01)(1.670000e+00,8.462950e-01)(1.680000e+00,8.469624e-01)(1.690000e+00,8.476189e-01)(1.700000e+00,8.482419e-01)(1.710000e+00,8.488103e-01)(1.720000e+00,8.493313e-01)(1.730000e+00,8.498137e-01)(1.740000e+00,8.502406e-01)(1.750000e+00,8.506248e-01)(1.760000e+00,8.509285e-01)(1.770000e+00,8.511455e-01)(1.780000e+00,8.513030e-01)(1.790000e+00,8.513049e-01)(1.800000e+00,8.512051e-01)(1.810000e+00,8.508740e-01)(1.820000e+00,8.502921e-01)(1.830000e+00,8.492559e-01)(1.840000e+00,8.473896e-01)

};\addlegendentry{$\beta=2$}

\addplot[color=red,mark=diamond,mark size=2pt,only marks]  plot coordinates{

(1.000000e-01,2.587725e-01)(2.000000e-01,3.702301e-01)(3.000000e-01,4.499816e-01)(4.000000e-01,5.142291e-01)(5.000000e-01,5.684234e-01)(6.000000e-01,6.143449e-01)(7.000000e-01,6.538009e-01)(8.000000e-01,6.894306e-01)(9.000000e-01,7.170046e-01)(1,7.432619e-01)(1.100000e+00,7.659275e-01)(1.200000e+00,7.862136e-01)(1.300000e+00,8.016989e-01)(1.400000e+00,8.175147e-01)(1.500000e+00,8.287700e-01)(1.600000e+00,8.393746e-01)(1.700000e+00,8.472409e-01)

};

\addplot[color=blue,mark=diamond,mark size=2pt,only marks]  plot coordinates{

(1.000000e-01,1.949313e-01)(2.000000e-01,2.753020e-01)(3.000000e-01,3.325928e-01)(4.000000e-01,3.820272e-01)(5.000000e-01,4.232242e-01)(6.000000e-01,4.576782e-01)(7.000000e-01,4.836859e-01)(8.000000e-01,5.136612e-01)(9.000000e-01,5.363772e-01)(1,5.607377e-01)(1.100000e+00,5.799267e-01)(1.200000e+00,5.941176e-01)(1.300000e+00,6.075841e-01)(1.400000e+00,6.208596e-01)(1.500000e+00,6.303826e-01)(1.600000e+00,6.386306e-01)(1.700000e+00,6.441000e-01)

};
\addplot[color=black,mark=diamond,mark size=2pt,only marks]  plot coordinates{
(1.000000e-01,1.228929e-01)(2.000000e-01,1.636027e-01)(3.000000e-01,2.004447e-01)(4.000000e-01,2.312473e-01)(5.000000e-01,2.562578e-01)(6.000000e-01,2.767030e-01)(7.000000e-01,2.996672e-01)(8.000000e-01,3.144730e-01)(9.000000e-01,3.363687e-01)(1,3.483624e-01)(1.100000e+00,3.563265e-01)(1.200000e+00,3.656091e-01)(1.300000e+00,3.807690e-01)(1.400000e+00,3.833644e-01)(1.500000e+00,3.935468e-01)(1.600000e+00,3.972855e-01)(1.700000e+00,3.969118e-01)
};
\addplot[color=brown,mark =none,dotted] coordinates{(1.8500,0)(1.8500,5)(1.8500,10)};
\node at (axis cs:1.85,0.04)(source1){${\delta_\star}$}; 
\end{axis}
\end{tikzpicture}}

\caption{Performance of H-SVR as a function of $\delta$ when $\sigma=1$, $\epsilon=1$ for different values of $\beta$. The continuous line curves correspond to the asymptotic performance while the points denote finite-sample performance when $p=200$ and $n= \lfloor{\delta p}\rfloor$. The null risks (corresponding to $\hat\bw_H=\boldsymbol{0}$) are also reported by the dotted lines for the different values of $\beta$. }
\label{HMvsdelta}
\end{center}
\end{figure}
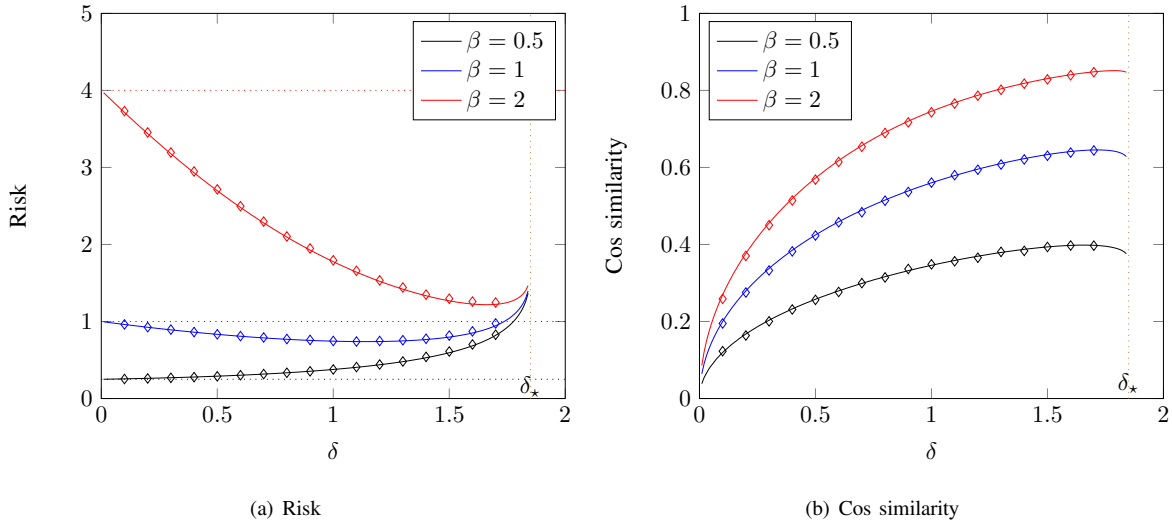
\subsubsection{Test risk as a function of the number of samples}
In a first experiment, we investigate the behavior of the test risk of H-SVR as a function of the number of samples for different values of the signal power $\beta^2=\|\boldsymbol{\beta}_\star\|^2$. Particularly, for each $\beta\in\{0.5,1,2\}$, we fix the noise variance $\sigma^2$ and $\epsilon$ and plot the test risk and cosine similarity over the range $[0,\delta_\star]$ over which the H-SVR is feasible.  \figref{HMvsdelta} represents the theoretical results along with their empirical averages obtained for $p=200$ and $n=\lfloor \delta p\rfloor$. As can be seen, this figure's results validate the accuracy of the theoretical predictions: a perfect match is noted over the whole range of $\delta$ and for all signal power values. We also corroborate our predictions in Remark 3 and Remark 4: the risk tends to $\beta^2$ which is the null estimator's risk when $\delta\to 0$, while it tends to the same limit irrespective of $\beta^2$ as $\delta\to\delta_\star$. 
Away from these limiting cases, we note that for moderate to high signal powers, the test risk presents a non-monotonic behavior with respect to $\delta$ and as such with respect to the number of samples. The minimal risk corresponds to a $\delta$ that becomes the nearest to $\delta_\star$ as the signal power $\beta^2$ increases. However, for low signal powers, the test risk is an increasing function of the number of samples and is always larger than $\beta^2$, which is the null estimator's risk. Such behavior is similar to that of the min-norm least square estimator, which becomes worse than the null estimator when the SNR is less than $1$ \cite{hastie2019surprises}. 

\subsubsection{Impact of choice of $\epsilon$ on the test performance}
\figref{hardvs.epsilon} displays the test risk with respect to $\epsilon$ for fixed signal power and noise variance and oversampling ratios $\delta=1$ and $\delta=1.4$, respectively. As can be noted, the test performance is sensitive to the choice of $\epsilon$. An arbitrary choice of $\epsilon$ may lead to a significant loss in test performance. Indeed, a small $\epsilon$ tolerates less deviation from the plane ${y}=\hat{\bf w}_H^{T}{\bf x}$, which becomes inappropriate when the noise variance increases. On the other hand, a larger $\epsilon$ tolerates more deviation, and as such, tends to give less credit on the information from the training samples. We can also note that the optimal $\epsilon$ increases when more training samples are used. This can be explained by the fact that when using more training samples, it becomes harder to fit them into the insensitivity tube. 
Moreover, as can be seen from this figure, a right choice for the value $\epsilon$ is essential in practice, as arbitrary choices may lead to severe risk performance degradation. Several previous works addressed this question  \cite{cherkassky2004practical,Cherkassky2002}. However, they do not rely on theoretical analysis but instead on cross-validation approaches.
Finally, with reference to {\bf Remark 2}, we plot in \figref{hardvs.delta_opt_eps} the risk of H-SVR with respect to $\delta$ when at each $\delta$, the optimal $\epsilon$ that minimizes the optimal risk is used. The obtained results show that the test risk becomes, in this case, a decreasing function of $\delta$. This is in agreement with the fact that in optimally regularized learning architectures, there is always a gain from using more training samples.

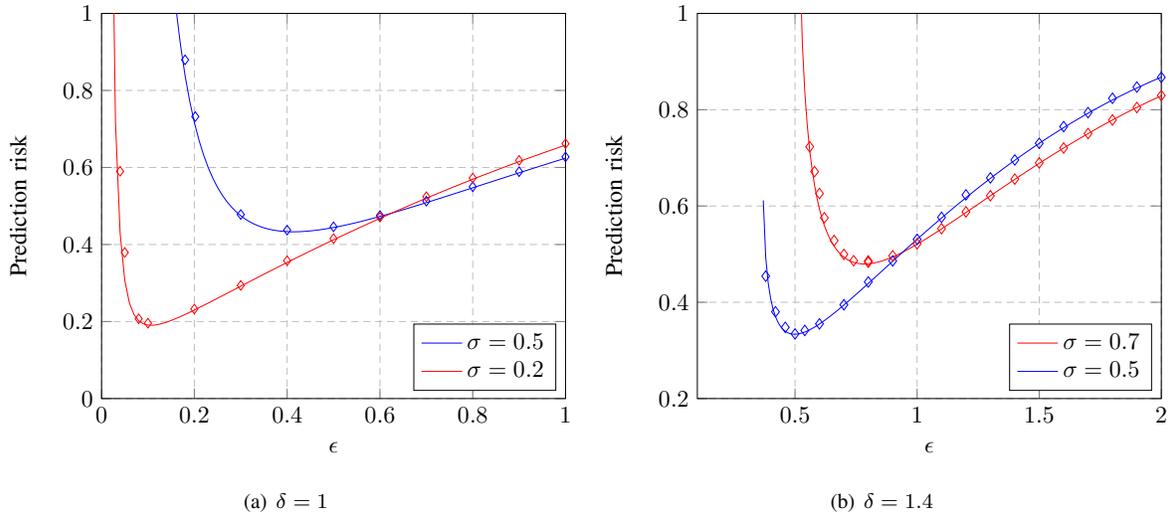
\begin{figure}[]
\begin{center}
\subfigure[$\delta=1$]{
\begin{tikzpicture}[scale=0.9,font=\fontsize{10}{10}\selectfont]
    \renewcommand{\axisdefaulttryminticks}{4}
    \tikzstyle{every major grid}+=[style=densely dashed]
    \tikzstyle{every axis y label}+=[yshift=-20pt]
    \tikzstyle{every axis x label}+=[yshift=5pt]
    \tikzstyle{every axis legend}+=[cells={anchor=west},fill=white,
        at={(0.98,0.02)},anchor=south east,font=\fontsize{10}{10}\selectfont]
    \begin{axis}[
      xmin=0.0,
      ymin=0.0,
      xmax=1,
      ymax=1,
      grid=major,
      scaled ticks=true,
   			xlabel={$\epsilon$},
   			ylabel={Prediction risk}			
      ]
   
   \addplot[color=blue,mark size=1.4pt,mark =none]  plot coordinates{    
(1.300000e-01,1.477440e+00)(1.400000e-01,1.288225e+00)(1.500000e-01,1.138062e+00)(1.600000e-01,1.017072e+00)(1.700000e-01,9.189140e-01)(1.800000e-01,8.385065e-01)(1.900000e-01,7.721137e-01)(2.000000e-01,7.169009e-01)(2.100000e-01,6.707610e-01)(2.200000e-01,6.320250e-01)(2.300000e-01,5.992308e-01)(2.400000e-01,5.716872e-01)(2.500000e-01,5.481922e-01)(2.600000e-01,5.283836e-01)(2.700000e-01,5.115110e-01)(2.800000e-01,4.971660e-01)(2.900000e-01,4.849730e-01)(3.000000e-01,4.745832e-01)(3.100000e-01,4.658062e-01)(3.200000e-01,4.585998e-01)(3.300000e-01,4.523908e-01)(3.400000e-01,4.474272e-01)(3.500000e-01,4.432896e-01)(3.600000e-01,4.399669e-01)(3.700000e-01,4.374500e-01)(3.800000e-01,4.356000e-01)(3.900000e-01,4.342810e-01)(4.000000e-01,4.333589e-01)(4.100000e-01,4.329640e-01)(4.200000e-01,4.330956e-01)(4.300000e-01,4.334906e-01)(4.400000e-01,4.342810e-01)(4.500000e-01,4.353360e-01)(4.600000e-01,4.366566e-01)(4.700000e-01,4.381116e-01)(4.800000e-01,4.399669e-01)(4.900000e-01,4.419590e-01)(5.000000e-01,4.440890e-01)(5.100000e-01,4.464912e-01)(5.200000e-01,4.489000e-01)(5.300000e-01,4.515840e-01)(5.400000e-01,4.542760e-01)(5.500000e-01,4.571112e-01)(5.600000e-01,4.600909e-01)(5.700000e-01,4.632164e-01)(5.800000e-01,4.663524e-01)(5.900000e-01,4.696361e-01)(6.000000e-01,4.729313e-01)(6.100000e-01,4.762380e-01)(6.200000e-01,4.796948e-01)(6.300000e-01,4.831640e-01)(6.400000e-01,4.867853e-01)(6.500000e-01,4.902800e-01)(6.600000e-01,4.939278e-01)(6.700000e-01,4.975892e-01)(6.800000e-01,5.012640e-01)(6.900000e-01,5.049524e-01)(7.000000e-01,5.087969e-01)(7.100000e-01,5.125128e-01)(7.200000e-01,5.163860e-01)(7.300000e-01,5.201294e-01)(7.400000e-01,5.240312e-01)(7.500000e-01,5.278023e-01)(7.600000e-01,5.317326e-01)(7.700000e-01,5.355312e-01)(7.800000e-01,5.394903e-01)(7.900000e-01,5.433164e-01)(8.000000e-01,5.471561e-01)(8.100000e-01,5.511578e-01)(8.200000e-01,5.550250e-01)(8.300000e-01,5.589058e-01)(8.400000e-01,5.628000e-01)(8.500000e-01,5.667078e-01)(8.600000e-01,5.706292e-01)(8.700000e-01,5.744124e-01)(8.800000e-01,5.783603e-01)(8.900000e-01,5.821690e-01)(9.000000e-01,5.861434e-01)(9.100000e-01,5.899776e-01)(9.200000e-01,5.938244e-01)(9.300000e-01,5.976836e-01)(9.400000e-01,6.014003e-01)(9.500000e-01,6.052840e-01)(9.600000e-01,6.091802e-01)(9.700000e-01,6.129324e-01)(9.800000e-01,6.166961e-01)(9.900000e-01,6.204713e-01)(1,6.242580e-01)
};\addlegendentry{$\sigma=0.5$}     
   \addplot[color=red,mark size=1.4pt,mark =none]  plot coordinates{  
 (1.000000e-02,6.278031e+00)(2.000000e-02,1.574774e+00)(3.000000e-02,7.169009e-01)(4.000000e-02,4.295492e-01)(5.000000e-02,3.072485e-01)(6.000000e-02,2.483029e-01)(7.000000e-02,2.179956e-01)(8.000000e-02,2.020503e-01)(9.000000e-02,1.940402e-01)(1.000000e-01,1.907069e-01)(1.100000e-01,1.902704e-01)(1.200000e-01,1.917564e-01)(1.300000e-01,1.945692e-01)(1.400000e-01,1.982921e-01)(1.500000e-01,2.026800e-01)(1.600000e-01,2.075714e-01)(1.700000e-01,2.127977e-01)(1.800000e-01,2.182758e-01)(1.900000e-01,2.240129e-01)(2.000000e-01,2.298244e-01)(2.100000e-01,2.358074e-01)(2.200000e-01,2.418672e-01)(2.300000e-01,2.480040e-01)(2.400000e-01,2.542176e-01)(2.500000e-01,2.604061e-01)(2.600000e-01,2.666690e-01)(2.700000e-01,2.729018e-01)(2.800000e-01,2.792066e-01)(2.900000e-01,2.854765e-01)(3.000000e-01,2.917080e-01)(3.100000e-01,2.980068e-01)(3.200000e-01,3.042626e-01)(3.300000e-01,3.104718e-01)(3.400000e-01,3.166313e-01)(3.500000e-01,3.228512e-01)(3.600000e-01,3.290170e-01)(3.700000e-01,3.351252e-01)(3.800000e-01,3.411728e-01)(3.900000e-01,3.472745e-01)(4.000000e-01,3.533114e-01)(4.100000e-01,3.594003e-01)(4.200000e-01,3.654203e-01)(4.300000e-01,3.713684e-01)(4.400000e-01,3.772416e-01)(4.500000e-01,3.831610e-01)(4.600000e-01,3.890017e-01)(4.700000e-01,3.948866e-01)(4.800000e-01,4.006890e-01)(4.900000e-01,4.064063e-01)(5.000000e-01,4.121640e-01)(5.100000e-01,4.179623e-01)(5.200000e-01,4.235406e-01)(5.300000e-01,4.292870e-01)(5.400000e-01,4.348084e-01)(5.500000e-01,4.404977e-01)(5.600000e-01,4.459568e-01)(5.700000e-01,4.514496e-01)(5.800000e-01,4.569760e-01)(5.900000e-01,4.624000e-01)(6.000000e-01,4.678560e-01)(6.100000e-01,4.732064e-01)(6.200000e-01,4.785872e-01)(6.300000e-01,4.838594e-01)(6.400000e-01,4.891604e-01)(6.500000e-01,4.944902e-01)(6.600000e-01,4.997076e-01)(6.700000e-01,5.048103e-01)(6.800000e-01,5.099388e-01)(6.900000e-01,5.150933e-01)(7.000000e-01,5.202737e-01)(7.100000e-01,5.253350e-01)(7.200000e-01,5.302752e-01)(7.300000e-01,5.352386e-01)(7.400000e-01,5.402250e-01)(7.500000e-01,5.452346e-01)(7.600000e-01,5.501189e-01)(7.700000e-01,5.548760e-01)(7.800000e-01,5.598032e-01)(7.900000e-01,5.646020e-01)(8.000000e-01,5.692703e-01)(8.100000e-01,5.741093e-01)(8.200000e-01,5.788166e-01)(8.300000e-01,5.833904e-01)(8.400000e-01,5.879822e-01)(8.500000e-01,5.925920e-01)(8.600000e-01,5.972198e-01)(8.700000e-01,6.017105e-01)(8.800000e-01,6.062180e-01)(8.900000e-01,6.107422e-01)(9.000000e-01,6.152834e-01)(9.100000e-01,6.196838e-01)(9.200000e-01,6.241000e-01)(9.300000e-01,6.283733e-01)(9.400000e-01,6.326612e-01)(9.500000e-01,6.369636e-01)(9.600000e-01,6.412806e-01)(9.700000e-01,6.454516e-01)(9.800000e-01,6.496360e-01)(9.900000e-01,6.538340e-01)(1,6.580454e-01)
};\addlegendentry{$\sigma=0.2$}    
      
\addplot[color=blue,mark=diamond,mark size=2pt,only marks]  plot coordinates{(1.000000e-01,3.600676e+00)(1.800000e-01,8.794296e-01)(2.020000e-01,7.320852e-01)(3.000000e-01,4.779023e-01)(4.000000e-01,4.373641e-01)(5.000000e-01,4.459377e-01)(6.000000e-01,4.751709e-01)(7.000000e-01,5.124813e-01)(8.000000e-01,5.499734e-01)(9.000000e-01,5.890378e-01)(1,6.274738e-01)

};
\addplot[color=red,mark=diamond,mark size=2pt,only marks]  plot coordinates{(1.000000e-02,1.237045e+01)(2.000000e-02,3.649479e+00)(3.000000e-02,2.238951e+00)(4.000000e-02,5.900826e-01)(5.000000e-02,3.793459e-01)(8.000000e-02,2.072080e-01)(1.000000e-01,1.956808e-01)(2.000000e-01,2.324901e-01)(3.000000e-01,2.938023e-01)(4.000000e-01,3.578950e-01)(5.000000e-01,4.152673e-01)(6.000000e-01,4.710088e-01)(7.000000e-01,5.241065e-01)(8.000000e-01,5.726396e-01)(9.000000e-01,6.183714e-01)(1,6.618415e-01)};

           \end{axis}
  \end{tikzpicture}}
\subfigure[$\delta=1.4$]{
\begin{tikzpicture}[scale=0.9,font=\fontsize{10}{10}\selectfont]
    \renewcommand{\axisdefaulttryminticks}{4}
    \tikzstyle{every major grid}+=[style=densely dashed]
    \tikzstyle{every axis y label}+=[yshift=-20pt]
    \tikzstyle{every axis x label}+=[yshift=5pt]
    \tikzstyle{every axis legend}+=[cells={anchor=west},fill=white,
        at={(0.98,0.02)},anchor=south east,font=\fontsize{10}{10}\selectfont]
    \begin{axis}[
      xmin=0.1,
      ymin=0.2,
      xmax=2,
      ymax=1,
      grid=major,
      scaled ticks=true,
   			xlabel={$\epsilon$},
   			ylabel={Prediction risk}			
      ]
   
   \addplot[color=red,mark size=1.4pt,mark =none]  plot coordinates{    
(5.200000e-01,1.113658e+00)(5.300000e-01,9.372176e-01)(5.400000e-01,8.370420e-01)(5.500000e-01,7.679017e-01)(5.600000e-01,7.160544e-01)(5.700000e-01,6.755196e-01)(5.800000e-01,6.430436e-01)(5.900000e-01,6.162250e-01)(6.000000e-01,5.939785e-01)(6.100000e-01,5.753223e-01)(6.200000e-01,5.595040e-01)(6.300000e-01,5.461210e-01)(6.400000e-01,5.346534e-01)(6.500000e-01,5.249003e-01)(6.600000e-01,5.165297e-01)(6.700000e-01,5.093677e-01)(6.800000e-01,5.032484e-01)(6.900000e-01,4.981536e-01)(7.000000e-01,4.937873e-01)(7.100000e-01,4.901400e-01)(7.200000e-01,4.872040e-01)(7.300000e-01,4.848337e-01)(7.400000e-01,4.828860e-01)(7.500000e-01,4.814972e-01)(7.600000e-01,4.805262e-01)(7.700000e-01,4.798333e-01)(7.800000e-01,4.795563e-01)(7.900000e-01,4.795563e-01)(8.000000e-01,4.799718e-01)(8.100000e-01,4.805262e-01)(8.200000e-01,4.812197e-01)(8.300000e-01,4.823303e-01)(8.400000e-01,4.835812e-01)(8.500000e-01,4.849730e-01)(8.600000e-01,4.865063e-01)(8.700000e-01,4.883214e-01)(8.800000e-01,4.901400e-01)(8.900000e-01,4.921023e-01)(9.000000e-01,4.942090e-01)(9.100000e-01,4.964612e-01)(9.200000e-01,4.988597e-01)(9.300000e-01,5.012640e-01)(9.400000e-01,5.038160e-01)(9.500000e-01,5.065169e-01)(9.600000e-01,5.092250e-01)(9.700000e-01,5.119402e-01)(9.800000e-01,5.148063e-01)(9.900000e-01,5.176803e-01)(1,5.205623e-01)(1.010000e+00,5.235970e-01)(1.020000e+00,5.266405e-01)(1.030000e+00,5.296928e-01)(1.040000e+00,5.329000e-01)(1.050000e+00,5.361168e-01)(1.060000e+00,5.391965e-01)(1.070000e+00,5.424323e-01)(1.080000e+00,5.458254e-01)(1.090000e+00,5.490810e-01)(1.100000e+00,5.523462e-01)(1.110000e+00,5.557703e-01)(1.120000e+00,5.590553e-01)(1.130000e+00,5.625000e-01)(1.140000e+00,5.658048e-01)(1.150000e+00,5.692703e-01)(1.160000e+00,5.725949e-01)(1.170000e+00,5.760810e-01)(1.180000e+00,5.795777e-01)(1.190000e+00,5.829323e-01)(1.200000e+00,5.864496e-01)(1.210000e+00,5.898240e-01)(1.220000e+00,5.933621e-01)(1.230000e+00,5.967563e-01)(1.240000e+00,6.003150e-01)(1.250000e+00,6.037290e-01)(1.260000e+00,6.073085e-01)(1.270000e+00,6.107422e-01)(1.280000e+00,6.141857e-01)(1.290000e+00,6.176388e-01)(1.300000e+00,6.211016e-01)(1.310000e+00,6.245741e-01)(1.320000e+00,6.280562e-01)(1.330000e+00,6.313892e-01)(1.340000e+00,6.348902e-01)(1.350000e+00,6.382412e-01)(1.360000e+00,6.417612e-01)(1.370000e+00,6.451302e-01)(1.380000e+00,6.485081e-01)(1.390000e+00,6.518948e-01)(1.400000e+00,6.552902e-01)(1.410000e+00,6.586946e-01)(1.420000e+00,6.621077e-01)(1.430000e+00,6.653665e-01)(1.440000e+00,6.687968e-01)(1.450000e+00,6.720720e-01)(1.460000e+00,6.753552e-01)(1.470000e+00,6.786464e-01)(1.480000e+00,6.819456e-01)(1.490000e+00,6.852528e-01)(1.500000e+00,6.885680e-01)(1.510000e+00,6.917249e-01)(1.520000e+00,6.950557e-01)(1.530000e+00,6.982274e-01)(1.540000e+00,7.014063e-01)(1.550000e+00,7.045924e-01)(1.560000e+00,7.077857e-01)(1.570000e+00,7.109862e-01)(1.580000e+00,7.140250e-01)(1.590000e+00,7.172396e-01)(1.600000e+00,7.202917e-01)(1.610000e+00,7.233503e-01)(1.620000e+00,7.264153e-01)(1.630000e+00,7.294868e-01)(1.640000e+00,7.325648e-01)(1.650000e+00,7.354778e-01)(1.660000e+00,7.385684e-01)(1.670000e+00,7.414932e-01)(1.680000e+00,7.444238e-01)(1.690000e+00,7.473603e-01)(1.700000e+00,7.503024e-01)(1.710000e+00,7.532504e-01)(1.720000e+00,7.562042e-01)(1.730000e+00,7.589894e-01)(1.740000e+00,7.619544e-01)(1.750000e+00,7.647503e-01)(1.760000e+00,7.675512e-01)(1.770000e+00,7.703573e-01)(1.780000e+00,7.729926e-01)(1.790000e+00,7.758086e-01)(1.800000e+00,7.786298e-01)(1.810000e+00,7.812792e-01)(1.820000e+00,7.839332e-01)(1.830000e+00,7.865916e-01)(1.840000e+00,7.892546e-01)(1.850000e+00,7.919220e-01)(1.860000e+00,7.945940e-01)(1.870000e+00,7.970918e-01)(1.880000e+00,7.997725e-01)(1.890000e+00,8.022785e-01)(1.900000e+00,8.047884e-01)(1.910000e+00,8.073023e-01)(1.920000e+00,8.098200e-01)(1.930000e+00,8.123417e-01)(1.940000e+00,8.146868e-01)(1.950000e+00,8.172160e-01)(1.960000e+00,8.195681e-01)(1.970000e+00,8.219236e-01)(1.980000e+00,8.242824e-01)(1.990000e+00,8.266446e-01)(2,8.290103e-01)

};\addlegendentry{$\sigma=0.7$}     
   
   \addplot[color=blue,mark size=1.4pt,mark =none]  plot coordinates{    
(3.700000e-01,6.116804e-01)(3.800000e-01,4.932253e-01)(3.900000e-01,4.404977e-01)(4.000000e-01,4.080654e-01)(4.100000e-01,3.858894e-01)(4.200000e-01,3.701506e-01)(4.300000e-01,3.586812e-01)(4.400000e-01,3.503456e-01)(4.500000e-01,3.440996e-01)(4.600000e-01,3.397724e-01)(4.700000e-01,3.367481e-01)(4.800000e-01,3.347780e-01)(4.900000e-01,3.338528e-01)(5.000000e-01,3.336218e-01)(5.100000e-01,3.340840e-01)(5.200000e-01,3.350094e-01)(5.300000e-01,3.364000e-01)(5.400000e-01,3.381423e-01)(5.500000e-01,3.403556e-01)(5.600000e-01,3.428103e-01)(5.700000e-01,3.455088e-01)(5.800000e-01,3.484541e-01)(5.900000e-01,3.515304e-01)(6.000000e-01,3.548585e-01)(6.100000e-01,3.583220e-01)(6.200000e-01,3.619226e-01)(6.300000e-01,3.656621e-01)(6.400000e-01,3.695424e-01)(6.500000e-01,3.734432e-01)(6.600000e-01,3.774874e-01)(6.700000e-01,3.815533e-01)(6.800000e-01,3.857652e-01)(6.900000e-01,3.900003e-01)(7.000000e-01,3.942584e-01)(7.100000e-01,3.986660e-01)(7.200000e-01,4.029710e-01)(7.300000e-01,4.074269e-01)(7.400000e-01,4.119072e-01)(7.500000e-01,4.164121e-01)(7.600000e-01,4.208117e-01)(7.700000e-01,4.253648e-01)(7.800000e-01,4.299425e-01)(7.900000e-01,4.345446e-01)(8.000000e-01,4.391713e-01)(8.100000e-01,4.436892e-01)(8.200000e-01,4.483642e-01)(8.300000e-01,4.529290e-01)(8.400000e-01,4.575170e-01)(8.500000e-01,4.621280e-01)(8.600000e-01,4.667622e-01)(8.700000e-01,4.714196e-01)(8.800000e-01,4.759620e-01)(8.900000e-01,4.806649e-01)(9.000000e-01,4.852516e-01)(9.100000e-01,4.898600e-01)(9.200000e-01,4.944902e-01)(9.300000e-01,4.990010e-01)(9.400000e-01,5.036741e-01)(9.500000e-01,5.082264e-01)(9.600000e-01,5.127992e-01)(9.700000e-01,5.172486e-01)(9.800000e-01,5.218618e-01)(9.900000e-01,5.263503e-01)(1,5.308580e-01)(1.010000e+00,5.353849e-01)(1.020000e+00,5.397841e-01)(1.030000e+00,5.442013e-01)(1.040000e+00,5.487846e-01)(1.050000e+00,5.530897e-01)(1.060000e+00,5.575609e-01)(1.070000e+00,5.619002e-01)(1.080000e+00,5.662563e-01)(1.090000e+00,5.706292e-01)(1.100000e+00,5.750189e-01)(1.110000e+00,5.792732e-01)(1.120000e+00,5.836960e-01)(1.130000e+00,5.878289e-01)(1.140000e+00,5.921303e-01)(1.150000e+00,5.964473e-01)(1.160000e+00,6.006250e-01)(1.170000e+00,6.048173e-01)(1.180000e+00,6.090242e-01)(1.190000e+00,6.130890e-01)(1.200000e+00,6.173245e-01)(1.210000e+00,6.214169e-01)(1.220000e+00,6.253646e-01)(1.230000e+00,6.294836e-01)(1.240000e+00,6.336160e-01)(1.250000e+00,6.376022e-01)(1.260000e+00,6.416010e-01)(1.270000e+00,6.454516e-01)(1.280000e+00,6.494748e-01)(1.290000e+00,6.533489e-01)(1.300000e+00,6.572345e-01)(1.310000e+00,6.611316e-01)(1.320000e+00,6.650402e-01)(1.330000e+00,6.687968e-01)(1.340000e+00,6.725640e-01)(1.350000e+00,6.763418e-01)(1.360000e+00,6.801301e-01)(1.370000e+00,6.839290e-01)(1.380000e+00,6.875726e-01)(1.390000e+00,6.912260e-01)(1.400000e+00,6.948890e-01)(1.410000e+00,6.985616e-01)(1.420000e+00,7.020764e-01)(1.430000e+00,7.057680e-01)(1.440000e+00,7.093008e-01)(1.450000e+00,7.128425e-01)(1.460000e+00,7.162237e-01)(1.470000e+00,7.197826e-01)(1.480000e+00,7.231802e-01)(1.490000e+00,7.265858e-01)(1.500000e+00,7.299994e-01)(1.510000e+00,7.334210e-01)(1.520000e+00,7.368506e-01)(1.530000e+00,7.401161e-01)(1.540000e+00,7.433888e-01)(1.550000e+00,7.466688e-01)(1.560000e+00,7.499560e-01)(1.570000e+00,7.530768e-01)(1.580000e+00,7.563781e-01)(1.590000e+00,7.595123e-01)(1.600000e+00,7.626529e-01)(1.610000e+00,7.658000e-01)(1.620000e+00,7.687782e-01)(1.630000e+00,7.719380e-01)(1.640000e+00,7.749281e-01)(1.650000e+00,7.779240e-01)(1.660000e+00,7.809257e-01)(1.670000e+00,7.839332e-01)(1.680000e+00,7.869464e-01)(1.690000e+00,7.897877e-01)(1.700000e+00,7.926341e-01)(1.710000e+00,7.954856e-01)(1.720000e+00,7.983423e-01)(1.730000e+00,8.012040e-01)(1.740000e+00,8.038916e-01)(1.750000e+00,8.067632e-01)(1.760000e+00,8.094601e-01)(1.770000e+00,8.121614e-01)(1.780000e+00,8.148673e-01)(1.790000e+00,8.175776e-01)(1.800000e+00,8.201114e-01)(1.810000e+00,8.228304e-01)(1.820000e+00,8.253723e-01)(1.830000e+00,8.279180e-01)(1.840000e+00,8.304677e-01)(1.850000e+00,8.328388e-01)(1.860000e+00,8.353960e-01)(1.870000e+00,8.379572e-01)(1.880000e+00,8.403389e-01)(1.890000e+00,8.427240e-01)(1.900000e+00,8.451125e-01)(1.910000e+00,8.475044e-01)(1.920000e+00,8.497152e-01)(1.930000e+00,8.521136e-01)(1.940000e+00,8.543305e-01)(1.950000e+00,8.567354e-01)(1.960000e+00,8.589582e-01)(1.970000e+00,8.611840e-01)(1.980000e+00,8.632268e-01)(1.990000e+00,8.654581e-01)(2,8.675060e-01)
};\addlegendentry{$\sigma=0.5$}  

   \addplot[color=blue,mark=diamond,mark size=2.2pt,only marks]  plot coordinates{
(3.800000e-01,4.542481e-01)(4.200000e-01,3.802445e-01)(4.600000e-01,3.480000e-01)(5.000000e-01,3.342733e-01)(5.400000e-01,3.415498e-01)(6.000000e-01,3.552401e-01)(7.000000e-01,3.948042e-01)(8.000000e-01,4.421997e-01)(9.000000e-01,4.859342e-01)(1,5.306043e-01)(1.100000e+00,5.764165e-01)(1.200000e+00,6.230798e-01)(1.300000e+00,6.583114e-01)(1.400000e+00,6.955804e-01)(1.500000e+00,7.302373e-01)(1.600000e+00,7.649460e-01)(1.700000e+00,7.939546e-01)(1.800000e+00,8.237030e-01)(1.900000e+00,8.470199e-01)(2,8.672734e-01)
};

   \addplot[color=red,mark=diamond,mark size=2.2pt,only marks]  plot coordinates{
(5.600000e-01,7.229762e-01)(5.800000e-01,6.714311e-01)(6.000000e-01,6.259221e-01)(6.200000e-01,5.753749e-01)(6.600000e-01,5.282595e-01)(7.400000e-01,4.863410e-01)(8.000000e-01,4.831680e-01)(7.000000e-01,4.994528e-01)(8.000000e-01,4.860077e-01)(9.000000e-01,4.965915e-01)(1,5.214661e-01)(1.100000e+00,5.528550e-01)(1.200000e+00,5.877339e-01)(1.300000e+00,6.213047e-01)(1.400000e+00,6.558343e-01)(1.500000e+00,6.895681e-01)(1.600000e+00,7.204578e-01)(1.700000e+00,7.505584e-01)(1.800000e+00,7.786863e-01)(1.900000e+00,8.045437e-01)(2,8.294955e-01)};

         \end{axis}
  \end{tikzpicture}}
\caption{Performance of H-SVR vs $\epsilon$ when $\beta=1$ for different values of the noise variance $\sigma^2$ and for different values of $\delta$. The continuous line curves correspond to the theoretical predictions while the points denote finite-sample performance when $p=200$ and and $n= \lfloor{\delta p}\rfloor$.}
\label{hardvs.epsilon}
\end{center}
\end{figure}

\begin{figure}[]
\begin{center}
\begin{tikzpicture}[scale=1,font=\fontsize{10}{10}\selectfont]
   \tikzstyle{every axis y label}+=[yshift=0pt]
   \tikzstyle{every axis x label}+=[yshift=5pt]
   \tikzstyle{every axis legend}+=[cells={anchor=west},fill=white,
        at={(0.98,0.98)}, anchor=north east, font=\fontsize{10}{10}\selectfont]
   \begin{axis}[
      xmin=0,
      ymin=0.3,
      xmax=5,
      ymax=1.5,
xlabel={$\delta$},
ylabel={Risk }	]

\addplot[color=green] coordinates{
(1.000000e-01,9.692403e-01)(2.000000e-01,9.397364e-01)(3.000000e-01,9.116430e-01)(4.000000e-01,8.847284e-01)(5.000000e-01,8.589582e-01)(6.000000e-01,8.344822e-01)(7.000000e-01,8.109003e-01)(8.000000e-01,7.885440e-01)(9.000000e-01,7.673760e-01)(1,7.470145e-01)(1.100000e+00,7.277796e-01)(1.200000e+00,7.093008e-01)(1.300000e+00,6.918912e-01)(1.400000e+00,6.755196e-01)(1.500000e+00,6.598313e-01)(1.600000e+00,6.451302e-01)(1.700000e+00,6.312302e-01)(1.800000e+00,6.181104e-01)(1.900000e+00,6.059066e-01)(2,5.944410e-01)(2.100000e+00,5.838488e-01)(2.200000e+00,5.741093e-01)(2.300000e+00,5.653536e-01)(2.400000e+00,5.572623e-01)(2.500000e+00,5.502672e-01)(2.600000e+00,5.440538e-01)(2.700000e+00,5.390496e-01)(2.800000e+00,5.350923e-01)(2.900000e+00,5.323162e-01)(3,5.308580e-01)(3.100000e+00,5.311494e-01)(3.200000e+00,5.331920e-01)(3.300000e+00,5.375822e-01)(3.400000e+00,5.449392e-01)(3.500000e+00,5.563668e-01)(3.600000e+00,5.741093e-01)(3.700000e+00,6.029523e-01)(3.748747e+00,6.247322e-01)(3.758747e+00,6.302772e-01)(3.768747e+00,6.363253e-01)(3.778747e+00,6.430436e-01)(3.788747e+00,6.506036e-01)(3.798747e+00,6.590192e-01)(3.808747e+00,6.687968e-01)(3.818747e+00,6.806250e-01)(3.828747e+00,6.952224e-01)(3.838747e+00,7.157160e-01)
};\addlegendentry{$\epsilon=1.5$}

\addplot[color=black] coordinates{
(1.000000e-01,9.635386e-01)(2.000000e-01,9.291032e-01)(3.000000e-01,8.964302e-01)(4.000000e-01,8.656442e-01)(5.000000e-01,8.368590e-01)(6.000000e-01,8.100000e-01)(7.000000e-01,7.849960e-01)(8.000000e-01,7.619544e-01)(9.000000e-01,7.408045e-01)(1,7.216503e-01)(1.100000e+00,7.045924e-01)(1.200000e+00,6.895642e-01)(1.300000e+00,6.768353e-01)(1.400000e+00,6.665090e-01)(1.500000e+00,6.588569e-01)(1.600000e+00,6.543192e-01)(1.700000e+00,6.531872e-01)(1.800000e+00,6.562620e-01)(1.900000e+00,6.645510e-01)(2,6.801301e-01)(2.100000e+00,7.061041e-01)(2.200000e+00,7.490903e-01)(2.300000e+00,8.279180e-01)

};\addlegendentry{$\epsilon=1.2$}

\addplot[color=blue] coordinates{

(1.000000e-02,9.960040e-01)(2.000000e-02,9.920160e-01)(3.000000e-02,9.880360e-01)(4.000000e-02,9.842624e-01)(5.000000e-02,9.802980e-01)(6.000000e-02,9.765392e-01)(7.000000e-02,9.725904e-01)(8.000000e-02,9.688465e-01)(9.000000e-02,9.651098e-01)(1.000000e-01,9.613803e-01)(1.100000e-01,9.576580e-01)(1.200000e-01,9.539429e-01)(1.300000e-01,9.502350e-01)(1.400000e-01,9.465344e-01)(1.500000e-01,9.430352e-01)(1.600000e-01,9.393486e-01)(1.700000e-01,9.358628e-01)(1.800000e-01,9.321902e-01)(1.900000e-01,9.287177e-01)(2.000000e-01,9.252516e-01)(2.100000e-01,9.217920e-01)(2.200000e-01,9.183389e-01)(2.300000e-01,9.148923e-01)(2.400000e-01,9.116430e-01)(2.500000e-01,9.082090e-01)(2.600000e-01,9.049717e-01)(2.700000e-01,9.015502e-01)(2.800000e-01,8.983248e-01)(2.900000e-01,8.951052e-01)(3.000000e-01,8.918914e-01)(3.100000e-01,8.886833e-01)(3.200000e-01,8.854810e-01)(3.300000e-01,8.824724e-01)(3.400000e-01,8.792813e-01)(3.500000e-01,8.762832e-01)(3.600000e-01,8.731034e-01)(3.700000e-01,8.701158e-01)(3.800000e-01,8.671334e-01)(3.900000e-01,8.641562e-01)(4.000000e-01,8.611840e-01)(4.100000e-01,8.584023e-01)(4.200000e-01,8.554400e-01)(4.300000e-01,8.526676e-01)(4.400000e-01,8.497152e-01)(4.500000e-01,8.469521e-01)(4.600000e-01,8.441934e-01)(4.700000e-01,8.414393e-01)(4.800000e-01,8.388728e-01)(4.900000e-01,8.361274e-01)(5.000000e-01,8.333864e-01)(5.100000e-01,8.308323e-01)(5.200000e-01,8.282820e-01)(5.300000e-01,8.257357e-01)(5.400000e-01,8.231933e-01)(5.500000e-01,8.206548e-01)(5.600000e-01,8.183012e-01)(5.700000e-01,8.157702e-01)(5.800000e-01,8.134236e-01)(5.900000e-01,8.110804e-01)(6.000000e-01,8.087405e-01)(6.100000e-01,8.064040e-01)(6.200000e-01,8.040709e-01)(6.300000e-01,8.019203e-01)(6.400000e-01,7.997725e-01)(6.500000e-01,7.974490e-01)(6.600000e-01,7.953072e-01)(6.700000e-01,7.933465e-01)(6.800000e-01,7.912103e-01)(6.900000e-01,7.890769e-01)(7.000000e-01,7.871238e-01)(7.100000e-01,7.851732e-01)(7.200000e-01,7.832250e-01)(7.300000e-01,7.812792e-01)(7.400000e-01,7.795124e-01)(7.500000e-01,7.775712e-01)(7.600000e-01,7.758086e-01)(7.700000e-01,7.740480e-01)(7.800000e-01,7.724652e-01)(7.900000e-01,7.707084e-01)(8.000000e-01,7.691290e-01)(8.100000e-01,7.673760e-01)(8.200000e-01,7.658000e-01)(8.300000e-01,7.644005e-01)(8.400000e-01,7.628276e-01)(8.500000e-01,7.614308e-01)(8.600000e-01,7.600352e-01)(8.700000e-01,7.586410e-01)(8.800000e-01,7.572480e-01)(8.900000e-01,7.560303e-01)(9.000000e-01,7.548134e-01)(9.100000e-01,7.535976e-01)(9.200000e-01,7.523828e-01)(9.300000e-01,7.511689e-01)(9.400000e-01,7.501292e-01)(9.500000e-01,7.490903e-01)(9.600000e-01,7.482250e-01)(9.700000e-01,7.471874e-01)(9.800000e-01,7.463232e-01)(9.900000e-01,7.454596e-01)(1,7.445964e-01)(1.010000e+00,7.439063e-01)(1.020000e+00,7.432164e-01)(1.030000e+00,7.425269e-01)(1.040000e+00,7.420100e-01)(1.050000e+00,7.414932e-01)(1.060000e+00,7.409766e-01)(1.070000e+00,7.404603e-01)(1.080000e+00,7.401161e-01)(1.090000e+00,7.397720e-01)(1.100000e+00,7.396000e-01)(1.110000e+00,7.392560e-01)(1.120000e+00,7.390841e-01)(1.130000e+00,7.390841e-01)(1.140000e+00,7.390841e-01)(1.150000e+00,7.390841e-01)(1.160000e+00,7.392560e-01)(1.170000e+00,7.392560e-01)(1.180000e+00,7.396000e-01)(1.190000e+00,7.399440e-01)(1.200000e+00,7.402882e-01)(1.210000e+00,7.406324e-01)(1.220000e+00,7.411488e-01)(1.230000e+00,7.418377e-01)(1.240000e+00,7.425269e-01)(1.250000e+00,7.432164e-01)(1.260000e+00,7.440788e-01)(1.270000e+00,7.449416e-01)(1.280000e+00,7.459777e-01)(1.290000e+00,7.471874e-01)(1.300000e+00,7.483980e-01)(1.310000e+00,7.496096e-01)(1.320000e+00,7.509956e-01)(1.330000e+00,7.525563e-01)(1.340000e+00,7.541186e-01)(1.350000e+00,7.558564e-01)(1.360000e+00,7.575962e-01)(1.370000e+00,7.595123e-01)(1.380000e+00,7.616053e-01)(1.390000e+00,7.637012e-01)(1.400000e+00,7.661501e-01)(1.410000e+00,7.686029e-01)(1.420000e+00,7.710596e-01)(1.430000e+00,7.738721e-01)(1.440000e+00,7.766897e-01)(1.450000e+00,7.796890e-01)(1.460000e+00,7.830480e-01)(1.470000e+00,7.864142e-01)(1.480000e+00,7.899654e-01)(1.490000e+00,7.937028e-01)(1.500000e+00,7.976276e-01)(1.510000e+00,8.017412e-01)(1.520000e+00,8.062244e-01)(1.530000e+00,8.107202e-01)(1.540000e+00,8.155896e-01)(1.550000e+00,8.208360e-01)(1.560000e+00,8.262810e-01)(1.570000e+00,8.319264e-01)(1.580000e+00,8.379572e-01)(1.590000e+00,8.441934e-01)(1.600000e+00,8.510062e-01)(1.610000e+00,8.580317e-01)(1.620000e+00,8.656442e-01)(1.630000e+00,8.734772e-01)(1.640000e+00,8.819088e-01)(1.650000e+00,8.909472e-01)(1.660000e+00,9.004112e-01)(1.670000e+00,9.104976e-01)(1.680000e+00,9.214080e-01)(1.690000e+00,9.329628e-01)(1.700000e+00,9.453673e-01)(1.710000e+00,9.588326e-01)(1.720000e+00,9.731823e-01)(1.730000e+00,9.888314e-01)(1.740000e+00,1.006009e+00)(1.750000e+00,1.024549e+00)(1.760000e+00,1.045097e+00)(1.770000e+00,1.067916e+00)(1.780000e+00,1.093488e+00)(1.790000e+00,1.122540e+00)(1.800000e+00,1.156055e+00)(1.810000e+00,1.195742e+00)(1.820000e+00,1.244117e+00)

};\addlegendentry{$\epsilon=1$}           
 \addplot[color=red] coordinates{
(1.000000e-01,9.609881e-01)(2.000000e-01,9.250592e-01)(3.000000e-01,8.918914e-01)(4.000000e-01,8.611840e-01)(5.000000e-01,8.330213e-01)(6.000000e-01,8.071226e-01)(7.000000e-01,7.830480e-01)(8.000000e-01,7.609073e-01)(9.000000e-01,7.406324e-01)(1,7.216503e-01)(1.100000e+00,7.040888e-01)(1.200000e+00,6.879044e-01)(1.300000e+00,6.727280e-01)(1.400000e+00,6.586946e-01)(1.500000e+00,6.454516e-01)(1.600000e+00,6.331385e-01)(1.700000e+00,6.215746e-01)(1.800000e+00,6.107422e-01)(1.900000e+00,6.004700e-01)(2,5.908997e-01)(2.100000e+00,5.817113e-01)(2.200000e+00,5.732004e-01)(2.300000e+00,5.650529e-01)(2.400000e+00,5.572623e-01)(2.500000e+00,5.499706e-01)(2.600000e+00,5.430216e-01)(2.700000e+00,5.364098e-01)(2.800000e+00,5.299840e-01)(2.900000e+00,5.240312e-01)(3,5.181120e-01)(3.200000e+00,5.073713e-01)(3.400000e+00,4.973070e-01)(3.600000e+00,4.881817e-01)(3.800000e+00,4.795563e-01)(4,4.716942e-01)(4.200000e+00,4.641697e-01)(4.400000e+00,4.572464e-01)(4.600000e+00,4.507780e-01)(4.800000e+00,4.446222e-01)(5,4.389062e-01)
};\addlegendentry{Optimal $\epsilon$}

\end{axis}
\end{tikzpicture}
\caption{Performance of H-SVR as a function of $\delta$ when $\sigma=1$ and $\beta=1$ for different values of $\epsilon$. All the curves correspond to the asymptotic performance predictions.}
\label{hardvs.delta_opt_eps}
\end{center}
\end{figure}
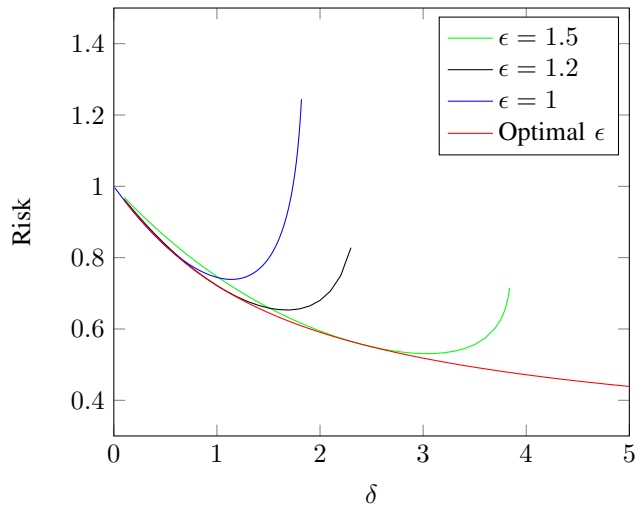

\subsection{S-SVR}

\subsubsection{Impact of the parameters $\epsilon$ and $C$}
In \figref{SMvsepsandC} and \figref{SMvsepsandC3D}, we investigate the effect of the hyper-parameters $C$ and $\epsilon$ on the performance of soft SVR. As shown in these figures, arbitrary choices for the pair $(\epsilon, C)$ may lead to a significant degradation in the test risk performance compared to the optimal performance associated with optimal selection of $(\epsilon, C)$. Thus, our results again emphasize the importance of the appropriate selection of these parameters and suggest the practical relevance of theoretical aided approaches to select these parameters that may complement existing cross-validation techniques.

\begin{figure}[]
\begin{center}
\subfigure[vs. $\epsilon$ for $C=2.4$]{
\begin{tikzpicture}[scale=0.9,font=\fontsize{10}{10}\selectfont]
    \renewcommand{\axisdefaulttryminticks}{4}
    \tikzstyle{every major grid}+=[style=densely dashed]
    \tikzstyle{every axis y label}+=[yshift=-20pt]
    \tikzstyle{every axis x label}+=[yshift=5pt]
    \tikzstyle{every axis legend}+=[cells={anchor=west},fill=white,
        at={(0.98,0.02)},anchor=south east,font=\fontsize{10}{10}\selectfont]
    \begin{axis}[
      xmin=0.0,
      ymin=0.4,
      xmax=2,
      ymax=0.8,
      grid=major,
      scaled ticks=true,
   			xlabel={$\epsilon$},
   			ylabel={Prediction risk}			
      ]
      \addplot[color=blue,mark size=1.4pt,mark =none]  plot coordinates{   
(1.000000e-02,5.187902e-01)(2.000000e-02,5.164148e-01)(3.000000e-02,5.140871e-01)(4.000000e-02,5.118067e-01)(5.000000e-02,5.095735e-01)(6.000000e-02,5.073873e-01)(7.000000e-02,5.052477e-01)(8.000000e-02,5.031547e-01)(9.000000e-02,5.011080e-01)(1.000000e-01,4.991074e-01)(1.100000e-01,4.971526e-01)(1.200000e-01,4.952434e-01)(1.300000e-01,4.933797e-01)(1.400000e-01,4.915612e-01)(1.500000e-01,4.897877e-01)(1.600000e-01,4.880591e-01)(1.700000e-01,4.863750e-01)(1.800000e-01,4.847353e-01)(1.900000e-01,4.831397e-01)(2.000000e-01,4.815882e-01)(2.100000e-01,4.800804e-01)(2.200000e-01,4.786162e-01)(2.300000e-01,4.771954e-01)(2.400000e-01,4.758177e-01)(2.500000e-01,4.744831e-01)(2.600000e-01,4.731912e-01)(2.700000e-01,4.719419e-01)(2.800000e-01,4.707351e-01)(2.900000e-01,4.695705e-01)(3.000000e-01,4.684480e-01)(3.100000e-01,4.673673e-01)(3.200000e-01,4.663283e-01)(3.300000e-01,4.653308e-01)(3.400000e-01,4.643747e-01)(3.500000e-01,4.634598e-01)(3.600000e-01,4.625858e-01)(3.700000e-01,4.617527e-01)(3.800000e-01,4.609603e-01)(3.900000e-01,4.602083e-01)(4.000000e-01,4.594968e-01)(4.100000e-01,4.588254e-01)(4.200000e-01,4.581940e-01)(4.300000e-01,4.576025e-01)(4.400000e-01,4.570508e-01)(4.500000e-01,4.565386e-01)(4.600000e-01,4.560658e-01)(4.700000e-01,4.556323e-01)(4.800000e-01,4.552380e-01)(4.900000e-01,4.548826e-01)(5.000000e-01,4.545661e-01)(5.100000e-01,4.542883e-01)(5.200000e-01,4.540490e-01)(5.300000e-01,4.538482e-01)(5.400000e-01,4.536857e-01)(5.500000e-01,4.535613e-01)(5.600000e-01,4.534749e-01)(5.700000e-01,4.534264e-01)(5.800000e-01,4.534156e-01)(5.900000e-01,4.534425e-01)(6.000000e-01,4.535068e-01)(6.100000e-01,4.536084e-01)(6.200000e-01,4.537473e-01)(6.300000e-01,4.539232e-01)(6.400000e-01,4.541360e-01)(6.500000e-01,4.543856e-01)(6.600000e-01,4.546719e-01)(6.700000e-01,4.549946e-01)(6.800000e-01,4.553538e-01)(6.900000e-01,4.557492e-01)(7.000000e-01,4.561806e-01)(7.100000e-01,4.566480e-01)(7.200000e-01,4.571512e-01)(7.300000e-01,4.576901e-01)(7.400000e-01,4.582645e-01)(7.500000e-01,4.588742e-01)(7.600000e-01,4.595191e-01)(7.700000e-01,4.601990e-01)(7.800000e-01,4.609138e-01)(7.900000e-01,4.616633e-01)(8.000000e-01,4.624473e-01)(8.100000e-01,4.632657e-01)(8.200000e-01,4.641183e-01)(8.300000e-01,4.650049e-01)(8.400000e-01,4.659254e-01)(8.500000e-01,4.668795e-01)(8.600000e-01,4.678670e-01)(8.700000e-01,4.688878e-01)(8.800000e-01,4.699417e-01)(8.900000e-01,4.710284e-01)(9.000000e-01,4.721477e-01)(9.100000e-01,4.732995e-01)(9.200000e-01,4.744835e-01)(9.300000e-01,4.756995e-01)(9.400000e-01,4.769472e-01)(9.500000e-01,4.782264e-01)(9.600000e-01,4.795369e-01)(9.700000e-01,4.808785e-01)(9.800000e-01,4.822508e-01)(9.900000e-01,4.836536e-01)(1,4.850867e-01)(1.010000e+00,4.865498e-01)(1.020000e+00,4.880426e-01)(1.030000e+00,4.895648e-01)(1.040000e+00,4.911161e-01)(1.050000e+00,4.926963e-01)(1.060000e+00,4.943050e-01)(1.070000e+00,4.959420e-01)(1.080000e+00,4.976069e-01)(1.090000e+00,4.992993e-01)(1.100000e+00,5.010191e-01)(1.110000e+00,5.027659e-01)(1.120000e+00,5.045392e-01)(1.130000e+00,5.063388e-01)(1.140000e+00,5.081644e-01)(1.150000e+00,5.100155e-01)(1.160000e+00,5.118919e-01)(1.170000e+00,5.137931e-01)(1.180000e+00,5.157189e-01)(1.190000e+00,5.176687e-01)(1.200000e+00,5.196423e-01)(1.210000e+00,5.216393e-01)(1.220000e+00,5.236592e-01)(1.230000e+00,5.257017e-01)(1.240000e+00,5.277664e-01)(1.250000e+00,5.298529e-01)(1.260000e+00,5.319608e-01)(1.270000e+00,5.340897e-01)(1.280000e+00,5.362391e-01)(1.290000e+00,5.384087e-01)(1.300000e+00,5.405980e-01)(1.310000e+00,5.428067e-01)(1.320000e+00,5.450342e-01)(1.330000e+00,5.472803e-01)(1.340000e+00,5.495443e-01)(1.350000e+00,5.518260e-01)(1.360000e+00,5.541248e-01)(1.370000e+00,5.564404e-01)(1.380000e+00,5.587723e-01)(1.390000e+00,5.611201e-01)(1.400000e+00,5.634833e-01)(1.410000e+00,5.658615e-01)(1.420000e+00,5.682543e-01)(1.430000e+00,5.706612e-01)(1.440000e+00,5.730817e-01)(1.450000e+00,5.755155e-01)(1.460000e+00,5.779622e-01)(1.470000e+00,5.804211e-01)(1.480000e+00,5.828920e-01)(1.490000e+00,5.853744e-01)(1.500000e+00,5.878678e-01)(1.510000e+00,5.903718e-01)(1.520000e+00,5.928861e-01)(1.530000e+00,5.954100e-01)(1.540000e+00,5.979433e-01)(1.550000e+00,6.004854e-01)(1.560000e+00,6.030360e-01)(1.570000e+00,6.055947e-01)(1.580000e+00,6.081609e-01)(1.590000e+00,6.107343e-01)(1.600000e+00,6.133145e-01)(1.610000e+00,6.159011e-01)(1.620000e+00,6.184936e-01)(1.630000e+00,6.210916e-01)(1.640000e+00,6.236947e-01)(1.650000e+00,6.263026e-01)(1.660000e+00,6.289148e-01)(1.670000e+00,6.315309e-01)(1.680000e+00,6.341506e-01)(1.690000e+00,6.367734e-01)(1.700000e+00,6.393989e-01)(1.710000e+00,6.420269e-01)(1.720000e+00,6.446568e-01)(1.730000e+00,6.472884e-01)(1.740000e+00,6.499212e-01)(1.750000e+00,6.525550e-01)(1.760000e+00,6.551893e-01)(1.770000e+00,6.578237e-01)(1.780000e+00,6.604580e-01)(1.790000e+00,6.630918e-01)(1.800000e+00,6.657247e-01)(1.810000e+00,6.683564e-01)(1.820000e+00,6.709866e-01)(1.830000e+00,6.736149e-01)(1.840000e+00,6.762411e-01)(1.850000e+00,6.788647e-01)(1.860000e+00,6.814855e-01)(1.870000e+00,6.841032e-01)(1.880000e+00,6.867174e-01)(1.890000e+00,6.893279e-01)(1.900000e+00,6.919344e-01)(1.910000e+00,6.945365e-01)(1.920000e+00,6.971341e-01)(1.930000e+00,6.997267e-01)(1.940000e+00,7.023142e-01)(1.950000e+00,7.048962e-01)(1.960000e+00,7.074725e-01)(1.970000e+00,7.100428e-01)(1.980000e+00,7.126068e-01)(1.990000e+00,7.151644e-01)(2,7.177153e-01)

};\addlegendentry{$\sigma=1$}

      \addplot[color=red,mark size=1.4pt,mark =none]  plot coordinates{   
(1.000000e-02,7.283857e-01)(2.000000e-02,7.258398e-01)(3.000000e-02,7.233249e-01)(4.000000e-02,7.208409e-01)(5.000000e-02,7.183877e-01)(6.000000e-02,7.159651e-01)(7.000000e-02,7.135730e-01)(8.000000e-02,7.112113e-01)(9.000000e-02,7.088799e-01)(1.000000e-01,7.065786e-01)(1.100000e-01,7.043074e-01)(1.200000e-01,7.020660e-01)(1.300000e-01,6.998544e-01)(1.400000e-01,6.976724e-01)(1.500000e-01,6.955199e-01)(1.600000e-01,6.933969e-01)(1.700000e-01,6.913030e-01)(1.800000e-01,6.892383e-01)(1.900000e-01,6.872026e-01)(2.000000e-01,6.851958e-01)(2.100000e-01,6.832177e-01)(2.200000e-01,6.812683e-01)(2.300000e-01,6.793473e-01)(2.400000e-01,6.774548e-01)(2.500000e-01,6.755905e-01)(2.600000e-01,6.737544e-01)(2.700000e-01,6.719462e-01)(2.800000e-01,6.701660e-01)(2.900000e-01,6.684135e-01)(3.000000e-01,6.666887e-01)(3.100000e-01,6.649914e-01)(3.200000e-01,6.633216e-01)(3.300000e-01,6.616790e-01)(3.400000e-01,6.600636e-01)(3.500000e-01,6.584753e-01)(3.600000e-01,6.569139e-01)(3.700000e-01,6.553793e-01)(3.800000e-01,6.538714e-01)(3.900000e-01,6.523901e-01)(4.000000e-01,6.509353e-01)(4.100000e-01,6.495069e-01)(4.200000e-01,6.481047e-01)(4.300000e-01,6.467287e-01)(4.400000e-01,6.453787e-01)(4.500000e-01,6.440546e-01)(4.600000e-01,6.427563e-01)(4.700000e-01,6.414837e-01)(4.800000e-01,6.402267e-01)(4.900000e-01,6.389943e-01)(5.000000e-01,6.378406e-01)(5.100000e-01,6.366588e-01)(5.200000e-01,6.355017e-01)(5.300000e-01,6.343708e-01)(5.400000e-01,6.332643e-01)(5.500000e-01,6.322358e-01)(5.600000e-01,6.311790e-01)(5.700000e-01,6.301469e-01)(5.800000e-01,6.291395e-01)(5.900000e-01,6.281566e-01)(6.000000e-01,6.271981e-01)(6.100000e-01,6.262640e-01)(6.200000e-01,6.253541e-01)(6.300000e-01,6.245298e-01)(6.400000e-01,6.236779e-01)(6.500000e-01,6.228499e-01)(6.600000e-01,6.220456e-01)(6.700000e-01,6.212649e-01)(6.800000e-01,6.205078e-01)(6.900000e-01,6.197742e-01)(7.000000e-01,6.190639e-01)(7.100000e-01,6.183769e-01)(7.200000e-01,6.177131e-01)(7.300000e-01,6.170724e-01)(7.400000e-01,6.164546e-01)(7.500000e-01,6.158598e-01)(7.600000e-01,6.152878e-01)(7.700000e-01,6.147385e-01)(7.800000e-01,6.142118e-01)(7.900000e-01,6.137077e-01)(8.000000e-01,6.132261e-01)(8.100000e-01,6.127668e-01)(8.200000e-01,6.123298e-01)(8.300000e-01,6.119150e-01)(8.400000e-01,6.115222e-01)(8.500000e-01,6.111516e-01)(8.600000e-01,6.108028e-01)(8.700000e-01,6.104759e-01)(8.800000e-01,6.101707e-01)(8.900000e-01,6.098873e-01)(9.000000e-01,6.096254e-01)(9.100000e-01,6.093850e-01)(9.200000e-01,6.091660e-01)(9.300000e-01,6.089684e-01)(9.400000e-01,6.087920e-01)(9.500000e-01,6.086368e-01)(9.600000e-01,6.085026e-01)(9.700000e-01,6.083895e-01)(9.800000e-01,6.082972e-01)(9.900000e-01,6.082258e-01)(1,6.081751e-01)(1.010000e+00,6.081451e-01)(1.020000e+00,6.081356e-01)(1.030000e+00,6.081466e-01)(1.040000e+00,6.081780e-01)(1.050000e+00,6.082297e-01)(1.060000e+00,6.083017e-01)(1.070000e+00,6.083938e-01)(1.080000e+00,6.085059e-01)(1.090000e+00,6.086380e-01)(1.100000e+00,6.087899e-01)(1.110000e+00,6.089617e-01)(1.120000e+00,6.091531e-01)(1.130000e+00,6.093642e-01)(1.140000e+00,6.096468e-01)(1.150000e+00,6.098968e-01)(1.160000e+00,6.101662e-01)(1.170000e+00,6.104548e-01)(1.180000e+00,6.107625e-01)(1.190000e+00,6.110893e-01)(1.200000e+00,6.114351e-01)(1.210000e+00,6.117997e-01)(1.220000e+00,6.121832e-01)(1.230000e+00,6.125853e-01)(1.240000e+00,6.130060e-01)(1.250000e+00,6.134452e-01)(1.260000e+00,6.139027e-01)(1.270000e+00,6.143786e-01)(1.280000e+00,6.148727e-01)(1.290000e+00,6.153848e-01)(1.300000e+00,6.159149e-01)(1.310000e+00,6.164629e-01)(1.320000e+00,6.170287e-01)(1.330000e+00,6.176122e-01)(1.340000e+00,6.182132e-01)(1.350000e+00,6.188317e-01)(1.360000e+00,6.194675e-01)(1.370000e+00,6.201206e-01)(1.380000e+00,6.207907e-01)(1.390000e+00,6.214779e-01)(1.400000e+00,6.221820e-01)(1.410000e+00,6.229029e-01)(1.420000e+00,6.236405e-01)(1.430000e+00,6.243946e-01)(1.440000e+00,6.251651e-01)(1.450000e+00,6.259519e-01)(1.460000e+00,6.267550e-01)(1.470000e+00,6.275741e-01)(1.480000e+00,6.284091e-01)(1.490000e+00,6.292600e-01)(1.500000e+00,6.301265e-01)(1.510000e+00,6.310086e-01)(1.520000e+00,6.319061e-01)(1.530000e+00,6.328189e-01)(1.540000e+00,6.337469e-01)(1.550000e+00,6.346900e-01)(1.560000e+00,6.356479e-01)(1.570000e+00,6.366205e-01)(1.580000e+00,6.376078e-01)(1.590000e+00,6.386096e-01)(1.600000e+00,6.396257e-01)(1.610000e+00,6.406560e-01)(1.620000e+00,6.417004e-01)(1.630000e+00,6.427586e-01)(1.640000e+00,6.438307e-01)(1.650000e+00,6.449163e-01)(1.660000e+00,6.460154e-01)(1.670000e+00,6.471278e-01)(1.680000e+00,6.482534e-01)(1.690000e+00,6.493920e-01)(1.700000e+00,6.505434e-01)(1.710000e+00,6.517075e-01)(1.720000e+00,6.528842e-01)(1.730000e+00,6.540732e-01)(1.740000e+00,6.552745e-01)(1.750000e+00,6.564878e-01)(1.760000e+00,6.577130e-01)(1.770000e+00,6.589500e-01)(1.780000e+00,6.601985e-01)(1.790000e+00,6.614584e-01)(1.800000e+00,6.627296e-01)(1.810000e+00,6.640118e-01)(1.820000e+00,6.653050e-01)(1.830000e+00,6.666089e-01)(1.840000e+00,6.679241e-01)(1.850000e+00,6.692518e-01)(1.860000e+00,6.705896e-01)(1.870000e+00,6.719374e-01)(1.880000e+00,6.732949e-01)(1.890000e+00,6.746621e-01)(1.900000e+00,6.760387e-01)(1.910000e+00,6.774246e-01)(1.920000e+00,6.788195e-01)(1.930000e+00,6.802234e-01)(1.940000e+00,6.816360e-01)(1.950000e+00,6.830020e-01)(1.960000e+00,6.844315e-01)(1.970000e+00,6.858691e-01)(1.980000e+00,6.873148e-01)(1.990000e+00,6.887684e-01)(2,6.902296e-01)};\addlegendentry{$\sigma=1.4$}  

    \addplot[color=blue,mark=diamond,mark size=2.2pt,only marks]  plot coordinates{
(1.000000e-01,4.997016e-01)(2.000000e-01,4.840016e-01)(3.000000e-01,4.705970e-01)(4.000000e-01,4.582569e-01)(5.000000e-01,4.562918e-01)(6.000000e-01,4.544077e-01)(7.000000e-01,4.556651e-01)(8.000000e-01,4.646659e-01)(9.000000e-01,4.735090e-01)(1,4.864847e-01)(1.100000e+00,5.017807e-01)(1.200000e+00,5.225654e-01)(1.300000e+00,5.404625e-01)(1.400000e+00,5.638353e-01)(1.500000e+00,5.913945e-01)(1.600000e+00,6.178057e-01)(1.700000e+00,6.397251e-01)(1.800000e+00,6.677763e-01)(1.900000e+00,6.919393e-01)(2,7.204215e-01)};
    \addplot[color=red,mark=diamond,mark size=2.2pt,only marks]  plot coordinates{
(1.000000e-01,7.077410e-01)(2.000000e-01,6.865045e-01)(3.000000e-01,6.656245e-01)(4.000000e-01,6.513065e-01)(5.000000e-01,6.371856e-01)(6.000000e-01,6.266560e-01)(7.000000e-01,6.220831e-01)(8.000000e-01,6.149113e-01)(9.000000e-01,6.137084e-01)(1,6.100429e-01)(1.100000e+00,6.098920e-01)(1.200000e+00,6.120483e-01)(1.300000e+00,6.196031e-01)(1.400000e+00,6.208740e-01)(1.500000e+00,6.314583e-01)(1.600000e+00,6.390624e-01)(1.700000e+00,6.497390e-01)(1.800000e+00,6.623783e-01)(1.900000e+00,6.758713e-01)(2,6.942621e-01)
};

           \end{axis}
  \end{tikzpicture}}
  \subfigure[vs. $C$ for $\epsilon=0.6$]{
\begin{tikzpicture}[scale=0.9,font=\fontsize{10}{10}\selectfont]
    \renewcommand{\axisdefaulttryminticks}{4}
    \tikzstyle{every major grid}+=[style=densely dashed]
    \tikzstyle{every axis y label}+=[yshift=-20pt]
    \tikzstyle{every axis x label}+=[yshift=5pt]
    \tikzstyle{every axis legend}+=[cells={anchor=west},fill=white,
        at={(0.98,0.02)},anchor=south east,font=\fontsize{10}{10}\selectfont]
    \begin{axis}[
      xmin=0.0,
      ymin=0.4,
      xmax=25,
      ymax=1.5,
      grid=major,
      scaled ticks=true,
   			xlabel={$C$},
   			ylabel={Prediction risk}			
      ]
  
   \addplot[color=blue,mark size=1.4pt,mark =none]  plot coordinates{   
(1.000000e-01,9.061956e-01)(2.000000e-01,8.287913e-01)(3.000000e-01,7.648182e-01)(4.000000e-01,7.118141e-01)(5.000000e-01,6.677686e-01)(6.000000e-01,6.310563e-01)(7.000000e-01,6.003683e-01)(8.000000e-01,5.746523e-01)(9.000000e-01,5.530611e-01)(1,5.349101e-01)(1.100000e+00,5.196441e-01)(1.200000e+00,5.068103e-01)(1.300000e+00,4.960378e-01)(1.400000e+00,4.870208e-01)(1.500000e+00,4.795059e-01)(1.600000e+00,4.732818e-01)(1.700000e+00,4.681716e-01)(1.800000e+00,4.640262e-01)(1.900000e+00,4.607196e-01)(2,4.581443e-01)(2.100000e+00,4.562089e-01)(2.200000e+00,4.548344e-01)(2.300000e+00,4.539533e-01)(2.400000e+00,4.535068e-01)(2.500000e+00,4.534440e-01)(2.600000e+00,4.537207e-01)(2.700000e+00,4.542979e-01)(2.800000e+00,4.551418e-01)(2.900000e+00,4.562224e-01)(3.000000e+00,4.575135e-01)(3.100000e+00,4.589915e-01)(3.200000e+00,4.606360e-01)(3.300000e+00,4.624286e-01)(3.400000e+00,4.643528e-01)(3.500000e+00,4.663943e-01)(3.600000e+00,4.685398e-01)(3.700000e+00,4.707777e-01)(3.800000e+00,4.730977e-01)(3.900000e+00,4.754902e-01)(4,4.779467e-01)(4.100000e+00,4.804598e-01)(4.200000e+00,4.830224e-01)(4.300000e+00,4.856284e-01)(4.400000e+00,4.882722e-01)(4.500000e+00,4.909485e-01)(4.600000e+00,4.936529e-01)(4.700000e+00,4.963812e-01)(4.800000e+00,4.991295e-01)(4.900000e+00,5.018943e-01)(5,5.046727e-01)(5.100000e+00,5.074616e-01)(5.200000e+00,5.102585e-01)(5.300000e+00,5.130610e-01)(5.400000e+00,5.158670e-01)(5.500000e+00,5.186745e-01)(5.600000e+00,5.214817e-01)(5.700000e+00,5.242870e-01)(5.800000e+00,5.270890e-01)(5.900000e+00,5.298862e-01)(6,5.326774e-01)(6.100000e+00,5.354615e-01)(6.200000e+00,5.382376e-01)(6.300000e+00,5.410046e-01)(6.400000e+00,5.437617e-01)(6.500000e+00,5.465082e-01)(6.600000e+00,5.492433e-01)(6.700000e+00,5.519665e-01)(6.800000e+00,5.546772e-01)(6.900000e+00,5.573748e-01)(7,5.600590e-01)(7.100000e+00,5.627292e-01)(7.200000e+00,5.653852e-01)(7.300000e+00,5.680265e-01)(7.400000e+00,5.706530e-01)(7.500000e+00,5.732643e-01)(7.600000e+00,5.758601e-01)(7.700000e+00,5.784404e-01)(7.800000e+00,5.810050e-01)(7.900000e+00,5.835536e-01)(8,5.860861e-01)(8.100000e+00,5.886025e-01)(8.200000e+00,5.911027e-01)(8.300000e+00,5.935865e-01)(8.400000e+00,5.960540e-01)(8.500000e+00,5.985051e-01)(8.600000e+00,6.009397e-01)(8.700000e+00,6.033579e-01)(8.800000e+00,6.057597e-01)(8.900000e+00,6.081450e-01)(9,6.105139e-01)(9.100000e+00,6.128664e-01)(9.200000e+00,6.152026e-01)(9.300000e+00,6.175225e-01)(9.400000e+00,6.198261e-01)(9.500000e+00,6.221135e-01)(9.600000e+00,6.243848e-01)(9.700000e+00,6.266401e-01)(9.800000e+00,6.288793e-01)(9.900000e+00,6.311027e-01)(10,6.333103e-01)(1.010000e+01,6.355021e-01)(1.020000e+01,6.376783e-01)(1.030000e+01,6.398389e-01)(1.040000e+01,6.419841e-01)(1.050000e+01,6.441140e-01)(1.060000e+01,6.462286e-01)(1.070000e+01,6.483281e-01)(1.080000e+01,6.504125e-01)(1.090000e+01,6.524821e-01)(11,6.545368e-01)(1.110000e+01,6.565768e-01)(1.120000e+01,6.586022e-01)(1.130000e+01,6.606132e-01)(1.140000e+01,6.626097e-01)(1.150000e+01,6.645921e-01)(1.160000e+01,6.665602e-01)(1.170000e+01,6.685144e-01)(1.180000e+01,6.704547e-01)(1.190000e+01,6.723811e-01)(12,6.742939e-01)(1.210000e+01,6.761931e-01)(1.220000e+01,6.780789e-01)(1.230000e+01,6.799513e-01)(1.240000e+01,6.818106e-01)(1.250000e+01,6.836567e-01)(1.260000e+01,6.854898e-01)(1.270000e+01,6.873101e-01)(1.280000e+01,6.891176e-01)(1.290000e+01,6.909124e-01)(13,6.926947e-01)(1.310000e+01,6.944646e-01)(1.320000e+01,6.962222e-01)(1.330000e+01,6.979676e-01)(1.340000e+01,6.997008e-01)(1.350000e+01,7.014221e-01)(1.360000e+01,7.031316e-01)(1.370000e+01,7.048292e-01)(1.380000e+01,7.065152e-01)(1.390000e+01,7.081896e-01)(14,7.098526e-01)(1.410000e+01,7.115042e-01)(1.420000e+01,7.131446e-01)(1.430000e+01,7.147739e-01)(1.440000e+01,7.163921e-01)(1.450000e+01,7.179994e-01)(1.460000e+01,7.195958e-01)(1.470000e+01,7.211815e-01)(1.480000e+01,7.227566e-01)(1.490000e+01,7.243211e-01)(15,7.258752e-01)(1.510000e+01,7.274189e-01)(1.520000e+01,7.289524e-01)(1.530000e+01,7.304757e-01)(1.540000e+01,7.319890e-01)(1.550000e+01,7.334922e-01)(1.560000e+01,7.349856e-01)(1.570000e+01,7.364692e-01)(1.580000e+01,7.379431e-01)(1.590000e+01,7.394074e-01)(16,7.408621e-01)(1.610000e+01,7.423075e-01)(1.620000e+01,7.437434e-01)(1.630000e+01,7.451701e-01)(1.640000e+01,7.465876e-01)(1.650000e+01,7.479960e-01)(1.660000e+01,7.493954e-01)(1.670000e+01,7.507858e-01)(1.680000e+01,7.521674e-01)(1.690000e+01,7.535402e-01)(17,7.549043e-01)(1.710000e+01,7.562598e-01)(1.720000e+01,7.576068e-01)(1.730000e+01,7.589453e-01)(1.740000e+01,7.602753e-01)(1.750000e+01,7.615971e-01)(1.760000e+01,7.629107e-01)(1.770000e+01,7.642160e-01)(1.780000e+01,7.655133e-01)(1.790000e+01,7.668026e-01)(18,7.680839e-01)(1.810000e+01,7.693573e-01)(1.820000e+01,7.706229e-01)(1.830000e+01,7.718808e-01)(1.840000e+01,7.731310e-01)(1.850000e+01,7.743736e-01)(1.860000e+01,7.756087e-01)(1.870000e+01,7.768363e-01)(1.880000e+01,7.780564e-01)(1.890000e+01,7.792693e-01)(19,7.804748e-01)(1.910000e+01,7.816732e-01)(1.920000e+01,7.828643e-01)(1.930000e+01,7.840484e-01)(1.940000e+01,7.852255e-01)(1.950000e+01,7.863956e-01)(1.960000e+01,7.875588e-01)(1.970000e+01,7.887151e-01)(1.980000e+01,7.898647e-01)(1.990000e+01,7.910075e-01)(20,7.921436e-01)(2.010000e+01,7.932731e-01)(2.020000e+01,7.943961e-01)(2.030000e+01,7.955125e-01)(2.040000e+01,7.966225e-01)(2.050000e+01,7.977261e-01)(2.060000e+01,7.988234e-01)(2.070000e+01,7.999143e-01)(2.080000e+01,8.009991e-01)(2.090000e+01,8.020776e-01)(21,8.031500e-01)(2.110000e+01,8.042163e-01)(2.120000e+01,8.052766e-01)(2.130000e+01,8.063309e-01)(2.140000e+01,8.073793e-01)(2.150000e+01,8.084218e-01)(2.160000e+01,8.094584e-01)(2.170000e+01,8.104893e-01)(2.180000e+01,8.115144e-01)(2.190000e+01,8.125339e-01)(22,8.135476e-01)(2.210000e+01,8.145558e-01)(2.220000e+01,8.155585e-01)(2.230000e+01,8.165556e-01)(2.240000e+01,8.175472e-01)(2.250000e+01,8.185335e-01)(2.260000e+01,8.195143e-01)(2.270000e+01,8.204898e-01)(2.280000e+01,8.214601e-01)(2.290000e+01,8.224251e-01)(23,8.233848e-01)(2.310000e+01,8.243394e-01)(2.320000e+01,8.252889e-01)(2.330000e+01,8.262333e-01)(2.340000e+01,8.271727e-01)(2.350000e+01,8.281070e-01)(2.360000e+01,8.290364e-01)(2.370000e+01,8.299608e-01)(2.380000e+01,8.308804e-01)(2.390000e+01,8.317951e-01)(24,8.327050e-01)(2.410000e+01,8.336102e-01)(2.420000e+01,8.345106e-01)(2.430000e+01,8.354063e-01)(2.440000e+01,8.362973e-01)(2.450000e+01,8.371837e-01)(2.460000e+01,8.380655e-01)(2.470000e+01,8.389427e-01)(2.480000e+01,8.398155e-01)(2.490000e+01,8.406837e-01)(25,8.415475e-01)
};\addlegendentry{$\sigma=1$}   
   \addplot[color=red,mark size=1.4pt,mark =none]  plot coordinates{   
(1.000000e-02,9.913657e-01)(1.100000e-01,9.137666e-01)(2.100000e-01,8.502630e-01)(3.100000e-01,7.982679e-01)(4.100000e-01,7.558080e-01)(5.100000e-01,7.211988e-01)(6.100000e-01,6.930801e-01)(7.100000e-01,6.703499e-01)(8.100000e-01,6.520577e-01)(9.100000e-01,6.375769e-01)(1.010000e+00,6.262380e-01)(1.110000e+00,6.175640e-01)(1.210000e+00,6.111600e-01)(1.310000e+00,6.066679e-01)(1.410000e+00,6.038152e-01)(1.510000e+00,6.023662e-01)(1.610000e+00,6.020706e-01)(1.710000e+00,6.028642e-01)(1.810000e+00,6.045525e-01)(1.910000e+00,6.070142e-01)(2.010000e+00,6.101452e-01)(2.110000e+00,6.138560e-01)(2.210000e+00,6.180697e-01)(2.310000e+00,6.227196e-01)(2.410000e+00,6.277262e-01)(2.510000e+00,6.330825e-01)(2.610000e+00,6.387444e-01)(2.710000e+00,6.446303e-01)(2.810000e+00,6.507281e-01)(2.910000e+00,6.570083e-01)(3.010000e+00,6.634446e-01)(3.110000e+00,6.700142e-01)(3.210000e+00,6.766966e-01)(3.310000e+00,6.835289e-01)(3.410000e+00,6.903853e-01)(3.510000e+00,6.973063e-01)(3.610000e+00,7.042794e-01)(3.710000e+00,7.112934e-01)(3.810000e+00,7.183948e-01)(3.910000e+00,7.254619e-01)(4.010000e+00,7.325488e-01)(4.110000e+00,7.396525e-01)(4.210000e+00,7.467656e-01)(4.310000e+00,7.538672e-01)(4.410000e+00,7.609580e-01)(4.510000e+00,7.680391e-01)(4.610000e+00,7.751027e-01)(4.710000e+00,7.821452e-01)(4.810000e+00,7.891633e-01)(4.910000e+00,7.961542e-01)(5.010000e+00,8.031153e-01)(5.110000e+00,8.100443e-01)(5.210000e+00,8.169393e-01)(5.310000e+00,8.237984e-01)(5.410000e+00,8.306201e-01)(5.510000e+00,8.374029e-01)(5.610000e+00,8.441456e-01)(5.710000e+00,8.508472e-01)(5.810000e+00,8.574782e-01)(5.910000e+00,8.640923e-01)(6.010000e+00,8.706626e-01)(6.110000e+00,8.771888e-01)(6.210000e+00,8.836703e-01)(6.310000e+00,8.901067e-01)(6.410000e+00,8.964977e-01)(6.510000e+00,9.028431e-01)(6.610000e+00,9.091425e-01)(6.710000e+00,9.153960e-01)(6.810000e+00,9.216035e-01)(6.910000e+00,9.277648e-01)(7.010000e+00,9.338800e-01)(7.110000e+00,9.399492e-01)(7.210000e+00,9.459725e-01)(7.310000e+00,9.519499e-01)(7.410000e+00,9.578817e-01)(7.510000e+00,9.637679e-01)(7.610000e+00,9.696089e-01)(7.710000e+00,9.754047e-01)(7.810000e+00,9.811558e-01)(7.910000e+00,9.868622e-01)(8.010000e+00,9.925243e-01)(8.110000e+00,9.981424e-01)(8.210000e+00,1.003717e+00)(8.310000e+00,1.009248e+00)(8.410000e+00,1.014736e+00)(8.510000e+00,1.020181e+00)(8.610000e+00,1.025583e+00)(8.710000e+00,1.030944e+00)(8.810000e+00,1.036262e+00)(8.910000e+00,1.041540e+00)(9.010000e+00,1.046776e+00)(9.110000e+00,1.051971e+00)(9.210000e+00,1.057127e+00)(9.310000e+00,1.062242e+00)(9.410000e+00,1.067317e+00)(9.510000e+00,1.072354e+00)(9.610000e+00,1.077351e+00)(9.710000e+00,1.082310e+00)(9.810000e+00,1.087231e+00)(9.910000e+00,1.092114e+00)(1.001000e+01,1.096959e+00)(1.011000e+01,1.101768e+00)(1.021000e+01,1.106540e+00)(1.031000e+01,1.111275e+00)(1.041000e+01,1.115975e+00)(1.051000e+01,1.120639e+00)(1.061000e+01,1.125267e+00)(1.071000e+01,1.129861e+00)(1.081000e+01,1.134421e+00)(1.091000e+01,1.138946e+00)(1.101000e+01,1.143437e+00)(1.111000e+01,1.147894e+00)(1.121000e+01,1.152319e+00)(1.131000e+01,1.156711e+00)(1.141000e+01,1.161070e+00)(1.151000e+01,1.165397e+00)(1.161000e+01,1.169692e+00)(1.171000e+01,1.173956e+00)(1.181000e+01,1.178188e+00)(1.191000e+01,1.182390e+00)(1.201000e+01,1.186561e+00)(1.211000e+01,1.190702e+00)(1.221000e+01,1.194813e+00)(1.231000e+01,1.198894e+00)(1.241000e+01,1.202946e+00)(1.251000e+01,1.206969e+00)(1.261000e+01,1.210963e+00)(1.271000e+01,1.214929e+00)(1.281000e+01,1.218866e+00)(1.291000e+01,1.222776e+00)(1.301000e+01,1.226658e+00)(1.311000e+01,1.230513e+00)(1.321000e+01,1.234341e+00)(1.331000e+01,1.238142e+00)(1.341000e+01,1.241916e+00)(1.351000e+01,1.245665e+00)(1.361000e+01,1.249387e+00)(1.371000e+01,1.253084e+00)(1.381000e+01,1.256755e+00)(1.391000e+01,1.260402e+00)(1.401000e+01,1.264023e+00)(1.411000e+01,1.267619e+00)(1.421000e+01,1.271192e+00)(1.431000e+01,1.274740e+00)(1.441000e+01,1.278264e+00)(1.451000e+01,1.281764e+00)(1.461000e+01,1.285241e+00)(1.471000e+01,1.288694e+00)(1.481000e+01,1.292125e+00)(1.491000e+01,1.295533e+00)(1.501000e+01,1.298918e+00)(1.511000e+01,1.302281e+00)(1.521000e+01,1.305622e+00)(1.531000e+01,1.308941e+00)(1.541000e+01,1.312238e+00)(1.551000e+01,1.315514e+00)(1.561000e+01,1.318768e+00)(1.571000e+01,1.322001e+00)(1.581000e+01,1.325214e+00)(1.591000e+01,1.328406e+00)(1.601000e+01,1.331577e+00)(1.611000e+01,1.334728e+00)(1.621000e+01,1.337859e+00)(1.631000e+01,1.340970e+00)(1.641000e+01,1.344062e+00)(1.651000e+01,1.347134e+00)(1.661000e+01,1.350186e+00)(1.671000e+01,1.353220e+00)(1.681000e+01,1.356234e+00)(1.691000e+01,1.359230e+00)(1.701000e+01,1.362207e+00)(1.711000e+01,1.365165e+00)(1.721000e+01,1.368106e+00)(1.731000e+01,1.371028e+00)(1.741000e+01,1.373932e+00)(1.751000e+01,1.376818e+00)(1.761000e+01,1.379687e+00)(1.771000e+01,1.382539e+00)(1.781000e+01,1.385373e+00)(1.791000e+01,1.388190e+00)(1.801000e+01,1.390990e+00)(1.811000e+01,1.393773e+00)(1.821000e+01,1.396539e+00)(1.831000e+01,1.399289e+00)(1.841000e+01,1.402023e+00)(1.851000e+01,1.404740e+00)(1.861000e+01,1.407441e+00)(1.871000e+01,1.410127e+00)(1.881000e+01,1.412796e+00)(1.891000e+01,1.415450e+00)(1.901000e+01,1.418088e+00)(1.911000e+01,1.420711e+00)(1.921000e+01,1.423319e+00)(1.931000e+01,1.425912e+00)(1.941000e+01,1.428489e+00)(1.951000e+01,1.431052e+00)(1.961000e+01,1.433600e+00)(1.971000e+01,1.436133e+00)(1.981000e+01,1.438652e+00)(1.991000e+01,1.441157e+00)(2.001000e+01,1.443647e+00)(2.011000e+01,1.446123e+00)(2.021000e+01,1.448586e+00)(2.031000e+01,1.451034e+00)(2.041000e+01,1.453469e+00)(2.051000e+01,1.455890e+00)(2.061000e+01,1.458297e+00)(2.071000e+01,1.460691e+00)(2.081000e+01,1.463072e+00)(2.091000e+01,1.465440e+00)(2.101000e+01,1.467794e+00)(2.111000e+01,1.470136e+00)(2.121000e+01,1.472464e+00)(2.131000e+01,1.474780e+00)(2.141000e+01,1.477084e+00)(2.151000e+01,1.479375e+00)(2.161000e+01,1.481653e+00)(2.171000e+01,1.483919e+00)(2.181000e+01,1.486173e+00)(2.191000e+01,1.488414e+00)(2.201000e+01,1.490644e+00)(2.211000e+01,1.492862e+00)(2.221000e+01,1.495068e+00)(2.231000e+01,1.497262e+00)(2.241000e+01,1.499444e+00)(2.251000e+01,1.501615e+00)(2.261000e+01,1.503775e+00)(2.271000e+01,1.505923e+00)(2.281000e+01,1.508060e+00)(2.291000e+01,1.510185e+00)(2.301000e+01,1.512300e+00)(2.311000e+01,1.514403e+00)(2.321000e+01,1.516496e+00)(2.331000e+01,1.518578e+00)(2.341000e+01,1.520649e+00)(2.351000e+01,1.522709e+00)(2.361000e+01,1.524759e+00)(2.371000e+01,1.526798e+00)(2.381000e+01,1.528826e+00)(2.391000e+01,1.530845e+00)(2.401000e+01,1.532853e+00)(2.411000e+01,1.534851e+00)(2.421000e+01,1.536839e+00)(2.431000e+01,1.538817e+00)(2.441000e+01,1.540785e+00)(2.451000e+01,1.542743e+00)(2.461000e+01,1.544691e+00)(2.471000e+01,1.546629e+00)(2.481000e+01,1.548558e+00)(2.491000e+01,1.550477e+00)
};\addlegendentry{$\sigma=1.4$} 

             \addplot[color=blue,mark=diamond,mark size=2.2pt,only marks]  plot coordinates{
(1.000000e-01,9.063455e-01)(2.000000e-01,8.291647e-01)(3.000000e-01,7.650076e-01)(4.000000e-01,7.120849e-01)(5.000000e-01,6.692648e-01)(6.000000e-01,6.316769e-01)(7.000000e-01,6.006729e-01)(8.000000e-01,5.756834e-01)(9.000000e-01,5.537504e-01)(1,5.355507e-01)(1.200000e+00,5.070228e-01)(1.600000e+00,4.712811e-01)(2,4.568498e-01)(2.400000e+00,4.536262e-01)(3,4.589592e-01)(3.500000e+00,4.671010e-01)(4,4.813954e-01)(6,5.327586e-01)(8,5.913972e-01)(10,6.339357e-01)(12,6.771024e-01)(14,7.149580e-01)(16,7.383642e-01)(18,7.745812e-01)(20,7.956241e-01)(22,8.158185e-01)(24,8.338825e-01)
};
   \addplot[color=red,mark=diamond,mark size=2.2pt,only marks]  plot coordinates{
  (2.000000e-01,8.561958e-01)(6.000000e-01,6.957835e-01)(1,6.291499e-01)(1.400000e+00,6.036073e-01)(1.800000e+00,6.032562e-01)(2.400000e+00,6.269303e-01)(3,6.663036e-01)(5,7.995289e-01)(7,9.370498e-01)(9,1.045049e+00)(11,1.140987e+00)(13,1.233288e+00)(15,1.302845e+00)(17,1.367215e+00)(19,1.424551e+00)(21,1.471076e+00)(23,1.520065e+00)(25,1.555158e+00)};

           \end{axis}
  \end{tikzpicture}}
\caption{Prediction risk of  S-SVR vs $\epsilon$ and $C$ when $\delta=2$, $\beta=1$ and $\sigma=1$. The continuous line curves correspond to the theoretical predictions while the points denote finite-sample performance when $p=200$ and and $n= \lfloor{\delta p}\rfloor$.}
\label{SMvsepsandC}
\end{center}
\end{figure}
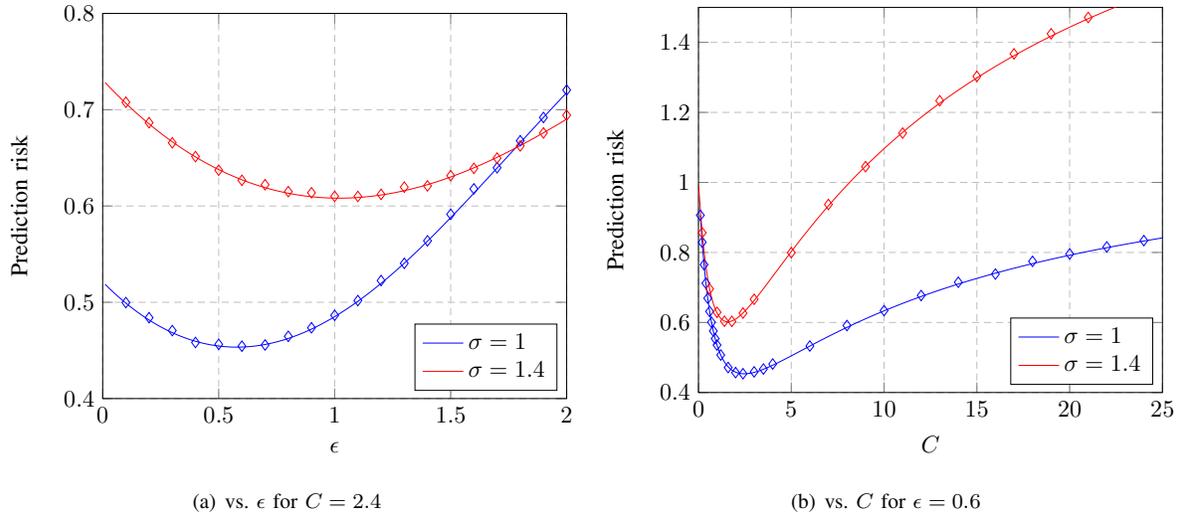

\begin{figure}[]
\centering
\includegraphics[scale=0.8]{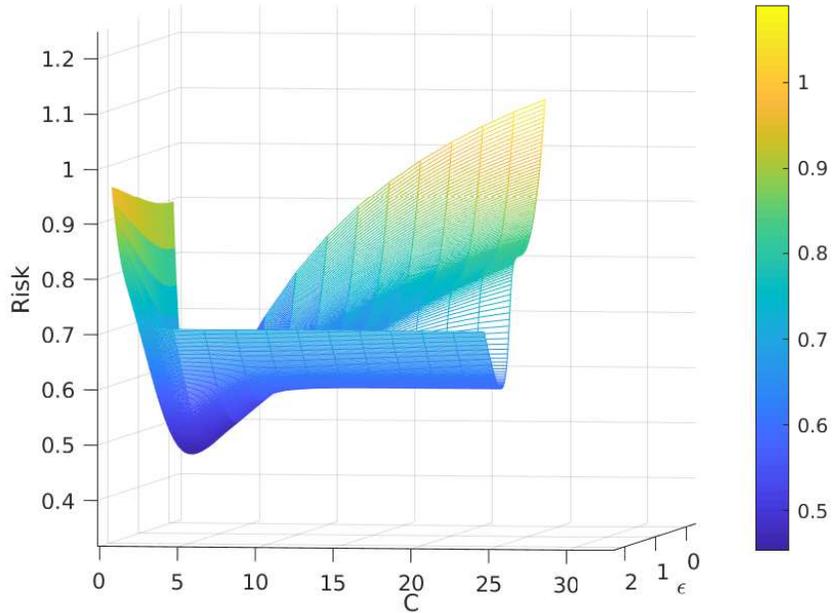}
\caption{Prediction risk vs ($C$,$\epsilon$) for $\delta=2$, $\beta=1$ and $\sigma=1$.}
\label{SMvsepsandC3D}
\end{figure}

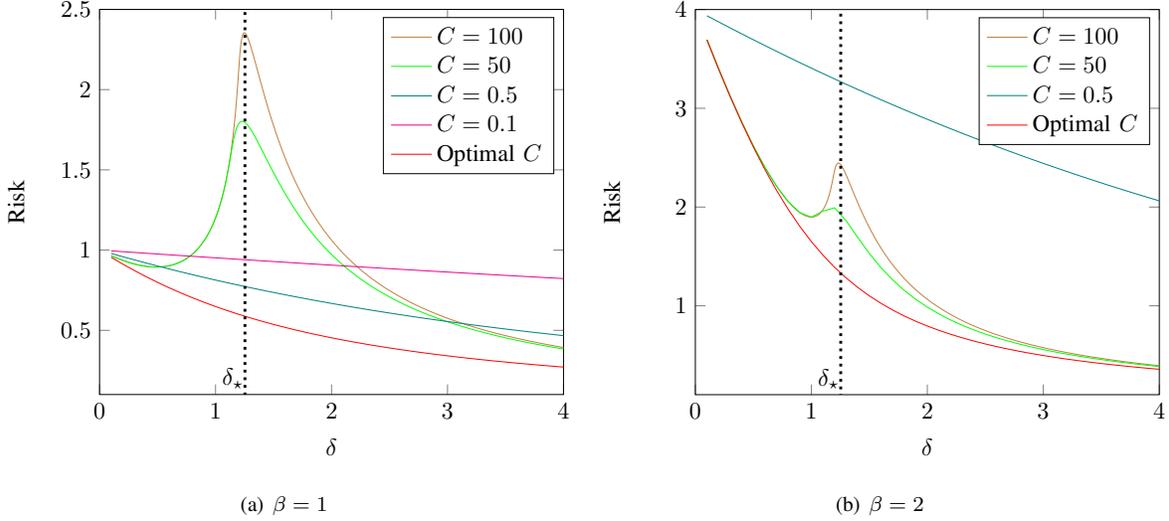
\begin{figure}[]
\begin{center}
\subfigure[$\beta=1$]{
\begin{tikzpicture}[scale=0.9,font=\fontsize{10}{10}\selectfont]
   \tikzstyle{every axis y label}+=[yshift=0pt]
   \tikzstyle{every axis x label}+=[yshift=5pt]
   \tikzstyle{every axis legend}+=[cells={anchor=west},fill=white,
        at={(0.98,0.98)}, anchor=north east, font=\fontsize{10}{10}\selectfont]
   \begin{axis}[
      xmin=0,
      ymin=0.1,
      xmax=4,
      ymax=2.5,
xlabel={$\delta$},
ylabel={Risk }	]		

\addplot[color=brown] coordinates{
(1.000000e-01,9.649447e-01)(1.100000e-01,9.617422e-01)(1.200000e-01,9.586624e-01)(1.300000e-01,9.555751e-01)(1.400000e-01,9.525479e-01)(1.500000e-01,9.495962e-01)(1.600000e-01,9.467075e-01)(1.700000e-01,9.438830e-01)(1.800000e-01,9.411235e-01)(1.900000e-01,9.384303e-01)(2.000000e-01,9.358045e-01)(2.100000e-01,9.332472e-01)(2.200000e-01,9.307598e-01)(2.300000e-01,9.282792e-01)(2.400000e-01,9.259355e-01)(2.500000e-01,9.236656e-01)(2.600000e-01,9.214712e-01)(2.700000e-01,9.193432e-01)(2.800000e-01,9.172883e-01)(2.900000e-01,9.153109e-01)(3.000000e-01,9.134124e-01)(3.100000e-01,9.115943e-01)(3.200000e-01,9.098580e-01)(3.300000e-01,9.082050e-01)(3.400000e-01,9.066371e-01)(3.500000e-01,9.051559e-01)(3.600000e-01,9.037630e-01)(3.700000e-01,9.024603e-01)(3.800000e-01,9.012495e-01)(3.900000e-01,9.001327e-01)(4.000000e-01,8.991116e-01)(4.100000e-01,8.981885e-01)(4.200000e-01,8.973652e-01)(4.300000e-01,8.966440e-01)(4.400000e-01,8.960235e-01)(4.500000e-01,8.955137e-01)(4.600000e-01,8.951132e-01)(4.700000e-01,8.948246e-01)(4.800000e-01,8.946505e-01)(4.900000e-01,8.945936e-01)(5.000000e-01,8.946567e-01)(5.100000e-01,8.948426e-01)(5.200000e-01,8.951541e-01)(5.300000e-01,8.955943e-01)(5.400000e-01,8.961660e-01)(5.500000e-01,8.968720e-01)(5.600000e-01,8.977152e-01)(5.700000e-01,8.987613e-01)(5.800000e-01,8.998878e-01)(5.900000e-01,9.011893e-01)(6.000000e-01,9.026486e-01)(6.100000e-01,9.042696e-01)(6.200000e-01,9.060582e-01)(6.300000e-01,9.080189e-01)(6.400000e-01,9.101571e-01)(6.500000e-01,9.124783e-01)(6.600000e-01,9.149884e-01)(6.700000e-01,9.177579e-01)(6.800000e-01,9.206673e-01)(6.900000e-01,9.237937e-01)(7.000000e-01,9.271505e-01)(7.100000e-01,9.307007e-01)(7.200000e-01,9.345133e-01)(7.300000e-01,9.385714e-01)(7.400000e-01,9.428951e-01)(7.500000e-01,9.474849e-01)(7.600000e-01,9.523420e-01)(7.700000e-01,9.574580e-01)(7.800000e-01,9.628733e-01)(7.900000e-01,9.686774e-01)(8.000000e-01,9.747443e-01)(8.100000e-01,9.811641e-01)(8.200000e-01,9.879469e-01)(8.300000e-01,9.950880e-01)(8.400000e-01,1.002693e+00)(8.500000e-01,1.010635e+00)(8.600000e-01,1.019041e+00)(8.700000e-01,1.027911e+00)(8.800000e-01,1.037254e+00)(8.900000e-01,1.047118e+00)(9.000000e-01,1.057525e+00)(9.100000e-01,1.068511e+00)(9.200000e-01,1.080097e+00)(9.300000e-01,1.092365e+00)(9.400000e-01,1.105253e+00)(9.500000e-01,1.118936e+00)(9.600000e-01,1.133387e+00)(9.700000e-01,1.148719e+00)(9.800000e-01,1.164936e+00)(9.900000e-01,1.182136e+00)(1,1.200394e+00)(1.010000e+00,1.219794e+00)(1.020000e+00,1.240436e+00)(1.030000e+00,1.262429e+00)(1.040000e+00,1.285901e+00)(1.050000e+00,1.310995e+00)(1.060000e+00,1.337872e+00)(1.070000e+00,1.366803e+00)(1.080000e+00,1.397858e+00)(1.090000e+00,1.431381e+00)(1.100000e+00,1.467668e+00)(1.110000e+00,1.507154e+00)(1.120000e+00,1.550221e+00)(1.130000e+00,1.597358e+00)(1.140000e+00,1.649385e+00)(1.150000e+00,1.707045e+00)(1.160000e+00,1.771408e+00)(1.170000e+00,1.843982e+00)(1.180000e+00,1.926581e+00)(1.190000e+00,2.020787e+00)(1.200000e+00,2.124801e+00)(1.210000e+00,2.223324e+00)(1.220000e+00,2.293908e+00)(1.230000e+00,2.333553e+00)(1.240000e+00,2.351110e+00)(1.250000e+00,2.354397e+00)(1.260000e+00,2.348325e+00)(1.270000e+00,2.335917e+00)(1.280000e+00,2.319101e+00)(1.290000e+00,2.299166e+00)(1.300000e+00,2.277001e+00)(1.310000e+00,2.253245e+00)(1.320000e+00,2.228366e+00)(1.330000e+00,2.202715e+00)(1.340000e+00,2.176558e+00)(1.350000e+00,2.150103e+00)(1.360000e+00,2.123511e+00)(1.370000e+00,2.096810e+00)(1.380000e+00,2.070296e+00)(1.390000e+00,2.043949e+00)(1.400000e+00,2.017829e+00)(1.410000e+00,1.991969e+00)(1.420000e+00,1.966425e+00)(1.430000e+00,1.941225e+00)(1.440000e+00,1.916393e+00)(1.450000e+00,1.891947e+00)(1.460000e+00,1.867899e+00)(1.470000e+00,1.844258e+00)(1.480000e+00,1.821030e+00)(1.490000e+00,1.798219e+00)(1.500000e+00,1.775825e+00)(1.510000e+00,1.753847e+00)(1.520000e+00,1.732284e+00)(1.530000e+00,1.711132e+00)(1.540000e+00,1.690387e+00)(1.550000e+00,1.670043e+00)(1.560000e+00,1.650107e+00)(1.570000e+00,1.630549e+00)(1.580000e+00,1.611373e+00)(1.590000e+00,1.592573e+00)(1.600000e+00,1.574224e+00)(1.610000e+00,1.556152e+00)(1.620000e+00,1.538433e+00)(1.630000e+00,1.521058e+00)(1.640000e+00,1.504018e+00)(1.650000e+00,1.487337e+00)(1.660000e+00,1.470979e+00)(1.670000e+00,1.454962e+00)(1.680000e+00,1.439203e+00)(1.690000e+00,1.423746e+00)(1.700000e+00,1.408635e+00)(1.710000e+00,1.393786e+00)(1.720000e+00,1.379217e+00)(1.730000e+00,1.364900e+00)(1.740000e+00,1.350897e+00)(1.750000e+00,1.337136e+00)(1.760000e+00,1.323630e+00)(1.770000e+00,1.310374e+00)(1.780000e+00,1.297361e+00)(1.790000e+00,1.284585e+00)(1.800000e+00,1.272041e+00)(1.810000e+00,1.259723e+00)(1.820000e+00,1.247625e+00)(1.830000e+00,1.235743e+00)(1.840000e+00,1.224071e+00)(1.850000e+00,1.212604e+00)(1.860000e+00,1.201337e+00)(1.870000e+00,1.190265e+00)(1.880000e+00,1.179385e+00)(1.890000e+00,1.168690e+00)(1.900000e+00,1.158177e+00)(1.910000e+00,1.147842e+00)(1.920000e+00,1.137681e+00)(1.930000e+00,1.127689e+00)(1.940000e+00,1.117863e+00)(1.950000e+00,1.108199e+00)(1.960000e+00,1.098692e+00)(1.970000e+00,1.089340e+00)(1.980000e+00,1.080139e+00)(1.990000e+00,1.071085e+00)(2,1.062175e+00)(2.010000e+00,1.053406e+00)(2.020000e+00,1.044775e+00)(2.030000e+00,1.036279e+00)(2.040000e+00,1.027914e+00)(2.050000e+00,1.019678e+00)(2.060000e+00,1.011568e+00)(2.070000e+00,1.003581e+00)(2.080000e+00,9.957144e-01)(2.090000e+00,9.879518e-01)(2.100000e+00,9.803330e-01)(2.110000e+00,9.728279e-01)(2.120000e+00,9.654218e-01)(2.130000e+00,9.581235e-01)(2.140000e+00,9.509308e-01)(2.150000e+00,9.438414e-01)(2.160000e+00,9.368532e-01)(2.170000e+00,9.299323e-01)(2.180000e+00,9.231444e-01)(2.190000e+00,9.164749e-01)(2.200000e+00,9.098708e-01)(2.210000e+00,9.033579e-01)(2.220000e+00,8.969344e-01)(2.230000e+00,8.905983e-01)(2.240000e+00,8.843481e-01)(2.250000e+00,8.782444e-01)(2.260000e+00,8.721604e-01)(2.270000e+00,8.661572e-01)(2.280000e+00,8.602355e-01)(2.290000e+00,8.543996e-01)(2.300000e+00,8.486405e-01)(2.310000e+00,8.429567e-01)(2.320000e+00,8.373470e-01)(2.330000e+00,8.318098e-01)(2.340000e+00,8.263438e-01)(2.350000e+00,8.209477e-01)(2.360000e+00,8.156202e-01)(2.370000e+00,8.103599e-01)(2.380000e+00,8.051658e-01)(2.390000e+00,8.000365e-01)(2.400000e+00,7.949710e-01)(2.410000e+00,7.899680e-01)(2.420000e+00,7.850264e-01)(2.430000e+00,7.801451e-01)(2.440000e+00,7.753231e-01)(2.450000e+00,7.705593e-01)(2.460000e+00,7.658527e-01)(2.470000e+00,7.612023e-01)(2.480000e+00,7.566071e-01)(2.490000e+00,7.520661e-01)(2.500000e+00,7.475785e-01)(2.510000e+00,7.431433e-01)(2.520000e+00,7.387596e-01)(2.530000e+00,7.344265e-01)(2.540000e+00,7.301432e-01)(2.550000e+00,7.259089e-01)(2.560000e+00,7.217227e-01)(2.570000e+00,7.175838e-01)(2.580000e+00,7.134915e-01)(2.590000e+00,7.094450e-01)(2.600000e+00,7.054435e-01)(2.610000e+00,7.014863e-01)(2.620000e+00,6.975727e-01)(2.630000e+00,6.937020e-01)(2.640000e+00,6.898735e-01)(2.650000e+00,6.860866e-01)(2.660000e+00,6.823404e-01)(2.670000e+00,6.785796e-01)(2.680000e+00,6.749135e-01)(2.690000e+00,6.712863e-01)(2.700000e+00,6.676974e-01)(2.710000e+00,6.641464e-01)(2.720000e+00,6.606325e-01)(2.730000e+00,6.571552e-01)(2.740000e+00,6.537140e-01)(2.750000e+00,6.503083e-01)(2.760000e+00,6.469376e-01)(2.770000e+00,6.436013e-01)(2.780000e+00,6.402989e-01)(2.790000e+00,6.370299e-01)(2.800000e+00,6.337939e-01)(2.810000e+00,6.305891e-01)(2.820000e+00,6.274152e-01)(2.830000e+00,6.242727e-01)(2.840000e+00,6.211612e-01)(2.850000e+00,6.180803e-01)(2.860000e+00,6.150294e-01)(2.870000e+00,6.120082e-01)(2.880000e+00,6.090162e-01)(2.890000e+00,6.060531e-01)(2.900000e+00,6.031183e-01)(2.910000e+00,6.002116e-01)(2.920000e+00,5.973324e-01)(2.930000e+00,5.944805e-01)(2.940000e+00,5.916555e-01)(2.950000e+00,5.888569e-01)(2.960000e+00,5.860844e-01)(2.970000e+00,5.833376e-01)(2.980000e+00,5.806162e-01)(2.990000e+00,5.779199e-01)(3,5.752483e-01)(3.010000e+00,5.726011e-01)(3.020000e+00,5.699780e-01)(3.030000e+00,5.673785e-01)(3.040000e+00,5.648025e-01)(3.050000e+00,5.622495e-01)(3.060000e+00,5.597194e-01)(3.070000e+00,5.572117e-01)(3.080000e+00,5.547262e-01)(3.090000e+00,5.522719e-01)(3.100000e+00,5.498321e-01)(3.110000e+00,5.474136e-01)(3.120000e+00,5.450160e-01)(3.130000e+00,5.426393e-01)(3.140000e+00,5.402830e-01)(3.150000e+00,5.379228e-01)(3.160000e+00,5.356187e-01)(3.170000e+00,5.332864e-01)(3.180000e+00,5.310575e-01)(3.190000e+00,5.287513e-01)(3.200000e+00,5.265127e-01)(3.210000e+00,5.242928e-01)(3.220000e+00,5.220914e-01)(3.230000e+00,5.199083e-01)(3.240000e+00,5.177433e-01)(3.250000e+00,5.155961e-01)(3.260000e+00,5.134665e-01)(3.270000e+00,5.113543e-01)(3.280000e+00,5.092593e-01)(3.290000e+00,5.071813e-01)(3.300000e+00,5.051201e-01)(3.310000e+00,5.030753e-01)(3.320000e+00,5.010455e-01)(3.330000e+00,4.990319e-01)(3.340000e+00,4.970343e-01)(3.350000e+00,4.950525e-01)(3.360000e+00,4.930864e-01)(3.370000e+00,4.911357e-01)(3.380000e+00,4.892003e-01)(3.390000e+00,4.872800e-01)(3.400000e+00,4.853746e-01)(3.410000e+00,4.834840e-01)(3.420000e+00,4.816080e-01)(3.430000e+00,4.797464e-01)(3.440000e+00,4.778990e-01)(3.450000e+00,4.760658e-01)(3.460000e+00,4.742464e-01)(3.470000e+00,4.724409e-01)(3.480000e+00,4.706489e-01)(3.490000e+00,4.688705e-01)(3.500000e+00,4.671053e-01)(3.510000e+00,4.653533e-01)(3.520000e+00,4.636143e-01)(3.530000e+00,4.618883e-01)(3.540000e+00,4.601749e-01)(3.550000e+00,4.584742e-01)(3.560000e+00,4.567859e-01)(3.570000e+00,4.551099e-01)(3.580000e+00,4.534461e-01)(3.590000e+00,4.517944e-01)(3.600000e+00,4.501546e-01)(3.610000e+00,4.485266e-01)(3.620000e+00,4.469103e-01)(3.630000e+00,4.453055e-01)(3.640000e+00,4.437122e-01)(3.650000e+00,4.421302e-01)(3.660000e+00,4.405593e-01)(3.670000e+00,4.389996e-01)(3.680000e+00,4.374507e-01)(3.690000e+00,4.359128e-01)(3.700000e+00,4.343855e-01)(3.710000e+00,4.328689e-01)(3.720000e+00,4.313627e-01)(3.730000e+00,4.298670e-01)(3.740000e+00,4.283816e-01)(3.750000e+00,4.269063e-01)(3.760000e+00,4.254411e-01)(3.770000e+00,4.239859e-01)(3.780000e+00,4.225406e-01)(3.790000e+00,4.211051e-01)(3.800000e+00,4.196792e-01)(3.810000e+00,4.182629e-01)(3.820000e+00,4.168561e-01)(3.830000e+00,4.154587e-01)(3.840000e+00,4.140706e-01)(3.850000e+00,4.126917e-01)(3.860000e+00,4.113220e-01)(3.870000e+00,4.099612e-01)(3.880000e+00,4.086094e-01)(3.890000e+00,4.072664e-01)(3.900000e+00,4.059322e-01)(3.910000e+00,4.046067e-01)(3.920000e+00,4.032898e-01)(3.930000e+00,4.019814e-01)(3.940000e+00,4.006814e-01)(3.950000e+00,3.993898e-01)(3.960000e+00,3.981064e-01)(3.970000e+00,3.968312e-01)(3.980000e+00,3.955641e-01)(3.990000e+00,3.943051e-01)(4,3.930541e-01)};\addlegendentry{$C=100$}

\addplot[color=green] coordinates{
(1.000000e-01,9.649447e-01)(1.100000e-01,9.617422e-01)(1.200000e-01,9.586624e-01)(1.300000e-01,9.555751e-01)(1.400000e-01,9.525479e-01)(1.500000e-01,9.495962e-01)(1.600000e-01,9.467075e-01)(1.700000e-01,9.438830e-01)(1.800000e-01,9.411235e-01)(1.900000e-01,9.384303e-01)(2.000000e-01,9.358045e-01)(2.100000e-01,9.332472e-01)(2.200000e-01,9.307598e-01)(2.300000e-01,9.282792e-01)(2.400000e-01,9.259355e-01)(2.500000e-01,9.236656e-01)(2.600000e-01,9.214712e-01)(2.700000e-01,9.193432e-01)(2.800000e-01,9.172883e-01)(2.900000e-01,9.153109e-01)(3.000000e-01,9.134124e-01)(3.100000e-01,9.115943e-01)(3.200000e-01,9.098580e-01)(3.300000e-01,9.082050e-01)(3.400000e-01,9.066371e-01)(3.500000e-01,9.051559e-01)(3.600000e-01,9.037630e-01)(3.700000e-01,9.024603e-01)(3.800000e-01,9.012495e-01)(3.900000e-01,9.001327e-01)(4.000000e-01,8.991116e-01)(4.100000e-01,8.981885e-01)(4.200000e-01,8.973652e-01)(4.300000e-01,8.966440e-01)(4.400000e-01,8.960235e-01)(4.500000e-01,8.955137e-01)(4.600000e-01,8.951132e-01)(4.700000e-01,8.948246e-01)(4.800000e-01,8.946505e-01)(4.900000e-01,8.945936e-01)(5.000000e-01,8.946567e-01)(5.100000e-01,8.948426e-01)(5.200000e-01,8.951541e-01)(5.300000e-01,8.955943e-01)(5.400000e-01,8.961660e-01)(5.500000e-01,8.968720e-01)(5.600000e-01,8.977152e-01)(5.700000e-01,8.987613e-01)(5.800000e-01,8.998878e-01)(5.900000e-01,9.011893e-01)(6.000000e-01,9.026486e-01)(6.100000e-01,9.042696e-01)(6.200000e-01,9.060582e-01)(6.300000e-01,9.080189e-01)(6.400000e-01,9.101571e-01)(6.500000e-01,9.124783e-01)(6.600000e-01,9.149884e-01)(6.700000e-01,9.177579e-01)(6.800000e-01,9.206673e-01)(6.900000e-01,9.237937e-01)(7.000000e-01,9.271597e-01)(7.100000e-01,9.306858e-01)(7.200000e-01,9.345261e-01)(7.300000e-01,9.385755e-01)(7.400000e-01,9.428832e-01)(7.500000e-01,9.474534e-01)(7.600000e-01,9.523067e-01)(7.700000e-01,9.574543e-01)(7.800000e-01,9.629042e-01)(7.900000e-01,9.686666e-01)(8.000000e-01,9.747593e-01)(8.100000e-01,9.811544e-01)(8.200000e-01,9.879329e-01)(8.300000e-01,9.951085e-01)(8.400000e-01,1.002663e+00)(8.500000e-01,1.010633e+00)(8.600000e-01,1.019039e+00)(8.700000e-01,1.027905e+00)(8.800000e-01,1.037256e+00)(8.900000e-01,1.047119e+00)(9.000000e-01,1.057524e+00)(9.100000e-01,1.068516e+00)(9.200000e-01,1.080110e+00)(9.300000e-01,1.092352e+00)(9.400000e-01,1.105252e+00)(9.500000e-01,1.118950e+00)(9.600000e-01,1.133411e+00)(9.700000e-01,1.148722e+00)(9.800000e-01,1.164938e+00)(9.900000e-01,1.182132e+00)(1,1.200380e+00)(1.010000e+00,1.219775e+00)(1.020000e+00,1.240453e+00)(1.030000e+00,1.262417e+00)(1.040000e+00,1.285870e+00)(1.050000e+00,1.311024e+00)(1.060000e+00,1.337885e+00)(1.070000e+00,1.366658e+00)(1.080000e+00,1.397626e+00)(1.090000e+00,1.430913e+00)(1.100000e+00,1.466712e+00)(1.110000e+00,1.504872e+00)(1.120000e+00,1.545229e+00)(1.130000e+00,1.586916e+00)(1.140000e+00,1.628571e+00)(1.150000e+00,1.668160e+00)(1.160000e+00,1.703642e+00)(1.170000e+00,1.733706e+00)(1.180000e+00,1.757702e+00)(1.190000e+00,1.775822e+00)(1.200000e+00,1.788665e+00)(1.210000e+00,1.796957e+00)(1.220000e+00,1.801376e+00)(1.230000e+00,1.802563e+00)(1.240000e+00,1.801042e+00)(1.250000e+00,1.797261e+00)(1.260000e+00,1.791591e+00)(1.270000e+00,1.784342e+00)(1.280000e+00,1.775772e+00)(1.290000e+00,1.766096e+00)(1.300000e+00,1.755497e+00)(1.310000e+00,1.744126e+00)(1.320000e+00,1.732114e+00)(1.330000e+00,1.719572e+00)(1.340000e+00,1.706594e+00)(1.350000e+00,1.693261e+00)(1.360000e+00,1.679643e+00)(1.370000e+00,1.665802e+00)(1.380000e+00,1.651789e+00)(1.390000e+00,1.637649e+00)(1.400000e+00,1.623422e+00)(1.410000e+00,1.609143e+00)(1.420000e+00,1.594861e+00)(1.430000e+00,1.580565e+00)(1.440000e+00,1.566295e+00)(1.450000e+00,1.552072e+00)(1.460000e+00,1.537911e+00)(1.470000e+00,1.523829e+00)(1.480000e+00,1.509838e+00)(1.490000e+00,1.495949e+00)(1.500000e+00,1.482172e+00)(1.510000e+00,1.468515e+00)(1.520000e+00,1.454986e+00)(1.530000e+00,1.441590e+00)(1.540000e+00,1.428253e+00)(1.550000e+00,1.415139e+00)(1.560000e+00,1.402171e+00)(1.570000e+00,1.389352e+00)(1.580000e+00,1.376685e+00)(1.590000e+00,1.364184e+00)(1.600000e+00,1.351807e+00)(1.610000e+00,1.339594e+00)(1.620000e+00,1.327537e+00)(1.630000e+00,1.315635e+00)(1.640000e+00,1.303889e+00)(1.650000e+00,1.292296e+00)(1.660000e+00,1.280858e+00)(1.670000e+00,1.269573e+00)(1.680000e+00,1.258440e+00)(1.690000e+00,1.247458e+00)(1.700000e+00,1.236626e+00)(1.710000e+00,1.225942e+00)(1.720000e+00,1.215404e+00)(1.730000e+00,1.205013e+00)(1.740000e+00,1.194764e+00)(1.750000e+00,1.184658e+00)(1.760000e+00,1.174691e+00)(1.770000e+00,1.164863e+00)(1.780000e+00,1.155172e+00)(1.790000e+00,1.145615e+00)(1.800000e+00,1.136191e+00)(1.810000e+00,1.126897e+00)(1.820000e+00,1.117733e+00)(1.830000e+00,1.108696e+00)(1.840000e+00,1.099784e+00)(1.850000e+00,1.090995e+00)(1.860000e+00,1.082327e+00)(1.870000e+00,1.073778e+00)(1.880000e+00,1.065347e+00)(1.890000e+00,1.057031e+00)(1.900000e+00,1.048829e+00)(1.910000e+00,1.040739e+00)(1.920000e+00,1.032758e+00)(1.930000e+00,1.024886e+00)(1.940000e+00,1.017120e+00)(1.950000e+00,1.009458e+00)(1.960000e+00,1.001899e+00)(1.970000e+00,9.944417e-01)(1.980000e+00,9.870834e-01)(1.990000e+00,9.798228e-01)(2,9.726669e-01)(2.010000e+00,9.655995e-01)(2.020000e+00,9.586245e-01)(2.030000e+00,9.517403e-01)(2.040000e+00,9.449455e-01)(2.050000e+00,9.382384e-01)(2.060000e+00,9.315986e-01)(2.070000e+00,9.250775e-01)(2.080000e+00,9.186384e-01)(2.090000e+00,9.122583e-01)(2.100000e+00,9.059681e-01)(2.110000e+00,8.997571e-01)(2.120000e+00,8.936238e-01)(2.130000e+00,8.875669e-01)(2.140000e+00,8.815852e-01)(2.150000e+00,8.757398e-01)(2.160000e+00,8.699043e-01)(2.170000e+00,8.641403e-01)(2.180000e+00,8.584465e-01)(2.190000e+00,8.528244e-01)(2.200000e+00,8.472767e-01)(2.210000e+00,8.417963e-01)(2.220000e+00,8.363820e-01)(2.230000e+00,8.310328e-01)(2.240000e+00,8.257476e-01)(2.250000e+00,8.205253e-01)(2.260000e+00,8.153650e-01)(2.270000e+00,8.102656e-01)(2.280000e+00,8.052261e-01)(2.290000e+00,8.002455e-01)(2.300000e+00,7.953229e-01)(2.310000e+00,7.904574e-01)(2.320000e+00,7.856480e-01)(2.330000e+00,7.808939e-01)(2.340000e+00,7.761941e-01)(2.350000e+00,7.715478e-01)(2.360000e+00,7.669541e-01)(2.370000e+00,7.624122e-01)(2.380000e+00,7.579213e-01)(2.390000e+00,7.534806e-01)(2.400000e+00,7.490893e-01)(2.410000e+00,7.447466e-01)(2.420000e+00,7.404518e-01)(2.430000e+00,7.362042e-01)(2.440000e+00,7.320029e-01)(2.450000e+00,7.278473e-01)(2.460000e+00,7.237367e-01)(2.470000e+00,7.196704e-01)(2.480000e+00,7.156478e-01)(2.490000e+00,7.116681e-01)(2.500000e+00,7.077307e-01)(2.510000e+00,7.038350e-01)(2.520000e+00,6.999803e-01)(2.530000e+00,6.961661e-01)(2.540000e+00,6.923918e-01)(2.550000e+00,6.886566e-01)(2.560000e+00,6.849602e-01)(2.570000e+00,6.813018e-01)(2.580000e+00,6.776810e-01)(2.590000e+00,6.740971e-01)(2.600000e+00,6.705498e-01)(2.610000e+00,6.670383e-01)(2.620000e+00,6.635622e-01)(2.630000e+00,6.600669e-01)(2.640000e+00,6.566603e-01)(2.650000e+00,6.532875e-01)(2.660000e+00,6.499482e-01)(2.670000e+00,6.466419e-01)(2.680000e+00,6.433680e-01)(2.690000e+00,6.401261e-01)(2.700000e+00,6.369158e-01)(2.710000e+00,6.337367e-01)(2.720000e+00,6.305882e-01)(2.730000e+00,6.274701e-01)(2.740000e+00,6.243817e-01)(2.750000e+00,6.213229e-01)(2.760000e+00,6.182930e-01)(2.770000e+00,6.152918e-01)(2.780000e+00,6.123186e-01)(2.790000e+00,6.093715e-01)(2.800000e+00,6.064518e-01)(2.810000e+00,6.035593e-01)(2.820000e+00,6.006934e-01)(2.830000e+00,5.978540e-01)(2.840000e+00,5.950405e-01)(2.850000e+00,5.922527e-01)(2.860000e+00,5.894903e-01)(2.870000e+00,5.867528e-01)(2.880000e+00,5.840401e-01)(2.890000e+00,5.813516e-01)(2.900000e+00,5.786872e-01)(2.910000e+00,5.760465e-01)(2.920000e+00,5.734293e-01)(2.930000e+00,5.708351e-01)(2.940000e+00,5.682638e-01)(2.950000e+00,5.657150e-01)(2.960000e+00,5.631884e-01)(2.970000e+00,5.606838e-01)(2.980000e+00,5.582008e-01)(2.990000e+00,5.557393e-01)(3,5.532988e-01)(3.010000e+00,5.508793e-01)(3.020000e+00,5.484804e-01)(3.030000e+00,5.461018e-01)(3.040000e+00,5.437433e-01)(3.050000e+00,5.414047e-01)(3.060000e+00,5.390857e-01)(3.070000e+00,5.367704e-01)(3.080000e+00,5.345057e-01)(3.090000e+00,5.322696e-01)(3.100000e+00,5.300285e-01)(3.110000e+00,5.277574e-01)(3.120000e+00,5.255530e-01)(3.130000e+00,5.233666e-01)(3.140000e+00,5.211980e-01)(3.150000e+00,5.190469e-01)(3.160000e+00,5.169132e-01)(3.170000e+00,5.147966e-01)(3.180000e+00,5.126969e-01)(3.190000e+00,5.106140e-01)(3.200000e+00,5.085477e-01)(3.210000e+00,5.064977e-01)(3.220000e+00,5.044639e-01)(3.230000e+00,5.024461e-01)(3.240000e+00,5.004440e-01)(3.250000e+00,4.984563e-01)(3.260000e+00,4.964838e-01)(3.270000e+00,4.945267e-01)(3.280000e+00,4.925847e-01)(3.290000e+00,4.906576e-01)(3.300000e+00,4.887452e-01)(3.310000e+00,4.868475e-01)(3.320000e+00,4.849642e-01)(3.330000e+00,4.830952e-01)(3.340000e+00,4.812403e-01)(3.350000e+00,4.793994e-01)(3.360000e+00,4.775723e-01)(3.370000e+00,4.757588e-01)(3.380000e+00,4.739588e-01)(3.390000e+00,4.721722e-01)(3.400000e+00,4.703988e-01)(3.410000e+00,4.686385e-01)(3.420000e+00,4.668911e-01)(3.430000e+00,4.651565e-01)(3.440000e+00,4.634346e-01)(3.450000e+00,4.617251e-01)(3.460000e+00,4.600281e-01)(3.470000e+00,4.583433e-01)(3.480000e+00,4.566706e-01)(3.490000e+00,4.550099e-01)(3.500000e+00,4.533611e-01)(3.510000e+00,4.517240e-01)(3.520000e+00,4.500985e-01)(3.530000e+00,4.484845e-01)(3.540000e+00,4.468819e-01)(3.550000e+00,4.452906e-01)(3.560000e+00,4.437104e-01)(3.570000e+00,4.421412e-01)(3.580000e+00,4.405830e-01)(3.590000e+00,4.390355e-01)(3.600000e+00,4.374987e-01)(3.610000e+00,4.359725e-01)(3.620000e+00,4.344568e-01)(3.630000e+00,4.329515e-01)(3.640000e+00,4.314564e-01)(3.650000e+00,4.299715e-01)(3.660000e+00,4.284966e-01)(3.670000e+00,4.270317e-01)(3.680000e+00,4.255767e-01)(3.690000e+00,4.241314e-01)(3.700000e+00,4.226958e-01)(3.710000e+00,4.212697e-01)(3.720000e+00,4.198532e-01)(3.730000e+00,4.184460e-01)(3.740000e+00,4.170481e-01)(3.750000e+00,4.156594e-01)(3.760000e+00,4.142798e-01)(3.770000e+00,4.129093e-01)(3.780000e+00,4.115477e-01)(3.790000e+00,4.101949e-01)(3.800000e+00,4.088509e-01)(3.810000e+00,4.075155e-01)(3.820000e+00,4.061888e-01)(3.830000e+00,4.048706e-01)(3.840000e+00,4.035608e-01)(3.850000e+00,4.022594e-01)(3.860000e+00,4.009663e-01)(3.870000e+00,3.996813e-01)(3.880000e+00,3.984045e-01)(3.890000e+00,3.971357e-01)(3.900000e+00,3.958749e-01)(3.910000e+00,3.946220e-01)(3.920000e+00,3.933769e-01)(3.930000e+00,3.921396e-01)(3.940000e+00,3.909099e-01)(3.950000e+00,3.896879e-01)(3.960000e+00,3.884734e-01)(3.970000e+00,3.872664e-01)(3.980000e+00,3.860667e-01)(3.990000e+00,3.848744e-01)(4,3.836894e-01)};\addlegendentry{$C=50$}

\addplot[color=teal] coordinates{
(1.000000e-01,9.791844e-01)(2.000000e-01,9.589190e-01)(3.000000e-01,9.391091e-01)(4.000000e-01,9.197739e-01)(5.000000e-01,9.009041e-01)(6.000000e-01,8.824903e-01)(7.000000e-01,8.645231e-01)(8.000000e-01,8.469930e-01)(9.000000e-01,8.299513e-01)(1,8.132666e-01)(1.100000e+00,7.970094e-01)(1.200000e+00,7.811574e-01)(1.300000e+00,7.656999e-01)(1.400000e+00,7.506217e-01)(1.500000e+00,7.359246e-01)(1.600000e+00,7.215952e-01)(1.700000e+00,7.076248e-01)(1.800000e+00,6.940046e-01)(1.900000e+00,6.807263e-01)(2,6.677815e-01)(2.100000e+00,6.551618e-01)(2.200000e+00,6.428591e-01)(2.300000e+00,6.308653e-01)(2.400000e+00,6.191202e-01)(2.500000e+00,6.077214e-01)(2.600000e+00,5.966083e-01)(2.700000e+00,5.857734e-01)(2.800000e+00,5.752094e-01)(2.900000e+00,5.649016e-01)(3,5.548494e-01)(3.100000e+00,5.450464e-01)(3.200000e+00,5.354857e-01)(3.300000e+00,5.261608e-01)(3.400000e+00,5.170652e-01)(3.500000e+00,5.082073e-01)(3.600000e+00,4.995532e-01)(3.700000e+00,4.911102e-01)(3.800000e+00,4.828723e-01)(3.900000e+00,4.747880e-01)(4,4.669439e-01)};\addlegendentry{$C=0.5$}

\addplot[color=magenta] coordinates{
(1.000000e-01,9.950620e-01)(2.000000e-01,9.901544e-01)(3.000000e-01,9.852772e-01)(4.000000e-01,9.804301e-01)(5.000000e-01,9.755474e-01)(6.000000e-01,9.707608e-01)(7.000000e-01,9.659833e-01)(8.000000e-01,9.612296e-01)(9.000000e-01,9.565020e-01)(1,9.518005e-01)(1.100000e+00,9.471249e-01)(1.200000e+00,9.424750e-01)(1.300000e+00,9.378509e-01)(1.400000e+00,9.332523e-01)(1.500000e+00,9.286791e-01)(1.600000e+00,9.241313e-01)(1.700000e+00,9.196087e-01)(1.800000e+00,9.151113e-01)(1.900000e+00,9.106388e-01)(2,9.061912e-01)(2.100000e+00,9.017684e-01)(2.200000e+00,8.973703e-01)(2.300000e+00,8.929966e-01)(2.400000e+00,8.886474e-01)(2.500000e+00,8.843226e-01)(2.600000e+00,8.800219e-01)(2.700000e+00,8.757453e-01)(2.800000e+00,8.714926e-01)(2.900000e+00,8.672639e-01)(3,8.630588e-01)(3.100000e+00,8.588774e-01)(3.200000e+00,8.547195e-01)(3.300000e+00,8.505850e-01)(3.400000e+00,8.464737e-01)(3.500000e+00,8.423857e-01)(3.600000e+00,8.383206e-01)(3.700000e+00,8.342786e-01)(3.800000e+00,8.303201e-01)(3.900000e+00,8.263233e-01)(4,8.223492e-01)};\addlegendentry{$C=0.1$}

\addplot[color=red] coordinates{
(1.000000e-01,9.552221e-01)(2.000000e-01,9.128242e-01)(3.000000e-01,8.726703e-01)(4.000000e-01,8.347108e-01)(5.000000e-01,7.988974e-01)(6.000000e-01,7.650368e-01)(7.000000e-01,7.331897e-01)(8.000000e-01,7.031717e-01)(9.000000e-01,6.749157e-01)(1,6.483055e-01)(1.100000e+00,6.232504e-01)(1.200000e+00,5.996472e-01)(1.300000e+00,5.773850e-01)(1.400000e+00,5.564375e-01)(1.500000e+00,5.367292e-01)(1.600000e+00,5.181118e-01)(1.700000e+00,5.005419e-01)(1.800000e+00,4.839492e-01)(1.900000e+00,4.682699e-01)(2,4.534324e-01)(2.100000e+00,4.393939e-01)(2.200000e+00,4.260952e-01)(2.300000e+00,4.134867e-01)(2.400000e+00,4.015204e-01)(2.500000e+00,3.901789e-01)(2.600000e+00,3.793744e-01)(2.700000e+00,3.691103e-01)(2.800000e+00,3.593326e-01)(2.900000e+00,3.500243e-01)(3,3.411486e-01)(3.100000e+00,3.326785e-01)(3.200000e+00,3.245891e-01)(3.300000e+00,3.168566e-01)(3.400000e+00,3.094571e-01)(3.500000e+00,3.023736e-01)(3.600000e+00,2.955875e-01)(3.700000e+00,2.890819e-01)(3.800000e+00,2.828405e-01)(3.900000e+00,2.768511e-01)(4,2.710993e-01)};\addlegendentry{Optimal $C$}

  \addplot[color=black,mark =none,dotted,line width=1.2] coordinates{(1.2534,0)(1.2534,1)(1.2534,2.5)};
\node at (axis cs:1.15,0.2)(source1){${\delta_\star}$}; 
           \end{axis}
         \end{tikzpicture}}
         \subfigure[$\beta=2$]{
\begin{tikzpicture}[scale=0.9,font=\fontsize{10}{10}\selectfont]
   \tikzstyle{every axis y label}+=[yshift=0pt]
   \tikzstyle{every axis x label}+=[yshift=5pt]
   \tikzstyle{every axis legend}+=[cells={anchor=west},fill=white,
        at={(0.98,0.98)}, anchor=north east, font=\fontsize{10}{10}\selectfont]
   \begin{axis}[
      xmin=0,
      ymin=0.1,
      xmax=4,
      ymax=4,
xlabel={$\delta$},
ylabel={Risk }	]		

\addplot[color=brown] coordinates{
(1.000000e-01,3.694321e+00)(1.100000e-01,3.664419e+00)(1.200000e-01,3.634644e+00)(1.300000e-01,3.605000e+00)(1.400000e-01,3.575487e+00)(1.500000e-01,3.546108e+00)(1.600000e-01,3.516865e+00)(1.700000e-01,3.487759e+00)(1.800000e-01,3.458793e+00)(1.900000e-01,3.429969e+00)(2.000000e-01,3.401288e+00)(2.100000e-01,3.372753e+00)(2.200000e-01,3.344487e+00)(2.300000e-01,3.316249e+00)(2.400000e-01,3.288162e+00)(2.500000e-01,3.260230e+00)(2.600000e-01,3.232464e+00)(2.700000e-01,3.204868e+00)(2.800000e-01,3.177435e+00)(2.900000e-01,3.150167e+00)(3.000000e-01,3.123069e+00)(3.100000e-01,3.096140e+00)(3.200000e-01,3.069386e+00)(3.300000e-01,3.042807e+00)(3.400000e-01,3.016408e+00)(3.500000e-01,2.990190e+00)(3.600000e-01,2.964157e+00)(3.700000e-01,2.938311e+00)(3.800000e-01,2.912656e+00)(3.900000e-01,2.887194e+00)(4.000000e-01,2.861816e+00)(4.100000e-01,2.836751e+00)(4.200000e-01,2.811890e+00)(4.300000e-01,2.787235e+00)(4.400000e-01,2.762788e+00)(4.500000e-01,2.738538e+00)(4.600000e-01,2.714502e+00)(4.700000e-01,2.690683e+00)(4.800000e-01,2.667086e+00)(4.900000e-01,2.643715e+00)(5.000000e-01,2.620571e+00)(5.100000e-01,2.597660e+00)(5.200000e-01,2.574985e+00)(5.300000e-01,2.552550e+00)(5.400000e-01,2.530360e+00)(5.500000e-01,2.508418e+00)(5.600000e-01,2.486728e+00)(5.700000e-01,2.465295e+00)(5.800000e-01,2.444124e+00)(5.900000e-01,2.423219e+00)(6.000000e-01,2.402585e+00)(6.100000e-01,2.382227e+00)(6.200000e-01,2.362150e+00)(6.300000e-01,2.342351e+00)(6.400000e-01,2.322852e+00)(6.500000e-01,2.303650e+00)(6.600000e-01,2.284752e+00)(6.700000e-01,2.266164e+00)(6.800000e-01,2.247893e+00)(6.900000e-01,2.229944e+00)(7.000000e-01,2.212326e+00)(7.100000e-01,2.195045e+00)(7.200000e-01,2.178109e+00)(7.300000e-01,2.161527e+00)(7.400000e-01,2.145305e+00)(7.500000e-01,2.129454e+00)(7.600000e-01,2.113981e+00)(7.700000e-01,2.098895e+00)(7.800000e-01,2.084206e+00)(7.900000e-01,2.069923e+00)(8.000000e-01,2.056148e+00)(8.100000e-01,2.042710e+00)(8.200000e-01,2.029760e+00)(8.300000e-01,2.017277e+00)(8.400000e-01,2.005274e+00)(8.500000e-01,1.993768e+00)(8.600000e-01,1.982778e+00)(8.700000e-01,1.972320e+00)(8.800000e-01,1.962510e+00)(8.900000e-01,1.953183e+00)(9.000000e-01,1.944477e+00)(9.100000e-01,1.936384e+00)(9.200000e-01,1.929007e+00)(9.300000e-01,1.922283e+00)(9.400000e-01,1.916341e+00)(9.500000e-01,1.911142e+00)(9.600000e-01,1.906722e+00)(9.700000e-01,1.903201e+00)(9.800000e-01,1.900596e+00)(9.900000e-01,1.898929e+00)(1,1.898331e+00)(1.010000e+00,1.898891e+00)(1.020000e+00,1.900563e+00)(1.030000e+00,1.903555e+00)(1.040000e+00,1.908050e+00)(1.050000e+00,1.913945e+00)(1.060000e+00,1.921577e+00)(1.070000e+00,1.931046e+00)(1.080000e+00,1.942541e+00)(1.090000e+00,1.956221e+00)(1.100000e+00,1.972509e+00)(1.110000e+00,1.991644e+00)(1.120000e+00,2.014005e+00)(1.130000e+00,2.040104e+00)(1.140000e+00,2.070486e+00)(1.150000e+00,2.105979e+00)(1.160000e+00,2.147432e+00)(1.170000e+00,2.195752e+00)(1.180000e+00,2.251316e+00)(1.190000e+00,2.311544e+00)(1.200000e+00,2.368547e+00)(1.210000e+00,2.411555e+00)(1.220000e+00,2.436490e+00)(1.230000e+00,2.445887e+00)(1.240000e+00,2.443939e+00)(1.250000e+00,2.433843e+00)(1.260000e+00,2.418053e+00)(1.270000e+00,2.398217e+00)(1.280000e+00,2.375489e+00)(1.290000e+00,2.350795e+00)(1.300000e+00,2.324570e+00)(1.310000e+00,2.297347e+00)(1.320000e+00,2.269454e+00)(1.330000e+00,2.241148e+00)(1.340000e+00,2.212630e+00)(1.350000e+00,2.184057e+00)(1.360000e+00,2.155550e+00)(1.370000e+00,2.127206e+00)(1.380000e+00,2.099100e+00)(1.390000e+00,2.071196e+00)(1.400000e+00,2.043734e+00)(1.410000e+00,2.016652e+00)(1.420000e+00,1.989956e+00)(1.430000e+00,1.963686e+00)(1.440000e+00,1.937856e+00)(1.450000e+00,1.912478e+00)(1.460000e+00,1.887557e+00)(1.470000e+00,1.863098e+00)(1.480000e+00,1.839102e+00)(1.490000e+00,1.815568e+00)(1.500000e+00,1.792493e+00)(1.510000e+00,1.769873e+00)(1.520000e+00,1.747703e+00)(1.530000e+00,1.725977e+00)(1.540000e+00,1.704689e+00)(1.550000e+00,1.683830e+00)(1.560000e+00,1.663394e+00)(1.570000e+00,1.643385e+00)(1.580000e+00,1.623769e+00)(1.590000e+00,1.604550e+00)(1.600000e+00,1.585719e+00)(1.610000e+00,1.567351e+00)(1.620000e+00,1.549271e+00)(1.630000e+00,1.531552e+00)(1.640000e+00,1.514186e+00)(1.650000e+00,1.497169e+00)(1.660000e+00,1.480511e+00)(1.670000e+00,1.464181e+00)(1.680000e+00,1.448141e+00)(1.690000e+00,1.432499e+00)(1.700000e+00,1.417083e+00)(1.710000e+00,1.401993e+00)(1.720000e+00,1.387191e+00)(1.730000e+00,1.372668e+00)(1.740000e+00,1.358431e+00)(1.750000e+00,1.344463e+00)(1.760000e+00,1.330759e+00)(1.770000e+00,1.317312e+00)(1.780000e+00,1.304115e+00)(1.790000e+00,1.291162e+00)(1.800000e+00,1.278448e+00)(1.810000e+00,1.265965e+00)(1.820000e+00,1.253710e+00)(1.830000e+00,1.241675e+00)(1.840000e+00,1.229855e+00)(1.850000e+00,1.218246e+00)(1.860000e+00,1.206842e+00)(1.870000e+00,1.195639e+00)(1.880000e+00,1.184630e+00)(1.890000e+00,1.173812e+00)(1.900000e+00,1.163180e+00)(1.910000e+00,1.152730e+00)(1.920000e+00,1.142457e+00)(1.930000e+00,1.132357e+00)(1.940000e+00,1.122426e+00)(1.950000e+00,1.112661e+00)(1.960000e+00,1.103056e+00)(1.970000e+00,1.093609e+00)(1.980000e+00,1.084316e+00)(1.990000e+00,1.075173e+00)(2,1.066176e+00)(2.010000e+00,1.057324e+00)(2.020000e+00,1.048611e+00)(2.030000e+00,1.040036e+00)(2.040000e+00,1.031594e+00)(2.050000e+00,1.023284e+00)(2.060000e+00,1.015101e+00)(2.070000e+00,1.007044e+00)(2.080000e+00,9.991095e-01)(2.090000e+00,9.912949e-01)(2.100000e+00,9.835975e-01)(2.110000e+00,9.760280e-01)(2.120000e+00,9.685611e-01)(2.130000e+00,9.612037e-01)(2.140000e+00,9.539535e-01)(2.150000e+00,9.468082e-01)(2.160000e+00,9.397656e-01)(2.170000e+00,9.328162e-01)(2.180000e+00,9.259774e-01)(2.190000e+00,9.192332e-01)(2.200000e+00,9.125794e-01)(2.210000e+00,9.060188e-01)(2.220000e+00,8.995488e-01)(2.230000e+00,8.931675e-01)(2.240000e+00,8.868731e-01)(2.250000e+00,8.807265e-01)(2.260000e+00,8.746005e-01)(2.270000e+00,8.685564e-01)(2.280000e+00,8.625924e-01)(2.290000e+00,8.567153e-01)(2.300000e+00,8.509179e-01)(2.310000e+00,8.451968e-01)(2.320000e+00,8.395505e-01)(2.330000e+00,8.339776e-01)(2.340000e+00,8.284768e-01)(2.350000e+00,8.230466e-01)(2.360000e+00,8.176858e-01)(2.370000e+00,8.123931e-01)(2.380000e+00,8.071672e-01)(2.390000e+00,8.020069e-01)(2.400000e+00,7.969110e-01)(2.410000e+00,7.918782e-01)(2.420000e+00,7.869076e-01)(2.430000e+00,7.819980e-01)(2.440000e+00,7.771482e-01)(2.450000e+00,7.723572e-01)(2.460000e+00,7.676240e-01)(2.470000e+00,7.629475e-01)(2.480000e+00,7.583268e-01)(2.490000e+00,7.537609e-01)(2.500000e+00,7.492488e-01)(2.510000e+00,7.447897e-01)(2.520000e+00,7.403825e-01)(2.530000e+00,7.360265e-01)(2.540000e+00,7.317208e-01)(2.550000e+00,7.274644e-01)(2.560000e+00,7.232566e-01)(2.570000e+00,7.190966e-01)(2.580000e+00,7.149836e-01)(2.590000e+00,7.109167e-01)(2.600000e+00,7.068953e-01)(2.610000e+00,7.029186e-01)(2.620000e+00,6.989858e-01)(2.630000e+00,6.950964e-01)(2.640000e+00,6.912494e-01)(2.650000e+00,6.874444e-01)(2.660000e+00,6.836805e-01)(2.670000e+00,6.799022e-01)(2.680000e+00,6.762190e-01)(2.690000e+00,6.725751e-01)(2.700000e+00,6.689698e-01)(2.710000e+00,6.654026e-01)(2.720000e+00,6.618729e-01)(2.730000e+00,6.583801e-01)(2.740000e+00,6.549236e-01)(2.750000e+00,6.515029e-01)(2.760000e+00,6.481174e-01)(2.770000e+00,6.447666e-01)(2.780000e+00,6.414500e-01)(2.790000e+00,6.381671e-01)(2.800000e+00,6.349174e-01)(2.810000e+00,6.317001e-01)(2.820000e+00,6.285129e-01)(2.830000e+00,6.253575e-01)(2.840000e+00,6.222332e-01)(2.850000e+00,6.191397e-01)(2.860000e+00,6.160765e-01)(2.870000e+00,6.130432e-01)(2.880000e+00,6.100393e-01)(2.890000e+00,6.070644e-01)(2.900000e+00,6.041181e-01)(2.910000e+00,6.012000e-01)(2.920000e+00,5.983098e-01)(2.930000e+00,5.954469e-01)(2.940000e+00,5.926110e-01)(2.950000e+00,5.898018e-01)(2.960000e+00,5.870189e-01)(2.970000e+00,5.842618e-01)(2.980000e+00,5.815304e-01)(2.990000e+00,5.788241e-01)(3,5.761427e-01)(3.010000e+00,5.734859e-01)(3.020000e+00,5.708532e-01)(3.030000e+00,5.682444e-01)(3.040000e+00,5.656592e-01)(3.050000e+00,5.630972e-01)(3.060000e+00,5.605581e-01)(3.070000e+00,5.580417e-01)(3.080000e+00,5.555476e-01)(3.090000e+00,5.530840e-01)(3.100000e+00,5.506358e-01)(3.110000e+00,5.482091e-01)(3.120000e+00,5.458034e-01)(3.130000e+00,5.434187e-01)(3.140000e+00,5.410545e-01)(3.150000e+00,5.386987e-01)(3.160000e+00,5.363866e-01)(3.170000e+00,5.340464e-01)(3.180000e+00,5.317986e-01)(3.190000e+00,5.294851e-01)(3.200000e+00,5.272392e-01)(3.210000e+00,5.250122e-01)(3.220000e+00,5.228038e-01)(3.230000e+00,5.206138e-01)(3.240000e+00,5.184420e-01)(3.250000e+00,5.162881e-01)(3.260000e+00,5.141519e-01)(3.270000e+00,5.120331e-01)(3.280000e+00,5.099317e-01)(3.290000e+00,5.078473e-01)(3.300000e+00,5.057799e-01)(3.310000e+00,5.037290e-01)(3.320000e+00,5.016936e-01)(3.330000e+00,4.996739e-01)(3.340000e+00,4.976704e-01)(3.350000e+00,4.956828e-01)(3.360000e+00,4.937109e-01)(3.370000e+00,4.917545e-01)(3.380000e+00,4.898135e-01)(3.390000e+00,4.878877e-01)(3.400000e+00,4.859768e-01)(3.410000e+00,4.840808e-01)(3.420000e+00,4.821994e-01)(3.430000e+00,4.803326e-01)(3.440000e+00,4.784800e-01)(3.450000e+00,4.766416e-01)(3.460000e+00,4.748172e-01)(3.470000e+00,4.730067e-01)(3.480000e+00,4.712098e-01)(3.490000e+00,4.694265e-01)(3.500000e+00,4.676565e-01)(3.510000e+00,4.658998e-01)(3.520000e+00,4.641561e-01)(3.530000e+00,4.624254e-01)(3.540000e+00,4.607075e-01)(3.550000e+00,4.590023e-01)(3.560000e+00,4.573095e-01)(3.570000e+00,4.556292e-01)(3.580000e+00,4.539610e-01)(3.590000e+00,4.523050e-01)(3.600000e+00,4.506610e-01)(3.610000e+00,4.490288e-01)(3.620000e+00,4.474084e-01)(3.630000e+00,4.457995e-01)(3.640000e+00,4.442021e-01)(3.650000e+00,4.426161e-01)(3.660000e+00,4.410413e-01)(3.670000e+00,4.394777e-01)(3.680000e+00,4.379250e-01)(3.690000e+00,4.363832e-01)(3.700000e+00,4.348522e-01)(3.710000e+00,4.333319e-01)(3.720000e+00,4.318221e-01)(3.730000e+00,4.303227e-01)(3.740000e+00,4.288337e-01)(3.750000e+00,4.273549e-01)(3.760000e+00,4.258862e-01)(3.770000e+00,4.244275e-01)(3.780000e+00,4.229788e-01)(3.790000e+00,4.215399e-01)(3.800000e+00,4.201107e-01)(3.810000e+00,4.186911e-01)(3.820000e+00,4.172810e-01)(3.830000e+00,4.158804e-01)(3.840000e+00,4.144891e-01)(3.850000e+00,4.131070e-01)(3.860000e+00,4.117341e-01)(3.870000e+00,4.103703e-01)(3.880000e+00,4.090154e-01)(3.890000e+00,4.076694e-01)(3.900000e+00,4.063322e-01)(3.910000e+00,4.050037e-01)(3.920000e+00,4.036839e-01)(3.930000e+00,4.023726e-01)(3.940000e+00,4.010697e-01)(3.950000e+00,3.997753e-01)(3.960000e+00,3.984891e-01)(3.970000e+00,3.972112e-01)(3.980000e+00,3.959414e-01)(3.990000e+00,3.946796e-01)(4,3.934259e-01)
};\addlegendentry{$C=100$}

\addplot[color=green] coordinates{
(1.000000e-01,3.694321e+00)(2.000000e-01,3.401288e+00)(3.000000e-01,3.123069e+00)(4.000000e-01,2.861816e+00)(5.000000e-01,2.620571e+00)(6.000000e-01,2.402585e+00)(7.000000e-01,2.212326e+00)(8.000000e-01,2.056148e+00)(9.000000e-01,1.944473e+00)(1,1.898258e+00)(1.100000e+00,1.958317e+00)(1.200000e+00,1.992587e+00)(1.300000e+00,1.861339e+00)(1.400000e+00,1.691567e+00)(1.500000e+00,1.529934e+00)(1.600000e+00,1.386983e+00)(1.700000e+00,1.263477e+00)(1.800000e+00,1.157246e+00)(1.900000e+00,1.065710e+00)(2,9.864484e-01)(2.100000e+00,9.174156e-01)(2.200000e+00,8.569059e-01)(2.300000e+00,8.035172e-01)(2.400000e+00,7.561391e-01)(2.500000e+00,7.138523e-01)(2.600000e+00,6.759094e-01)(2.700000e+00,6.416429e-01)(2.800000e+00,6.106526e-01)(2.900000e+00,5.824405e-01)(3,5.566708e-01)(3.100000e+00,5.330243e-01)(3.200000e+00,5.113068e-01)(3.300000e+00,4.912605e-01)(3.400000e+00,4.726988e-01)(3.500000e+00,4.554717e-01)(3.600000e+00,4.394420e-01)(3.700000e+00,4.244905e-01)(3.800000e+00,4.105132e-01)(3.900000e+00,3.974187e-01)(4,3.851267e-01)
};\addlegendentry{$C=50$}


\addplot[color=teal] coordinates{
(1.000000e-01,3.937286e+00)(2.000000e-01,3.875356e+00)(3.000000e-01,3.814081e+00)(4.000000e-01,3.753671e+00)(5.000000e-01,3.693989e+00)(6.000000e-01,3.635063e+00)(7.000000e-01,3.576893e+00)(8.000000e-01,3.519477e+00)(9.000000e-01,3.462813e+00)(1,3.406899e+00)(1.100000e+00,3.351733e+00)(1.200000e+00,3.297312e+00)(1.300000e+00,3.243635e+00)(1.400000e+00,3.190698e+00)(1.500000e+00,3.138500e+00)(1.600000e+00,3.087036e+00)(1.700000e+00,3.036304e+00)(1.800000e+00,2.986300e+00)(1.900000e+00,2.937136e+00)(2,2.888578e+00)(2.100000e+00,2.840738e+00)(2.200000e+00,2.793624e+00)(2.300000e+00,2.747231e+00)(2.400000e+00,2.701546e+00)(2.500000e+00,2.656566e+00)(2.600000e+00,2.612285e+00)(2.700000e+00,2.568699e+00)(2.800000e+00,2.525804e+00)(2.900000e+00,2.483594e+00)(3,2.442065e+00)(3.100000e+00,2.401212e+00)(3.200000e+00,2.361029e+00)(3.300000e+00,2.321510e+00)(3.400000e+00,2.282550e+00)(3.500000e+00,2.244345e+00)(3.600000e+00,2.206788e+00)(3.700000e+00,2.169872e+00)(3.800000e+00,2.133592e+00)(3.900000e+00,2.097926e+00)(4,2.062879e+00)
};\addlegendentry{$C=0.5$}

\addplot[color=red] coordinates{
(1.000000e-01,3.693609e+00)(2.000000e-01,3.400277e+00)(3.000000e-01,3.120410e+00)(4.000000e-01,2.856036e+00)(5.000000e-01,2.608633e+00)(6.000000e-01,2.379387e+00)(7.000000e-01,2.169056e+00)(8.000000e-01,1.977718e+00)(9.000000e-01,1.805229e+00)(1,1.650685e+00)(1.100000e+00,1.512970e+00)(1.200000e+00,1.390639e+00)(1.300000e+00,1.282222e+00)(1.400000e+00,1.186137e+00)(1.500000e+00,1.100905e+00)(1.600000e+00,1.025198e+00)(1.700000e+00,9.577855e-01)(1.800000e+00,8.975892e-01)(1.900000e+00,8.437265e-01)(2,7.952600e-01)(2.100000e+00,7.515694e-01)(2.200000e+00,7.120411e-01)(2.300000e+00,6.761559e-01)(2.400000e+00,6.434711e-01)(2.500000e+00,6.136072e-01)(2.600000e+00,5.861874e-01)(2.700000e+00,5.610312e-01)(2.800000e+00,5.378489e-01)(2.900000e+00,5.164123e-01)(3,4.965451e-01)(3.100000e+00,4.780926e-01)(3.200000e+00,4.609126e-01)(3.300000e+00,4.448833e-01)(3.400000e+00,4.298958e-01)(3.500000e+00,4.158438e-01)(3.600000e+00,4.026967e-01)(3.700000e+00,3.902960e-01)(3.800000e+00,3.786167e-01)(3.900000e+00,3.676173e-01)(4,3.572279e-01)
};\addlegendentry{Optimal $C$}

  \addplot[color=black,mark =none,dotted,line width=1.2] coordinates{(1.2534,0)(1.2534,1)(1.2534,4)};
\node at (axis cs:1.15,0.26)(source1){${\delta_\star}$}; 
           \end{axis}
         \end{tikzpicture}}
\caption{Risk of S-SVR vs. $\delta$ for different values of $C$ when $\epsilon=0.6$ and $\sigma=1$. Illustration of the sample-wise double decent and how optimal regularization mitigates it.}
\label{effect_of_reg}
\end{center}
\end{figure}

\subsubsection{Sample-wise descent phenomenon}
In \figref{effect_of_reg}, we plot the test risk with respect to $\delta$ for the S-SVR for different choices of $C$ and fixed $\epsilon$. As can be seen, for $\delta$ tending to zero, the S-SVR test risk converges to that of the null risk, which was also predicted by our analysis in {\bf Remark 6}. Moreover, depending on the value of $C$, the test risk may manifest a cusp at $\delta\sim \delta_\star$ that becomes more pronounced as $C$ increases. This can be explained by the fact that as $C$ increases, the behavior of S-SVR approaches that of H-SVR, for which the problem becomes unfeasible when $\delta >\delta_\star$.  We also note that the choice of the parameter $C$ plays a fundamental role in the test risk performance. Optimal values of $C$ always guarantee that the test risk performance decreases with more training samples being used, while arbitrary choices can lead to the test risk increasing for more training samples. This emphasizes the importance of the appropriate selection of the parameter $C$ to avoid the double descent phenomenon.

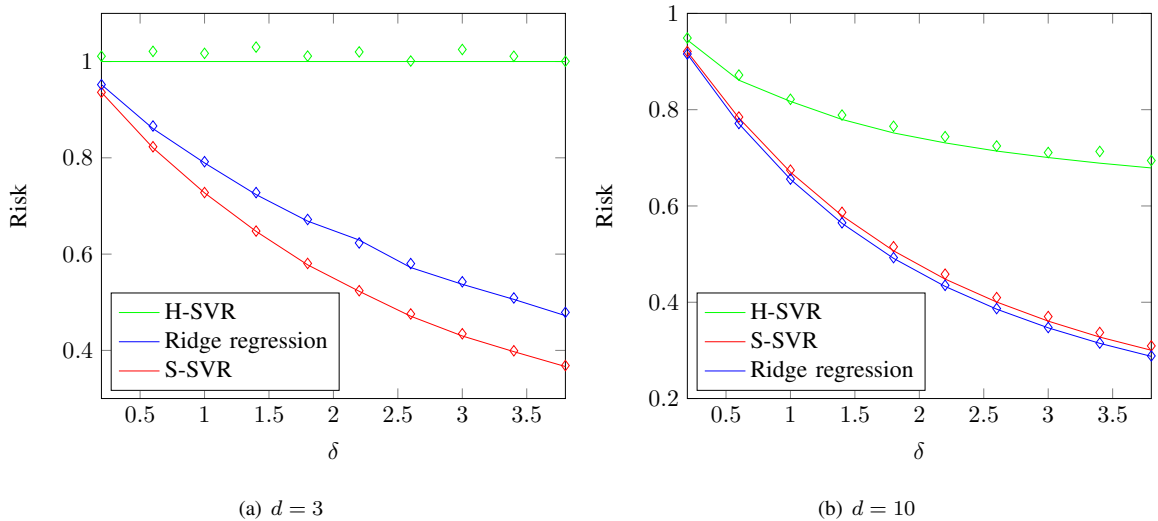
\begin{figure}[]
\begin{center}
         \subfigure[$d=3$]{
\begin{tikzpicture}[scale=0.9,font=\fontsize{10}{10}\selectfont]
   \tikzstyle{every axis y label}+=[yshift=0pt]
   \tikzstyle{every axis x label}+=[yshift=5pt]
   \tikzstyle{every axis legend}+=[cells={anchor=west},fill=white,
        at={(0.02,0.02)}, anchor=south west, font=\fontsize{10}{10}\selectfont]
   \begin{axis}[
      xmin=0.2,
      ymin=0.3,
      xmax=3.8,
      ymax=1.1,
xlabel={$\delta$},
ylabel={Risk }	]		

 \addplot[color=green] coordinates{
(2.000000e-01,1)(6.000000e-011)(1,1)(1.400000e+00,1)(1.800000e+00,1)(2.200000e+00,1)(2.600000e+00,1)(3,1)(3.400000e+00,1)(3.800000e+00,1)};\addlegendentry{H-SVR}   
   \addplot[color=blue] coordinates{
(2.000000e-01,9.520762e-01)(6.000000e-01,8.606062e-01)(1,7.893686e-01)(1.400000e+00,7.236206e-01)(1.800000e+00,6.685102e-01)(2.200000e+00,6.291970e-01)(2.600000e+00,5.720672e-01)(3,5.372788e-01)(3.400000e+00,5.059387e-01)(3.800000e+00,4.722029e-01)};\addlegendentry{Ridge regression}   
\addplot[color=red] coordinates{
(2.000000e-01,9.359955e-01)(6.000000e-01,8.206216e-01)(1,7.271578e-01)(1.400000e+00,6.472629e-01)(1.800000e+00,5.777095e-01)(2.200000e+00,5.227607e-01)(2.600000e+00,4.709949e-01)(3,4.299647e-01)(3.400000e+00,3.973753e-01)(3.800000e+00,3.664450e-01)

};\addlegendentry{S-SVR}

  \addplot[color=red,mark=diamond,mark size=2.2pt,only marks] plot coordinates{
  (2.000000e-01,9.360167e-01)(6.000000e-01,8.229980e-01)(1,7.278447e-01)(1.400000e+00,6.478332e-01)(1.800000e+00,5.806656e-01)(2.200000e+00,5.239564e-01)(2.600000e+00,4.757013e-01)(3,4.344863e-01)(3.400000e+00,3.991929e-01)(3.800000e+00,3.684876e-01)

};

  \addplot[color=blue,mark=diamond,mark size=2.2pt,only marks] plot coordinates{(2.000000e-01,9.519984e-01)(6.000000e-01,8.660899e-01)(1,7.919473e-01)(1.400000e+00,7.277469e-01)(1.800000e+00,6.719188e-01)(2.200000e+00,6.231370e-01)(2.600000e+00,5.802947e-01)(3,5.424738e-01)(3.400000e+00,5.089148e-01)(3.800000e+00,4.789894e-01)
};
 \addplot[color=green,mark=diamond,mark size=2.2pt,only marks] plot coordinates{(2.000000e-01,1
.0110)(6.000000e-01,1.02111)(1,1.017)(1.400000e+00,1.03)(1.800000e+00,1.0113)(2.200000e+00,1.02)(2.600000e+00,1.001)(3,1.025)(3.400000e+00,1.011)(3.800000e+00,1.0005)
};
\end{axis}
\end{tikzpicture}}
         \subfigure[$d=10$]{
\begin{tikzpicture}[scale=0.9,font=\fontsize{10}{10}\selectfont]
   \tikzstyle{every axis y label}+=[yshift=0pt]
   \tikzstyle{every axis x label}+=[yshift=5pt]
   \tikzstyle{every axis legend}+=[cells={anchor=west},fill=white,
        at={(0.02,0.02)}, anchor=south west, font=\fontsize{10}{10}\selectfont]
   \begin{axis}[
      xmin=0.2,
      ymin=0.2,
      xmax=3.8,
      ymax=1,
xlabel={$\delta$},
ylabel={Risk }	]		

  \addplot[color=green] coordinates{
(2.000000e-01,9.447840e-01)(6.000000e-01,8.611840e-01)  (1,8.172160e-01)(1.400000e+00,7.796890e-01)(1.800000e+00,7.516890e-01)(2.200000e+00,7.310250e-01)(2.600000e+00,7.140250e-01)(3,7.005690e-01)(3.400000e+00,6.889000e-01)(3.800000e+00,6.789760e-01)

};\addlegendentry{H-SVR}
\addplot[color=red] coordinates{(2.000000e-01,9.202934e-01)(6.000000e-01,7.826746e-01)(1,6.701915e-01)(1.400000e+00,5.797876e-01)(1.800000e+00,5.068961e-01)(2.200000e+00,4.481193e-01)(2.600000e+00,4.004840e-01)(3,3.608087e-01)(3.400000e+00,3.276503e-01)(3.800000e+00,3.002828e-01)
};\addlegendentry{S-SVR}

   \addplot[color=blue] coordinates{(2.000000e-01,9.153162e-01)(6.000000e-01,7.707434e-01)(1,6.553860e-01)(1.400000e+00,5.639061e-01)(1.800000e+00,4.911791e-01)(2.200000e+00,4.328872e-01)(2.600000e+00,3.856409e-01)(3,3.468725e-01)(3.400000e+00,3.146648e-01)(3.800000e+00,2.875891e-01)

};\addlegendentry{Ridge regression} 
  \addplot[color=green,mark=diamond,mark size=2.2pt,only marks] plot coordinates{
(2.000000e-01,9.488500e-01)(6.000000e-01,8.719813e-01)(1,8.215785e-01)(1.400000e+00,7.887948e-01)(1.800000e+00,7.655552e-01)(2.200000e+00,7.439050e-01)(2.600000e+00,7.246347e-01)(3,7.109726e-01)(3.400000e+00,7.132233e-01)(3.800000e+00,6.943865e-01)

};

  \addplot[color=red,mark=diamond,mark size=2.2pt,only marks] plot coordinates{(2.000000e-01,9.208593e-01)(6.000000e-01,7.847026e-01)(1,6.743737e-01)(1.400000e+00,5.867644e-01)(1.800000e+00,5.152613e-01)(2.200000e+00,4.583352e-01)(2.600000e+00,4.096577e-01)(3,3.702781e-01)(3.400000e+00,3.371762e-01)(3.800000e+00,3.093228e-01)

};
 \addplot[color=blue,mark=diamond,mark size=2.2pt,only marks] plot coordinates{(2.000000e-01,9.156435e-01)(6.000000e-01,7.713544e-01)(1,6.557908e-01)(1.400000e+00,5.650104e-01)(1.800000e+00,4.926866e-01)(2.200000e+00,4.351770e-01)(2.600000e+00,3.868783e-01)(3,3.475646e-01)(3.400000e+00,3.152278e-01)(3.800000e+00,2.885432e-01)

};
\end{axis}
\end{tikzpicture}}
\caption{Risk of S-SVR and ridge regression vs. $\delta$ when optimal regularization is used. Comparison of S-SVR and ridge regression with elliptic noise. The continuous line curves correspond to the asymptotic performance while the points denote finite-sample performance when $p=200$ and $n= \lfloor{\delta p}\rfloor$.}
\label{ellipticNoise}
\end{center}
\end{figure}

\subsection{Comparison with ridge regression estimators under impulsive noises:}
In this experiment, we investigate S-SVR  and H-SVR's resilience when optimally designed (optimal $\epsilon$ for H-SVR and optimal $C$ and $\epsilon$ for S-SVR) to impulsive noises and compare them to the ridge regression with optimal regularization. Particularly, we consider the case in which the noise is sampled from the distributional model:
$$
n=\sqrt{\tau} \mathcal{N}(0,1)
$$
where $\tau$ follows an inverse Gamma distribution  with shape $\frac{d}{2}$ and scale $\frac{2}{d}$, that is 
$$
\tau\sim \frac{d}{\chi_d^2}
$$
with $\chi$ being the chi-square distribution with $d$ degrees of freedom.

\figref{ellipticNoise} represents the test risk performance of the aforementioned estimators for $d=3$ and $d=10$. As can be seen, H-SVR is very sensitive to impulsive noises with a performance approaching that of the null estimator for highly impulsive noises. This behavior can be explained by the fact that in highly impulsive noises (small $d$), H-SVR needs a very large $\epsilon$ to guarantee that outliers satisfy the feasibility conditions. However, with a large $\epsilon$, the constraints in \eqref{eq:hard_margin} becomes irrelevant for the remaining well-behaved observations. This favors the H-SVR to select the null estimator as it would minimize the objective in \eqref{eq:hard_margin} while satisfying the constraints. On the other hand, the S-SVR overcomes such behavior since it does not have to satisfy the constraints of \eqref{eq:hard_margin}. By selecting the parameter $C$ to the value that minimizes the test risk, it will adaptively control the effect of outliers by relaxing the most unlikely constraints in \eqref{eq:hard_margin}. Moreover, it can be seen that, although S-SVR is slightly less efficient than the ridge regression estimator in mild impulsive noises, it presents a much lower risk under highly impulsive noises. 

While the robustness of regularized support vector machine methods for both regression and classification is a well-known fact reported in many previous works \cite{xu2009robustness,hable2011qualitative}. Our result contributes to quantitatively assess such robustness by measuring the test risk under impulsive noises.

\section{Conclusion}
\label{conclusion}
In this paper,  we studied the asymptotic test risk of hard and soft support vector regression techniques with isotropic Gaussian features and under symmetric noise distributions in the regime of high dimensions. We used these results to illustrate the impact of the intervening parameters on the test risk behavior. Particularly, we demonstrate that arbitrary choices of the parameters of the hard SVR and the soft SVR may lead to the test risk presenting a non-monotonic behavior as a function of the number of samples, which illustrates the fact that adding more samples may be harmful to the performance. On the contrary, we show that optimally-tuned hard SVR and soft SVR present a decreasing test risk curve, which shows the importance of carefully selecting their parameters to minimize the test risk and guarantee the positive impact of more data on the test risk performance. Our findings are consistent with similar results obtained for linear regression and neural networks\cite{abs-2003-01897}. However, as compared to linear regression, we demonstrate that soft support vector regression with optimal regularization is more robust to the presence of outliers, corroborating similar previous findings in earlier works in \cite{xu2009robustness,hable2011qualitative}.

Several extensions of our work are worth investigating. One important research direction is to understand the effect of correlated features on the test risk of hard and support regression techniques.   Some recent works have investigated the role of correlation and regularization on the test risk for linear regression models \cite{Lolas2020,Kobak2020TheOR}. A significant advantage of such an analysis is that it can illustrate the importance of investing efforts in theoretically-aided approaches to assist in setting the regularization parameters. Another important research direction is investigating the use of kernel support vector regression methods and understanding their underlying mechanisms to handle involved non-linear data models.

\appendices
\section{CGMT framework}
\label{sec:cgmt_review}
The proofs of our results are based on the extension of the CGMT framework, recently proposed in \cite{kam-chris}, that allows for handling optimization problems in which the  optimization sets may not be compact  or are not necessarily feasible. For the reader convenience, we provide a review of the main results \cite{kam-chris} that will be extensively used in our proofs.

In order to summarize the essential ideas, we consider the following two Gaussian processes:
\begin{align}
X_{{\bf w},{\bf u}}&:={\bf u}^{T}{\bf G}{\bf w}+\psi({\bf w},{\bf u}) \label{eq:Xwu}\\
Y_{{\bf w},{\bf u}}&:= \|{\bf w}\|_2{\bf g}^{T}{\bf u} + \|{\bf u}\|_2{\bf h}^{T}{\bf w} +\psi({\bf w},{\bf u})
\end{align}      
\noindent where ${\bf G}\in\mathbb{R}^{n\times d}$, ${\bf g}\in\mathbb{R}^n$, ${\bf h}\in\mathbb{R}^d$ have standard normal i.i.d. entries  and $\psi:\mathbb{R}^d\times \mathbb{R}^n\to \mathbb{R}$ is a continuous  function. Let $\mathcal{S}_{{\bf w}}\subset\mathbb{R}^d$ and $\mathcal{S}_{\bf u} \subset \mathbb{R}^n$ be two convex sets but not necessarily compact.
We consider the following random min-max optimization problems which refer to as the primary optimization (PO) problem and the auxiliary optimization problem (AO):
\begin{align}
{\rm (PO)} \ \ \  \Phi({\bf G})&=\min_{{\bf w}\in\mathcal{S}_{{\bf w}}} \max_{{\bf u}\in\mathcal{S}_{{\bf u}}} X_{{\bf w},{\bf u}}  \label{eq:PO}\\ 
{\rm (AO) } \ \ \ \phi({\bf g},{\bf h})&=\min_{{\bf w}\in\mathcal{S}_{{\bf w}}}  \max_{{\bf u}\in\mathcal{S}_{{\bf u}}} Y_{{\bf w},{\bf u}}  \label{eq:AO}
\end{align}
The direct analysis of the (PO) is in general  difficult. The power of the CGMT is that it transfers the asymptotic behavior of the PO  to that of the AO. Particularly, if $\mathcal{S}_{\bf w}$ and $\mathcal{S}_{\bf u}$ are  compact sets, the use of Gordon's inequality \cite{Gor88,gordon85} implies that:
\begin{equation}
\mathbb{P}\left(\Phi({\bf G})<c\right)\leq 2\mathbb{P}\left(\phi({\bf g},{\bf h})<c\right)
\label{eq:gordon}
\end{equation}
The only conditions required by the above inequality is that the sets are compact and $\psi$ is continuous. Particularly, Gordon's inequality holds even if the optimization sets are not convex and $\psi$ is not convex-concave. As a by product of \eqref{eq:gordon}, it follows that if $c$ is a high-probability lower bound of the AO than it is also a lower bound of the PO.  
If the set $\mathcal{S}_{\bf w}$ and $\mathcal{S}_{\bf u}$ are additionally convex and $\psi$ is convex-concave then, for any $\nu\in\mathbb{R}$ and $t>0$, it holds
$$
\mathbb{P}\left(|\Phi({\bf G})-\nu|>t\right) \leq 2\mathbb{P}(|\phi({\bf g},{\bf h})-\nu|>t)
$$
In other words, if the AO cost concentrates around $\nu^\star$ then the optimal cost of the PO concentrates also around the same value $\nu_\star$. Moreover, it has been shown in \cite{thrampoulidis-IT} that under strict convexity conditions of the asymptotic AO, the concentration of the optimal solution of the AO translates into the concentration of the optimal solution of the PO around the same value. More precisely, denote by ${\bf w}_{\phi}({\bf g},{\bf h})$ the optimal solution of the AO. If one can prove that the minimizers of the AO satisfy $\|{\bf w}_{\phi}({\bf g},{\bf h})\|\asto \alpha_\star$ where $\alpha_\star$ is the unique solution of a certain limiting  optimization problem whose cost is asymptotically equivalent to the AO. Then, the solution of the PO, denoted by ${\bf w}_{\Phi}({\bf G})$ also satisfies $\|{\bf w}_\Phi({\bf G})\|\asto\alpha_\star$. 

In \cite{kam-chris}, the authors developed a principled machinery that extends the results of the CGMT to problems in which the optimization sets are not compact and may not be necessarily feasible. Before reviewing the results of \cite{kam-chris}, we shall introduce some important notations. For fixed $R$ and $\Gamma$, we consider the following ``$(R,\Gamma)$-bounded'' version of the PO:
$$
\Phi_{R,\Gamma}({\bf G})=\min_{\substack{{\bf w}\in\mathcal{S}_{{\bf w}}\\ \|{\bf w}\|_2\leq R}} \max_{\substack{{\bf u}\in\mathcal{S}_{{\bf u}}\\ \|{\bf u}\|_2\leq \Gamma}} X_{{\bf w},{\bf u}}
$$  
with which we associate the following $(R,\Gamma)$ bounded version of the AO:
\begin{equation}
\phi_{R,\Gamma}({\bf g},{\bf h})=\min_{\substack{{\bf w}\in\mathcal{S}_{{\bf w}}\\ \|{\bf w}\|_2\leq R}} \max_{\substack{{\bf u}\in\mathcal{S}_{{\bf u}}\\ \|{\bf u}\|_2\leq \Gamma}} Y_{{\bf w},{\bf u}}
\label{eq:phi_RGammagh}
\end{equation}
Since $\Gamma\mapsto \max_{\substack{{\bf u}\in\mathcal{S}_{\bf u}\\ \|{\bf u}\|_2\leq \Gamma}} X_{{\bf w},{\bf u}}$ is concave in $\Gamma$, we can establish using Sion's min-max theorem \cite{Sion58} that 
$$
\Phi({\bf G})=\inf_{R\geq 0} \sup_{\Gamma\geq 0}\Phi_{R,\Gamma}
$$
Similarly, we define $\phi_R({\bf g},{\bf h})$ as  follows:
\begin{equation}
\phi_R(g,h):=\sup_{\Gamma\geq 0} \phi_{R,\Gamma}({\bf g},{\bf h}) \label{eq:phi_Rgh}
\end{equation}
However, since $Y_{{\bf w},{\bf u}}$ is not convex-concave in $({\bf w},{\bf u})$, we only have:
$$
\sup_{\Gamma\geq 0}\phi_{R,\Gamma}({\bf g},{\bf h}) \leq \min_{\substack{{\bf w}\in\mathcal{S}_{{\bf w}}\\ \|{\bf w}\|_2\leq R}} \max_{{\bf u}\in\mathcal{S}_{\bf u}} Y_{{\bf w},{\bf u}}
$$
The extension of the CGMT allows us to connect properties of the unbounded PO to those of the sequence of AO problems $\phi_{R}({\bf g},{\bf h})$. Theorem \ref{th:feas} illustrates how the study of the cost of the sequence of AO problems can help determine the feasibility of the original PO problem, whereas Theorem \ref{th:conv} shows how the asymptotic behavior of the optimal cost of the original problem boils down to analyzing that of the sequence of AO problems. 
\begin{theorem}[Feasibility \cite{kam-chris}]
\label{th:feas}
Recall the definitions of $\Phi({\bf G}), \phi_{R,\Gamma}({\bf g},{\bf h})$ and $\phi_R({\bf g},{\bf h})$ in \eqref{eq:AO}, \eqref{eq:phi_RGammagh} and \eqref{eq:phi_Rgh}, respectively. Assume that $({\bf w},{\bf u})\mapsto X_{{\bf w},{\bf u}}$ in \eqref{eq:Xwu} is convex-concave and that the constraint sets $\mathcal{S}_{{\bf w}},\mathcal{S}_{{\bf u}}$ are convex (but not necessarily bounded). The following two statements hold true.
	
	\begin{enumerate}
		\item[(i)] Assume that { for any fixed $R,\Gamma\geq 0$} there exists a  positive constant $C$ (independent of $R$ and $\Gamma$) and a continuous increasing function $f:\mathbb{R}_+\rightarrow\mathbb{R}$ tending to infinity such that, for $n$ sufficiently large (independent of $R$ and $\Gamma$):
			\begin{equation}
				\phi_{R,\Gamma}({\bf g},{\bf h})\geq C\,f(\Gamma).
				\label{eq:cond_inf}
			\end{equation}
	Then, with probability $1$, for $n$ sufficiently large,
	$			\Phi({\bf G})=\infty$. 
\item[(ii)] Assume that there exists $k_0\in\mathbb{N}$ and a positive constant $C$ such that:
	\begin{equation}
		\mathbb{P}\left[ \left\{\phi_{k_0}({\bf g},{\bf h})\geq C\right\}, \ \text{i.o.}\right]=0.
			\label{eq:bnd_cond}
	\end{equation}
			Then, 
$			\mathbb{P}\left[\Phi({\bf G})=\infty, \ \text{i.o}\right]=0.$
	\end{enumerate}
\end{theorem}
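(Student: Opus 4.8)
The plan is to treat statements (i) and (ii) separately, in both cases pushing information from the compactly truncated auxiliary problems $\phi_{R,\Gamma}$ to the primary problem through Gordon's comparison inequality \eqref{eq:gordon}, which applies verbatim once the optimization sets are the compact balls $\{\|{\bf w}\|_2\le R\}$ and $\{\|{\bf u}\|_2\le\Gamma\}$. The truncation is then removed using the two monotone decompositions recalled before the statement: $\Phi({\bf G})=\inf_{R\ge 0}\sup_{\Gamma\ge 0}\Phi_{R,\Gamma}$ (Sion's theorem \cite{Sion58}, valid because $X_{{\bf w},{\bf u}}$ is convex--concave) and $\phi_R({\bf g},{\bf h})=\sup_{\Gamma\ge 0}\phi_{R,\Gamma}({\bf g},{\bf h})$ from \eqref{eq:phi_Rgh}.

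For part (i), I would fix $R\ge 0$ and use that $\Gamma\mapsto\Phi_{R,\Gamma}$ is nondecreasing, so $\sup_{\Gamma}\Phi_{R,\Gamma}=\lim_{\Gamma\to\infty}\Phi_{R,\Gamma}$. Given any target level $M$, the divergence of $f$ furnishes $\Gamma_M$ with $C f(\Gamma_M)>M$, and Gordon's inequality on the compact truncation gives $\mathbb{P}(\Phi_{R,\Gamma_M}<M)\le 2\,\mathbb{P}(\phi_{R,\Gamma_M}<M)\le 2\,\mathbb{P}(\phi_{R,\Gamma_M}<C f(\Gamma_M))$. Hypothesis \eqref{eq:cond_inf} forces the last probability to vanish for all $n$ beyond a threshold that is independent of $R$ and $\Gamma$, so $\Phi_{R,\Gamma_M}\ge M$ almost surely for $n$ large. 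Letting $M\to\infty$ along integers yields $\sup_{\Gamma}\Phi_{R,\Gamma}=\infty$ a.s.; since both the threshold and the bound $Cf(\Gamma)$ are \emph{independent of $R$}, the same holds for every integer $R$. By monotonicity in $R$ the infimum defining $\Phi({\bf G})$ is attained in the limit along integers, hence $\Phi({\bf G})=\inf_{R}\sup_{\Gamma}\Phi_{R,\Gamma}=\infty$ almost surely for $n$ large.

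For part (ii), I would first collapse to a single radius: since $\Phi({\bf G})=\inf_{R}\sup_{\Gamma}\Phi_{R,\Gamma}$, taking $R=k_0$ gives $\Phi({\bf G})\le\sup_{\Gamma\ge 0}\Phi_{k_0,\Gamma}({\bf G})=:\Phi_{k_0}({\bf G})$, so it suffices to prove $\mathbb{P}[\Phi_{k_0}({\bf G})=\infty,\ \text{i.o.}]=0$. On the compact ${\bf w}$-ball of radius $k_0$ the convex--concavity of $X_{{\bf w},{\bf u}}$ upgrades Gordon's comparison to control of the upper tail, $\mathbb{P}(\Phi_{k_0,\Gamma}>C)\le 2\,\mathbb{P}(\phi_{k_0,\Gamma}>C)$ for each fixed $\Gamma$. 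Using $\phi_{k_0,\Gamma}\le\phi_{k_0}$ and the fact that the events $\{\Phi_{k_0,\Gamma}>C\}$ increase to $\{\Phi_{k_0}>C\}$ as $\Gamma\to\infty$, monotone convergence produces the clean per-dimension bound $\mathbb{P}(\Phi_{k_0}>C)\le 2\,\mathbb{P}(\phi_{k_0}\ge C)$. Since $\{\Phi_{k_0}=\infty\}\subseteq\{\Phi_{k_0}>C\}$, combining this with hypothesis \eqref{eq:bnd_cond} through the Borel--Cantelli lemma gives $\mathbb{P}[\Phi_{k_0}=\infty,\ \text{i.o.}]=0$, and therefore $\mathbb{P}[\Phi({\bf G})=\infty,\ \text{i.o.}]=0$.

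The delicate steps are the two removals of the $\Gamma$-truncation, which must be carried out with $\phi_R$ taken exactly as defined in \eqref{eq:phi_Rgh}: because $Y_{{\bf w},{\bf u}}$ is \emph{not} convex--concave, $\sup_{\Gamma}\phi_{k_0,\Gamma}$ is only a lower bound for $\min_{\|{\bf w}\|_2\le k_0}\max_{{\bf u}}Y_{{\bf w},{\bf u}}$, so one cannot replace $\phi_{k_0}$ by the unconstrained inner maximum. I expect the main obstacle to be the probabilistic bookkeeping rather than any single inequality: justifying the monotone interchange of $\Gamma\to\infty$ inside the probabilities, and converting the per-dimension Gordon comparisons into the almost-sure (part (i)) and infinitely-often (part (ii)) conclusions. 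This is precisely where the Gaussian concentration of $\phi_{R,\Gamma}$ enters, ensuring that the transferred failure probabilities are either identically zero beyond a threshold or summable, so that Borel--Cantelli applies and is consistent with the form of \eqref{eq:bnd_cond}.
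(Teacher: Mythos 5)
First, a point of reference: the paper itself does \emph{not} prove Theorem~\ref{th:feas} --- it is quoted from \cite{kam-chris} as part of the review in Appendix~\ref{sec:cgmt_review} --- so your proposal has to be judged on its own merits rather than against a proof in this paper. Your fixed-$n$ skeleton is sound: the monotone removal of the $\Gamma$- and $R$-truncations, the reduction $\Phi({\bf G})\le \sup_{\Gamma\ge 0}\Phi_{k_0,\Gamma}$ in part (ii), and the use of the lower-tail comparison \eqref{eq:gordon} (no convexity needed) in part (i) versus the upper-tail comparison (convexity needed) in part (ii) are all correct.

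The genuine gap is in the across-$n$ bookkeeping, which is precisely the crux of this theorem. Gordon's inequality is a comparison of \emph{marginal} probabilities at a fixed $n$, between two different probability spaces (that of ${\bf G}$ and that of $({\bf g},{\bf h})$). Hypothesis \eqref{eq:cond_inf}, however, is an almost-sure-eventually statement about the sequence $n\mapsto \phi_{R,\Gamma}({\bf g},{\bf h})$: this is exactly how it is verified in this paper's own application, where it is deduced from $\min_{\gamma_1,\gamma_2}D_n(\gamma_1,\gamma_2)\asto \min_{\gamma_1}\overline{D}(\gamma_1,0)>2C$, so the threshold in $n$ depends on the realization of $({\bf g},{\bf h})$. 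Such a statement does \emph{not} imply that $\mathbb{P}\left(\phi_{R,\Gamma_M}<Cf(\Gamma_M)\right)=0$ for each sufficiently large $n$, which is what your part (i) asserts (``the hypothesis forces the last probability to vanish''): a sequence with $\mathbb{P}(\phi^{(n)}<c)=2^{-n}$ is almost surely eventually above $c$, yet every marginal probability is positive. Hence the chain $\mathbb{P}(\Phi_{R,\Gamma_M}<M)\le 2\mathbb{P}(\phi_{R,\Gamma_M}<Cf(\Gamma_M))=0$ breaks at its last link, and with it the whole of part (i). (Your step would be valid only under a literal deterministic reading of \eqref{eq:cond_inf}, which is incompatible with how the condition is checked in practice and with the probabilistic form of the conclusion.)

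The same issue resurfaces in part (ii): to run Borel--Cantelli on the PO side you need $\sum_n \mathbb{P}(\Phi_{k_0}>C)<\infty$, hence $\sum_n\mathbb{P}(\phi_{k_0}\ge C)<\infty$; but \eqref{eq:bnd_cond} only yields $\mathbb{P}(\phi_{k_0}\ge C)\to 0$ (by reverse Fatou applied to $\limsup$ events), not summability. The missing ingredient --- which your final paragraph gestures at but never supplies --- is one of: (a) independence of the Gaussian ensembles across $n$, so that the \emph{second} (converse) Borel--Cantelli lemma upgrades \eqref{eq:bnd_cond} to summability, after which your argument closes; or (b) quantitative Gaussian concentration of $\phi_{R,\Gamma}$ about a deterministic center, together with a deterministic bound on that center, giving summable tails directly. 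A further repair is needed in part (i): a union bound over the countable grid of $(R,M)$ diverges, so one should instead use monotonicity in $R$ and $\Gamma$ and continuity from below of $\mathbb{P}$ to obtain the single uniform inequality $\mathbb{P}(\Phi({\bf G})<\infty)\le 2\sup_{R,\Gamma}\mathbb{P}\left(\phi_{R,\Gamma}<Cf(\Gamma)\right)$ at fixed $n$, and only then perform the across-$n$ argument. As written, your proposal replaces the one genuinely hard step of the theorem by an incorrect claim in (i) and an unproven one in (ii).
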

\begin{theorem}
	Assume the same notation as in Theorem \ref{th:feas}.	Assume that there exists $\overline{\phi}$ and $k_0\in\mathbb{N}$ such that  the following statements hold true:
 \begin{subequations}\label{eq:conditions}
\begin{align}
	&\text{For any}~ \epsilon>0:\quad\mathbb{P}\left[\cup_{k=k_0}^\infty\Big\{ \phi_{k}({\bf g},{\bf h}) \leq \overline{\phi}-\epsilon\Big\}, ~~\text{i.o.}\right]=0,\label{eq:condition1}\\
	&\text{For any}~ \epsilon>0:\quad\mathbb{P}\left[\Big\{\phi_{k_0}({\bf g},{\bf h}) \geq \overline{\phi}+\epsilon\Big\}, ~~\text{i.o.}\right]=0\label{eq:condition2},\\
\end{align}
\end{subequations}
Then,
\begin{equation}
	\Phi({\bf G})\asto  \overline{\phi}.
		\label{eq:almost_sure}
	\end{equation}

 Further, 
	let $\mathcal{S}$ be an  open subset of $\mathcal{S}_{\bf w}$ and $\mathcal{S}^{c}=\mathcal{S}_{\bf w}\backslash \mathcal{S}$. Denote by $\widetilde{\phi}_{R,\Gamma}({\bf g},{\bf h})$ the optimal cost of  \eqref{eq:phi_RGammagh} when the minimization over ${\bf w}$ is now further constrained over ${\bf w}\in\mathcal{S}^c$. Define $\widetilde{\phi}_{R}({\bf g},{\bf h})$ in a similar way to \eqref{eq:phi_Rgh}.
Assume the following statement hold true, 
\begin{equation}
\text{There exists}~ \zeta>0:\quad\mathbb{P}\left[\cup_{k=k_0}^\infty\Big\{ \widetilde\phi_{k}({\bf g},{\bf h}) \leq \overline{\phi}+\zeta\Big\}, ~~\text{i.o.}\right]=0.\label{eq:condition3}
\end{equation}
	Then, letting ${\bf w}_\Phi$ denote a minimizer of the PO in \eqref{eq:PO}, it also holds that
	\begin{equation}
	\mathbb{P}\left[{\bf w}_{\Phi}\in \mathcal{S}, \ \ \text{for sufficiently large} \  n\right]=1.
		\label{eq:property}
	\end{equation}
\label{th:conv}
\end{theorem}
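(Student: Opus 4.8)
The plan is to propagate the almost-sure concentration of the truncated auxiliary costs $\phi_k(\bg,\bh)$ to the primary cost $\Phi(\bG)$, by invoking Gordon's comparison inequality \eqref{eq:gordon} on each $(R,\Gamma)$-bounded pair $(\Phi_{R,\Gamma},\phi_{R,\Gamma})$---whose feasible sets are compact and whose objective $X_{\bw,\bu}$ is convex-concave---and then removing the truncation via Borel--Cantelli. I would first establish the matching upper and lower bounds that yield \eqref{eq:almost_sure}, and then treat the localization \eqref{eq:property} separately. For the upper bound $\limsup_n\Phi(\bG)\le\overline\phi$, fix $\epsilon>0$ and use $\Phi(\bG)=\inf_{R}\sup_{\Gamma}\Phi_{R,\Gamma}\le\sup_\Gamma\Phi_{k_0,\Gamma}=\Phi_{k_0}(\bG)$. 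The two-sided form of \eqref{eq:gordon} controls each $\Phi_{k_0,\Gamma}$ by $\phi_{k_0,\Gamma}$; since $\Gamma\mapsto\phi_{k_0,\Gamma}$ is nondecreasing and $\phi_{k_0}=\sup_\Gamma\phi_{k_0,\Gamma}$ is finite under \eqref{eq:condition2}, the supremum over the unbounded range of $\Gamma$ does not blow up and one obtains a bound of the form $\mathbb{P}(\Phi_{k_0}(\bG)\ge\overline\phi+\epsilon)\le2\,\mathbb{P}(\phi_{k_0}(\bg,\bh)\ge\overline\phi+\epsilon/2)$. Condition \eqref{eq:condition2} together with Borel--Cantelli then gives $\Phi_{k_0}<\overline\phi+\epsilon$, hence $\Phi(\bG)<\overline\phi+\epsilon$, for all large $n$ almost surely.

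For the lower bound $\liminf_n\Phi(\bG)\ge\overline\phi$, fix $\epsilon>0$. The lower-tail inequality $\mathbb{P}(\Phi_{R,\Gamma}<c)\le2\,\mathbb{P}(\phi_{R,\Gamma}<c)$ holds at every fixed truncation level, but here $\Phi(\bG)=\inf_R\Phi_R(\bG)$ with $R\mapsto\Phi_R$ nonincreasing, so a cost below $\overline\phi-\epsilon$ may be produced at an arbitrarily large, random radius $R$. The key is that condition \eqref{eq:condition1} pins down $\phi_k$ \emph{simultaneously for every} $k\ge k_0$, i.e.\ it controls the entire $R\to\infty$ tail of the auxiliary cost rather than a single scale. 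Exploiting the monotonicity $R\mapsto\phi_R$ together with a union-over-$k$ Borel--Cantelli step, I would show that the event $\{\Phi(\bG)<\overline\phi-\epsilon\}$ occurring infinitely often forces $\phi_k<\overline\phi-\epsilon$ for some $k\ge k_0$ infinitely often, which \eqref{eq:condition1} excludes. Feasibility of the primary problem (Theorem~\ref{th:feas}(ii)), guaranteeing $\Phi(\bG)<\infty$ and hence the existence of a minimizer $\bw_\Phi$, is invoked here as well. Together with the upper bound this proves \eqref{eq:almost_sure}.

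For the localization \eqref{eq:property}, let $\widetilde\Phi(\bG)$ denote the primary cost with the inner minimization further restricted to the closed convex set $\mathcal{S}^c$; the same comparison machinery applies verbatim to $\widetilde\Phi$ and the restricted auxiliary costs $\widetilde\phi_R$. Running the lower-bound argument of the previous paragraph on the restricted problem, condition \eqref{eq:condition3} upgrades to $\widetilde\Phi(\bG)\ge\overline\phi+\zeta/2$ for all large $n$ almost surely. Since the unrestricted cost satisfies $\Phi(\bG)\asto\overline\phi<\overline\phi+\zeta/2$, we get $\widetilde\Phi(\bG)>\Phi(\bG)$ eventually, so the minimizer cannot be attained on $\mathcal{S}^c$; as $\mathcal{S}_{\bw}=\mathcal{S}\cup\mathcal{S}^c$, this yields $\bw_\Phi\in\mathcal{S}$ for all large $n$ with probability one.

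The main obstacle is the lower bound: for compact feasible sets Gordon's inequality is immediate, and the whole difficulty---and the novelty of \cite{kam-chris}---is to remove the compactness of $\mathcal{S}_{\bw}$ and $\mathcal{S}_{\bu}$ honestly. The technical heart is the uniform-in-$R$ control, namely showing that a small primary cost necessarily forces a comparably small auxiliary cost at some admissible truncation level $k\ge k_0$; this is exactly what the union form of conditions \eqref{eq:condition1} and \eqref{eq:condition3} is designed to neutralize.
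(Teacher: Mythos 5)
First, a point of reference: this paper contains no proof of Theorem~\ref{th:conv} at all --- it is reproduced verbatim from \cite{kam-chris} in the review appendix --- so your proposal can only be measured against the argument of that reference. Your high-level architecture does match it: upper bound through the convexity-side comparison at the single truncation level $k_0$, lower bound through Gordon's inequality at every level $k\ge k_0$ combined with monotonicity of $R\mapsto \Phi_R$, and localization by playing the strict gap $\zeta$ in \eqref{eq:condition3} against $\Phi({\bf G})\asto\overline{\phi}$ (where, implicitly but correctly, only the convexity-free lower-tail inequality is used, since $\mathcal{S}^c$ is closed but generally non-convex).

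The genuine gap is the step you describe as ``removing the truncation via Borel--Cantelli,'' i.e., every passage from the i.o.\ hypotheses \eqref{eq:condition1}--\eqref{eq:condition3} to an almost-sure eventual statement about the PO. Gordon's inequality \eqref{eq:gordon} and its convex counterpart are purely \emph{distributional}, per-$n$ comparisons: $\Phi({\bf G})$ and $\phi({\bf g},{\bf h})$ are built from different Gaussian objects and admit no pathwise coupling. Your sentence ``the event $\{\Phi({\bf G})<\overline\phi-\epsilon\}$ occurring infinitely often \emph{forces} $\phi_k<\overline\phi-\epsilon$ for some $k\ge k_0$ infinitely often'' is therefore not a legitimate deduction; implications between events living on different probability spaces have no meaning, and only marginal probabilities can be compared. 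What the comparison actually yields is, for each fixed $n$, the bounds $\mathbb{P}\left[\Phi^{(n)}>\overline\phi+\epsilon\right]\le 2\,\mathbb{P}\left[\phi_{k_0}^{(n)}\ge\overline\phi+\epsilon\right]$ and $\mathbb{P}\left[\Phi^{(n)}<\overline\phi-\epsilon\right]\le 2\,\mathbb{P}\left[\cup_{k\ge k_0}\left\{\phi_k^{(n)}\le\overline\phi-\epsilon\right\}\right]$. But a hypothesis of the form $\mathbb{P}\left[A_n,\ \text{i.o.}\right]=0$ only gives $\mathbb{P}\left[A_n\right]\to 0$, not $\sum_n\mathbb{P}\left[A_n\right]<\infty$, so the direct Borel--Cantelli lemma has nothing to feed on: from $\mathbb{P}\left[B_n\right]\le 2\mathbb{P}\left[A_n\right]$ and $\mathbb{P}\left[A_n,\ \text{i.o.}\right]=0$ one cannot conclude $\mathbb{P}\left[B_n,\ \text{i.o.}\right]=0$. (Take nested $A_n$ with $\mathbb{P}\left[A_n\right]=1/n$, so $A_n$ occurs i.o.\ with probability zero, and \emph{independent} $B_n$ with $\mathbb{P}\left[B_n\right]=2/n$; the second Borel--Cantelli lemma gives $\mathbb{P}\left[B_n,\ \text{i.o.}\right]=1$.) As written, your argument proves only convergence in probability of $\Phi({\bf G})$ to $\overline{\phi}$, not \eqref{eq:almost_sure}, and likewise only an in-probability version of \eqref{eq:property}.

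The missing idea is the independence of the Gaussian ensembles across $n$, which is implicit in the framework (each $n$ carries a fresh draw of ${\bf G}$, ${\bf g}$, ${\bf h}$) and is exactly what makes i.o.\ hypotheses usable: by the \emph{converse} Borel--Cantelli lemma, independence across $n$ upgrades each of \eqref{eq:condition1}--\eqref{eq:condition3} to summability of the corresponding marginal probabilities; summable bounds \emph{do} transfer through the per-$n$ comparison inequalities displayed above; and the direct Borel--Cantelli lemma then delivers the almost-sure conclusions for the PO in the upper bound, the lower bound, and the localization step alike. (Without independence the theorem is in fact false for some couplings of the PO sequence, since marginal probabilities decaying like $1/n$ are compatible with the i.o.\ hypotheses yet produce i.o.\ failures under an independent coupling.) Your outline needs this device --- or an equivalent mechanism producing summable tails, such as Gaussian concentration with vanishing Lipschitz constants --- to become a proof; identifying the union-over-$k$ control as ``the technical heart'' misplaces the difficulty, which lies in the i.o.-to-almost-sure transfer itself.
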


\section{Proof of Theorem 1 and Theorem 2}
In this section, we analyze the statistical behavior of hard-SVR. 
\subsection{Preliminaries}
\noindent{\bf Identification of the PO}
The Lagrangian function associated with H-SVR is given by:
$$
\mathcal{L}({\bf w},\boldsymbol{\lambda},\boldsymbol{\alpha}):=\frac{1}{2}\|{\bf w}\|^2+\sum_{i=1}^n \lambda_i(\boldsymbol{\beta}_\star^{T}{\bf x}_i+\sigma n_i-{\bf w}^{T}{\bf x}_i-\epsilon)+\sum_{i=1}^n \alpha_i({\bf w}^{T}{\bf x}_i+\boldsymbol{\beta}_\star^{T}{\bf x}_i-\sigma n_i-\epsilon)
$$
where $\boldsymbol{\lambda}=[\lambda_1,\cdots,\lambda_n]$ and $\boldsymbol{\alpha}=[\alpha_1,\cdots,\alpha_n]$ are the lagrange multipliers.  
From the first order optimality conditions, we have:
\begin{align}
&\frac{\partial \mathcal{L}}{\partial {\bf w}} = 0 \  \ \Longrightarrow {\bf w}=\sum_{i=1}^n (\lambda_i-\alpha_i){\bf x}_i \label{eq:lagrange}
\end{align}
Leveraging the above relations, the optimization problem becomes equivalent to solving:
$$
\max_{\substack{\boldsymbol{\lambda}\geq 0,\boldsymbol{\alpha}\geq 0 }} -\frac{1}{2}(\boldsymbol{\lambda}-\boldsymbol{\alpha})^{T}{\bf X}^{T}{\bf X}(\boldsymbol{\lambda}-\boldsymbol{\alpha}) + \sigma {\bf n}^{T}(\boldsymbol{\lambda}-\boldsymbol{\alpha}) + \boldsymbol{\beta}_\star^{T}{\bf X}(\boldsymbol{\lambda}-\boldsymbol{\alpha}) -\epsilon {\bf 1}^{T}(\boldsymbol{\lambda}+\boldsymbol{\alpha})
$$
where ${\bf X}=\left[{\bf x}_1,\cdots,{\bf x}_n\right]$.
Performing the change of variables ${\bf u}=\sqrt{p}(\boldsymbol{\lambda}-\boldsymbol{\alpha})$ and ${\bf v}=\sqrt{p}(\boldsymbol{\lambda}+\boldsymbol{\alpha})$, the above problem simplifies to:
\begin{equation}
\max_{\substack{{\bf u},{\bf v}\\ |{\bf u}|\leq {\bf v}\\ {\bf v}\geq 0}} -\frac{1}{2p} {\bf u}^{T}{\bf X}^{T}{\bf X}{\bf u} + \frac{1}{\sqrt{p}} \sigma {\bf n}^{T}{\bf u} + \frac{1}{\sqrt{p}} \boldsymbol{\beta}_\star^{T}{\bf X}{\bf u}  - \frac{1}{\sqrt{p}} \epsilon {\bf 1}^{T}{\bf v}
\label{eq:sim}
\end{equation}
From \eqref{eq:lagrange},  the solution of the H-SVR is related to an optimal solution $\hat{\bf u}$ of \eqref{eq:sim} as:
\begin{align}
\hat{\bf w}_H=\frac{1}{\sqrt{p}}{\bf X}\hat{\bf u}
\label{eq:wu}
\end{align}
Obviously, for a given ${\bf u}$ the optimum over ${\bf v}$ of the objective cost in \eqref{eq:sim} is given by  ${\bf v}=|{\bf u}|$. Replacing ${\bf v}$ by its optimal value, we obtain
\begin{equation}
\max_{\substack{{\bf u}}} -\frac{1}{2p} {\bf u}^{T}{\bf X}^{T}{\bf X}{\bf u} + \frac{1}{\sqrt{p}} \sigma {\bf n}^{T}{\bf u} + \frac{1}{\sqrt{p}} \boldsymbol{\beta}_\star^{T}{\bf X}{\bf u}  - \frac{1}{\sqrt{p}} \epsilon {\bf 1}^{T}|{\bf u}|\label{eq:u}
\end{equation}
To write the above problem in the form required by the CGMT, we shall use the following relation  \begin{equation}-\frac{1}{2p} {\bf u}^{T}{\bf X}^{T}{\bf X}{\bf u} +\frac{1}{\sqrt{p}}\boldsymbol{\beta}_\star^{T}{\bf Xu}=\min_{\tilde{\bf w}} \frac{1}{\sqrt{p}} \tilde{\bf w}^{T}{\bf X}{\bf u}+\frac{1}{2}\|\tilde{\bf w}\|^2-\boldsymbol{\beta}_\star^{T}\tilde{\bf w}+\frac{1}{2}\|\boldsymbol{\beta}_\star\|^2, \label{eq:rel}
\end{equation}
where in the right-hand side term, the optimal $\hat{{\tilde{\bf w}}}_H$ can be easily verified to be unique and given by:
\begin{equation}
\hat{\tilde{{\bf w}}}_H=-\frac{1}{\sqrt{p}}{\bf X}\hat{{\bf u}}+{\boldsymbol{\beta}}_\star
\label{eq:tilde_w}
\end{equation}
Plugging \eqref{eq:rel} into \eqref{eq:sim}, we identify the following problem which takes the form of a primary optimization problem as required by the CGTM framework:
\begin{align}
\Phi^{(n)}
&= \max_{\substack{\bu}}\min_{\tilde{\bw}} \frac{1}{\sqrt{p}}\bu^T\bX\tilde{\bw}+\frac{1}{2}\|\tilde{\bw}\|^2-\boldsymbol{\beta}_\star^{T}\tilde{\bf w} +\sigma \frac{1}{\sqrt{p}}\bn^T\bu-\frac{1}{\sqrt{p}}\epsilon \bone^T|\bu|+\frac{1}{2}\|\boldsymbol{\beta}_\star\|^2 \label{eq:primary_bis}
\end{align}
From \eqref{eq:wu} and \eqref{eq:tilde_w} and due to the uniqueness of $\hat{{\tilde{{\bf w}}}}_H$ in \eqref{eq:rel},  the  H-SVR solution  satisfies:
\begin{equation}
\hat{{\tilde{{\bf w}}}}_H=-\hat{\bf w}_H+\boldsymbol{\beta}_\star \label{eq:relation}
\end{equation}
Let ${\bf P}_\perp={\bf I}_p-\frac{\boldsymbol{\beta}_\star\boldsymbol{\beta}_\star^{T}}{\boldsymbol{\beta}_\star^{T}\boldsymbol{\beta}_\star}$. Then, we can decompose ${\bf u}^{T}{\bf X}^{T}\tilde{\bf w}$ as:
$$
{\bf u}^{T}{\bf X}^{T}\tilde{\bf w}={\bf u}^{T}{\bf X}^{T}{\bf P}_{\perp}\tilde{\bf w}+\frac{1}{\|\boldsymbol{\beta}_\star\|^2}{\bf u}^{T}{\bf X}^{T}\boldsymbol{\beta}_\star\boldsymbol{\beta}_\star^{T}\tilde{\bf w}
$$
Plugging this decomposition into \eqref{eq:primary_bis}, we obtain:
$$
\Phi^{(n)}=\max_{\substack{{\bf u} }} \min_{\tilde{\bf w}} \ \ \frac{1}{\sqrt{p}} {\bf u}^{T}{\bf X}^{T}{\bf P}_{\perp}\tilde{\bf w} +\frac{1}{\sqrt{p}} \frac{1}{\|\boldsymbol{\beta}_\star\|^2}{\bf u}^{T}{\bf X}^{T}\boldsymbol{\beta}_\star\boldsymbol{\beta}_\star^{T}\tilde{\bf w}+\frac{1}{2}\|\boldsymbol{\beta}_\star-{\bf w}\|^2+ \frac{1}{\sqrt{p}}\sigma \bn^T\bu-\frac{1}{\sqrt{p}}\epsilon \bone^T|\bu|
$$ 
Let ${\bf z}=\frac{1}{\|\boldsymbol{\beta}_\star\|}{\bf X}^{T}\boldsymbol{\beta}_\star$. Clearly, ${\bf z}$ is a standard Gaussian random vector and is independent of ${\bf X}^{T}{\bf P}_{\perp}$. Replacing $\frac{1}{\|\boldsymbol{\beta}_\star\|}{\bf X}^{T}\boldsymbol{\beta}_\star$ by ${\bf z}$,  $\Phi^{(n)}$ writes as:
$$
\Phi^{(n)}= \min_{{\bf v}} \max_{\substack{{\bf u}}}\ \ \frac{1}{\sqrt{p}} {\bf u}^{T}{\bf X}^{T}{\bf P}_{\perp}\tilde{\bf w} +\frac{1}{\sqrt{p}} \frac{1}{\|\boldsymbol{\beta}_\star\|}{\bf u}^{T}{\bf z} \ \boldsymbol{\beta}_\star^{T}\tilde{\bf w}+\frac{1}{2}\|\boldsymbol{\beta}_\star-\tilde{\bf w}\|^2+ \frac{1}{\sqrt{p}}\sigma \bn^T\bu-\frac{1}{\sqrt{p}}\epsilon \bone^T|\bu|
$$ 

\noindent{\bf Identification of the AO sequences}
Following the notations of section \ref{sec:cgmt_review} in the Appendix, we associate with the PO a sequence of auxiliary problems indexed by $r$ and $\theta$, where each one is associated with a ``double bounded'' version of the PO. Particularly, for fixed $r$ and $\theta$, we define every element of the AO sequence as: 
$$
\phi_{r,\theta}^{(n)}:=  \min_{\substack{\tilde{\bf w}\\ \|\tilde{\bf w}\|\leq r}} \max_{\substack{{\bf u}\\  \|{\bf u}\|\leq \theta}}\ \ \frac{1}{\sqrt{p}}\|{\bf P}_{\perp}\tilde{\bf w}\|_2{\bf g}^{T}{\bf u} + \frac{1}{\sqrt{p}} \|{\bf u}\| {\bf h}^{T}{\bf P}_{\perp}\tilde{\bf w} +\frac{1}{\sqrt{p}} \frac{1}{\|\boldsymbol{\beta}_\star\|}{\bf u}^{T}{\bf z} \ \boldsymbol{\beta}_\star^{T}\tilde{\bf w}+\frac{1}{2}\|\boldsymbol{\beta}_\star-\tilde{\bf w}\|^2+ \frac{1}{\sqrt{p}}\sigma \bn^T\bu-\frac{1}{\sqrt{p}}\epsilon \bone^T|\bu|
$$
\noindent{\bf Simplification of the AO problems}
Having identified the AO sequence, we proceed now with some elementary manipulations to reduce each AO problem  to a scalar  optimization problem. For that and considering the fact that $\tilde{\bf w}$ intervenes in the objective through its norm, its projection along ${\boldsymbol{\beta}}_\star$ and the space orthgonal to it, it is sensible to decompose it as:
$$
\tilde{\bf w}=\gamma_1\frac{{\bf P}_{\perp}\tilde{\bf w}}{\|{\bf P}_\perp\tilde{\bf w}\|}+ \gamma_2\frac{\boldsymbol{\beta}_\star}{\|\boldsymbol{\beta}_\star\|}
$$
Using this decomposition, the AO problem simplifies to:
\begin{align}
\phi_{r,\theta}^{(n)}&=\min_{\gamma_1^2+\gamma_2^2\leq r^2}  \max_{0\leq m \leq \theta} \max_{\substack{\|{\bf u}\|_2=m}} \frac{1}{2}\gamma_1^2+\frac{1}{2}\gamma_2^2+\gamma_1\frac{1}{\sqrt{p}} {\bf g}^{T}{\bf u} -m\gamma_1\frac{1}{\sqrt{p}}\|{\bf P}_\perp {\bf h}\|+\gamma_2\frac{1}{\sqrt{p}}{\bf u}^{T}{\bf z}+\frac{1}{2}\|\boldsymbol{\beta}_\star\|^2-\gamma_1\|\boldsymbol{\beta}_\star\|\\
&+\frac{1}{\sqrt{p}}\sigma \bn^T\bu-\frac{1}{\sqrt{p}}\epsilon \bone^T|\bu|
\end{align}
Hence,
\begin{align}
\phi_{r,\theta}^{(n)}=\min_{\gamma_1^2+\gamma_2^2\leq r^2}  \max_{0\leq m \leq \theta} \max_{\substack{\|{\bf u}\|_2=m}} &\frac{1}{2}\gamma_1^2+\frac{1}{2}\gamma_2^2+\frac{1}{\sqrt{p}}{\bf u}^{T}\left(\gamma_1{\bf g}+\gamma_2{\bf z}+\sigma {\bf n}\right) -\frac{\epsilon}{\sqrt{p}} \|{\bf u} \|_1 -m\gamma_1\frac{1}{\sqrt{p}}\|{\bf P}_{\perp}{\bf h}\|_2\nonumber\\
&+\frac{1}{2}\|\boldsymbol{\beta}_\star\|_2^2-\gamma_2\|\boldsymbol{\beta}_\star\|
\end{align}
Using Lemma \ref{lem:max}, we obtain:
\begin{equation}
\begin{aligned}
\phi_{r,\theta}^{(n)}&=\min_{\gamma_1^2+\gamma_2^2\leq r^2}  \max_{0\leq m \leq \theta}  \frac{1}{2}\gamma_1^2+\frac{1}{2}\gamma_2^2+m\left(\sqrt{\frac{1}{p}\sum_{i=1}^n\Big(|\gamma_1g_i+\gamma_2z_i+\sigma n_i|-\epsilon\Big)_{+}^2}-\gamma_1\frac{1}{\sqrt{p}}\|{\bf P}_{\perp}{\bf h}\|_2\right)\\
&+\frac{1}{2}\|\boldsymbol{\beta}_\star\|_2^2-\gamma_2\|\boldsymbol{\beta}_\star\|_2\\
&=\min_{\gamma_1^2+\gamma_2^2\leq r^2}    \frac{1}{2}\gamma_1^2+\frac{1}{2}\gamma_2^2+\theta\left(\left(\sqrt{\frac{1}{p}\sum_{i=1}^n\Big(|\gamma_1g_i+\gamma_2z_i+\sigma n_i|-\epsilon\Big)_{+}^2}-\gamma_1\frac{1}{\sqrt{p}}\|{\bf P}_{\perp}{\bf h}\|_2\right)\right)_{+}\\
&+\frac{1}{2}\|\boldsymbol{\beta}_\star\|_2^2-\gamma_2\|\boldsymbol{\beta}_\star\|_2
\label{eq:31}
\end{aligned}
\end{equation}
At this point, it is worth pointing out that the new formulation of the AO sequence problems is reduced to  optimization problems that involves only few scalar variables. It relies on a deterministic analysis that does not involve any asymptotic approximation. 

\noindent{\bf Asymptotic behavior of the AOs}
Let us consider the following sequence of functions:
$$
D_n(\gamma_1,\gamma_2):= \left\|\Big(\left|\frac{\gamma_1}{\sqrt{p}}{\bf g} + \frac{\gamma_2}{\sqrt{p}}{\bf z}+\sigma {\bf n}\right|-\epsilon\Big)_{+}\right\|_2 -\frac{\gamma_1}{\sqrt{p}} \|{\bf P}_{\perp}{\bf h}\|_2
$$
defined for $\gamma_1$ and $\gamma_2$ such that $\gamma_1^2+\gamma_2^2\leq r^2$. It is easy to note that $(\gamma_1,\gamma_2)\mapsto D_n(\gamma_1,\gamma_2)$ is jointly convex in its arguments $(\gamma_1,\gamma_2)$ and by \cite[Lemma 10]{thrampoulidis-IT} converges pointwise to
$$
(\gamma_1,\gamma_2)\mapsto  \overline{D}(\gamma_1,\gamma_2)
$$
with
$$
\overline{D}(\gamma_1,\gamma_2):=\sqrt{\delta}\sqrt{\mathbb{E}\Big(|\sqrt{\gamma_1^2+\gamma_2^2}{G}+\sigma N|-\epsilon\Big)_{+}^2}-\gamma_1
$$
where ${G}\sim\mathcal{N}(0,1)$. 
Since convergence of convex functions is uniform over compacts, then letting $\mathcal{C}$ be an arbitrary compact in $\mathbb{R}^2$, and any $\eta>0$, for $n$ sufficiently large, it holds that:
\begin{equation}
\overline{D}(\gamma_1,\gamma_2)-\eta\leq D_n(\gamma_1,\gamma_2) \leq \overline{D}(\gamma_1,\gamma_2)+\eta,  \ \ \forall \  (\gamma_1,\gamma_2)\in\mathcal{C}
\label{eq:uniform}
\end{equation}
\subsection{Proof of Theorem 1}
Letting $\delta_\star=\frac{1}{\displaystyle\inf_{t\in\mathbb{R}}\mathbb{E}\left(|{G}+t\sigma N|-t\epsilon\right)_{+}^2}$, we will prove the following two statements:
\begin{align}
\delta > \delta_\star &\Rightarrow \mathbb{P}\left[\text{The hard-margin SVR is feasible for sufficiently large} \ \ n\right] = 0 \label{eq:first}\\ 
\delta<\delta_\star &\Rightarrow  \mathbb{P}\left[\text{The hard-margin SVR is feasible for sufficiently large} \ \ n\right] = 1. \label{eq:second}
\end{align}
In the sequel, we will sequentially establish \eqref{eq:first} and \eqref{eq:second}.
 
\noindent{{\bf Proof of \eqref{eq:first}.}}
To prove the desired result, we will apply Theorem \ref{th:feas}. More precisely, in view of \eqref{eq:cond_inf} in Theorem \ref{th:feas}, it suffices to prove that under the condition $\delta>\delta_\star$, for any fixed $r$ and $\theta$, there exists constant $C>0$ that is independent of $r$ and $\theta$ such that for sufficiently large $n$ (taken independently of $r$ and $\theta$):
\begin{equation}
\phi_{r,\theta}^{(n)} \geq C \theta \label{eq:ineq}
\end{equation}
To prove \eqref{eq:ineq}, we start by noting that:
\begin{equation}
\phi_{r,\theta}^{(n)}=\min_{\gamma_1^2+\gamma_2^2\leq r^2}   \frac{1}{2}(\gamma_2-\|\boldsymbol{\beta}_\star\|)^2+\frac{1}{2}\gamma_1^2 +\theta\Big(D_n(\gamma_1,\gamma_2)\Big)_+  \label{eq:phirtheta}
\end{equation}
Hence,
\begin{equation}
\phi_{r,\theta}^{(n)}\geq \theta\Big(\min_{\substack{\gamma_1,\gamma_2}}  D_n(\gamma_1,\gamma_2)\Big)_+
\label{eq:phi_rn}
\end{equation}
Function $(\gamma_1,\gamma_2)\mapsto D_n(\gamma_1,\gamma_2)$ is jointly convex in its arguments $(\gamma_1,\gamma_2)$ and converges pointwise to $(\gamma_1,\gamma_2)\mapsto \overline{D}(\gamma_1,\gamma_2)$.    Using Lemma \ref{lem_conc}, for $\gamma_1$ fixed, $\gamma_2\mapsto \ D_n(\gamma_1,\gamma_2)$ is convex. It converges pointwise to $\gamma_2\mapsto  \overline{D}(\gamma_1,\gamma_2)$. As $\lim_{\gamma_2\to \pm\infty} \overline{D}(\gamma_1,\gamma_2)=\infty$,  we may use  \cite[Lemma 10]{thrampoulidis-IT} to obtain,
$$
\displaystyle \min_{\gamma_2\in\mathbb{R}} D_n(\gamma_1,\gamma_2) \asto \min_{\gamma_2\in\mathbb{R}} \overline{D}(\gamma_1,\gamma_2)  
$$   
Moreover, it is easy to see that $\gamma_2\mapsto  \overline{D}(\gamma_1,\gamma_2)$ is an even function. As it is convex, from Lemma \ref{lem:even}, the minimum is taken at $\gamma_2=0$ and hence $\min_{\gamma_2} \overline{D}(\gamma_1,\gamma_2)=\overline{D}(\gamma_1,0)$. 
Similarly, $\gamma_1\mapsto  \displaystyle\min_{\gamma_2} D_n(\gamma_1,\gamma_2) $ is convex and converges pointwise to $\gamma_1\mapsto \overline{D}(\gamma_1,0) $. Moreover, for fixed $\gamma_1$, it is easy to see that
\begin{align}
 \overline{D}(\gamma_1,0)&= |\gamma_1|\left(\sqrt{\delta}\sqrt{\mathbb{E}\Big(|{G}+\frac{\sigma}{\gamma_1}N|-\frac{1}{{|\gamma_1|}}\epsilon\Big)_{+}^2}-\frac{\gamma_1}{|\gamma_1|}\right)\\
&\geq |\gamma_1|\Big(\sqrt{\delta}\sqrt{\inf_{t\in \mathbb{R}}\mathbb{E}\left(|{G}+t\sigma N|-t\epsilon\right)_{+}^2}-1\Big)
\end{align}
Since by assumption $\delta >\delta_\star$, 
$$
\inf_{t\in\mathbb{R}}\sqrt{\delta}\mathbb{E}\left(|{G}+t\sigma N|-t\epsilon\right)_{+}^2>1
$$
Hence, $\lim_{\gamma_1\to\pm\infty} \min_{\gamma_2} \overline{D}(\gamma_1,\gamma_2)=\lim_{\gamma_1\to\infty}\overline{D}(\gamma_1,0)=\infty$. Using again \cite[Lemma 10]{thrampoulidis-IT}, we obtain:
$$
\min_{\substack{\gamma_1,\gamma_2}} {D}_n(\gamma_1,\gamma_2)\asto \min_{\substack{\gamma_1}} \overline{D}(\gamma_1,0) 
$$ 
It follows from \eqref{eq:phi_rn} that for any $\eta>0$, the following holds true once $n$ is taken sufficiently large  independently of $r$ and $\theta$,  
$$
\phi_{r,\theta}^{(n)}\geq \theta \left(\min_{\gamma_1} \overline{D}(\gamma_1,0)-\eta\right)_{+}
$$
Assume that 
\begin{equation}
\min_{\gamma_1} \overline{D}(\gamma_1,0)> 2C.  
\label{eq:D}
\end{equation}
where $C$ is some strictly positive constant. Then we can choose $\eta$ sufficiently small, for instance smaller than $C$, to obtain \eqref{eq:ineq}.
To conclude, it suffices thus to establish \eqref{eq:D}. To this end, first notice that for all $\gamma_1$ such that $|\gamma_1|\leq \tilde{\eta}:=\frac{1}{2}\sqrt{\delta\mathbb{E}\left[\left(\sigma|N|-\epsilon\right)_{+}^2\right]}$, the following holds true:
\begin{align}
\min_{\substack{\gamma_1\\ |\gamma_1|\leq \tilde{\eta}}}\overline{D}(\gamma_1,0)&\geq \min_{\substack{\gamma_1\\ |\gamma_1|\leq \tilde{\eta}}}\sqrt{\delta\mathbb{E}\left( |\gamma_1 G +\sigma N|-\epsilon\right)_{+}^2} -\tilde{\eta}\\
&\geq \sqrt{\delta\mathbb{E}\left( |\sigma N|-\epsilon\right)_{+}^2} -\tilde{\eta}\\
&\geq \frac{1}{2} \sqrt{\delta\mathbb{E}\left( |\sigma N|-\epsilon\right)_{+}^2} =\tilde{\eta}>0 \label{eq:f2}.
\end{align}
where the last inequality follows by Lemma \ref{lem:even}.  
On the other hand, 
\begin{align}
\min_{\substack{\gamma_1\\ |\gamma_1|\geq \tilde{\eta}}}\overline{D}(\gamma_1,0) &\geq \min_{\substack{\gamma_1\\ |\gamma_1|\geq \tilde{\eta}}}|\gamma_1|\Big(\sqrt{\delta\mathbb{E}\left(|G+\frac{\sigma}{\gamma_1}N|-\frac{1}{|\gamma_1|}\epsilon\right)_{+}^2}-1\Big)\\
&\geq \tilde{\eta}\left(\sqrt{\delta\inf_{t\in\mathbb{R}}\mathbb{E}\left(|{G}+t\sigma N|-t\epsilon\right)_{+}^2}-1\right)=\tilde{\eta}\left(\sqrt{\frac{\delta}{\delta_\star}}-1\right)>0. \label{eq:f1}
\end{align}
Putting \eqref{eq:f2} and \eqref{eq:f1} together, we obtain:
$$
\min_{\gamma_1}\overline{D}(\gamma_1,0) \geq \min\left(\tilde{\eta},\tilde{\eta}\left(\sqrt{\frac{\delta}{\delta_\star}}-1\right)\right)
$$
Letting $C:=\frac{1}{2} \min\left(\tilde{\eta},\tilde{\eta}\left(\sqrt{\frac{\delta}{\delta_\star}}-1\right)\right)$ yields \eqref{eq:D}.

\noindent{{\bf Proof of \eqref{eq:second}}} The aim here is to show that if $\delta<\delta_\star$, then the PO and thus the H-SVR is feasible with probability~1 for sufficiently large $n$. For that, we will apply the second item of Theorem \ref{th:feas}. To begin with, we shall start by showing that there exists  $\kappa_0>0$ such that for all $\kappa\leq \kappa_0$ the following set is non-empty:
\begin{equation}
\mathcal{I}:=\left\{\gamma_1 \  | \ \overline{D}(\gamma_1,0)\leq -\kappa \right\} \neq \emptyset \label{eq:I}\end{equation} and open. 
Let $t_\star$ be such that $t_\star\in\arg\inf_{t\in\mathbb{R}}\mathbb{E}\left(|{G}+t\sigma N|-t\epsilon\right)_{+}^2$. Obviously $t_\star$ is finite. To see this, it suffices to note that:
\begin{align}
\mathbb{E}\left(|{G}+t\sigma N|-t\epsilon\right)_{+}^2&\geq \mathbb{E} \left[\left(|{G}+t\sigma N|-t\epsilon\right)_{+}^2 {\bf 1}_{\{\sigma N \geq 2\epsilon\}}{\bf 1}_{\{{G}>0\}} \right]\\
&\geq t^2\epsilon^2  {\bf 1}_{\{\sigma N \geq 2\epsilon\}}{\bf 1}_{\{{G}>0\}}\\
&\underset{t\to\pm \infty}{\longrightarrow} \infty
\end{align}
By assumption $\sqrt{\delta} \sqrt{\mathbb{E}\left(\left|{G}+t\sigma N\right|-t\epsilon\right)_{+}^2}<1$. Hence $\overline{D}(\frac{1}{|t_\star|},0)<0$. By continuity of $\gamma_1\mapsto \overline{D}(\gamma_1,0)$ we prove that $\mathcal{I}$ is non-empty and open. With this at hand, we let $C_{\kappa}$ be given by:
\begin{equation}
C_\kappa:=\sup_{\gamma_1\in\mathcal{I}} |\gamma_1| 
\label{eq:C_kappa}
\end{equation}
Recall that from the uniform convergence of $\gamma\mapsto D_n(\gamma,0)$ to $\gamma\mapsto \overline{D}({\gamma,0})$ over compacts, we may argue that for any $\gamma_1\in[-C_\kappa,C_\kappa]$ and for $n$ sufficiently large:
$$
 \ \ {D}_n(\gamma_1,0) \leq \overline{D}(\gamma_1,0)+\kappa
$$
Hence, 
$$
\left\{\gamma_1, D_n(\gamma_1,0)\leq 0\right\} \subset \left\{\overline{D}(\gamma_1,0)\leq -\kappa\right\}
$$
and as such:
$$
\Big\{\left\{\gamma_1, D_n(\gamma_1,0)\leq 0\right\}\cap [-C_\kappa,C_\kappa]\Big\} \subset \Big\{\left\{\overline{D}(\gamma_1,0)\leq -\kappa\right\}\cap   [-C_\kappa,C_\kappa]\Big\}
$$ 
For $k\in\mathbb{R}$, define $\phi_{k}^{(n)}$:
$$
\phi_{k}^{(n)}=\sup_{\theta\geq 0} \phi_{r,\theta}^{(n)}
$$
Then, using the min-max inequality, we have:
\begin{align}
\phi_{k}^{(n)}&\leq \min_{\substack{\gamma_1,\gamma_2}}\sup_{\theta\geq 0} \frac{1}{2}(\gamma_2-\|\boldsymbol{\beta}_\star\|)^2+\frac{1}{2}\gamma_1^2 +\theta\left(D_n(\gamma_1,\gamma_2)\right)_{+} \\
&\leq \min_{\substack{\gamma_1\in\mathcal{I}\\ |\gamma_1|\leq C_\kappa}} \sup_{\theta\geq 0} \frac{1}{2}\|\boldsymbol{\beta}_\star\|^2 +\frac{1}{2}\gamma_1^2+\theta\left(D_n(\gamma_1,0)\right)_{+}\\
&\leq \frac{1}{2}\left(C_\kappa^2 +\|\boldsymbol{\beta}\|^2\right)
\end{align}
This shows that there exists a constant $C$ such that almost surely:
$$
\phi_{k}^{(n)}\leq C. 
$$
By Theorem \ref{th:feas}-ii), we conclude that if $\delta<\delta_\star$, the hard-margin SVR is almost surely feasible. 
\subsection{Proof of Theorem 2}
The  proof of Theorem 2 follows from applying the result of Theorem \ref{th:conv} in the Appendix. 
Let $\overline{\phi}$ be defined as:
\begin{equation}
\overline{\phi}= \min_{\overline{D}(\gamma_1,\gamma_2)\leq 0}\frac{1}{2}\gamma_1^2+\frac{1}{2}(\gamma_2-\beta)^2. 
\label{eq:phi}
\end{equation}
and denote by $\gamma_1^\star$ and $\gamma_2^\star$ its corresponding minimizers \footnote{The existence and uniqueness of $\gamma_1^\star$ and $\gamma_2^\star$ follows from the fact that the objective is strictly convex and coercive.}. 
We need to prove that the following statements hold true:
\begin{subequations}
\begin{align}
\text{For any} \  \epsilon>0: &\ \mathbb{P}\left[\cup_{k=k_0}^\infty \left\{\phi_{k}^{(n)}\leq \overline{\phi}-\epsilon\right\}, \ \ \text{i.o.}\right]=0.\label{eq:pp1}\\
\text{For any} \  \epsilon>0: &\ \mathbb{P}\left[ \left\{\phi_{k_0}^{(n)}\geq \overline{\phi}-\epsilon\right\}, \ \ \text{i.o.}\right]=0.  \label{eq:pp2}
\end{align} 
\end{subequations}
where $k_0$ is some integer sufficiently large, and $\phi_{k}^{(n)}$ is defined as:
$$
\phi_{k}^{(n)}=\sup_{\theta\geq 0} \phi_{k,\theta}^{(n)}
$$
We shall first simplify the expression of $\phi_{k}^{(n)}$. It follows from \eqref{eq:phirtheta} that:
\begin{align}
\phi_{k}^{(n)}&=\sup_{\theta\geq 0} \min_{\substack{\gamma_1,\gamma_2\\ \gamma_1^2+\gamma_2^2\leq k^2}} \frac{1}{2}(\gamma_2-\|\boldsymbol{\beta}_\star\|)^2+\frac{1}{2}\gamma_1^2 +\theta \left(D_n(\gamma_1,\gamma_2)\right)_{+}\\
&=\min_{\substack{\gamma_1,\gamma_2\\ \gamma_1^2+\gamma_2^2\leq k^2}}\sup_{\theta\geq 0} \frac{1}{2}(\gamma_2-\|\boldsymbol{\beta}_\star\|)^2+\frac{1}{2}\gamma_1^2 +\theta\left(D_n(\gamma_1,\gamma_2)\right)_{+}\label{eq:1}\\
&=\min_{\substack{\gamma_1,\gamma_2\\ \gamma_1^2+\gamma_2^2\leq k^2\\   D_n(\gamma_1,\gamma_2)\leq 0}} \frac{1}{2}(\gamma_2-\|\boldsymbol{\beta}_\star\|)^2+\frac{1}{2}\gamma_1^2 \label{eq:2}
\end{align}
The second equality \eqref{eq:1} is because  $(\gamma_1,\gamma_2,\theta)\mapsto \theta D_n(\gamma_1,\gamma_2)$ is convex in $(\gamma_1,\gamma_2)$ and concave in $\theta$ while \eqref{eq:2} follows since, as previously proven, under the condition $\delta<\delta_\star$, the set $$\left\{(\gamma_1,\gamma_2) \ | \ (\gamma_1^2+\gamma_2^2)\leq k^2 \  \text{and} \ D_n(\gamma_1,\gamma_2)\leq 0\right\}$$ is not empty.

 The function $(\gamma_1,\gamma_2)\mapsto D_n(\gamma_1,\gamma_2)$ is convex and converges pointwise to $(\gamma_1,\gamma_2)\mapsto \overline{D}(\gamma_1,\gamma_2,0)$. Hence, it converges uniformly over compacts. For $\kappa>0$, any $r \in\mathbb{N}^*$ and sufficiently large $n$, the following holds true: 
$$
\overline{D}(\gamma_1,\gamma_2)-\kappa\leq {D}_n(\gamma_1,\gamma_2)\leq \overline{D}(\gamma_1,\gamma_2)+\kappa, \ \ \forall \ (\gamma_1,\gamma_2) \ \ \text{such that} \ \gamma_1^2+\gamma_2^2\leq r^2
$$
and thus:
\begin{equation}
\Big\{\{(\gamma_1,\gamma_2) | \overline{D}(\gamma_1,\gamma_2)\leq -\kappa   \ \text{and}\ \gamma_1^2+\gamma_2^2\leq r^2\}\Big\} \subset \Big\{\left\{(\gamma_1,\gamma_2) |  \ {D}_n(\gamma_1,\gamma_2)\leq 0 \right\} \ \text{and}\ \ \gamma_1^2+\gamma_2^2\leq r^2\Big\}\label{eq:un}
\end{equation}

Before handling the proof of \eqref{eq:pp1} and \eqref{eq:pp2}, we need first to establish that for integer $k$ sufficiently large $\phi_{k}^{(n)}$ does not change with $k$, which will require often the use of  \eqref{eq:un}.  To start,  for $\kappa>0$, we consider the set:
$$
\mathcal{D}_\kappa=\left\{(\gamma_1,\gamma_2)  \ | \ \overline{D}(\gamma_1,\gamma_2)\leq -\kappa\right\}
$$
We let $(\gamma_1^\kappa,\gamma_2^\kappa)$ be defined as:
$$
(\gamma_1^\kappa,\gamma_2^\kappa)=\arg\min_{\substack{\gamma_1,\gamma_2\\ \overline{D}(\gamma_1,\gamma_2)\leq -\kappa}} \frac{1}{2}(\gamma_2-\beta)^2+\frac{1}{2}\gamma_1^2
$$
and define $k_0$ as $k_0:={\rm max}\left(\lceil\sqrt{2}\left(\sqrt{(\gamma_2^\kappa-\beta)^2+(\gamma_1^\kappa)^2}+\beta\right)\rceil+1,2\sqrt{(\gamma_1^\kappa)^2+(\gamma_2^\kappa)^2}\right).$ 
Note that $\gamma_1^\kappa$ and $\gamma_2^\kappa$ are also given by:
$$
(\gamma_1^\kappa,\gamma_2^\kappa)=\arg\min_{\substack{\gamma_1,\gamma_2\\ \overline{D}(\gamma_1,\gamma_2)\leq -\kappa\\ \gamma_1^2+\gamma_2^2\leq k_0^2}} \frac{1}{2}(\gamma_2-\beta)^2+\frac{1}{2}\gamma_1^2
$$
We will thus prove that for all $k\geq k_0$, with probability $1$ for sufficiently large $n$,  
\begin{equation}
\phi_{k}^{(n)}=\phi_{k_0}^{(n)}. \label{eq:phik}
\end{equation}
To begin with, we invoke the uniform convergence of $(\gamma_1,\gamma_2)\mapsto \frac{1}{2}(\gamma_2-\|\boldsymbol{\beta}_\star\|)^2+\frac{1}{2}\gamma_1^2$ to  $(\gamma_1,\gamma_2)\mapsto \frac{1}{2}(\gamma_2-\beta)^2+\frac{1}{2}\gamma_1^2$, to claim that for any $\eta>0$, and $(\gamma_1,\gamma_2)\in\left\{(\gamma_1,\gamma_2) \ | \ \gamma_1^2+\gamma_2^2\leq k_0^2\right\}$
$$
\frac{1}{2}(\gamma_2-\|\boldsymbol{\beta}_\star\|)^2+\frac{1}{2}\gamma_1^2\leq \frac{1}{2}(\gamma_2-\beta)^2+\frac{1}{2}\gamma_1^2+\eta
$$
and hence, 
$$
\phi_{k_0}^{(n)}\leq \min_{\substack{\gamma_1,\gamma_2\\ D_n(\gamma_1,\gamma_2)}} \frac{1}{2}(\gamma_2-\beta)^2+\frac{1}{2}\gamma_1^2+\eta
$$
Next, we use \eqref{eq:un} to establish that \begin{equation}\phi_{k_0}^{(n)} \leq \frac{1}{2}(\gamma_2^{\kappa}-\beta)^2 +\frac{1}{2}(\gamma_1^\kappa)^2 +\eta. \label{eq:inequality}\end{equation}
Clearly, $\phi_{k}^{(n)}\leq \phi_{k_0}^{(n)}$.  Let $\tilde{\gamma}_1$ and $\tilde{\gamma}_2$ be the minimizers of ${\phi}_{k}^{(n)}$. Then,
$$
\phi_{k}^{(n)} < \phi_{k_0}^{n} \ \ \Longrightarrow  \ \ (\tilde{\gamma}_1,\tilde{\gamma}_2) \notin \left\{(\gamma_1,\gamma_2)  \ | \ \gamma_1^2+\gamma_2^2\leq k_0^2\right\}
$$
To prove \eqref{eq:phik}, we proceed by contradiction and assume that \begin{equation}\phi_{k}^{(n)}<\phi_{k_0}^{n}.\label{eq:strict}\end{equation} Then, it is easy to see that in this case, we must have:
$$
\tilde{\gamma}_1^2+\tilde{\gamma}_2^2\geq k_0^2
$$
Therefore, either $\tilde{\gamma}_1^2\geq \frac{1}{2}{k_0^2}$ or $\tilde{\gamma}_2^2\geq \frac{1}{2}k_0^2$.

\noindent{\underline{First case: $\tilde{\gamma}_1^2\geq \frac{1}{2}k_0^2$}}. In this case: 
$$
\phi_{k}^{(n)} = \frac{1}{2}(\tilde{\gamma}_2-\|\boldsymbol{\beta}_\star\|)^2+\frac{1}{2}\tilde{\gamma}_1^2\geq \frac{1}{4}k_0^2\geq \frac{1}{2}(\gamma_2^\kappa-\beta)^2+\frac{1}{2}(\gamma_1^\kappa)^2+\frac{1}{4}\geq\phi_{k_0}^{(n)}
$$
where the last inequality follows from \eqref{eq:inequality}
This shows that $\phi_{k}^{(n)}\geq \phi_{k_0}^{(n)}$ which contradicts \eqref{eq:strict}.

\noindent{\underline{Second case: $ \tilde{\gamma}_2^2\geq \frac{1}{2}k_0^2$}} Using the relation $(x-y)^2\geq (|x|-|y|)^2$, we obtain 
$$
\phi_{k}^{(n)}\geq \frac{1}{2}(\sqrt{\frac{1}{2}}k_0-\|\boldsymbol{\beta}_\star\|)^2\geq \frac{1}{2}\left(\sqrt{(\gamma_2^\kappa-\beta)^2+(\gamma_1^\kappa)^2}+\beta-\|\boldsymbol{\beta}_\star\|+\frac{1}{\sqrt{2}} \right)^2\geq\phi_{k_0}^{(n)}
$$
where we used the fact $|\beta-\|\boldsymbol{\beta}_\star\||$ can be assumed  sufficiently small (let say smaller than $\frac{1}{2\sqrt{2}}$) for $n$ sufficiently large. 
This again contradicts \eqref{eq:strict} which proves that $\phi_{k}^{(n)}=\phi_{k_0}^{(n)}$ for $k\geq k_0$ and $n$ sufficiently large but independent of $k$. 
With the proof of \eqref{eq:phik} at hand, we are now ready to show \eqref{eq:pp1} and \eqref{eq:pp2}.

\noindent{\underline{Proof of \eqref{eq:pp1}}} 
The proof relies on using the  uniform convergence of $(\gamma_1,\gamma_2)\mapsto \min_{\tilde{b}} D_n(\gamma_1,\gamma_2,\tilde{b})$ to $(\gamma_1,\gamma_2)\mapsto \overline{D}(\gamma_1,\gamma_2,0)$ along with Lemma \ref{lem:equality}. More specifically, recalling \eqref{eq:uniform}, it holds that:
\begin{equation}
\Big\{\{(\gamma_1,\gamma_2) | \overline{D}_n(\gamma_1,\gamma_2)\leq 0   \ \text{and}\ \gamma_1^2+\gamma_2^2\leq r^2\}\Big\} \subset \Big\{\left\{(\gamma_1,\gamma_2) |\ \overline{D}(\gamma_1,\gamma_2)\leq \kappa \right\} \ \text{and}\ \ \gamma_1^2+\gamma_2^2\leq r^2\Big\}
\end{equation} 
Hence,
\begin{align}
\phi_{k_0}^{(n)} &= \min_{\substack{\gamma_1^2+\gamma_2^2\leq k_0^2\\ {D}_n(\gamma_1,\gamma_2)\leq 0}} \frac{1}{2}(\gamma_2-\|\boldsymbol{\beta}_\star\|^2) +\frac{1}{2}\gamma_1^2 \\ &\geq  \min_{\substack{\gamma_1^2+\gamma_2^2\leq k_0^2\\ \overline{D}(\gamma_1,\gamma_2)\leq \kappa}} \frac{1}{2}(\gamma_2-\|\boldsymbol{\beta}_\star\|)^2 +\frac{1}{2}\gamma_1^2 
\end{align}
Since $(\gamma_1,\gamma_2)\mapsto  \frac{1}{2}(\gamma_2-\|\boldsymbol{\beta}_\star\|^2) +\frac{1}{2}\gamma_1^2$ converges uniformly to $(\gamma_1,\gamma_2)\mapsto \frac{1}{2}(\gamma_2-\beta)^2 +\frac{1}{2}\gamma_1^2$ over compacts, for any $\eta>0$, there exists $n$ sufficiently large such that:
\begin{align}
\phi_{k_0}^{(n)}&\geq \left(\min_{\substack{\gamma_1^2+\gamma_2^2\leq k_0^2\\ \overline{D}(\gamma_1,\gamma_2)\leq \kappa}} \frac{1}{2}(\gamma_2-\beta)^2 +\frac{1}{2}\gamma_1^2\right) -\eta\geq  \left(\min_{\substack{\gamma_1^2+\gamma_2^2\leq k_0^2\\ \overline{D}(\gamma_1,\gamma_2)\leq \kappa}} \frac{1}{2}(\gamma_2-\beta)^2 +\frac{1}{2}\gamma_1^2\right) -2\eta
\end{align}
where the last inequality follows by applying Lemma \ref{lem:equality}.


\noindent{\underline{Proof of \eqref{eq:pp2}}}
Fix any $\eta>0$. It follows from \eqref{eq:un} that 
\begin{align}
\phi_{k_0}^{(n)} \leq  \left(\min_{\substack{\gamma_1^2+\gamma_2^2\leq k_0^2\\ \overline{D}(\gamma_1,\gamma_2,0)\leq -\kappa}} \frac{1}{2}(\gamma_2-\beta)^2 +\frac{1}{2}\gamma_1^2\right) +\eta
\end{align}
Similarly, we conclude by invoking Lemma \ref{lem:equality} and using the $\eta$-definition of the infimum.

With \eqref{eq:pp1} and \eqref{eq:pp2} at hand, we now use Theorem \ref{th:conv} to conclude that:
\begin{equation}
\Phi^{(n)}=\|\hat{\bf w}_H\|\to \overline{\phi}= \frac{1}{2}(\gamma_2^\star-\beta)^2+\frac{1}{2}(\gamma_1^\star)^2
\label{eq:conv_res}
\end{equation}
However, this convergence result is not sufficient to establish that of $\|\hat{\bf w}_H-\boldsymbol{\beta}_\star\|^2$ and the cosine similarity.  Hopefully, we can easily prove these results by working with the solution $\hat{{{\tilde{{\bf w}}}}}_H$ of \eqref{eq:primary_bis} which, as per \eqref{eq:relation}, writes as $\hat{\tilde{{\bf w}}}_H=-\hat{\bf w}_H+\boldsymbol{\beta}_\star$. The norm of $\hat{\tilde{{\bf w}}}_H$ represents the risk while the cosine similarity can be retrieved by using the relation $$\frac{\hat{\tilde{{\bf w}}}_H^{T}\boldsymbol{\beta}_\star^{T}}{\|\boldsymbol{\beta}_\star\|}=\frac{\hat{\bf w}_H^{T}\boldsymbol{\beta}_\star}{\|\boldsymbol{\beta}_\star\|}-\|\boldsymbol{\beta}_\star\|$$ 
In the sequel, we will focus on the convergence of the quantity $\frac{\hat{\tilde{{\bf w}}}_H^{T}\boldsymbol{\beta}_\star^{T}}{\|\boldsymbol{\beta}_\star\|}$, from which the cosine similarity easily follows. The convergence of the risk can be done in a similar way and is omitted for brevity.
 
For that, we need to consider as suggested by Theorem \ref{th:conv} a perturbed version of the sequence of the AO problem in which $\tilde{\bf w}$ is further constrained on the set $\mathcal{S}_\xi$:
$$
\mathcal{S}_\xi=\left\{\tilde{\bf w}\in\mathbb{R}^{p} \ | \ \left|\frac{\tilde{\bf w}^T\boldsymbol{\beta}_\star}{\|\boldsymbol{\beta}_\star\|}-\gamma_2^\star\right|>\xi\right\}
$$
where $\xi$ is any positive scalar.  Particularly, we define for a given $r$ and $\theta$,  $\tilde{\phi}_{r,\theta}$ as: 
$$
\tilde{\phi}_{r,\theta}^{(n)}=\min_{\substack{\tilde{\bf w}\\ \|\tilde{\bf w}\|\leq r\\ \tilde{\bf w}\notin\mathcal{S}_\xi}} \max_{\substack{{\bf u}\\\|{\bf u}\|\leq \theta}} 
\frac{1}{\sqrt{p}}\|{\bf P}_{\perp}\tilde{\bf w}\|_2{\bf g}^{T}{\bf u} + \frac{1}{\sqrt{p}} \|{\bf u}\| {\bf h}^{T}{\bf P}_{\perp}\tilde{\bf w} +\frac{1}{\sqrt{p}} \frac{1}{\|\boldsymbol{\beta}_\star\|}{\bf u}^{T}{\bf z} \ \boldsymbol{\beta}_\star^{T}\tilde{\bf w}+\frac{1}{2}\|\boldsymbol{\beta}_\star-\tilde{\bf w}\|^2+ \frac{1}{\sqrt{p}}\sigma \bn^T\bu-\frac{1}{\sqrt{p}}\epsilon \bone^T|\bu|
$$
Also, we define for $r\geq 0$, $\tilde{\phi}_r$ as:
$$
\tilde{\phi}_{r}^{(n)}=\sup_{\theta\geq 0} \tilde{\phi}_{r,\theta}
$$
Following the same calculations that led to \eqref{eq:31}, we can simplify $\tilde{\phi}_{r,\theta}^{(n)}$ as:
$$
\tilde{\phi}_{r,\theta}^{(n)}=\min_{\substack{\gamma_1^2+\gamma_2^2\leq r^2\\ |\gamma_2-\gamma_2^\star|\geq \xi}}\frac{1}{2}\gamma_1^2+\frac{1}{2}\gamma_2^2+\frac{1}{2}\|\boldsymbol{\beta}_\star\|_2^2-\gamma_2\|\boldsymbol{\beta}_\star\|_2+\theta \left(D_n(\gamma_1,\gamma_2)\right)_{+}
$$
Since the objective is convex in $(\gamma_1,\gamma_2)$, using \cite[Remark 3]{kam-chris}, $\tilde{\phi}_r^{(n)}$ can be simplified as:
$$
\tilde{\phi}_r^{(n)}=\min_{\substack{\gamma_1^2+\gamma_2^2\leq r^2\\ |\gamma_2-\gamma_2^\star|\geq \xi}}\sup_{\theta\geq 0} \frac{1}{2}\gamma_1^2+\frac{1}{2}\gamma_2^2+\frac{1}{2}\|\boldsymbol{\beta}_\star\|_2^2-\gamma_2\|\boldsymbol{\beta}_\star\|_2+\theta \left(D_n(\gamma_1,\gamma_2)\right)_{+}
=\min_{\substack{\gamma_1^2+\gamma_2^2\leq r^2\\ |\gamma_2-\gamma_2^\star|\geq \xi\\ D_n(\gamma_1,\gamma_2)\leq 0}} \frac{1}{2}\gamma_1^2+\frac{1}{2}\gamma_2^2+\frac{1}{2}\|\boldsymbol{\beta}_\star\|_2^2-\gamma_2\|\boldsymbol{\beta}_\star\|_2
$$
By reference to Theorem \ref{th:conv}, it suffices to prove that there exists $\zeta>0$ such that for sufficiently large $n$ (independent of $r$) it holds that:
\begin{align}
\tilde{\phi}_r^{(n)}\geq \overline{\phi} +{\zeta}\label{eq:first_p}
\end{align}
or equivalently, 
\begin{align}
\forall r, \ \ \tilde{\phi}_r^{(n)}\geq \overline{\phi} +{\zeta}\label{eq:first_p1}
\end{align}
For that we proceed in two steps. In the first, step, we prove that for any $\tilde{\zeta}>0$:
\begin{equation}
\tilde{\phi}_r^{(n)}\geq \tilde{{\phi}}-\tilde{\zeta}
\label{eq:in}
\end{equation}
where 
$$
\tilde{{\phi}}:=\min_{\substack{\gamma_1,\gamma_2\\ |\gamma_2-\gamma_2^\star|\geq \xi\\ \overline{D}(\gamma_1,\gamma_2)\leq 0}} \frac{1}{2}\gamma_1^2+\frac{1}{2}\gamma_2^2 + \frac{1}{2}\beta^2-\gamma_2{\beta}_2
$$
The proof of \eqref{eq:in} relies on the uniform convergence of $D_n$ to $\overline{D}$ and is identical to what is done in \eqref{eq:f2}. Details are thus omitted for brevity.
In the second step, we show that there exists $\zeta>0$ such that:
\begin{align}
\tilde{\phi}\geq \overline{\phi}+2\zeta.\label{eq:z}
\end{align}
To see this, it suffices to note that $\tilde{\phi}-\overline{\phi}>0$. Indeed, clearly $\tilde{\phi}\geq \overline{\phi}$, however, since $(\gamma_1^\star ,\gamma_2^\star)$ are the unique minimizers of $\overline{\phi}$, we must have  $\tilde{\phi}-\overline{\phi}>0$. Hence, \eqref{eq:z} holds for $\zeta=\frac{\tilde{\phi}-\overline{\phi}}{2}$. Starting from \eqref{eq:in} and using $\tilde{\zeta}$ smaller than  $\zeta$, we obtain \eqref{eq:first_p}. Based on Theorem \ref{th:conv}, we may conclude that the optimizer $\hat{{\tilde{{\bf w}}}}_H$ of  \eqref{eq:primary_bis} satisfies:
\begin{equation}
\frac{\hat{\tilde{{\bf w}}}_H^{T}\boldsymbol{\beta}_\star}{\|\boldsymbol{\beta}_\star\|}\asto \gamma_2^\star \label{eq:c}
\end{equation}
Hence, 
$$
\frac{\hat{{{\bf w}}}_H^{T}\boldsymbol{\beta}_\star}{\|\boldsymbol{\beta}_\star\|}\asto -\gamma_2^\star+\beta_\star
$$
Using \eqref{eq:conv_res}, we thus obtain:
$$
\frac{\hat{{{\bf w}}}_H^{T}\boldsymbol{\beta}_\star}{\|\boldsymbol{\beta}_\star\|\|\hat{\bf w}_H\|}\asto \frac{-\gamma_2^\star+\beta_\star}{\sqrt{\frac{1}{2}(\gamma_2^\star-\beta)^2+\frac{1}{2}(\gamma_1^\star)^2}}
$$
Similarly, by defining the perturbed AO problem in which $\tilde{\bf w}$ is further constrained on the set $${\bf w}\in\left\{\tilde{\bf w}\in\mathbb{R}^p \ | \ \left|\|\tilde{\bf w}\|-\sqrt{(\gamma_1^\star)^2+(\gamma_2^\star)^2}\right|>\xi\right\}$$ and following the same approach, we can prove the convergence of the risk to $(\gamma_1^\star)^2+(\gamma_2^\star)^2$. The results of Theorem \ref{pred_risk_conv_HM} are then obtained by performing the change of variable  $\tilde{\gamma}_1=\frac{\gamma_1}{\sigma}$ and $\tilde{\gamma}_2:=\frac{\gamma_2}{\sigma}$ 


\section{Performance of S-SVR: Proof of Theorem \ref{soft_thm}}
\label{app_thm3}
This section is devoted to the analysis of the statistical behavior of the S-SVR. 
To begin with, we shall note that  the S-SVR can also be written as:
\begin{equation}
\begin{aligned}
\min_{{\bf w}} \quad & \frac{1}{2}\|{\bf w}\|^2 + \frac{C}{p}\sum_{i=1}^n\xi_i\\
\textrm{s.t.} \quad & y_i-{\bf w}^{T}{\bf x}_i \leq \epsilon+\xi_i, \ i=1,\cdots,n \\
  &   {\bf w}^{T}{\bf x}_i-y_i \leq \epsilon+\xi_i ,  \\
&\xi_i\geq 0,  
\end{aligned}
\label{eq:soft_SVR_n}
\end{equation}
where in \eqref{eq:soft_SVR_n} we replaced $\tilde{\xi}_i$ by $\xi_i$. The reason why \eqref{eq:soft_SVR} and \eqref{eq:soft_SVR_n} are equivalent is as follows: First it is easy to see that 
\eqref{eq:soft_SVR_n} is identical to \eqref{eq:soft_SVR} when $\xi_i$ and $\tilde{\xi}_i$ are constrained to be equal. Hence, for any $({\bf w},\{\xi_i\}_{i=1}^n)$ feasible for \eqref{eq:soft_SVR_n}, $({\bf w},\{\xi_i\}_{i=1}^n,\{\xi_i\}_{i=1}^n)$ is also feasible for \eqref{eq:soft_SVR}. Next, let $({\bf w},\{\xi_i\}_{i=1}^n,\{\tilde{\xi}_i\}_{i=1}^n)$ be feasible for \eqref{eq:soft_SVR}. Then, for all $i=1,\cdots,n$,
$$
\begin{aligned}
  y_i-{\bf w}^{T}{\bf x}_i \leq \epsilon+\xi_i \ \text{and} \   {\bf w}^{T}{\bf x}_i-y_i \leq \epsilon+\tilde{\xi}_i  \ &\Longrightarrow |{\bf w}^{T}{\bf x}_i-y_i |\leq \epsilon+{\rm max}(\xi_i,\tilde{\xi}_i)\\
&\Longrightarrow {\bf w}^{T}{\bf x}_i-y_i\leq \epsilon+\max(\xi_i,\tilde{\xi}_i) \ \text{and} \ y_i-{\bf w}^{T}{\bf x}_i\leq \epsilon+\max(\xi_i,\tilde{\xi}_i)
\end{aligned}
$$
From this it follows that if $({\bf w},\{\xi_i\}_{i=1}^n,\{\tilde{\xi}_i\}_{i=1}^n)$ is feasible for \eqref{eq:soft_SVR}, then $({\bf w},\{\max(\xi_i,\tilde{\xi}_i\}_{i=1}^n))$ is feasible for \eqref{eq:soft_SVR_n}. The optimal costs for \eqref{eq:soft_SVR} and \eqref{eq:soft_SVR_n} are as such identical and thus are their respective solutions due to the strict convexity of their objectives.
In the sequel, for the sake of simplicity, we will study \eqref{eq:soft_SVR_n} instead of \eqref{eq:soft_SVR}.

\noindent {\bf Identifiying the PO. } 
The Lagrangian associated with   problem  \eqref{eq:soft_SVR_n}  can be written as:
\begin{align*}
\mathcal{L}({\bw},\boldsymbol{\xi},\boldsymbol{\lambda},\boldsymbol{\alpha})
&=\frac{1}{2}\| \bw\|^2+\frac{C}{p}\sum\limits_{i=1}^n\xi_i+\sum_{i=1}^{n}\lambda_i(\bbeta_\star^T \bx_i- \bw^T\bx_i+\sigma n_i-\epsilon-\xi_i)+\sum_{i=1}^{n}\alpha_i(\bw^T\bx_i-\bbeta_\star^T\bx_i-\sigma n_i-\epsilon-\xi_i)
\end{align*}
where $\boldsymbol{\xi}=[\xi_1,\cdots,\xi_n]$ and  $\boldsymbol{\lambda}=[\lambda_1,\cdots,\lambda_n]$ and $\boldsymbol{\alpha}=[\alpha_1,\cdots,\alpha_n]$ are the Lagrangian coefficients. 
Letting $\tilde{\bw}=\bbeta-\bw$, we have
\begin{align*}
\mathcal{L}(\tilde\bw,\boldsymbol{\xi},\boldsymbol{\lambda},\boldsymbol{\alpha})&=\frac{1}{2}\| \bbeta_\star-\tilde{\bw}\|^2+\frac{C}{p}\sum\limits_{i=1}^n\xi_i+\sum_{i=1}^{n}\lambda_i(\tilde{\bw}^T\bx_i+\sigma n_i-\epsilon-\xi_i)+\sum_{i=1}^{n}\alpha_i(-\tilde{\bw}^T\bx_i-\sigma n_i-\epsilon-\xi_i)
\end{align*}
Writing $\bX=[\bx_1,\cdots,\bx_n]$ leads to the following optimization problem:
\begin{align}
\min_{\tilde{\bw},\bxi\geq 0} \max_{\blamb \geq 0,\balpha\geq 0}&\frac{1}{2}\| \bbeta_\star-\tilde{\bw}\|^2+\sum\limits_{i=1}^n\left(\frac{C}{p}-\lambda_i-\alpha_i\right)\xi_i+\tilde{\bw}^T\bX(\blamb-\balpha)+\sigma \bn^T(\blamb-\balpha)- \epsilon \bone^T(\blamb+\balpha) \label{eq:bwbxi}
\end{align}
Let $\bxi^\star$ be the optimum in $\bxi$ of the above problem. It follows  from the first order condition that for all  $\bxi\geq0$, the following inequality must hold
\begin{equation}
\left(\frac{C}{p}-\lambda_i-\alpha_i\right)(\xi_i-\xi_i^\star)\geq 0, \ \ i=1,\cdots,n 
\label{eq:12}
\end{equation}
For \eqref{eq:12} to be satisfied, we must have $\frac{C}{p}-\lambda_i-\alpha_i\geq 0$ and $\left(\frac{C}{p}-\lambda_i-\alpha_i\right)\xi_i^\star=0$. Using this, the problem in \eqref{eq:bwbxi} simplifies as: 
\begin{align*}
\min_{\tilde{\bw}} \max_{\substack{\blamb \geq 0,\balpha\geq 0\\ \blamb+\balpha\leq \frac{C}{p}}}&\frac{1}{2}\| \bbeta_\star-\tilde{\bw}\|^2+\tilde{\bw}^T\bX(\blamb-\balpha)+\sigma \bn^T(\blamb-\balpha)- \epsilon \bone^T(\blamb+\balpha)
\end{align*}
Considering $\bu=\sqrt{p}(\blamb-\balpha)$ and $\bv=\sqrt{p}(\blamb+\balpha)$. With these notations, the optimization problem can be written as
\begin{equation}
\min_{\tilde{\bw}} \max_{\substack{\bv\geq 0\\|\bu|\leq \bv\\ \bv\leq \frac{C}{\sqrt{p}}}} \ \ \frac{1}{2}\| \bbeta_\star-\tilde{\bw}\|^2+\frac{1}{\sqrt{p}}\tilde{\bw}^T\bX\bu+\frac{\sigma}{\sqrt{p}}\bn^T\bu-\frac{1}{\sqrt{p}} \epsilon \bone^T\bv
\label{opt_prob_1}
\end{equation}
Clearly, and for the same arguments as in the proof of the H-SVR, the optimization over ${\bf v}$ leads to  $\bv^\star=|\bu|$. Replacing ${\bf v}$ by its optimum value,  \eqref{opt_prob_1} simplifies to,
\begin{equation}
\tilde{\Phi}:=\min_{\tilde{\bw}} \max_{\substack{\bu\\ |\bu|\leq \frac{C}{\sqrt{p}}}} \ \ \frac{1}{2} \| \bbeta_\star-\tilde{\bw}\|^2+\frac{1}{\sqrt{p}}\tilde{\bw}^T\bX\bu+\frac{\sigma}{\sqrt{p}}\bn^T\bu-\frac{1}{\sqrt{p}} \epsilon \bone^T|\bu|\label{eq:soft}
\end{equation}
Contrary to the H-SVR, the S-SVR is always feasible. As such, the optimal cost is finite and thus should be attained by a solution with finite norm. As far as the proof is concerned, checking that at optimum, the solution has a bounded norm allows us to work with the original CGMT framework in \cite{thrampoulidis-IT}. Whenever this possible, proceeding in this way should be preferred for the sake of simplicity. We thus start by proving that the norm of the optimum solution $\tilde{\bw}^\star$ is bounded. 
Indeed, it follows from the first order optimality conditions that: 
$$
\bw^\star=\bbeta_\star-\frac{1}{\sqrt{p}}\bX\bu
$$
and hence, 
$$
\|\tilde{\bw}^\star\|\leq \|\bbeta_\star\|+\frac{1}{2}\|\frac{1}{\sqrt{p}}\bX\| \  \|\bu\|
$$
Clearly, the constraint $|\bu|\leq\frac{C}{\sqrt{p}}$ implies that $\|\bu\|$ is bounded. Moreover, it is known from standard results of random matrix theory that $\|\frac{1}{\sqrt{p}}\bX\|$ is almost surely bounded \cite{SIL06} and hence $\tilde{\bw}$  can be assumed without loss of generaily to satisfy $\|\tilde{\bw}\|\leq C_w$ where $C_w$ is a finite constant. With this, the optimization problem can be written in the form  of a primary optimization problem as required by the CGMT framework in \cite{thrampoulidis-IT}:  
\begin{align}
\Phi_S^{(n)}&=\min_{\substack{\tilde{\bw}\\ \|\tilde{\bw}\|\leq C_w}} \max_{\substack{\bu\\ |\bu|\leq \frac{C}{\sqrt{p}}}} \ \ \frac{1}{\sqrt{p}}\tilde{\bw}^T\bX\bu+\frac{\sigma}{\sqrt{p}}\bn^T\bu-\frac{1}{\sqrt{p}} \epsilon \bone^T|\bu|+\frac{1}{2}\| \bbeta_\star-\tilde{\bw}\|^2.
\label{eq:bounded}
\end{align}
To connect \eqref{eq:soft} to \eqref{eq:bounded}, we rely on \cite[Lemma 5]{thrampoulidis-IT}, which ensures that if there exists $\gamma_1^\star$ and $\gamma_2^\star$ such that for sufficiently large $C_w$, the optimizer $\tilde{\bf w}_{C_w}$ satisfies:
\begin{equation}
\|\tilde{\bf w}_{C_w}\|\asto (\gamma_{1}^\star)^2+(\gamma_2^\star)^2, \ \ \ \  \frac{\tilde{\bf w}_{C_w}^{T}\boldsymbol{\beta}_\star}{\|\boldsymbol{\beta}_\star\|}\asto  \gamma_2^\star \label{eq:s}
\end{equation}
then any minimizer $\tilde{\bf w}$ of \eqref{eq:soft} satisfies:
\begin{equation}
\|\tilde{\bf w}\|\asto (\gamma_{1}^\star)^2+(\gamma_2^\star)^2, \ \ \ \ \frac{\tilde{\bf w}^{T}\boldsymbol{\beta}_\star}{\|\boldsymbol{\beta}_\star\|}\asto  \gamma_2^\star \label{eq:st}
\end{equation}
From now onward, we thus focus on analyzing $\Phi_{S}^{(n)}$. 
Using the same trick as in the proof for the H-SVR, we may project ${\tilde{\bw}}$ onto ${\boldsymbol{\beta}}_\star$ and the space orthogonal to it, thereby yielding:
\begin{align}
\Phi_S^{(n)}&=\min_{\substack{\tilde{\bw}\\ \|\tilde{\bw}\|\leq C_w}} \max_{\substack{\bu\\ |\bu|\leq \frac{C}{\sqrt{p}}}} \ \ \frac{1}{\sqrt{p}}\tilde{\bw}^T\bP_{\perp}\bX\bu+\frac{1}{\sqrt{p}\|\bbeta_\star\|^2}\bw^T\bbeta_\star\bbeta_\star^T\bX\bu
+\frac{\sigma}{\sqrt{p}}\bn^T\bu-\frac{1}{\sqrt{p}} \epsilon \bone^T|\bu|+\frac{1}{2}\| \bbeta_\star-\tilde{\bw}\|^2.
\end{align}
where we recall that ${\bf P}_{\perp}={\bf I}_p-\frac{\boldsymbol{\beta}_\star\boldsymbol{\beta}_\star^T}{\boldsymbol{\beta}_\star^{T}\boldsymbol{\beta}_\star}$.
Let $\bz=\frac{1}{\|\bbeta_\star\|}\bX^T\bbeta_\star$. Then, $\bz$ is independent of $\bP_{\perp}\bX$ and $\Phi^{(n)}_S$ writes as:
\begin{align}
\Phi_S^{(n)}&=\min_{\substack{\tilde\bw\\ \|\tilde\bw\|\leq C_w}} \max_{\substack{\bu\\ |\bu|\leq \frac{C}{\sqrt{p}}}} \ \ \frac{1}{\sqrt{p}}\tilde{\bw}^T\bP_{\perp}\bX\bu+\frac{1}{\sqrt{p}\|\bbeta_\star\|}\tilde{\bw}^T\bbeta_\star\bz^T\bu
+\frac{\sigma}{\sqrt{p}}\bn^T\bu-\frac{1}{\sqrt{p}} \epsilon \bone^T|\bu|+\frac{1}{2}\| \bbeta_\star-\tilde{\bw}\|^2.
\label{PO_SM}
\end{align}

\noindent{\bf Identification and simplification of the AO.}
With the primary problem in \eqref{PO_SM}, we associate the following AO problem given by:
\begin{equation}
\phi_S^{(n)}=\min_{\substack{\tilde{\bw}\\ \|\tilde{\bw}\|\leq C_w}} \max_{\substack{\bu\\ |\bu|\leq \frac{C}{\sqrt{p}}}} \ \frac{1}{\sqrt{p}}\|\bP_{\perp}\tilde{\bw}\|\bg^T\bu-\frac{1}{\sqrt{p}}\|\bu\|\bh^T\bP_{\perp}\tilde{\bw}+\frac{1}{\sqrt{p}\|\bbeta_\star\|}\tilde{\bw}^T\bbeta_\star\bz^T\bu+\frac{\sigma}{\sqrt{p}}\bn^T\bu-\frac{1}{\sqrt{p}} \epsilon \bone^T|\bu| +\frac{1}{2}\| \bbeta_\star-\tilde{\bw}\|^2
\label{AO_SM}
\end{equation}
In a similar way as in the H-SVR, we decompose $\tilde{\bf w}$ as:
$$
\tilde{\bf w}=\gamma_1\frac{{\bf P}_{\perp}\tilde{\bf w}}{\|{\bf P}_\perp\tilde{\bf w}\|}+ \gamma_2\frac{\boldsymbol{\beta}_\star}{\|\boldsymbol{\beta}_\star\|}
$$
Hence, 
\begin{equation}
\phi_S^{(n)}=\min_{\substack{\gamma_1,\gamma_2\\ \gamma_1^2+\gamma_2^2\leq C_w^2}} \max_{\substack{\bu\\ |\bu|\leq \frac{C}{\sqrt{p}}}} \  A_n(\gamma_1,\gamma_2,{\bf u})
\end{equation}
where
$$
A_n(\gamma_1,\gamma_2,{\bf u})=\frac{\gamma_1}{\sqrt{p}}\bg^T\bu-\frac{\gamma_1}{\sqrt{p}}\|\bu\|\|\bP_{\perp}\bh\|+\frac{\gamma_2}{\sqrt{p}}\bz^T\bu+\frac{\sigma}{\sqrt{p}}\bn^T\bu-\frac{ \epsilon}{\sqrt{p}} \|\bu\|_1+ \frac{1}{2}(\gamma_1^2+\gamma_2^2)+\frac{1}{2}\|\bbeta_\star\|^2-\gamma_1\|\bbeta_\star\|
$$
It can be easily seen that 
$$
\inf_{\substack{\gamma_1,\gamma_2\\ \gamma_1^2+\gamma_2^2\leq C_w^2}} \sup_{|\bu|\leq \frac{C}{\sqrt{p}}} \left|A_n(\gamma_1,\gamma_2,{\bf u})-\tilde{A}_n(\gamma_1,\gamma_2,{\bf u})\right|\asto 0.
$$ 
where
$$
\tilde{A}_n(\gamma_1,\gamma_2,{\bf u}):=\frac{\gamma_1}{\sqrt{p}}\bg^T\bu-\gamma_1\|\bu\|+\frac{\gamma_2}{\sqrt{p}}\bz^T\bu+\frac{\sigma}{\sqrt{p}}\bn^T\bu-\frac{ \epsilon}{\sqrt{p}} \|\bu\|_1+ \frac{1}{2}(\gamma_1^2+\gamma_2^2)+\frac{1}{2}\beta^2-\gamma_1\beta$$
As a result $\phi_S^{(n)}-\tilde{\phi}_S^{(n)}\asto 0$, where
$$
\tilde{\phi}_S^{(n)}=\min_{\substack{\gamma_1,\gamma_2\\ \gamma_1^2+\gamma_2^2\leq C_w^2}} \max_{\substack{\bu\\ |\bu|\leq \frac{C}{\sqrt{p}}}} \  \tilde A_n(\gamma_1,\gamma_2,{\bf u})
$$
 Based on this, we will work from now on with $\tilde{\phi}_S^{(n)}$. 
For that, we rely on Lemma~\ref{lemma_opt_u} to perform the optimization with respect to ${\bf u}$ and simplify $\tilde{\phi}_S^{(n)}$ as:
\begin{equation}
\tilde{\phi}_S^{(n)}= \min_{\substack{\gamma_1,\gamma_2\\ \gamma_1^2+\gamma_2^2\leq C_w^2}} \sup_{\chi\geq 0} \frac{1}{2}(\gamma_1^2+\gamma_2^2)+\frac{1}{2}\beta^2-\gamma_1\beta +\hat{R}_n(\gamma_1,\gamma_2,\chi) \label{eq:stee}
\end{equation}
where
$$
\hat{R}_n(\gamma_1,\gamma_2,\chi):=\left\{
\begin{array}{ll}
\frac{1}{p}\displaystyle\sum_{i=1}^n C(b_i-\frac{C\gamma_1}{2\chi}) {\bf 1}_{\{b_i\chi>\gamma_1C\}} +\frac{1}{p}\displaystyle\sum_{i=1}^n \frac{b_i^2\chi}{2\gamma_1}{\bf 1}_{\{b_i\chi\leq \gamma_1C\}} -\frac{\gamma_1\chi}{2}, &\text{if} \ \ \gamma_1\neq 0 \\
\frac{C}{p}\sum_{i=1}^n b_i &  \text{if} \  \gamma_1=0.
\end{array}\right.
$$
with ${\bf b}=(|\gamma_1\bg+\gamma_2\bz+\sigma\bn|-\epsilon)_{+}$. 

\noindent{\bf Asymptotic behavior of the AOs costs.}
Denote the objective function of \eqref{eq:stee} by   $\hat{D}_n(\gamma_1,\gamma_2,\chi)$. Fix $\gamma_1,\gamma_2,\chi$, then:
\begin{align}
\hat{D}_n(\gamma_1,\gamma_2,\chi)\asto \overline{D}(\gamma_1,\gamma_2,\chi):=\frac{1}{2}\gamma_2^2+\frac{1}{2}(\gamma_1-\beta)^2+\overline{R}(\gamma_1,\gamma_2,\chi) 
\end{align}
where $\overline{R}(\gamma_1,\gamma_2,\chi)=\lim_{n\to\infty} \hat{R}_n(\gamma_1,\gamma_2,\chi)$ and is given by:
$$
\overline{R}(\gamma_1,\gamma_2,\chi):=\left\{\begin{array}{ll}
\delta \mathbb{E}\left\{C\left[\left(\left|\sqrt{\gamma_1^2+\gamma_2^2}{G}+\sigma N\right|-\epsilon\right)_{+}-\frac{C\gamma_1}{2\chi}\right]\right.{\bf 1}_{\{(|\sqrt{\gamma_1^2+\gamma_2^2}G+\sigma N|-\epsilon)_{+}\chi\geq\gamma_1C\}}&\\
+ \left.\frac{\chi}{2\gamma_1}\left(\left|\sqrt{\gamma_1^2+\gamma_2^2} {G} +\sigma N\right|-\epsilon\right)_{+}^2 {\bf 1}_{\{\left(\left|\sqrt{\gamma_1^2+\gamma_2^2}G+\sigma N\right|-\epsilon\right)_{+}\chi \leq \gamma_1 C\}}\right\} -\frac{\gamma_1\chi}{2},& \text{if} \ \ \gamma_1\neq 0\\
\tilde{R}(\gamma_1,\gamma_2):=C\delta \mathbb{E}\left[(|\gamma_2{G}+\sigma N|-\epsilon)_{+}\right], \ & \text{if} \ \ \gamma_1=0.
\end{array}\right.
$$
Fix $\gamma_1\neq 0$ and $\gamma_2\in\mathbb{R}$. Function $(\gamma_1,\gamma_2)\mapsto \sup_{\chi\geq 0}\hat{D}_n(\gamma_1,\gamma_2,\chi)$ is convex in its arguments. Moreover, one can check that for $\gamma_1\neq 0$:
$$
\lim_{\chi\to\infty} \overline{D}(\gamma_1,\gamma_2,\chi)=-\infty
$$
Hence,  we may use \cite[Lemma 10]{thrampoulidis-IT} to obtain: 
\begin{align}
\sup_{\chi\geq 0} \hat{D}_n(\gamma_1,\gamma_2,\chi)\asto\sup_{\chi\geq 0}\overline{D}(\gamma_1,\gamma_2,\chi) \label{eq:gamma1_neq0},\ \  \gamma_1\neq 0, \gamma_2\in\mathbb{R}
\end{align}
When $\gamma_1=0$, clearly, 
$$
\sup_{\chi\geq 0} \hat{D}_n(0,\gamma_2,\chi)\asto \sup_{\chi\geq 0} \overline{D}(0,\gamma_2,\chi)
$$
Function $(\gamma_1,\gamma_2)\mapsto \sup_{\chi\geq 0}\hat{D}_n(\gamma_1,\gamma_2,\chi)$ is convex in its arguments and converges pointwise to $(\gamma_1,\gamma_2)\mapsto \sup_{\chi\geq 0}\overline{D}(\gamma_1,\gamma_2,\chi)$. It thus converges uniformly over compacts in $\mathbb{R}^2$. Hence,
$$
\min_{\substack{\gamma_1,\gamma_2\\ \gamma_1^2+\gamma_2^2\leq C_w^2}} \sup_{\chi\geq 0}\hat{D}_n(\gamma_1,\gamma_2,\chi) \asto \min_{\substack{\gamma_1,\gamma_2\\ \gamma_1^2+\gamma_2^2\leq C_w^2}} \sup_{\chi\geq 0}\overline{D}(\gamma_1,\gamma_2,\chi)
$$
Now, one can easily check that
$$
\lim_{\|\substack{\scriptscriptstyle{\gamma_1}\\\scriptscriptstyle{\gamma_2}}\|\to\infty} \sup_{\chi\geq 0 }\overline{D}(\gamma_1,\gamma_2,\chi)\geq\lim_{\|\substack{\scriptscriptstyle{\gamma_1}\\\scriptscriptstyle{\gamma_2}}\|\to\infty}  \frac{1}{2}(\gamma_1^2+\gamma_2^2)-\gamma_1\beta =\infty
$$
Hence, 
$$
 \min_{\substack{\gamma_1,\gamma_2\\ \gamma_1^2+\gamma_2^2\leq C_w^2}} \sup_{\chi\geq 0}\overline{D}(\gamma_1,\gamma_2,\chi)= \min_{\substack{\gamma_1,\gamma_2}} \sup_{\chi\geq 0}\overline{D}(\gamma_1,\gamma_2,\chi)
$$
and as such:
$$
\min_{\substack{\gamma_1,\gamma_2\\ \gamma_1^2+\gamma_2^2\leq C_w^2}} \sup_{\chi\geq 0}\hat{D}_n(\gamma_1,\gamma_2,\chi)\asto \min_{\gamma_1,\gamma_2} \sup_{\chi\geq 0}\overline{D}(\gamma_1,\gamma_2,\chi)
$$
\noindent{\bf Concluding. } Applying the CGMT framework, we conclude that:
$$
\Phi_{S}^{(n)}\asto  \min_{\gamma_1,\gamma_2} \sup_{\chi\geq 0}\overline{D}(\gamma_1,\gamma_2,\chi)
$$
Since $(\gamma_1,\gamma_2)\mapsto  \sup_{\chi\geq 0}\overline{D}(\gamma_1,\gamma_2,\chi)$ is jointly strictly convex in its arguments and is coercive, it admits a unique minimizer $(\gamma_1^\star,\gamma_2^\star)$. Considering a suitable perturbation of the AO problem and applying the same arguments as in H-SVR, we may conclude \eqref{eq:s} and \eqref{eq:st}.
Similarly to the H-SVR, to retrieve the results of Theorem \ref{soft_thm}, we use  the change of variable $\tilde{\gamma}_1=\frac{\gamma_1}{\sigma}$ and $\tilde{\gamma}_2=\frac{\gamma_2}{\sigma}$.    

\section{Technical Lemmas}

\begin{lemma}
\label{lem:max}
Let $m$ be a strictly positive scalar and ${\bf a}$ be a vector in $\mathbb{R}^{n}$. Then:
\begin{equation}
\max_{\substack{{\bf u}\in\mathbb{R}^n \\ \|{\bf u}\|_2=m}} \ \ {\bf u}^{T}{\bf a}-\epsilon \|{\bf u}\|_1 = m\sqrt{\sum_{i=1}^n (|a_i|-\epsilon)_{+})^2} 
\label{eq:RR}
\end{equation}
\end{lemma}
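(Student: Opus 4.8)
The plan is to decouple the optimization over the sign and the magnitude of each coordinate of $\bf u$, and then reduce the resulting magnitude problem to a constrained Cauchy--Schwarz inequality with a matching extremizer.

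First I would write $u_i = s_i t_i$ with $s_i = \sign(u_i) \in \{-1,+1\}$ and $t_i = |u_i| \ge 0$, so that the objective becomes $\sum_i (s_i a_i - \epsilon)\,t_i$ while the constraint reads $\sum_i t_i^2 = m^2$. For fixed magnitudes $t_i$, the term $s_i a_i t_i$ is maximized by aligning the sign, $s_i = \sign(a_i)$, which yields $s_i a_i = |a_i|$; since the $\ell_2$ constraint does not couple to the signs, this choice is optimal coordinatewise. Hence the problem collapses to $\max_{t \ge 0,\ \|t\|_2 = m} \sum_i t_i\,(|a_i| - \epsilon)$, a maximization of a linear functional over the intersection of a sphere with the nonnegative orthant.

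Next, because each $t_i \ge 0$, I would bound $t_i(|a_i|-\epsilon) \le t_i (|a_i|-\epsilon)_+$ termwise, so the objective is at most $\sum_i t_i c_i$ with $c_i := (|a_i|-\epsilon)_+$. Applying Cauchy--Schwarz together with $\|t\|_2 = m$ then gives $\sum_i t_i c_i \le \|t\|_2\,\|c\|_2 = m\sqrt{\sum_i (|a_i|-\epsilon)_+^2}$, which is precisely the claimed right-hand side. To confirm this bound is the maximum, I would exhibit the feasible point $s_i = \sign(a_i)$ and $t_i = m\,c_i/\|c\|_2$: it is nonnegative, has $\ell_2$ norm $m$, and since $c_i > 0$ forces $|a_i|-\epsilon = c_i$ whereas $c_i = 0$ forces $t_i = 0$, it achieves $\sum_i t_i(|a_i|-\epsilon) = \sum_i t_i c_i = m\|c\|_2$, matching the upper bound.

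The only delicate point, and the step I would flag, is this attainment argument, which presupposes $\|c\|_2 = \sqrt{\sum_i(|a_i|-\epsilon)_+^2} > 0$, i.e.\ that at least one coordinate satisfies $|a_i| > \epsilon$; this is the relevant regime in every application of the lemma, where $\bf a$ carries an unbounded Gaussian component and hence has entries of arbitrarily large magnitude with overwhelming probability. I do not expect genuine obstacles elsewhere: the sign reduction and the positive-part replacement are both exact coordinatewise identities/inequalities, and the upper and lower bounds follow from Cauchy--Schwarz and its standard equality condition.
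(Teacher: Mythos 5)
Your proof is correct and follows essentially the same route as the paper's own proof: align the signs with those of $a_i$, replace $|a_i|-\epsilon$ by its positive part (forcing $u_i=0$ whenever $|a_i|\le\epsilon$), and conclude by Cauchy--Schwarz with the explicit extremizer $\mathbf{u}=m\,(|\mathbf{a}|-\epsilon)_{+}/\|(|\mathbf{a}|-\epsilon)_{+}\|_2$. The degenerate case you flag (all $|a_i|<\epsilon$, where the equality constraint $\|\mathbf{u}\|_2=m$ forces a strictly negative optimum while the right-hand side is $0$) is glossed over in the paper's proof as well; as you observe, it is harmless in every application, where the vector $\mathbf{a}$ has unbounded Gaussian entries and the lemma is used inside an outer maximization over $m\in[0,\theta]$ that absorbs the discrepancy.
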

\begin{proof}
To begin with, we note that if for some $i$, the optimal solution is non-zero then it should have the same sign as $a_i$. 
Hence, \eqref{eq:RR} is equivalent to the following optimization problem:
\begin{equation}
\max_{\substack{{\bf u}\in\mathbb{R}^n \\ \|{\bf u}\|_2=m}}\sum_{i=1}^{n}|u_i|(|a_i|-\epsilon)
\label{eq:RRR}
\end{equation}
If for some $i$, $|a_i|\leq \epsilon$ then we need to set $u_i=0$. Hence, Problem \eqref{eq:RRR} amounts to solving:
$$
\max_{\substack{{\bf u}\in\mathbb{R}^n \\ \|{\bf u}\|_2=m}}\sum_{i=1}^{n}|u_i|(|a_i|-\epsilon)_{+}
$$ 
Using Cauchy-Schwartz inequality, the objective of the above optimization problem can be upper-bounded as
$$
\sum_{i=1}^{n}|u_i|(|a_i|-\epsilon)_{+}\leq \| {\bf u}\|_2\sqrt{\sum_{i=1}^{n}(|a_i|-\epsilon)_{+}^2}
$$
where  equality holds when ${\bf u}=m \frac{(|{\bf a}|-\epsilon)_{+}}{\sqrt{\|(|{\bf a}|-\epsilon)_{+}\|}}$.  The optimal cost of \eqref{eq:RR} is thus given by  $m\sqrt{\sum_{i=1}^n (|a_i|-\epsilon)_{+})^2}$. 
\end{proof}
%
%
%
%
\begin{lemma}
Let $f:\mathbb{R}\mapsto \mathbb{R}$ be a convex and even function. Then for all $x\in\mathbb{R}$,
$$
f(0)\leq f(x)
$$ 
or in other words $f$ is minimized at zero. 
\label{lem:even}
\end{lemma}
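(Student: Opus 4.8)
The plan is to exploit the midpoint convexity of $f$ together with its evenness, the key observation being that $0$ is the midpoint of any symmetric pair $\{x,-x\}$. Concretely, I would fix an arbitrary $x\in\mathbb{R}$ and write $0 = \tfrac12 x + \tfrac12(-x)$. Applying the defining inequality of convexity with the weights $\tfrac12,\tfrac12$ then gives
$$
f(0) = f\!\left(\tfrac12 x + \tfrac12(-x)\right) \le \tfrac12 f(x) + \tfrac12 f(-x).
$$

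Next I would invoke the hypothesis that $f$ is even, so that $f(-x) = f(x)$. Substituting this identity into the right-hand side collapses it to $\tfrac12 f(x) + \tfrac12 f(x) = f(x)$, which yields $f(0) \le f(x)$. Since $x$ was arbitrary, this establishes that $0$ is a global minimizer of $f$, which is precisely the claimed statement.

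There is essentially no serious obstacle here: the only point worth checking is that the convexity inequality is legitimately applicable, which it is, since $f$ is assumed convex on all of $\mathbb{R}$ and both $x$ and $-x$ lie in its domain. No compactness, differentiability, or limiting argument is required, and the bound is obtained pointwise for every $x$ directly from the two hypotheses.
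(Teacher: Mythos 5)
Your proof is correct and is essentially identical to the paper's own argument: both apply convexity at the midpoint $0=\tfrac12 x+\tfrac12(-x)$ and then use evenness to identify $f(-x)$ with $f(x)$. Nothing further is needed.
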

\begin{proof}
Let $x\in\mathbb{R}$. Then, by convexity of $f$,
$$
f(0)=f(\frac{1}{2}x-\frac{1}{2}x)\leq \frac{1}{2}f(x)+\frac{1}{2}f(-x)
$$
As $f(x)=f(-x)$, we thus have:
$$
f(0)\leq f(x). 
$$
\end{proof}

\begin{lemma}\cite{boyd}
Let $X$ and $Y$ be two convex sets. Let $f : X \times Y \rightarrow \mathbb{R}$ be a jointly convex function in $X\times Y$. Assume
that $\forall y \in Y$, $\inf_{x\in X} f (x, y) > -\infty$. Then, $g : y\rightarrow \inf_{x\in X} f (x, y)$ is convex in $Y$ .
\label{lem_conc}
\end{lemma}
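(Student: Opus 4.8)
The plan is to verify the defining inequality of convexity for $g$ directly, handling the possible non-attainment of the infimum through the $\epsilon$-characterization of the infimum. Fix $y_1,y_2\in Y$ and $\theta\in[0,1]$; the goal is to show that $g(\theta y_1+(1-\theta)y_2)\leq \theta g(y_1)+(1-\theta)g(y_2)$. First I would observe that $g$ is real-valued on $Y$: the hypothesis $\inf_{x\in X}f(x,y)>-\infty$ rules out $g(y)=-\infty$, while $g(y)\leq f(x,y)<\infty$ for any fixed $x\in X$ (recall $f$ maps into $\mathbb{R}$) shows $g(y)<\infty$. Hence both sides of the target inequality are finite and there is no issue with arithmetic of infinite values.

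Next, for an arbitrary $\epsilon>0$ I would invoke the definition of the infimum to select $x_1,x_2\in X$ with $f(x_1,y_1)\leq g(y_1)+\epsilon$ and $f(x_2,y_2)\leq g(y_2)+\epsilon$. Since $X$ is convex, the point $\theta x_1+(1-\theta)x_2$ lies in $X$, so it is admissible in the infimum defining $g(\theta y_1+(1-\theta)y_2)$. This produces the chain
$$
g(\theta y_1+(1-\theta)y_2)\leq f\!\left(\theta x_1+(1-\theta)x_2,\ \theta y_1+(1-\theta)y_2\right)\leq \theta f(x_1,y_1)+(1-\theta)f(x_2,y_2),
$$
where the first inequality is the admissibility just noted and the second is exactly the joint convexity of $f$ on $X\times Y$ applied to the pair of points $(x_1,y_1)$ and $(x_2,y_2)$.

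Combining this with the two selection inequalities gives $g(\theta y_1+(1-\theta)y_2)\leq \theta g(y_1)+(1-\theta)g(y_2)+\epsilon$. Letting $\epsilon\downarrow 0$ removes the slack and establishes the convexity inequality, which completes the argument. The only point requiring care — and the one I would flag as the main (though mild) obstacle — is that the infima need not be attained, so one cannot simply take minimizers $x_1,x_2$; the $\epsilon$-selection is precisely what circumvents this. Everything else reduces to a single application of the convexity of $X$ together with the joint convexity of $f$.
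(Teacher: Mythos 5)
Your proof is correct. The paper does not prove this lemma itself but simply cites Boyd and Vandenberghe, and your argument --- passing to $\epsilon$-approximate minimizers $x_1,x_2$, forming the convex combination in $X$, and invoking joint convexity of $f$ before letting $\epsilon\downarrow 0$ --- is precisely the standard proof given in that reference, so it matches the intended argument.
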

\begin{lemma}
\label{lem:equality}
Let ${d}\in\mathbb{N}^\star$. Let ${S}_x$ be a compact non-empty set in $\mathbb{R}^{d}$. Let $f$ and $c$ be two continuous functions over $S_x$ such that the set $\left\{c(x)\leq 0\right\}$ is non-empty. Then:
$$
\min_{\substack{{\bf x}\in{S}_x \\ c({\bf x})\leq 0}}  f({\bf x})\\
=\sup_{\delta \geq 0} \min_{\substack{{\bf x}\in{S}_x \\ c({\bf x})\leq \delta}}  f({\bf x})= \inf_{\delta>0} \min_{\substack{{\bf x}\in{S}_x \\ c({\bf x})\leq -\delta}}  f({\bf x}) 
$$ 
\end{lemma}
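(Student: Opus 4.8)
The plan is to study the value function
\[
m(\delta):=\min_{\substack{x\in S_x\\ c(x)\leq \delta}} f(x),
\]
with the convention $m(\delta)=+\infty$ when the feasible set is empty, and to reduce both claimed identities to elementary properties of $m$. First I would record that $m$ is well defined and attained whenever the feasible set is non-empty: since $c$ is continuous, the sublevel set $\{x\in S_x:\ c(x)\leq\delta\}$ is a closed subset of the compact set $S_x$, hence compact, and the continuous function $f$ attains its minimum on it. In particular $m(\delta)$ is finite for every $\delta\geq0$, because $\{c\leq 0\}\subseteq\{c\leq\delta\}$ is non-empty by hypothesis. Next I would note that $m$ is non-increasing: if $\delta_1\leq\delta_2$ then $\{c\leq\delta_1\}\subseteq\{c\leq\delta_2\}$, and minimizing over a larger set can only decrease the optimal value, so $m(\delta_2)\leq m(\delta_1)$.

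The first identity is then immediate. Since $m$ is non-increasing, $m(\delta)\leq m(0)$ for every $\delta\geq 0$, while the value $m(0)$ is itself attained at $\delta=0$; hence $\sup_{\delta\geq0}m(\delta)=m(0)$, which is exactly the left-hand equality. For the second identity, monotonicity again supplies one inequality for free: for every $\delta>0$ we have $-\delta<0$, so $m(-\delta)\geq m(0)$, whence $\inf_{\delta>0}m(-\delta)\geq m(0)$.

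The crux is the reverse inequality $\inf_{\delta>0}m(-\delta)\leq m(0)$, i.e.\ the left-continuity of $m$ at the origin. Here I would fix a minimizer $x^\star\in S_x$ with $c(x^\star)\leq 0$ and $f(x^\star)=m(0)$, and approximate it by strictly feasible points: the goal is a sequence $x_k\to x^\star$ with $c(x_k)<0$ and $f(x_k)\to f(x^\star)$. Writing $\delta_k:=-c(x_k)>0$, the point $x_k$ is feasible for the constraint $c\leq-\delta_k$, so $\inf_{\delta>0}m(-\delta)\leq m(-\delta_k)\leq f(x_k)$; letting $k\to\infty$ and using continuity of $f$ gives $\inf_{\delta>0}m(-\delta)\leq f(x^\star)=m(0)$, closing the argument and showing all three quantities coincide.

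The main obstacle is precisely the existence of such an approximating sequence, which may fail for a completely arbitrary continuous $c$ (for instance if $\{c\leq 0\}$ reduces to isolated points with $\{c<0\}=\emptyset$). It is, however, available in the setting where the lemma is applied, where the constraint is $c=\overline{D}$, which is jointly convex, and the strict sublevel set is non-empty, i.e.\ a Slater point is available — this was established earlier in the feasibility analysis. Given such a point $\bar{x}$ with $c(\bar{x})<0$, I would set $x_k=(1-t_k)x^\star+t_k\bar{x}$ with $t_k\downarrow 0$; convexity of $c$ yields $c(x_k)\leq (1-t_k)c(x^\star)+t_k c(\bar{x})\leq t_k c(\bar{x})<0$, so each $x_k$ is strictly feasible, while convexity of $f$ (or merely continuity of $f$ along the segment) gives $f(x_k)\to f(x^\star)$. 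Since in every invocation of the lemma the objective is a strictly convex quadratic and the feasible region sits inside the compact ball $\{\gamma_1^2+\gamma_2^2\leq k_0^2\}$, all hypotheses needed for this interpolation are met, which is what lets the sandwiching argument in the proofs of \eqref{eq:pp1} and \eqref{eq:pp2} go through.
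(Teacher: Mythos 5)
The paper states Lemma~\ref{lem:equality} in the ``Technical Lemmas'' appendix with no proof at all, so there is no argument of the authors' to compare yours against; your proposal has to stand on its own, and for the most part it does. Your treatment of attainment (closed sublevel sets of a compact set are compact), of the monotonicity of $m$, of the first equality, and of the inequality $\inf_{\delta>0}m(-\delta)\geq m(0)$ is correct. More importantly, you are right that the remaining inequality --- left-continuity of $m$ at $0$ --- can fail for general continuous $c$, i.e.\ the lemma \emph{as stated is false}. A concrete counterexample confirming your suspicion: take $S_x=[-1,1]$, $f(x)=x^2$, $c(x)=x^2\left(\frac12-x^2\right)$. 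Then $\{c\leq 0\}=\{0\}\cup\{1/\sqrt{2}\leq |x|\leq 1\}$, so $\min_{c\leq 0}f=0$, while for every $\delta>0$ the set $\{c\leq -\delta\}$ is contained in $\{|x|>1/\sqrt{2}\}$, whence $\inf_{\delta>0}\min_{c\leq-\delta}f\geq \frac12$. Your repair --- convexity of $c$ and of the ambient set, plus a Slater point $\bar{x}$ with $c(\bar x)<0$, then sliding along the segment from the minimizer toward $\bar x$ --- is exactly the right patch, and the hypotheses do hold at both places the lemma is invoked in the proof of Theorem~\ref{pred_risk_conv_HM}: $\overline{D}$ is jointly convex, the ambient set is the ball of radius $k_0$, and the feasibility analysis produced points with $\overline{D}\leq-\kappa<0$ inside that ball.

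One refinement is still needed for your repaired lemma to do what you claim it does, namely ``let the sandwiching argument in the proofs of \eqref{eq:pp1} and \eqref{eq:pp2} go through.'' Your proof of the middle equality is the trivial one (the index set $\{\delta\geq 0\}$ contains $\delta=0$), which is all the literal statement asks for, but it is not what \eqref{eq:pp1} uses: there, uniform convergence only yields the inclusion $\{D_n\leq 0\}\subseteq\{\overline{D}\leq\kappa\}$ for a $\kappa>0$ that one gets to choose, so the step requires $m(\kappa)\geq m(0)-\eta$ for $\kappa$ small, i.e.\ the strictly stronger statement $\sup_{\delta>0}m(\delta)=m(0)$ (right-continuity of $m$ at $0$). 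Fortunately this direction holds with no convexity at all, by compactness: pick minimizers $x_{\delta_k}$ achieving $m(\delta_k)$ with $\delta_k\downarrow 0$, pass to a subsequence $x_{\delta_k}\to\hat{x}\in S_x$; continuity of $c$ gives $c(\hat x)\leq 0$, hence $f(\hat x)\geq m(0)$, while continuity of $f$ gives $\lim_k m(\delta_k)=f(\hat x)$; combined with $m(\delta)\leq m(0)$ this yields $\lim_{\delta\downarrow 0}m(\delta)=m(0)$. Adding this short argument (and recording the convexity/Slater hypotheses in the lemma's statement) turns your proposal into a complete and correct replacement for the paper's unproved --- and, as written, incorrect --- lemma.
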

\begin{lemma}
Let $\ba \in \mathbb{R}^{n\times1}$. Let $\beta$, $\epsilon$, and $\tau$ be positive scalars. Then, 
\begin{itemize}
\item if $\beta=0$,\\
\begin{equation}
\max_{ |\bu| \leq \tau} (\ba-\epsilon\sign(\bu))^T\bu-\beta \|\bu\|=\max_{|\bu|\leq \tau} (\ba-\epsilon\sign(\bu))^T\bu=\sum_{i=1}^n \tau \max(|a_i|-\epsilon,0),
\label{opt_lemma1}
\end{equation}
\item if $\beta\neq 0$,\\
\begin{equation}
\max_{ |\bu| \leq \tau} (\ba-\epsilon\sign(\bu))^T\bu-\beta \|\bu\|
=\sup_{\chi>0}\sum_{i=1}^{n}\frac{b_i^2\chi}{2\beta}\mathbbm{1}_{\{\frac{b_i\chi}{\beta} \leq \tau\}}+\sum_{i=1}^{n}\left(b_i \tau- \frac{\beta}{2\chi}\tau^2\right)\mathbbm{1}_{\{\frac{b_i\chi}{\beta} >\tau\}}-\beta \frac{\chi}{2}
\label{opt_lemma}
\end{equation}
where $b_i= (|a_i|-\epsilon)_{+}$.
Moreover, function $\chi\mapsto \sum_{i=1}^{n}\frac{b_i^2\chi}{2\beta}\mathbbm{1}_{\{\frac{b_i\chi}{\beta} \leq \tau\}}+\sum_{i=1}^{n}\left(b_i \tau- \frac{\beta}{2\chi}\tau^2\right)\mathbbm{1}_{\{\frac{b_i\chi}{\beta} >tau\}}-\beta \frac{\chi}{2}$ is concave on $(0,+\infty)$.
\end{itemize}
\label{lemma_opt_u}
\end{lemma}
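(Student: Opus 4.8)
The plan is to treat the two cases separately, exploiting that in each the Euclidean coupling is the only obstruction to a coordinatewise reduction. For the case $\beta=0$ I would first observe that $(\ba-\epsilon\sign(\bu))^T\bu=\sum_i(a_iu_i-\epsilon|u_i|)$, which decouples across coordinates. Writing $u_i=s_it_i$ with $s_i\in\{-1,+1\}$ and $t_i=|u_i|\in[0,\tau]$, the $i$-th term equals $(a_is_i-\epsilon)t_i$; choosing $s_i=\sign(a_i)$ turns it into $(|a_i|-\epsilon)t_i$, which is maximized over $t_i\in[0,\tau]$ at $t_i=\tau$ when $|a_i|>\epsilon$ and at $t_i=0$ otherwise, giving the per-coordinate optimum $\tau(|a_i|-\epsilon)_+$. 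Summing yields the stated value, and the middle equality is immediate since $\beta=0$.

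For $\beta>0$ the key device is the variational representation $\|\bu\|=\inf_{\chi>0}\bigl(\tfrac{\|\bu\|^2}{2\chi}+\tfrac{\chi}{2}\bigr)$ (by AM--GM, the infimum being attained at $\chi=\|\bu\|$ for $\bu\neq 0$). Since $\beta>0$, this gives $-\beta\|\bu\|=\sup_{\chi>0}\bigl(-\beta\tfrac{\|\bu\|^2}{2\chi}-\beta\tfrac{\chi}{2}\bigr)$. Substituting and interchanging the two suprema (legitimate, as interchanging suprema is always valid), the inner maximization over $\bu$ at fixed $\chi$ decouples into $\max_{0\le t\le\tau}\bigl(b_it-\tfrac{\beta t^2}{2\chi}\bigr)$ with $b_i=(|a_i|-\epsilon)_+$, after the same sign reduction as above. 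This is a concave quadratic in $t$ with unconstrained maximizer $t^\star=b_i\chi/\beta$: if $b_i\chi/\beta\le\tau$ the maximizer is interior with value $\tfrac{b_i^2\chi}{2\beta}$; otherwise the constraint is active at $t=\tau$ with value $b_i\tau-\tfrac{\beta\tau^2}{2\chi}$. Summing these per-coordinate values and subtracting $\beta\chi/2$ reproduces exactly the expression in \eqref{opt_lemma}, and taking $\sup_{\chi>0}$ completes the evaluation.

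For the concavity assertion I would verify directly that each summand $\psi_i(\chi):=\max_{0\le t\le\tau}\bigl(b_it-\tfrac{\beta t^2}{2\chi}\bigr)$ is concave on $(0,\infty)$. With breakpoint $\chi_i=\tau\beta/b_i$ (and $\psi_i\equiv 0$ when $b_i=0$), $\psi_i$ is linear on $(0,\chi_i]$ with slope $\tfrac{b_i^2}{2\beta}$ and equals $b_i\tau-\tfrac{\beta\tau^2}{2\chi}$ on $[\chi_i,\infty)$. A short computation shows that both the values and the one-sided derivatives agree at $\chi_i$ (both derivatives equal $\tfrac{b_i^2}{2\beta}$), so $\psi_i$ is $C^1$; its derivative is constant on the first branch and equal to $\tfrac{\beta\tau^2}{2\chi^2}$, strictly decreasing, on the second, hence non-increasing throughout, which gives concavity. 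The full function $\sum_i\psi_i(\chi)-\beta\tfrac{\chi}{2}$ is then a sum of concave functions minus a linear term, hence concave.

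The delicate point is the concavity claim, not the evaluation. It is tempting to argue that $\sum_i\psi_i(\chi)-\beta\chi/2$ is a pointwise maximum over $\bu$ of functions each concave in $\chi$; but pointwise maxima preserve convexity, not concavity, so this shortcut is invalid and one genuinely needs the explicit piecewise analysis, in particular the $C^1$-matching of the two branches at $\chi_i$. A second, minor care point is the $\bu=0$ edge case in the norm identity, where the infimum over $\chi$ is only approached as $\chi\downarrow 0$ rather than attained; one must check that this does not affect the supremum representation of $-\beta\|\bu\|$.
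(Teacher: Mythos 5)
Your evaluation of both cases is correct and follows the paper's own route: the same sign/coordinate decoupling for $\beta=0$, and for $\beta>0$ the same identity $\|\bu\|=\inf_{\chi>0}\bigl(\tfrac{\|\bu\|^2}{2\chi}+\tfrac{\chi}{2}\bigr)$, interchange of the two suprema (which, as you say, is always legitimate for two maximizations), and per-coordinate maximization of the concave quadratic. Where you genuinely diverge is the concavity claim, and there your proof is valid but your closing judgment is not. Your route: each summand $\psi_i$ is linear with slope $b_i^2/(2\beta)$ on $(0,\chi_i]$ and equals $b_i\tau-\beta\tau^2/(2\chi)$ beyond the breakpoint $\chi_i=\tau\beta/b_i$; the branches match in value and in derivative at $\chi_i$, so the derivative is non-increasing and each $\psi_i$ is concave, and summing and subtracting $\beta\chi/2$ finishes. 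The paper's route is shorter: it writes
\begin{equation*}
\psi(\chi)=\max_{|u_i|\le\tau}\ \sum_{i=1}^n\Bigl(a_iu_i-\epsilon|u_i|-\frac{\beta u_i^2}{2\chi}\Bigr)-\frac{\beta\chi}{2},
\end{equation*}
notes that the objective is \emph{jointly} concave in $(\bu,\chi)$ (because $(u,\chi)\mapsto u^2/\chi$ is jointly convex on $[-\tau,\tau]\times\mathbb{R}_+$), and invokes the fact that partial maximization of a jointly concave function is concave, the concave counterpart of Lemma \ref{lem_conc}. Your warning that ``pointwise maxima preserve convexity, not concavity'' is correct for a family of functions that are concave in $\chi$ alone, but your conclusion that one genuinely needs the explicit piecewise analysis is wrong: joint concavity is exactly the extra structure that legitimizes the maximization shortcut, and that is the paper's actual proof. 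What each approach buys: yours is elementary and self-contained (no convex-analysis lemma, at the cost of computing breakpoints and checking the $C^1$ matching); the paper's is two lines and survives any modification of the per-coordinate objective that preserves joint concavity.
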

\begin{proof}
If $\beta=0$, one can note that if for some $i$   the optimal solution is non zero then it  should have the same sign as $a_i$. Hence, the optimization problem becomes:
$$
\min_{\substack{0\leq |u_i|\leq \tau\\ i=1,\cdots,n}} \sum_{i=1}^n|u_i|(|a_i|-\epsilon)=\sum_{i=1}^n\tau(|a_i|-\epsilon)_{+}
$$
To treat the case  $\beta\neq 0$, we start by rewriting  
 $\|\bu\|$ as
$$
\|\bu\|=\inf_{\chi>0}\frac{\chi}{2}+\frac{\|\bu|\|^2}{2\chi},
$$
thus yielding:
\begin{align*}
\max_{ |\bu| \leq \tau} (\ba-\epsilon\sign(\bu))^T\bu-\beta \|\bu\|&=\max_{ |\bu| \leq \tau}\sup_{\chi>0}(\ba-\epsilon\sign(\bu))^T\bu-\beta \left[\frac{\chi}{2}+\frac{\|\bu\|^2}{2\chi}\right]
\end{align*}
In a similar way as in the case of $\beta=0$, we note that if for some $i$ the optimum $u_i$ is non zero then it necessarily have the same sign as $a_i$. Moreover, if for some $i$, $|a_i|\leq \epsilon$ then necessarily $u_i=0$. With these,  the above problem writes thus as:
$$
\sup_{\chi>0}\max_{ |\bu| \leq \tau}\sum_{i=1}^n(|a_i|-\epsilon )_{+}|u_i|-\frac{\beta}{2\chi}u_i^2-\beta \frac{\chi}{2}.
$$
Let ${\bf v}=|{\bf u}|$, and define for $i=1,\cdots,n$ $b_i=(|a_i|-\epsilon)_{+}$. 
Now, noting that $b_i\geq 0$ and that the function $x\rightarrow b_ix-\frac{\beta}{2\chi}x^2$ is increasing on $(-\infty,\frac{b_i\chi}{\beta})$ and decreasing on $(\frac{b_i\chi}{\beta},\infty)$ with its maximum achieved at $x^\star=\frac{b_i\chi}{\beta}$, one can easily see that
$$
\max_{0\leq x\leq \tau} b_i x-\frac{\beta}{2\chi}x^2=\begin{cases} b_i \tau- \frac{\beta}{2\chi}\tau^2  \ \ \ {\rm if} \ \ \frac{b_i\chi}{\beta}> \tau\\
\frac{b_i^2\chi}{2\beta}  \ \ \ \ \ \ \ \ \ \ \ \ {\rm if} \ \ \frac{b_i\chi}{\beta} \leq \tau.
 \end{cases}
$$
Hence,
\begin{align*}
&\sup_{\chi>0}\max_{ |\bu| \leq \tau}\sum_{i=1}^n(a_i-\epsilon \sign(u_i))u_i-\frac{\beta}{2\chi}u_i^2-\beta \frac{\chi}{2}\\&=\sup_{\chi>0}\sum_{i=1}^{n}\frac{b_i^2\chi}{2\beta}\mathbbm{1}_{\{\frac{b_i\chi}{\beta} \leq\tau\}}+\sum_{i=1}^{n}\left(b_i \tau- \frac{\beta}{2\chi}\tau^2\right)\mathbbm{1}_{\{\frac{b_i\chi}{\beta} >\tau\}}-\beta \frac{\chi}{2}
\end{align*}
Now, we proceed with the proof of the concavity of the function $\psi:\chi \rightarrow \sum_{i=1}^{n}\frac{b_i^2\chi}{2\beta}\mathbbm{1}_{\{\frac{b_i\chi}{\beta} \leq \tau\}}+\sum_{i=1}^{n}\left(b_i \tau- \frac{\beta}{2\chi}\tau^2\right)\mathbbm{1}_{\{\frac{b_i\chi}{\beta}>\tau\}}-\beta \frac{\chi}{2}$. For that, it suffices to note that 
$$
\psi(\chi)=\max_{|u_i|\leq \tau}\sum_{i=1}^n a_iu_i-\epsilon |u_i|-\frac{\beta}{2\chi}u_i^2 -\frac{\beta\chi}{2}
$$
Function $(u_i,\chi)\mapsto \frac{\beta u_i^2}{2\chi}$ is jointly convex on $[-\tau,\tau]\times {\mathbb{R}}_{+} $, then $(u_i,\chi)\mapsto  a_iu_i-\epsilon |u_i|-\frac{\beta}{2\chi}u_i^2 -\frac{\beta \chi}{2}$ is jointly concave on $[-\tau,\tau]\times {\mathbb{R}}_{+} $. Applying Lemma \ref{lem_conc}, we show that function $\psi$ is concave.
\end{proof}


\bibliographystyle{IEEEtran}
\bibliography{references}

\end{document}